\setlist[itemize]{noitemsep}
\newif\ifarxiv
\title[Exploring the Optimal Choice for Generative Processes in Diffusion Models]{Exploring the Optimal Choice for Generative Processes in Diffusion Models: Ordinary vs Stochastic Differential Equations}
\renewcommand{\paragraph}[1]{\subsection*{#1}}
\author{Yu Cao$^*$}
\address[$*$]{Institute of Natural Sciences \& School of Mathematical Sciences, Shanghai Jiao Tong University, Shanghai 200240, China}
\email{yucao@sjtu.edu.cn}
\author{Jingrun Chen$^\dag$}
\email{jingrunchen@ustc.edu.cn}
\author{Yixin Luo$^\ddag$}
\email{seeing@mail.ustc.edu.cn}
\address[$\dag$,$\ddag$]{University of Science and Technology of China, Hefei 230026, China; Suzhou Institute of Advanced Research, University of Science and Technology of China, Suzhou 215123, China}
\author{Xiang Zhou$^\P$}
\email{xizhou@cityu.edu.hk}
\address[$\P$]{School of Data Science and Department of Mathematics, City University of Hong Kong, Kowloon, Hong Kong SAR}
\theoremstyle{plain}
\newtheorem{theorem}{Theorem}[section]
\newtheorem{lemma}[theorem]{Lemma} 
\newtheorem{assume}[theorem]{Assumption}
\newtheorem{prop}[theorem]{Proposition}
\theoremstyle{definition}
\newtheorem*{remark}{Remark}
\newcommand{\propref}[1]{Prop.~\ref{#1}}
\newcommand{\lemref}[1]{Lem.~\ref{#1}}
\newcommand{\secref}[1]{\S~\ref{#1}}
\newcommand{\appref}[1]{Appx.~\ref{#1}}
\newcommand{\figref}[1]{Fig.~\ref{#1}}
\newcommand{\eg}{{e.g.}}
\newcommand*{\rom}[1]{\expandafter\@slowromancap\romannumeral #1@}
\newcommand{\wt}[1]{\widetilde{#1}}
\newcommand{\wh}[1]{\widehat{#1}}
\newcommand{\mf}[1]{\mathsf{#1}}
\newcommand\myeq[1]{\mathrel{\stackrel{\makebox[0pt]{\mbox{\normalfont\tiny #1}}}{=}}}
\newcommand\myge[1]{\mathrel{\stackrel{\makebox[0pt]{\mbox{\normalfont\tiny #1}}}{\ge}}}
\newcommand\myle[1]{\mathrel{\stackrel{\makebox[0pt]{\mbox{\normalfont\tiny #1}}}{\le}}}
\newcommand{\Natural}{\mathbb{N}}
\newcommand{\Real}{\mathbb{R}}
\newcommand{\ee}{\mathbb{E}}
\newcommand{\eps}{\epsilon}
\newcommand{\inner}[2]{\langle#1, #2\rangle}
\newcommand*\Laplace{\mathop{}\!\mathbin\bigtriangleup}
\newcommand{\unit}{\boldsymbol{I}}
\newcommand{\id}{\text{Id}}
\newcommand{\fw}[2]{{#1}_{#2}}
\newcommand{\bk}[2]{{{#1}^{\leftarrow}_{#2}}}
\newcommand{\law}{\text{law}}
\newcommand{\score}{\mathfrak{S}}
\newcommand{\err}{\mathscr{E}}
\newcommand{\kldivg}[2]{\text{KL}\big(#1||#2\big)}
\newcommand{\fisher}[2]{\mathscr{J}\big(#1||#2\big)}
\renewcommand{\var}{\text{Var}}
\newcommand{\mfh}{\mf{h}}
\newcommand{\h}{\mfh}
\newcommand{\bkh}{\bk{h}{}}
\newcommand{\umin}[1]{\mathcal{X}_{#1}}
\newcommand{\vmin}[1]{\mathcal{Y}_{#1}}
\newcommand{\uminbk}[1]{\bk{\mathcal{X}}{#1}}
\newcommand{\vminbk}[1]{\bk{\mathcal{Y}}{#1}}
\newcommand{\cstcbk}[2]{C_{#1}^{\leftarrow,(#2)}}
\newcommand{\bkV}[1]{V_{#1}^{\leftarrow}}
\newcommand{\bkVup}[2]{V_{#1}^{\leftarrow,#2}}
\newcommand{\LO}{L}
\newcommand{\rhoV}[1]{\rho_{#1}}
\newcommand{\rhoVbk}[1]{\bk{\rho}{#1}}
\newcommand{\opL}[1]{{\mathcal{L}_{#1}^{(\bk{h}{})}}}
\newcommand{\opLODE}[1]{\mathcal{L}_{#1}^{(0)}}
\newcommand{\errats}{E}
\newcommand{\dom}{\Real^d}
\newcommand{\indi}{\mathbb{I}}
\newcommand{\tevol}[3]{\Phi_{#1,#2}^{(#3)}}
\newcommand{\gauss}[1]{\mathcal{N}(0,I_#1)}
\newcommand{\gaussarg}[2]{\mathcal{N}(#1,#2)}
\newcommand{\evolE}[1]{\bk{\Lambda}{#1}}
\newcommand{\half}{\nicefrac[\normalfont]{1}{2}}
\newcommand{\cs}{Cauchy-Schwarz}
\newcommand{\weight}{\omega}
\begin{document}

\maketitle

\begin{abstract}

The diffusion model has shown remarkable success in computer vision, but it remains unclear whether the ODE-based probability flow or the SDE-based diffusion model is more superior and under what circumstances. Comparing the two is challenging due to dependencies on data distributions, score training, and other numerical issues. In this paper, we study the problem mathematically for two limiting scenarios: the zero diffusion (ODE) case and the large diffusion case. We first introduce a pulse-shape error to perturb the score function and analyze error accumulation of sampling quality, followed by a thorough analysis for generalization to \emph{arbitrary error}.  Our findings indicate that when the perturbation occurs at the end of the generative process, the ODE model outperforms the SDE model with a large diffusion coefficient. However, when the perturbation occurs earlier, the SDE model outperforms the ODE model, and we demonstrate that the error of sample generation due to such a pulse-shape perturbation is exponentially suppressed as the diffusion term's magnitude increases to infinity. Numerical validation of this phenomenon is provided using Gaussian, Gaussian mixture, and Swiss roll distribution, as well as realistic datasets like MNIST and CIFAR-10.
 
\end{abstract}

\section{Introduction}

Diffusion models have achieved remarkable success   in various artificial intelligence context generation tasks,  particularly in computer vision \cite{10081412}.
This technique is  rapidly evolving with industrial-level products like DALL$\cdot$E series.
The diffusion model was first proposed and studied by \citet{sohl_2015_deep} in 2015. 
Later, \citet{song_generative_2019} proposed score matching with Langevin dynamics (SMLD) and \citet{ho_2020_denoising} further explored the Denoising Diffusion Probabilistic Models (DDPM).
Both formalisms can be   interpreted as time-discretization of stochastic differential equations (SDEs) \cite{song2021scorebased}. Since the publication of these seminal works, many techniques have been proposed to improve  the efficiency and accuracy of diffusion models, such as DDIM \cite{song2021denoising}, Analytic-DPM \cite{bao2022analyticdpm}, gDDIM \cite{zhang2023gddim}, EDM \cite{karras2022elucidating}, and consistency model \cite{song_consistency_2023}, among others.
 
The score-based diffusion model involves two steps \cite{song2021scorebased,huang2021a}. Firstly, one estimates the score function, which is the gradient of the logarithm of the probability density function, in the form of a neural network. This step uses trajectories of an Ornstein-Uhlenbeck (OU) process starting with given data samples. This process of injecting noise into structured data is usually referred to as the \emph{inference process}. Secondly, new samples are generated by simulating a time-reversed SDE, with a drift term depending on the learned score function from the first step. This is known as the \emph{generative process}. 

In general, there are two diffusion coefficients $g$  in the inference process  and $h$  in the generative process; see \secref{sec::problem}.
Regardless of the choice of $h$ and $g$, 
it is always possible to design an SDE in the generative process that matches the forward inference process in the weak sense, i.e., the probability density functions match for both processes.
 We highlight that this function $h$ (unlike $g$) does not only appear in the diffusion term, but also enters the drift term in the generative SDE. 
The choice of $g$ is equivalent to time re-scaling (see \appref{app::time_rescaling}), while the choice of $h$ is an important topic in practice. 
Two common choices of $h$ are Probability Flow $h = 0$ \cite{chen_neural_2018,tabak_density_2010},
which refers to as an ODE, and an SDE-based diffusion model with $h = g$ \cite{ho_2020_denoising,song_generative_2019,song2021scorebased}. When the score training is accurate,  the choice of this function  $h$ does not affect the sample generation quality in the continuous-time setting. 

In practice, numerical error is inevitable     during training the score function.
  Recent theoretical works \cite{chen2023sampling,chen_improved_2022} have shown that the sample generation quality are affected by three aspects: (1) the truncation of simulation time to a finite $T$; (2) the inexact score function; (3) the time-discretization error. 
The first error is not significantly since the forward OU process
converges to the equilibrium Gaussian measure exponentially fast in $T$.  The third error can be reduced systematically by more efficient numerical schemes \cite{Kloeden1992}, such as  exponential integrator proposed in \cite{zhang2023fast,zhang2023gddim}.
The inexact training of score function has a few important but subtle consequences. Recent works \cite{chen2023sampling,chen_improved_2022} analyzed  the convergence rate of diffusion models, provided that the score training error is sufficient small. 
However, once the score training is not accurate,  the nice equivalence of the generated distribution free of  the generative diffusion coefficient $h$ no longer holds as in the idealized situation of exact score function. This   raises a key question of our interest about how the choice of $h$ can affect the sample quality  in the face of the inexact score training error.  Qualitatively, there are two distinctive cases: $h=0$ or $h$ is large.
An important question to ask is: {\bf in the presence of non-negligible score training errors, which $h$ will produce better sampling quality?} Is it the probability flow ($h=0$) or the SDE? More quantitatively, what magnitude of $h$ is optimal?

\paragraph{Related works} 
The impact of $h$ on the generative process seems not yet fully investigated in recent literature, as most 
experiments used the default choice of this parameter. However, some authors have reported related empirical observations. For example, \citet{song2021scorebased}  empirically   observed that the  choice of $h=g$ produces better sample generation quality  than the ODE case ($h=0$) with real datasets. On the other hand, 
Denoising Diffusion Implicit Models (DDIM) in  \cite{song2021denoising} includes both deterministic and stochastic samplers and points out  that the probability flow ($h=0$) can  produce better samples with improved numerical schemes for the generative process. \cite{zhang2023gddim} generalized the DDIM and tried to  explain the advantages of a deterministic sampling scheme over the stochastic one for fast sampling. 
{Moreover, Karras et. al., \cite{karras2022elucidating} had empirically searched for optimal coefficients which had shown to bring practical advantages.}
None of these empirical results delivered comprehensive investigations on the influence of the diffusion coefficient, and a consistent and affirmative answer to our question still awaits.
Recently, there has been rapid progress in theoretical works on error analysis for diffusion models, as seen in \cite{chen2023sampling,chen_improved_2022} and references therein. However, these analyses usually assume specific settings of $h$, such as $h=g$. Furthermore, it seems that directly analyzing upper bounds based on these error estimations cannot provide adequate information about choosing the optimal $h$; see \appref{subsec::existing_bounds}.
Albergo et al. proposed a unified framework known as stochastic interpolants and slightly discussed the optimal choice between the probability flow and diffusion models \cite[Sec. 2.4]{albergo_stochastic_2023}. It is interesting to see how our theoretical analysis below can generalize to their promising unified settings \cite{albergo_stochastic_2023}.

\paragraph{Our approach}
To investigate the effect of the  diffusion coefficient $h$ on sampling quality, we adopt the continuous-time   framework, which precludes time discretization errors. We measure sample quality by the KL divergence between the   data distribution $p_0$ and the distribution of the generative SDE   at the terminal time $T$. 
Given the assumption that the score function carries numerical errors, we consider $h$ as a controller and aim to minimize the KL divergence with respect to $h$. While the optimization problem is straightforward to set up, it is challenging to draw valuable theoretical insights in a general setting of approximate score functions. 
Therefore, we choose the asymptotic approach, assuming that the error from the training score is reasonably small with a magnitude of ${\eps}$. Under this assumption, the leading-order term of the KL divergence takes the form
\[\text{error of sample generation in KL divergence} = \LO(h)\ \eps^2 + \order{\eps^3}.
\]	This $\eps^2$ order  is known in \cite{chen_improved_2022,chen2023sampling}, but the  dependence of this Gateaux differential  $L(h)$ on $h$ and other factors has yet to be understood at all. Our contribution is to analyze how $\LO(h)$ behaves as $h$ varies;
in particular, by considering the  constant $h$  in two limiting situations: $h = 0$ and  $h\gg 1$.

\paragraph{Main Contributions} We summarize main contributions below:
\begin{itemize}[leftmargin=1em]
\item We prove that when the error in  score function approximation is a time-localized function only at the beginning of the inference step (i.e., at the end of the generative process), the ODE case ($h=0$) outperforms the SDE case ($h\to\infty$); see \propref{prop::end_error}.  If this (time-localized) error occurs in the middle, then the SDE case has an \emph{exponentially smaller error} than the ODE case ($h=0$), as $h\to\infty$ (see \propref{prop::decay}).  See \appref{subsec::discussion_pulse} for reasons behind the time-localized choice.

\smallskip
\item 
 For a {\bf general} score training error,  we prove that as $h\to\infty$, the leading-order  term  $\LO(h)$ above converges exponentially fast to a {constant}, which only depends on the distribution $p_0$ and the score training error at the {end} of the generative process; see \propref{pro::asympt_error}.
 The conclusion  about the optimal $h$ depends on how the score training error is distributed over the time horizon $[0, T]$.

 \end{itemize}
	
Numerically, we validate the above phenomenon for 1D Gaussian, 2D Gaussian mixture, and Swiss roll distribution, as well as realistic datasets like MNIST and CIFAR-10.
Due to the tight connection between the distribution of score training error and $h$, our results may suggest backwardly modifying loss functions during training to adapt to a particular diffusion coefficient of interest. This is a topic of independent interest and we report some preliminary experiments in \appref{sec::weight} to validate potential applications of our theoretical analysis. A comprehensive investigation will be left as future works.

\paragraph{Notation convention} The time duration $T > 0$ is a fixed parameter. For any time-dependent function   $(t,x)\mapsto f_t(x)$, where $x\in\dom$ and $t\in[0,T]$, we denote $\bk{f}{t}(x) \equiv f_{T-t}(x)$.  Sometimes we directly use
the ``arrowed'' variables  $\bk{f}{t}(x)$ to highlight the direction of time is from  reference noise to the data distribution (i.e., the generative direction) even without referring to $f$ first.
The notation $f\lesssim g$ means that $f \le c g$ where $c\to 1$ in a certain limit, i.e., $\limsup \nicefrac{f}{g} \le 1$; $f\sim g$ means $\lim (\nicefrac{f}{g}) = 1$. The asymptotic parameter will be explained below explicitly.
When two matrices $A\preceq B$, it means $B - A$ is positive semidefinite.
 $\unit_{d}$ is the $d$-dimensional identity matrix; $\indi_{A}$ means an indicator function of a set $A$; $\id$ is the identity operator. 
 For a random variable $X$, $\law(X)$ means the distribution of $X$.
 Some important quantities are summarized in \appref{sec::notation}.

 \section{Background}
\label{sec::problem}

\paragraph{Score-based generative models} 

Suppose we have a collection of data from an unknown distribution with density $p_0$, we can inject noise into data via the following SDEs:
\begin{align}
\label{eqn::forward}
\dd X_t = \fw{f}{t}(X_t)\ \dd t + \fw{g}{t}\ \dd W_t,\qquad \law(X_0) = p_0,
\end{align}
where the drift coefficient $\fw{f}{(\cdot)}(\cdot):\Real^d\times\Real\to\Real^d$ is a time-dependent vector field, and the diffusion coefficient $\fw{g}{(\cdot)}:\Real\to\Real$ is a scalar-valued function (for simplicity).
 A widely used example is variance-preserving SDE (VP-SDE) with 
$	\fw{f}{t}(x) = -\half{}\ \fw{g}{t}^2\ x$ and $g>0$ is typically chosen as a non-decreasing function in literature \cite{song2021scorebased}. 
Without loss of generality, we can assume $\fw{g}{t} = 1$ since for any non-zero $\fw{g}{}$, its effect is simply to re-scale the time; see \appref{app::time_rescaling}.

Denote the probability density of $X_t$ as $p_t$ and
the score function is defined as  $\nabla \log  \fw{p}{t}$.
One main innovation in diffusion models is to find a ``backward'' SDE $Y_t$ such that $Y_t$ drives the state with distribution $p_T$ back to $p_0$.
We adopt the arrow of time in this backward direction now and write   $Y_t$   as 
\begin{equation}
\label{eqn::generative}
\dd  Y_t = \bk{A}{t}(Y_t)\ \dd  t + \bk{h}{t}\ \dd {W}_t,\qquad \law(Y_0) = p_T,
\end{equation}
where $\bk{h}{t}$ is an arbitrary real-valued function of time.
The distribution of $Y_t$ is denoted as $q_t$.
Provided that the score function is 
available, we can select $\bk{A}{}$ such that $q_t$ is the same as $p_{T-t}$, in particular, $q_T=p_0$. It is not hard to derive that we can ensure $q_t \equiv p_{T-t}$ if we choose  
\begin{align}
\label{eqn::back}
\bk{A}{t}(x) = -\bk{f}{t}(x) + \frac{(\bk{g}{t})^2 + (\bk{h}{t})^2}{2} \nabla \log \bk{p}{t}(x).
\end{align}
A self-contained proof is provided in \appref{subsec::proof::A}. 
When $\bk{h}{} = \bk{g}{}$, it refers to the backward SDE used in \cite{song2021scorebased}; when $\bk{h}{}=0$, it refers to the probability flow therein. More general interpolation of diffusion and flow can be found in, \eg{}, \cite{albergo_stochastic_2023}.

\paragraph{Training of score functions}
The above score function $(t,x)\mapsto\nabla \log \bk{p}{t}(x)$ needs to be trained from data.
Denoising score matching \cite{Vincent_2011_a_connection} refers to the following score-matching loss (SML) function to train the score whose parameterized architecture is denoted as $\score$:
\begin{align}
	\label{eqn::loss}
	\min_{\score} \int_{0}^{T} \weight_t\ \ee_{X_0\sim p_0} \ee_{X_t\sim p_{t|0}(\cdot|X_0)} \Big[\norm{\fw{\score}{t}(X_t) -  \nabla\log p_{t|0}(X_t|X_0)}^2\Big] \dd  t,
\end{align}
where $p_{t|0}(x_t|x_0)$ is the transition probability of the state $x_0$ at time $0$ towards the state $x_t$ at time $t$ for the forward process \eqref{eqn::forward}. The function $\weight_t\ge 0$ is a weight function. The default choice in many literature is that $g_t = \sqrt{\beta_0 + (\beta_1 - \beta_0) t}$, $0 < \beta_0 < \beta_1 $ are parameters, and one chooses the weight function as follows:
\begin{align}
	\label{eqn::default_lambda}
	\text{{\bf Default weight: }}\qquad \weight_t = \varpi_t^2, \qquad \varpi_t = \sqrt{1 - e^{-\frac{1}{2} t^2 (\beta_1 - \beta_0) - t \beta_0}}.
\end{align}
The quantity $\varpi_t$ has the meaning as the standard deviation of $X_t$ conditioned on a fixed $X_0$ in the forward process. See \secref{subsec::discuss_weight} and \appref{sec::weight} for more weight functions.

\paragraph{Source of errors}
There is usually intrinsic error due to an inexact score function. It is not negligible in many scenarios, e.g., there is only a finite amount of samples of $p_0$ available or only a small neural network architecture is achievable.
However, it is reasonable to assume that this non-negligible error is reasonably small, 
and we decompose the trained score function $\bk{\score}{t}$  into 
 \begin{align}
\label{eqn::err}
\bk{\score}{t}(x) = \nabla \log \bk{p}{t}(x) + \eps \bk{\err}{t}(x),
\end{align}
where $\eps$ is   small, $\bk{\err}{t}$ is assumed to be $\order{1}$ and the total error is $\eps \bk{\err}{t}$. 
The generative process used in practice has to use $\law(Y_0) = \gauss{d}$ instead since $p_T$
is intractable in \eqref{eqn::generative}. 
By choosing $T$ large enough so that $p_T \approx \gauss{d}$,  we can neglect  this error  due to the finite $T$. 
Besides, we also need some numerical schemes to simulate this generative SDE, which also leads into discretization errors.
{\bf In summary}, there are three sources of errors
(1) $p_T \neq \gauss{d}$: this is the source of errors in the initial distribution of the generative process;
(2) $\bk{\err}{}\neq 0$: error from imperfect score function from training;
(3) numerical discretization of the generative process.
The third error can be systemically eliminated by choosing a high-order scheme \cite{Kloeden1992} or an extremely small time step. It has been observed that by choosing a more accurate numerical scheme, e.g., exponential integrator, one can reduce the computational costs \cite{zhang2023fast,zhang2023gddim}.
As for the first error, if one chooses the OU process for \eqref{eqn::forward},
$p_T$ converges to $\gauss{d}$ exponentially fast and thus $T=\order{\ln(\delta)}$ where $\delta$ is the error between $p_0$ and the distribution of generated samples.
Therefore, the choice of $T$ is, in practice, not hard to manage.
More details about these three error sources can be found in, e.g., \cite{chen_improved_2022,chen2023sampling} or \appref{subsec::existing_analysis}.

\begin{figure}[h!]
\centering
\includegraphics[width=0.9\textwidth]{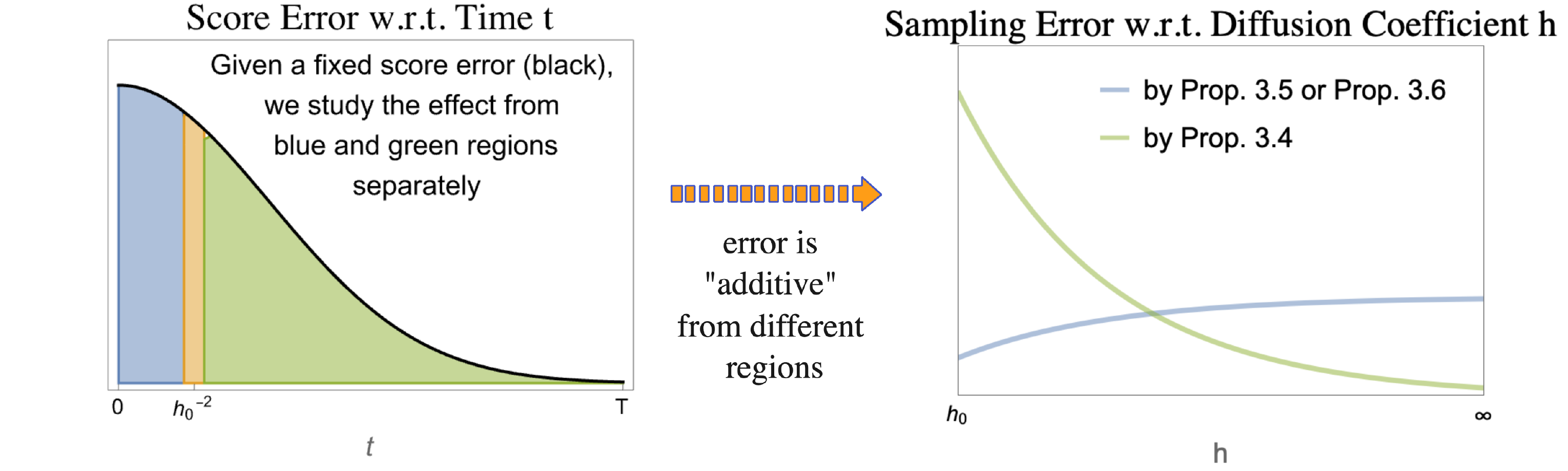}
\caption{\emph{Schematic illustration} of the main message: the distribution of the score error $\bk{\err}{t}$ w.r.t. time also matters, in addition to the score-matching loss \eqref{eqn::loss}. Asymptotically, the score error can be viewed as \enquote{additive} and the error from two time regions (blue and green) might decay or magnify as the magnitude of diffusion coefficient $\bkh$ increases (see the right picture). The yellow region is a transitional region whether the effect of $\h$ is not easy to decide.}
\end{figure}

\section{Asymptotic analysis of terminal errors}
\label{sec::asymptotic}

We use the KL divergence between the data distribution $p_0$ and the distribution of generated samples to quantify the performance of generative model, which depends on the error of score function $\eps\bk{\err}{}$, the diffusion coefficient $\bkh$, and data distribution $p_0$. To extract the main feature, we first let $\eps\to 0$ and estimate
\begin{align*}
\text{sample generation error in KL divergence} = \LO(\bk{h}{}, \bk{\err}{}, p_0) \eps^2 + \order{\eps^3}.
\end{align*}
Whenever $p_0$ and $\bk{\err}{}$ are obvious  from the context, we  simply  write  $\LO(\bkh) \equiv \LO(\bk{h}{}, \bk{\err}{}, p_0)$. Next, 
we formulate the main problem setup  and assumptions  in \secref{subsec::setup}.
 The expression of $\LO$ is shown in \propref{prop::asymptotic}.
Then we let $\bk{h}{t}\equiv \h$ be independent of time,  and   study  how the leading order function $\LO$ depends on $\h$ in various settings of the error function $\bk{\err}{t}$.
 Firstly, we consider $\bk{\err}{t}(x)=\delta_{t-s}E(x)$ as a time-localized function and two limiting scenarios: $\bkh = \h$ where $\h=0$ (ODE case) and  $\h\to\infty$ (SDE case with large diffusion). When $\bk{\err}{t}(x)=\delta_{t-s}E(x)$ is a time-localized function at the end of the generative process (i.e., $s$ is close to $T$), the ODE case will outperform the SDE case (see \propref{prop::end_error}); otherwise, the SDE case has an exponentially smaller error than the ODE case as $\h\to\infty$ (see \propref{prop::decay}). Secondly, by combing \propref{prop::decay} and \propref{prop::end_error},
the tail behavior of $\h\mapsto\LO(\h)$ for a general $\bk{\err}{}$ is described in \propref{pro::asympt_error}. The reasons behind considering this pulse-shape $\bk{\err}{t}$ will be discussed in \appref{subsec::discussion_pulse}.

\subsection{Set-up and assumptions}
\label{subsec::setup}

 We fix the time $T$ and consider the following SDE for the generative process in $t\in[0,T]$,
\begin{align}
\label{eqn::generative_err_only}
\begin{aligned}
& \dd \wt{Y}_t = \Big(-\bk{f}{t}(\wt{Y}_t) + \frac{(\bk{g}{t})^2 + (\bk{h}{t})^2}{2} \bk{\score}{t}(\wt{Y}_t)\Big)\ \dd t + \bk{h}{t}\ \dd W_t,\ \ \law(\wt{Y}_0) = p_T,
\end{aligned}
\end{align}
which can be regarded as a perturbed equation \eqref{eqn::generative} of $Y_t$.
 Denote  the distribution of $\wt{Y}_t$ as $\wt{q}_t$.  Note that $\wt{q}_T$ depends on both {$\bk{h}{}$ and $\eps \bk{\err}{t}$ (hidden inside $\bk{\score}{t} \equiv \nabla \log \bk{p}{t} + \eps \bk{\err}{t}$)}; however, when $\eps = 0$, by \eqref{eqn::back}, $\wt{q}_T \equiv q_T\equiv p_0$ for {\it any} $\bk{h}{}$.
We quantify the sample generation quality via $$\kldivg{p_0}{\wt{q}_T}=\int  p_0 \log (p_0/\wt{q}_T).$$ 
Due to the presence of $\eps\bk{\err}{}$ with non-zero $\eps$, in general $\kldivg{p_0}{\wt{q}_T}>0$.

\begin{assume}Throughout this section, we assume that:
\label{assume}
~
\begin{enumerate}[leftmargin=2em,label=(\arabic*)]
\item For  the forward process, we assume $\fw{f}{t}(x) = -\frac{1}{2} x$, $\fw{g}{t} = 1$ without loss of generality.

\item The data distribution has the density $p_0$.

\item \label{assume::U0} There exists $c_U \in \Real$ such that 
$U_0(x) - \nicefrac{\abs{x}^2}{2} \ge c_U$, for any $x\in \dom$, where 
$U_0:=-\log p_0$.
\end{enumerate}

\end{assume}

Recall that a generic $g_t$ is equivalent to the time re-scaling (\appref{app::time_rescaling}).
So this choice of $g_t=1$ refers to the generic choice in VP-SDE  \cite{song2021scorebased}. The second assumption is also mild;
in practice, if $p_0$ is a delta distribution,
a common practice is that one tries to learn the mollified version 
$p_{\sigma}(x) := 
\int_{\dom} \frac{1}{(2\pi\sigma^2)^{d/2}}
\exp\big(-\nicefrac{(x-y)^2}{2\sigma^2}\big) p_0(y)\dd y$ instead,
as  in the GAN  \cite{sonderby2017amortised}
and  early-stop techniques  \cite{chen_improved_2022,chen2023sampling}.
The third assumption is not restrict, 
for example,  $U_0(x) = {\abs{x}^2}$ and $c_U=0$. 
As many realistic datasets are almost compactly supported, we expect that $\rho_0 = e^{-U_0}$ decays faster than a Gaussian, namely, $\rho_0(x) \le \wt{C} e^{-\abs{x}^2/2}$ for some $\wt{C}>0$, which reduces to the third one.

\subsection{Asymptotic expansion of the KL divergence with respect to $\eps$}

We introduce a time-dependent operator 
\begin{align}
	\label{eqn::opL}
	\opL{t}(\mu)(x) &:= -\div \Big(\big(\frac{1}{2} x + \frac{1+\bk{h}{t}^2}{2}\nabla\log \bk{p}{t}(x)\big) \mu(x)\Big) + \frac{\bk{h}{t}^2}{2}\Delta \mu(x),
\end{align}
which is the generator in the Fokker-Planck equation of $q_t$ for \eqref{eqn::generative}.
Define an operator $\tevol{s}{t}{\bkh}$ as follows:
given any function $\mu$, define $\tevol{s}{t}{\bkh}(\mu)$ to be the solution at time $t$ of the following Fokker-Planck equation with $r\in [s,t]$:
$	\partial_r \theta_r = \opL{r}(\theta_r),$ and {with initial condition } $\theta_s = \mu$. We define $\tevol{s}{t}{\bkh}(\mu) := \theta_t$. 
 Properties of this operator are collected in \appref{app::Phi_op}.

We use the notation $\wt{q}^\eps$ to refer to $\wt{q}$ since we need to calculate the  its derivative for the small  parameter $\eps$. The dependence of $\wt{q}$ on $\bk{h}{}$ is suppressed for short notations. 
We have the following asymptotic result with the proof given in  \appref{subsec::prop::asymptotic}.
\begin{prop}
\label{prop::asymptotic}
Define $v_T := \partial_{\eps} \wt{q}^{\eps}_T\rvert_{\eps=0}$ as the first-order perturbation of $\wt{q}_T^\eps$. 
	We have
	\begin{align}
		\label{eqn::asymptotic}
	\kldivg{p_0}{\wt{q}_T^\eps} &= \LO(\bkh)\eps^2 + \order{\eps^3},
	\end{align}
	where 
	\begin{align}
		\label{eqn::asymptotic::L_and_vT}
	\LO(\bkh) &= \frac{1}{2} \int_{\dom} \frac{v_T^2(x)}{p_0(x)}\ \dd x,\qquad
	v_T = -\frac{1}{2}\int_{0}^{T} (1+\bk{h}{t}^2) \tevol{t}{T}{\bkh}\big(
	\div(\bk{p}{t}\bk{\err}{t})\big)\ \dd t.
	\end{align}
\end{prop}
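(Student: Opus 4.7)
The plan is to derive a linear perturbation expansion of $\wt{q}_T^\eps$ in $\eps$ and then Taylor-expand the KL divergence about the unperturbed density. First I would write down the Fokker--Planck equation for $\wt{q}^\eps_t$ associated with \eqref{eqn::generative_err_only}. Under Assumption~\ref{assume} and with $\bk{\score}{t} = \nabla\log\bk{p}{t} + \eps\bk{\err}{t}$, it reads
\begin{equation*}
\partial_t \wt{q}^\eps_t \;=\; \opL{t}(\wt{q}^\eps_t) \;-\; \tfrac{\eps(1+\bk{h}{t}^2)}{2}\,\div\bigl(\bk{\err}{t}\,\wt{q}^\eps_t\bigr),\qquad \wt{q}^\eps_0 = p_T.
\end{equation*}
At $\eps=0$ this reduces to $\partial_t \wt{q}^0_t = \opL{t}(\wt{q}^0_t)$, which by the very definition of $\opL{t}$ is solved by $\bk{p}{t}\equiv p_{T-t}$. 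Hence $\wt{q}^0_T = \bk{p}{T} = p_0$, which is the reference around which we expand.

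Next I would plug the ansatz $\wt{q}^\eps_t = \bk{p}{t} + \eps\, v_t + \eps^2\, w_t + \order{\eps^3}$ into the Fokker--Planck equation and match powers of $\eps$. At order $\eps^1$ this gives the linear inhomogeneous equation
\begin{equation*}
\partial_t v_t \;=\; \opL{t}(v_t) \;-\; \tfrac{1}{2}(1+\bk{h}{t}^2)\,\div\bigl(\bk{\err}{t}\,\bk{p}{t}\bigr), \qquad v_0 = 0.
\end{equation*}
Since $\tevol{s}{t}{\bkh}$ is by definition the solution operator of $\partial_r\theta_r = \opL{r}(\theta_r)$ (see \appref{app::Phi_op}), Duhamel's formula immediately delivers the claimed expression for $v_T$. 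Because $\opL{t}$ is in divergence form and the inhomogeneous source is itself a divergence of a decaying field, mass is conserved; together with $v_0=0$ this yields $\int_{\dom} v_T\,\dd x = 0$, and the same reasoning applied at order $\eps^2$ gives $\int_{\dom} w_T\,\dd x = 0$.

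The final step is to expand $\kldivg{p_0}{\wt{q}^\eps_T} = -\int p_0\log(\wt{q}^\eps_T/p_0)\,\dd x$. Writing $\wt{q}^\eps_T/p_0 = 1 + \eps\,v_T/p_0 + \eps^2\,w_T/p_0 + \order{\eps^3}$ and using $\log(1+u)=u-u^2/2+\order{u^3}$,
\begin{equation*}
-\log\frac{\wt{q}^\eps_T}{p_0} \;=\; -\eps\frac{v_T}{p_0} \;-\; \eps^2\frac{w_T}{p_0} \;+\; \frac{\eps^2}{2}\Bigl(\frac{v_T}{p_0}\Bigr)^{\!2} \;+\; \order{\eps^3}.
\end{equation*}
Multiplying by $p_0$, integrating over $\dom$, and invoking $\int v_T\,\dd x = \int w_T\,\dd x = 0$ kills both the $\eps^1$ term and the $w_T$ contribution, leaving $\kldivg{p_0}{\wt{q}^\eps_T} = \tfrac{\eps^2}{2}\int v_T^2/p_0\,\dd x + \order{\eps^3}$, which is exactly \eqref{eqn::asymptotic::L_and_vT}.

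The main obstacle I anticipate is not the algebraic manipulation above but the rigorous justification of the $\order{\eps^3}$ remainder. For the pointwise Taylor expansion of the logarithm to be integrable against $p_0$, one needs uniform-in-$\eps$ control of the ratio $\wt{q}^\eps_T/p_0$ bounded away from $0$ and $\infty$, together with integrability of $v_T^2/p_0$ and the cubic remainder. For this I would lean on Assumption~\ref{assume}\ref{assume::U0} to control the tails of $p_0$ and, via the forward OU evolution, those of $\bk{p}{t}$; the smoothing and positivity properties of $\tevol{s}{t}{\bkh}$ collected in \appref{app::Phi_op} would then furnish the required a priori bounds on $v_T$ and $w_T$. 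These tail and regularity estimates are the delicate part; the asymptotic identity itself is a short consequence of Duhamel's principle plus mass conservation.
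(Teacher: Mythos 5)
Your proposal is correct and follows essentially the same route as the paper's proof in \appref{subsec::prop::asymptotic}: linearize the Fokker--Planck equation of $\wt{q}^\eps_t$ in $\eps$, solve the first-order equation by Duhamel's principle via $\tevol{t}{T}{\bkh}$, and Taylor-expand the KL divergence using the vanishing of $\int v_T$ and of the second-order coefficient's integral (the paper gets these from $\int\wt{q}_T^\eps\equiv 1$, you from mass conservation — equivalent). Your closing remarks on rigorously controlling the $\order{\eps^3}$ remainder go beyond what the paper addresses, which simply treats the expansion formally.
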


\subsection{The role of $\bkh$ in the Fokker-Planck operator $\opL{t}$}
\label{s3}

Let the potential $U_t:=-\log p_t$, and by {the notation of time-reversal}, $\bk{U}{t}\equiv U_{T-t}$.
When $\bkh > 0$, we can rewrite 
\begin{align}
\label{eqn::opL::2}
\opL{t}(\mu) &= \nicefrac{\bk{h}{t}^2}{2} \Big(\Laplace \mu + \div(\nabla \bkV{t}\mu)\Big),\qquad 
\bkV{t}(x) := (1+\nicefrac{1}{\bk{h}{t}^{2}}) \bk{U}{t}(x) - \frac{\abs{x}^2}{2\bk{h}{t}^{2}}.
\end{align}
We now introduce  a probability distribution induced by the potential $\bkV{t}$:
\begin{align}
\label{eqn::rho_t}
\rhoVbk{t} \propto \exp(- \bkV{t}).
\end{align}
By convection, we also have $V_t = \bk{V}{T-t}$ and $\rho_t = \bk{\rho}{T-t}$.
Note that $\bkV{t}$ depends on $\bkh$ and when $\bkh\to\infty$, we have $\bkVup{t}{\infty} = \bk{U}{t}$. 
The role of $\bkh$ in $\opL{t}$ now can be viewed as the time re-scaling and the effect of $\opL{t}$   at a local time $t$ can be viewed as evolving the Fokker-Planck equation associated with the overdamped Langevin dynamics in the time-dependent potential $\bkV{t}$ for  $\order{\nicefrac{\bk{h}{t}^2}{2}}$ amount of time. When $\bkh\to \infty$, $\opL{t}$ can be roughly viewed as constructing an \enquote{almost quasi-static} thermodynamics \cite{Callen1985-ro} bridging the initial $p_T$ and the (quasi-)equilibrium $p_0=e^{-\bk{U}{T}}$: {for any distribution $\bk{\mu}{t}$ (probably far away from $\rhoVbk{t}$), within a short time period $\Delta t$ slightly larger than $\order{1/\bk{h}{t}^2}$, we have $\bk{\mu}{t+\Delta t} \approx \rhoVbk{t+\Delta t}$, provided that $s\mapsto\bk{\mu}{s}$ evolves according to $\opL{s}$; see \appref{subsec::time_rescaling}.}
This key finding will guide our 
analysis of the solution operator $\tevol{t}{T}{\bkh}$.

\subsection{Score function is   perturbed by a pulse}

From \eqref{eqn::asymptotic::L_and_vT}, we know that $v_T$   combines the averaged effect of $\tevol{t}{T}{\bkh}(\div(\bk{p}{t}\bk{\err}{t}))$ for various $t$.
As a first result,  we consider $\bk{\err}{t}(x) = \errats(x) \delta_{t-s}$ for a fixed time instance  $s\in [0,T)$, where $\delta_{t-s}$ is the Dirac function. In this case, $v_T$ no longer involves time integration and we have
\begin{align}
\begin{aligned}
	v_T &= -\frac{(1+\bk{h}{s}^2)}{2} \tevol{s}{T}{\bkh}\big(\div(\bk{p}{s}\errats)\big),\\ 
	\LO(\bkh) &= \frac{(1+\bk{h}{s}^2)^2}{8}
	\int_{\dom} \frac{\big(\tevol{s}{T}{\bkh}(\div(\bk{p}{s} \errats))\big)^2}{p_0}.
\end{aligned}
\label{eqn::LOh}
\end{align}

To proceed, we need to make additional   assumptions:
\begin{assume}
~
\label{assume::convexity}
\begin{enumerate}[label=(\arabic*), leftmargin=2em]
\item For any $t\in[0,T]$, $U_t=-\log p_t$ is assumed to be strongly convex and   the Hessian of $U_t$
is bounded by two positive numbers $m_t$ and $M_t$ as below
\begin{align}
\label{eqn::hessian_bd}
    m_t \unit_d\ \preceq \nabla^2 U_t(x) \preceq  M_t \unit_d, \qquad  \  \forall x\in \Real^d.
\end{align}
Moreover, assume that 
\begin{align}\label{m01}
     m_t \ge 1,\qquad t\in [0,T],\qquad \text{ and } \qquad m_0 > 1.
\end{align}
\item For all $t\in [0,T]$, we choose $\bk{h}{t} = \mfh$ as constant.
\end{enumerate}

\end{assume}

Introduce 
\begin{align}
\label{eqn::kappa}
\bk{\kappa}{t} := (1+\h^{-2}) \bk{m}{t} - \h^{-2} \equiv (1+\h^{-2}) {m}_{T-t} - \h^{-2},
\end{align}
which characterizes the Hessian lower bound of $\bk{V}{t}$ \eqref{eqn::opL::2}. 
 Note  $\bk{\kappa}{t} \approx \bk{m}{t}$ when $\h\gg 1$. We would like to explain the reason behind the above assumptions, in particular, their practical relevance. 
 {\bf Part (1) Strong convexity: }
this is a common assumption for Langevin sampling analysis \cite{cheng_convergence_2018,dalalyan_sampling_2020}. As the role of $\opL{t}$ is essentially simulating a Langevin dynamics with time-dependent potential, it is reasonable to use this assumption as a starting point. Moreover, the algorithmic improvement in gDDIM \cite{chen2023sampling} is highly inspired by a form with assuming the data distribution as a Gaussian; Fr{\'e}chet inception distance (FID) \cite{heusel_gans_2017}, a widely used metric to evaluate the quality of generative model, essentially treats the data (in the feature space) as Gaussians. 
Therefore, we believe that this assumption can still capture some main features of realistic datasets.
{\bf Part (2) $m_t \ge 1$ for any $t\in [0,T]$:} The second assumption $m_t \ge 1$ means that $p_t$ is more localized (smaller variance) than the standard Gaussian (unit variance), which is compatible with Assumption~\ref{assume}~\ref{assume::U0}.
It can also ensure that $V_t$ is strongly convex with positive Hessian lower bounds, i.e., $\bk{\kappa}{t} \ge 1$, for any $t\in [0,T]$ and $\h\in (0,\infty)$.

\begin{prop}
\label{prop::decay}
Under Assumptions~\ref{assume} and \ref{assume::convexity}, suppose that $\bk{\err}{t}(x) = \errats(x) \delta_{t-s}$ for some fixed $s\in [0,T)$.
If $\h \ge \h_{lb} := \max\left\{\half, \h_0(\half) ,\sqrt{\max\{0,-\frac{c_U}{\ln(2)},\sup_{t\in [s,T]} \cstcbk{t}{2}\}}\right\},$
we have the upper bound of $L(h)$ in \eqref{eqn::LOh}:
\begin{align}
L(\h) & \le C_{\h} (1+\h^2)^2 \exp\big(- \int_{s}^{T} (\h^2 - \cstcbk{r}{2}) \bk{\kappa}{r} - \cstcbk{r}{1}\dd r\big),
\end{align}
where $C_{\h} = \frac{1}{2} \int \nicefrac{\big(\div(\bk{p}{s} \errats)\big)^2}{\rhoVbk{s}}$, $\cstcbk{t}{2} = \frac{20 + 30 \bk{M}{t}^2}{\bk{m}{t}^2}$;
see \appref{app::proof::decay} for details about $\cstcbk{t}{1}$ and $\h_0(\half)$; $c_U \in \Real$ comes from  Assumption~\ref{assume}. 
Moreover, $\lim_{\h\to\infty} C_{\h}$, $\lim_{\h\to\infty}\cstcbk{t}{1}$ exist.
\end{prop}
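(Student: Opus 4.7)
The plan is to track the weighted $L^2$-functional
\begin{align*}
\mathcal{I}(t):=\int_{\dom}\frac{\psi_t(x)^2}{\rhoVbk{t}(x)}\,dx,\qquad \psi_t:=\tevol{s}{t}{\h}\bigl(\div(\bk{p}{s}\errats)\bigr),
\end{align*}
along the flow generated by $\opL{t}$, and show it decays at the Langevin rate $\h^2\bk{\kappa}{t}$ up to controllable corrections from the time-dependence of the reference measure. First I would observe that $\psi_s$ is a spatial divergence, hence $\int\psi_s\,dx=0$, and since $\opL{t}$ preserves mass, $\int\psi_t\,dx=0$ for every $t\in[s,T]$. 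Consequently $\psi_t/\rhoVbk{t}$ is mean-zero under $\rhoVbk{t}$, which is the crucial hypothesis for a Poincar\'e-type inequality.

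Using the self-adjoint factorisation $\opL{t}(\mu)=\tfrac{\h^2}{2}\div\bigl(\rhoVbk{t}\nabla(\mu/\rhoVbk{t})\bigr)$ from \eqref{eqn::opL::2} and integration by parts, I obtain
\begin{align*}
\frac{d}{dt}\mathcal{I}(t) = -\h^2\int\bigl|\nabla(\psi_t/\rhoVbk{t})\bigr|^2\rhoVbk{t}\,dx \;-\; \int\frac{\psi_t^2\,\partial_t\rhoVbk{t}}{\rhoVbk{t}^2}\,dx.
\end{align*}
By Assumption~\ref{assume::convexity}, $\bkV{t}$ is $\bk{\kappa}{t}$-strongly convex, so Bakry--\'Emery yields a Poincar\'e inequality for $\rhoVbk{t}$ with constant $1/\bk{\kappa}{t}$; applied to the mean-zero test function $\psi_t/\rhoVbk{t}$, this bounds the first term above by $-\h^2\bk{\kappa}{t}\mathcal{I}(t)$.

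Next I would control the second term by rewriting $\partial_t\rhoVbk{t}/\rhoVbk{t}=-(\partial_t\bkV{t}-\ee_{\rhoVbk{t}}[\partial_t\bkV{t}])$, estimating $\partial_t\bkV{t}$ pointwise in terms of $m_t,M_t$ via the Fokker--Planck equation of the forward OU process, and then splitting its effect through a weighted Cauchy--Schwarz into (i) a part absorbable by a fraction of the Bakry--\'Emery dissipation, contributing the multiplicative correction $\cstcbk{t}{2}\bk{\kappa}{t}$, and (ii) a residual additive correction $\cstcbk{t}{1}$. The threshold $\h^2\ge\sup_{t\in[s,T]}\cstcbk{t}{2}$ guarantees that $(\h^2-\cstcbk{t}{2})\bk{\kappa}{t}\ge 0$ so the net rate is nonnegative, and the resulting differential inequality
\begin{align*}
\frac{d}{dt}\mathcal{I}(t)\le -\bigl((\h^2-\cstcbk{t}{2})\bk{\kappa}{t} - \cstcbk{t}{1}\bigr)\mathcal{I}(t)
\end{align*}
integrates via Gr\"onwall to $\mathcal{I}(T)\le 2C_\h\exp\bigl(-\int_s^T(\h^2-\cstcbk{r}{2})\bk{\kappa}{r}-\cstcbk{r}{1}\,dr\bigr)$, using $\mathcal{I}(s)=2C_\h$. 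Finally I would convert from the $\rhoVbk{T}$-weighted norm to the $p_0$-weighted norm appearing in \eqref{eqn::LOh} via the pointwise ratio
\begin{align*}
\frac{\rhoVbk{T}(x)}{p_0(x)}= Z_T^{-1}\exp\bigl(-\h^{-2}(U_0(x)-\abs{x}^2/2)\bigr)\le \frac{e^{-c_U/\h^2}}{Z_T},
\end{align*}
where Assumption~\ref{assume}~\ref{assume::U0} is used. The conditions $\h^2\ge -c_U/\ln 2$ and $\h\ge\h_0(\half)$ respectively keep the exponential factor and $Z_T^{-1}$ bounded by universal constants (in particular $Z_T\ge\half$), so the overall conversion constant is absorbable into the prefactor $(1+\h^2)^2$. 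Pointwise convergence $\rhoVbk{t}\to p_{T-t}$ as $\h\to\infty$ gives the existence of $\lim_{\h\to\infty}C_\h$ and $\lim_{\h\to\infty}\cstcbk{t}{1}$.

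The main obstacle is the explicit time-dependence of the invariant measure $\rhoVbk{t}$: extracting from the stray term $-\int\psi_t^2\partial_t\rhoVbk{t}/\rhoVbk{t}^2$ a piece that can be absorbed by the spectral gap $\bk{\kappa}{t}$ and a harmless additive remainder requires careful weighted inequalities tying together the Hessian bounds $m_t,M_t$ and the smoothness of the forward OU flow. The explicit shape $\cstcbk{t}{2}=(20+30M_t^2)/m_t^2$ indicates that one needs second-moment control of $\nabla\partial_t\bkV{t}$ under $\rhoVbk{t}$ rather than a crude supremum bound on $\partial_t\bkV{t}$, and this is where the bulk of the technical work lies.
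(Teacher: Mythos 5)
Your proposal is correct and follows essentially the same route as the paper: the same weighted functional $J(\h,t)=\int(\tevol{s}{t}{\h}(\div(\bk{p}{s}\errats)))^2/\rhoVbk{t}$, the same mass-conservation/mean-zero observation enabling the Bakry--\'Emery Poincar\'e inequality with constant $\bk{\kappa}{t}$, the same decomposition of $\tfrac{d}{dt}J$ into the Dirichlet dissipation plus the stray term $-\int\psi_t^2\,\partial_t\rhoVbk{t}/\rhoVbk{t}^2$, the same absorption of that term into a $\cstcbk{t}{2}$-multiple of the Dirichlet form plus a $\cstcbk{t}{1}$-multiple of $J$ (the paper's Proposition on $\partial_t\rhoVbk{t}$, which converts the quadratic growth of $\partial_t\bk{V}{t}$ into gradient terms exactly as you anticipate), Gr\"onwall, and the final change of weight from $\rhoVbk{T}$ to $p_0$ under the stated lower bounds on $\h$. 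The only differences are cosmetic (the paper performs the weight conversion at the outset rather than at the end, and states the Poincar\'e constant as $\bk{\kappa}{t}$ rather than $1/\bk{\kappa}{t}$).
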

See \appref{app::proof::decay} for proofs. {We remark that the above bound focuses on capturing the scaling with respect to $\h^2$ but may not be tight for other parameters. It remains interesting to see how we can improve the above upper bound.}
The \emph{main conclusion} is that: if the error function $\bk{\err}{t}$ is a pulse at time $t=s$, then for a large  $\h  $,  $\LO(\h)$ will decay to zero exponentially fast {with respect to $\h$}. 
For 1D Gaussian case, we can clearly observe such an exponential decay in  \figref{fig::1d::exp_decay}, where we pick $\bk{\err}{t} = \indi_{t\le 0.95 T}\nabla\log \bk{p}{t}$.
The intuition behind this exponential suppressed prefactor is that for large $h$, $\tevol{s}{T}{\bkh}$ can be viewed as an almost quasi-static thermodynamics dragging any positive measure towards $\rhoVbk{T}\approx p_0$, as mentioned above in \secref{s3}.
As $\nu = \div\big(\bk{p}{s} \errats\big)$ has measure zero, we can split it into positive and negative parts: $\nu = \nu^{+} - \nu^{-}$ ( assume $\int\nu^{\pm} = 1$ WLOG). 
Each term $\tevol{s}{T}{\bkh}\big(\nu^{+}\big) \approx \rhoVbk{T} \approx \tevol{s}{T}{\bkh}\big(\nu^{-}\big)$, which explains that $\tevol{s}{T}{\bkh}(\nu) \approx 0$ for large $\h$.

\begin{figure}[ht!]
\begin{subfigure}{0.3\textwidth}
\includegraphics[width=\textwidth]{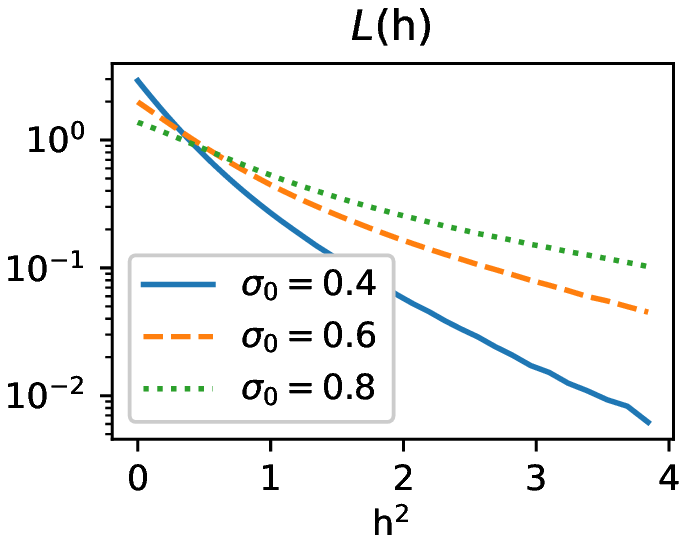}
\caption{$\bk{\err}{t} = \indi_{t\le 0.95 T}\nabla\log \bk{p}{t}$}
\label{fig::1d::exp_decay}
\end{subfigure}
~
\begin{subfigure}{0.3\textwidth}
\includegraphics[width=\textwidth]{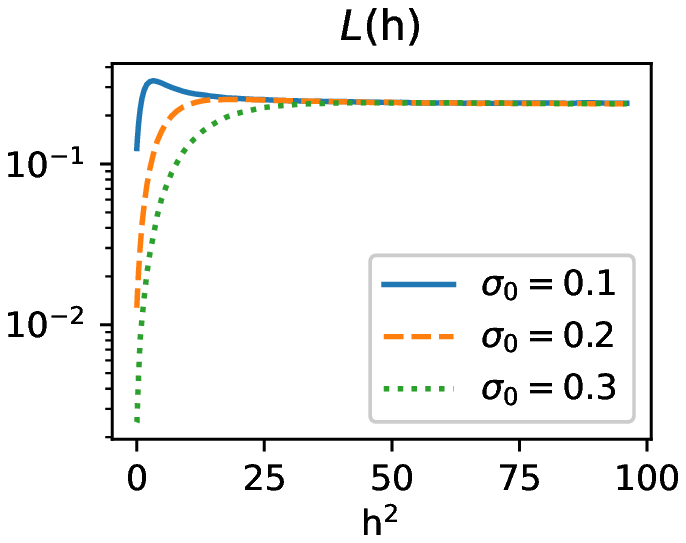}
\caption{$\bk{\err}{t} = \indi_{t\ge 0.995 T}\nabla\log \bk{p}{t}$}
\label{fig::1d::converge}
\end{subfigure}
~
\begin{subfigure}{0.3\textwidth}
\includegraphics[width=\textwidth]{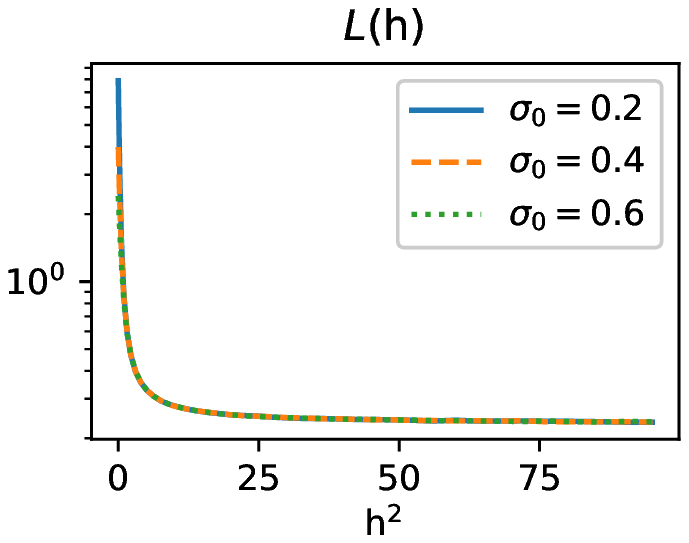}
\caption{$\bk{\err}{t} = \nabla\log \bk{p}{t}$}
\label{fig::limit_of_Lh}
\end{subfigure}
\caption{1D Gaussian $p_0 = \gaussarg{0}{ \sigma_0^2}$ (with different $\sigma_0$ smaller than one) and $T=2$. Panel (a) validates the exponential decay of $\LO(\h)$ when the score function has no error near $t\approx T$, similar to  \propref{prop::decay}.
Panel (b) validates \propref{prop::end_error} that the ODE  model ($h=0$) outperforms the SDE  model  when there is a large score error at $t\approx T$. 
Panel (c) validates \propref{pro::asympt_error} that $\lim_{\h\to\infty} \LO(\h)$ exists.}
\label{fig::1d::gauss}
\end{figure}

\subsection{Score function is only perturbed near the end of the generative process}

\begin{prop}
	\label{prop::end_error}
	Under Assumptions~\ref{assume} and \ref{assume::convexity},
	suppose that $\bk{\err}{t}(x) = \indi_{t\in [T-a,T]}\errats(x)$ where $a\ll 1$. Then when $a\ll 1$ and $\h\gg 1$, asymptotically , 
	\begin{align*}
		\LO(0) \sim \frac{a^2}{8} \int_{\dom} \frac{(\div(p_0\errats))^2}{p_0},\qquad\
		\LO(\h) \lesssim \frac{\big(1 - e^{-a\frac{\h^2}{2} \kappa_0}\big)^2}{2 \kappa_0^2} \int_{\dom} \frac{(\div(p_0\errats))^2}{p_0},
	\end{align*}
 with $\kappa_0 = (1+\nicefrac{1}{\h^{2}}) m_0 - \nicefrac{1}{\h^{2}}$ as in \eqref{eqn::kappa}.
 The upper bound of $\LO(\h)$ is tight asymptotically.
\end{prop}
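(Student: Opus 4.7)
The plan is to specialize the formula for $v_T$ in \eqref{eqn::asymptotic::L_and_vT} to $\bk{\err}{t} = \indi_{t\in[T-a,T]}\errats$ and constant $\bkh = \h$. Writing $\mu_0 := \div(p_0\errats)$ and using $\bk{p}{t} = p_0 + O(T-t)$ on the short interval $[T-a,T]$,
\begin{align*}
v_T = -\frac{1+\h^2}{2}\int_{T-a}^{T}\tevol{t}{T}{\bkh}\!\big(\div(\bk{p}{t}\errats)\big)\,\dd t
     = -\frac{1+\h^2}{2}\int_{0}^{a}\tevol{T-\tau}{T}{\bkh}(\mu_0)\,\dd\tau + o(a).
\end{align*}
The task is therefore to estimate the integrand $\tevol{T-\tau}{T}{\bkh}(\mu_0)$ for $\tau\in[0,a]$ in the two regimes $\h=0$ and $\h\to\infty$.

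For the ODE case $\h=0$, the generator $\opLODE{t}$ is purely advective, so $\tevol{T-\tau}{T}{0}$ equals the identity up to an $O(\tau)$ correction. Substituting gives $v_T = -\tfrac{a}{2}\mu_0 + O(a^2)$, and inserting into $\LO(0) = \tfrac12\int v_T^2/p_0$ yields $\LO(0) \sim \tfrac{a^2}{8}\int (\div(p_0\errats))^2/p_0$.

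For the SDE regime $\h\gg 1$, the key step is to freeze the potential in the representation $\opL{t}(\mu) = \tfrac{\h^2}{2}\big(\Delta\mu + \div(\nabla\bkV{t}\mu)\big)$ from \eqref{eqn::opL::2} at its terminal value $\bkV{T}$, incurring an additional $O(a)$ error on a window of length $a\ll 1$. The resulting autonomous Fokker-Planck operator has invariant density $\rhoVbk{T}$ and is conjugated by $\theta\mapsto \theta/\rhoVbk{T}$ to $\tfrac{\h^2}{2}\mc{L}$ on $L^2(\rhoVbk{T})$, where $\mc{L}:=\Delta - \nabla\bkV{T}\cdot\nabla$. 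By Assumption~\ref{assume::convexity}, $\nabla^2\bkV{T}\succeq \kappa_0\unit_d$, so $-\mc{L}$ admits an orthonormal eigenbasis $\{e_k\}$ in $L^2(\rhoVbk{T})$ with eigenvalues $0=\lambda_0 < \kappa_0\le \lambda_1\le \lambda_2\le \cdots$. Expanding $\mu_0/\rhoVbk{T} = \sum_{k\ge 1}c_k e_k$ (the $k=0$ mode vanishes because $\int\mu_0 = 0$) and integrating in $\tau$ gives
\begin{align*}
\int_{0}^{a}\tevol{T-\tau}{T}{\bkh}(\mu_0)\,\dd\tau \;\approx\; \rhoVbk{T}\sum_{k\ge 1} c_k\,\frac{2\big(1-e^{-a\h^2\lambda_k/2}\big)}{\h^2\lambda_k}\,e_k.
\end{align*}
Taking the $\int(\cdot)^2/\rhoVbk{T}$ norm diagonalizes the sum, and the elementary fact that $x\mapsto (1-e^{-cx})/x$ is decreasing for $c,x>0$ upgrades every $\lambda_k\ge \kappa_0$ to the worst case $\lambda_k = \kappa_0$, producing
\begin{align*}
\int \frac{v_T^2}{\rhoVbk{T}} \;\lesssim\; \left(\frac{1+\h^2}{\h^2}\right)^{\!2}\frac{\big(1-e^{-a\h^2\kappa_0/2}\big)^2}{\kappa_0^2}\int\frac{\mu_0^2}{\rhoVbk{T}}.
\end{align*}
Since $(1+\h^2)/\h^2\to 1$ and $\bkV{T}\to U_0$ (hence $\rhoVbk{T}\to p_0$) as $\h\to\infty$, dividing by $2$ yields the advertised bound on $\LO(\h)$.

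The main obstacle is controlling the two frozen-in-time approximations so that their remainders stay subleading uniformly in $\h$. Replacing $\bk{p}{t}$ by $p_0$ is routine from the regularity of the forward OU flow, but replacing $\opL{t}$ by its frozen counterpart involves the commutator $\int_{T-\tau}^T(\opL{r}-\opL{T})\,\dd r$, whose operator size scales like $a\h^2\|\partial_t\bkV{t}\|$ and could in principle compete with the leading exponent $-a\h^2\kappa_0/2$. The Lipschitz smoothness of $t\mapsto\bkV{t}$ near $T$ (inherited from the OU regularization), combined with the Poincaré-based decay already exploited in \propref{prop::decay}, confines this error to a controllable higher-order perturbation. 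Tightness of the SDE upper bound is then obtained by picking $\errats$ so that $\mu_0/\rhoVbk{T}$ projects onto the lowest-eigenvalue mode of $-\mc{L}$, which saturates the monotonicity step and promotes the stated $\lesssim$ to $\sim$.
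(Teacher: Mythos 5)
Your proposal is correct and follows essentially the same route as the paper's proof: approximate $\bk{p}{t}\approx p_0$ and $\tevol{t}{T}{0}\approx\id$ on the short window for the ODE case, and for large $\h$ freeze the generator at its terminal value, diagonalize the resulting self-adjoint operator in the weighted $L^2$ space with invariant density $\rhoVbk{T}$ (the paper's $\rhoV{0}$), and bound each mode using the eigenvalue lower bound $\kappa_0$ coming from the Hessian bound on $\bk{V}{T}$ — this is exactly the content of the paper's Lemma~\ref{lem::exp_K}, including the monotonicity step that reduces to the worst case $\lambda_k=\kappa_0$. Your additional remarks on the commutator error from freezing the potential and on saturating the bound via the lowest eigenmode go slightly beyond the paper's (asymptotic, $\sim$-level) treatment but do not change the approach.
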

We remark that we made \emph{no assumption on how $a\h^2$ scales}. 
If $\h \gg 1$,
$\nicefrac{\LO(\h)}{\LO(0)} \lesssim {4\big(1 - e^{-a \h^2 \kappa_0/2}\big)^2}{/a^2 \kappa_0^2}.$
{\bf Case (\rom{1}):} If $a \h^2 \gg 1 $, then $\nicefrac{\LO(\h)}{\LO(0)}\lesssim \nicefrac{4}{a^2\kappa_0^2}$, which is large as $a\ll 1$. {\bf Case (\rom{2}):} If $a\h^2 \ll 1$, then $\nicefrac{\LO(\h)}{\LO(0)}\lesssim \h^4$, which is still large. In either case, $\nicefrac{L(\h)}{L(0)}$ is expected to be large for a general $\errats$ and the ODE model is preferred in this case.
The intuition is that there is almost no time for the operator $\opL{t}$ to suppress the error $\errats$, so  the prefactor $1+(\bk{h}{t})^2$ in $v_T$ \eqref{eqn::asymptotic::L_and_vT} dominates (which is the key difference compared with \propref{prop::decay}).
The proof is postponed to \appref{sec::prop::end_error}. In \figref{fig::1d::converge}, we consider 1D Gaussian and only perturb the score function at the end of the generative process ($\bk{\err}{t} = \indi_{t\ge 0.995 T}\nabla\log \bk{p}{t}$); clearly, the SDE-based models have comparatively larger error.

\subsection{General   error in score function}
\label{subsec::general_error}
We can generalize  \propref{prop::decay} and \propref{prop::end_error} to a general error function $\bk{\err}{}$, and observe that $\LO(\h)$ will converge to a constant exponentially fast as $\h\to\infty$. 
\begin{prop}
	\label{pro::asympt_error}
		Under Assumptions~\ref{assume} and \ref{assume::convexity}, we consider a general error function $(t,x)\mapsto\bk{\err}{t}(x)$.
	Let $\gamma = \inf_{\h \ge \h_{lb}}\inf_{t\in [0,T]} \bk{\kappa}{t}$.
	For any $\alpha\in (0,1)$ and $\beta \in (0,2)$, when $\h\gg 1$,
\begin{align*}
	\LO(\h) &\lesssim (1+\alpha^2)\mathcal{T} + (1+\alpha^{-2})\ C\ \gamma^{-1} (1+\h^2)\exp\big(- \h^{2-\beta} \gamma\big),\\
	\LO(\h) & \gtrsim (1-\alpha^2) \mathcal{T} -(\alpha^{-2}-1) C\ \gamma^{-1} (1+\h^2)\exp\big(- \h^{2-\beta} \gamma\big),
\end{align*}
where $C$ is given in \appref{subsec::proof_asympt_error}
and 
$\mathcal{T}$ (only depending on $p_0$ and $\bk{\err}{T}$) is upper bounded by 
\begin{align}
	\label{eqn::limit_of_leading_order}
	0\le \mathcal{T} \lesssim \frac{1}{2 m_0^2}\int_{\dom}\frac{\big(\div(p_0\bk{\err}{T})\big)^2}{p_0} {\equiv\frac{1}{2 m_0^2}\int_{\dom} \Big(\nabla\log p_0 \cdot \bk{\err}{T} + \div \bk{\err}{T} \Big)^2\ p_0.}
\end{align}

\end{prop}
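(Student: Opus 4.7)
The plan is to split the generative time horizon at a scale $a=a(\h)$ that isolates the contribution near the end of the generative process (where the score error effectively acts as a terminal pulse, treated by \propref{prop::end_error}) from the earlier portion (where the solution operator $\tevol{t}{T}{\bkh}$ has enough time to exponentially damp any divergence error, as in \propref{prop::decay}). Writing
\[
v_T = A_\h + B_\h,\qquad A_\h := -\tfrac{1}{2}\int_{T-a}^{T} (1+\h^2)\,\tevol{t}{T}{\bkh}\big(\div(\bk{p}{t}\bk{\err}{t})\big)\,\dd t,
\]
with $B_\h$ the corresponding integral over $[0,T-a]$, I would first use the elementary Young-type bound $(A+B)^2\le (1+\alpha^2)A^2+(1+\alpha^{-2})B^2$ and its lower-bound counterpart $(A+B)^2\ge (1-\alpha^2)A^2-(\alpha^{-2}-1)B^2$ to reduce estimating $\LO(\h)=\tfrac{1}{2}\int v_T^2/p_0$ to separately controlling the $L^2(\dd x/p_0)$-norms of $A_\h$ and $B_\h$.

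For the bulk part $B_\h$, I would apply Minkowski's integral inequality to move the $L^2(\dd x/p_0)$-norm inside the $\dd t$-integration, and then invoke the pointwise decay estimate underlying \propref{prop::decay}: for each $t\in[0,T-a]$, the operator $\tevol{t}{T}{\bkh}$ runs for a duration at least $a$ in the quasi-static regime and therefore contributes an exponential factor of order $\exp\big(-\tfrac{1}{2}(T-t)\h^2\bk{\kappa}{t}\big)$ (modulo lower-order $\cstcbk{t}{i}$ corrections). Integrating in $t$ absorbs one factor of $\h^2\gamma$ into the denominator and leaves a dominant exponential $\exp(-\tfrac{1}{2}a\h^2\gamma)$; squaring yields a bound of the form $\|B_\h\|_{L^2(\dd x/p_0)}^2 \lesssim C\,\gamma^{-1}(1+\h^2)\exp(-a\h^2\gamma)$. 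Choosing $a=\h^{-\beta}$ with $\beta\in(0,2)$ makes $a\to 0$ and $a\h^2=\h^{2-\beta}\to\infty$ hold simultaneously, and yields precisely the stated exponential suppression $\exp(-\h^{2-\beta}\gamma)$.

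For the terminal part $A_\h$, I would use continuity of $t\mapsto \bk{p}{t}$ and $t\mapsto\bk{\err}{t}$ near $t=T$ to freeze their values at $\bk{p}{T}=p_0$ and $\bk{\err}{T}$ on the shrinking window $[T-a,T]$, incurring a residual of order $a$ that is dominated by the leading contribution. The resulting expression is exactly the pulse configuration of \propref{prop::end_error} with $\errats=\bk{\err}{T}$, whose upper bound
\[
\frac{(1-\exp(-a\h^2\kappa_0/2))^2}{2\kappa_0^2}\int_{\dom}\frac{(\div(p_0\bk{\err}{T}))^2}{p_0}
\]
converges, as $a\h^2\to\infty$ and $\h\to\infty$ (hence $\kappa_0\to m_0$), to a limit $\mathcal{T}$ satisfying the stated bound $\tfrac{1}{2m_0^2}\int(\div(p_0\bk{\err}{T}))^2/p_0$. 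Assembling the two pieces via the Young inequality gives the claimed two-sided estimates with the same $\alpha,\beta$.

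The main technical obstacle is the terminal step: one must justify the replacement of $\bk{p}{t}\bk{\err}{t}$ by its $t=T$ value on the shrinking interval $[T-a,T]$ \emph{uniformly} in $\h$ as $a=\h^{-\beta}\to 0$, and show that the simultaneous convergence $\kappa_0\to m_0$ together with the saturation $(1-\exp(-a\h^2\kappa_0/2))^2\to 1$ delivers a clean limit $\mathcal{T}$ free of hidden $\h$-dependent constants. Everything else is a combination of Minkowski's inequality, the elementary Young inequality, and pointwise-in-$t$ invocation of \propref{prop::decay} and \propref{prop::end_error}, whose work has already been done.
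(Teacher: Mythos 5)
Your proposal is correct and follows essentially the same route as the paper: the same splitting of $v_T$ at $a=\h^{-\beta}$, the same Young-type inequality to separate the terminal and bulk contributions, the exponential damping from \propref{prop::decay} for the bulk part (the paper uses Cauchy--Schwarz plus Fubini where you use Minkowski, an immaterial difference), and the freezing of $\bk{p}{t}\bk{\err}{t}$ at $t=T$ followed by the spectral bound of \lemref{lem::exp_K} for the terminal part. The technical caveat you flag about the uniform-in-$\h$ justification of the terminal freezing is the same point the paper itself treats only asymptotically.
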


In the limit $\h\to\infty$, $(1-\alpha^2) \mathcal{T} \lesssim \LO(\h) \lesssim (1+\alpha^2) \mathcal{T}$, where $\alpha\in (0,1)$ is arbitrary.
Hence, the tail behavior is that 
$\LO(\h)$ converges to $\mathcal{T}$ exponentially fast as $\h\to\infty$.
If we assume that $\bk{\err}{T} = \nabla \log p_0$, $p_0 = \gaussarg{0}{\sigma_0^2 \unit_d}$ in $d$-dimension, then the above upper bound is simply $\mathcal{T} \lesssim d$, which is independent of $\sigma_0$ (see \appref{app::upper_bound_example}). For 1D Gaussian in \figref{fig::limit_of_Lh}, we can indeed observe that $\lim_{\h\to\infty} L(\h)$ exists, and is bounded above by $d=1$; see \appref{sec::gauss} for more types of error functions.

The above upper bound has an interesting tight connection to the generator of (overdamped) Langevin dynamics with drift $-\nabla U_0 \equiv \nabla p_0$. If we adopt constrained score models \cite{lai_fp-diffusion_2023,salimans_should_2021}, namely, parameterizing $\log p_t$ instead of the score function $\nabla\log p_t$ during training, the error $\bk{\err}{T} = \nabla\varphi$ for some scalar-valued function $\varphi$. Then the above upper bound becomes
	\begin{align}
		\label{eqn::upper_bound_v1}
		\frac{1}{2m_0^2} \int_{\Real^d} (\Laplace \varphi - \nabla U_0 \cdot \nabla\varphi)^2 e^{-U_0} = \frac{1}{2m_0^2} \int_{\Real^d} (\mathcal{L}^{*}\varphi)^2 e^{-U_0},
	\end{align}
where $\mathcal{L}^{*}(\varphi) := \Laplace \varphi - \nabla U_0 \cdot \nabla\varphi$ whose adjoint operator $\mathcal{L}(\mu) = \nabla\cdot(\nabla U_0 \mu) + \Laplace \mu$ is the Fokker-Planck generator of the following Langevin dynamics $\dd X_t = -\nabla U_0(X_t)\ \dd t + \sqrt{2}\ \dd W_t$ where $W_t$ is the standard Brownian motion. We remark that this formula \eqref{eqn::upper_bound_v1} is general for constrained score models \cite{lai_fp-diffusion_2023,salimans_should_2021}; see also \ref{app::upper_bound_example} for elaborations. An interesting open question is whether and how we can take the above upper bound into consideration when designing the loss function.

\subsection{An application: exploring the effect of training weight}
\label{subsec::discuss_weight}
The above theoretical discussions suggest that diffusion models with large diffusion coefficients are more negatively affected by score error near data's side, whereas the ODE model is more negatively affected by the score error near the noise end. This leads us to conjecture that if we can control the training (e.g., by optimizing the training weight $\weight_t$), so that the score error distribution near the noise end is reduced and meanwhile the score-matching loss  is not significantly impacted, then it will very likely improve the ODE models.
We report preliminary numerical experiments to support this idea in \appref{sec::weight}, whereas a comprehensive study will be left as future works.

\section{Numerical experiments}
\label{sec::experiments}

We present experiments on 2D Gaussian mixture model, Swiss roll, CIFAR10 to support our theoretical results: when numerical discretization error is not dominating, the sampling quality increases as $\h$ increases, a reminiscent of \propref{prop::decay}.  Experimental details are
postponed to \appref{app::expdetails}, as well as more numerical results (e.g., results about 1D Gaussian mixture and MNIST). 
Results by adopting various weight functions, a technique arising from theoretical predictions, are postponed to \appref{sec::weight}.
Source codes are available at \url{https://github.com/yucaoyc/OptimalDiffusion}.

\paragraph{Example 1: 2D 4-mode Gaussian mixture.}

We verify the theoretical results on 2D Gaussian mixture with a specified score error. In~\figref{fig::gmm2d:vis}, a clear trend is that a higher $\mathsf{h}$ produces generated distributions that better match the marginal densities of $p_0$, and it is numerically verified by the purple line of~\figref{fig:gmm2d:a}. In~\figref{fig:gmm2d}, with multiple settings of $\bk{\err}{t}$ and $\epsilon$, we observe a consistent phenomenon that as $\mathsf{h}$ increases, the KL divergence of true data and generated data converges exponentially fast, thus validating ~\propref{pro::asympt_error}. It worths noticing that in all three settings of $\bk{\err}{t}$, by simply adopting a larger diffusion coefficient $h$ in~\eqref{eqn::back}, we can obtain better generative models without any extra training costs.

\begin{figure}[h!]
    \centering
    \includegraphics[width=.18\linewidth]{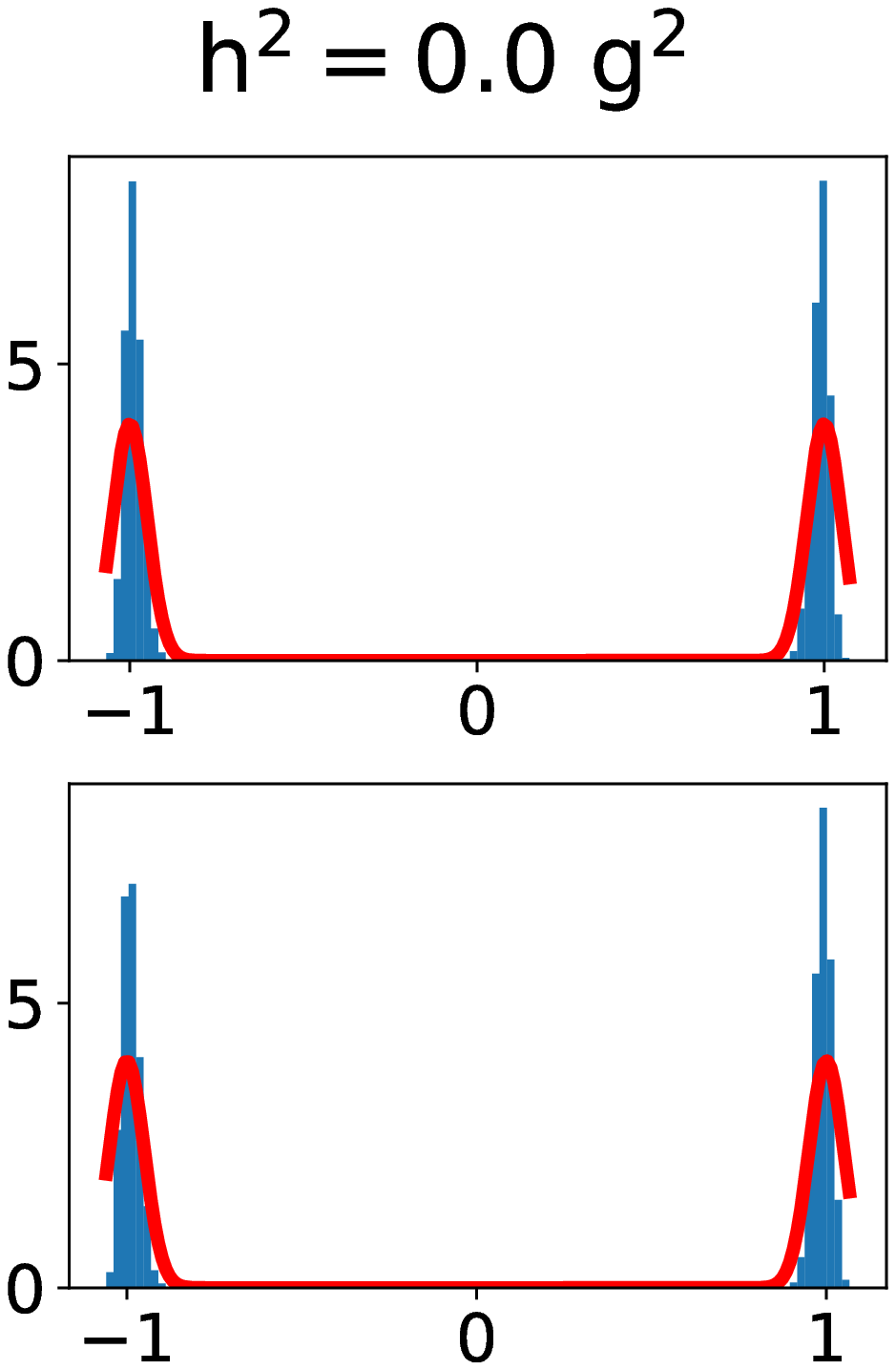}
    \includegraphics[width=.18\linewidth]{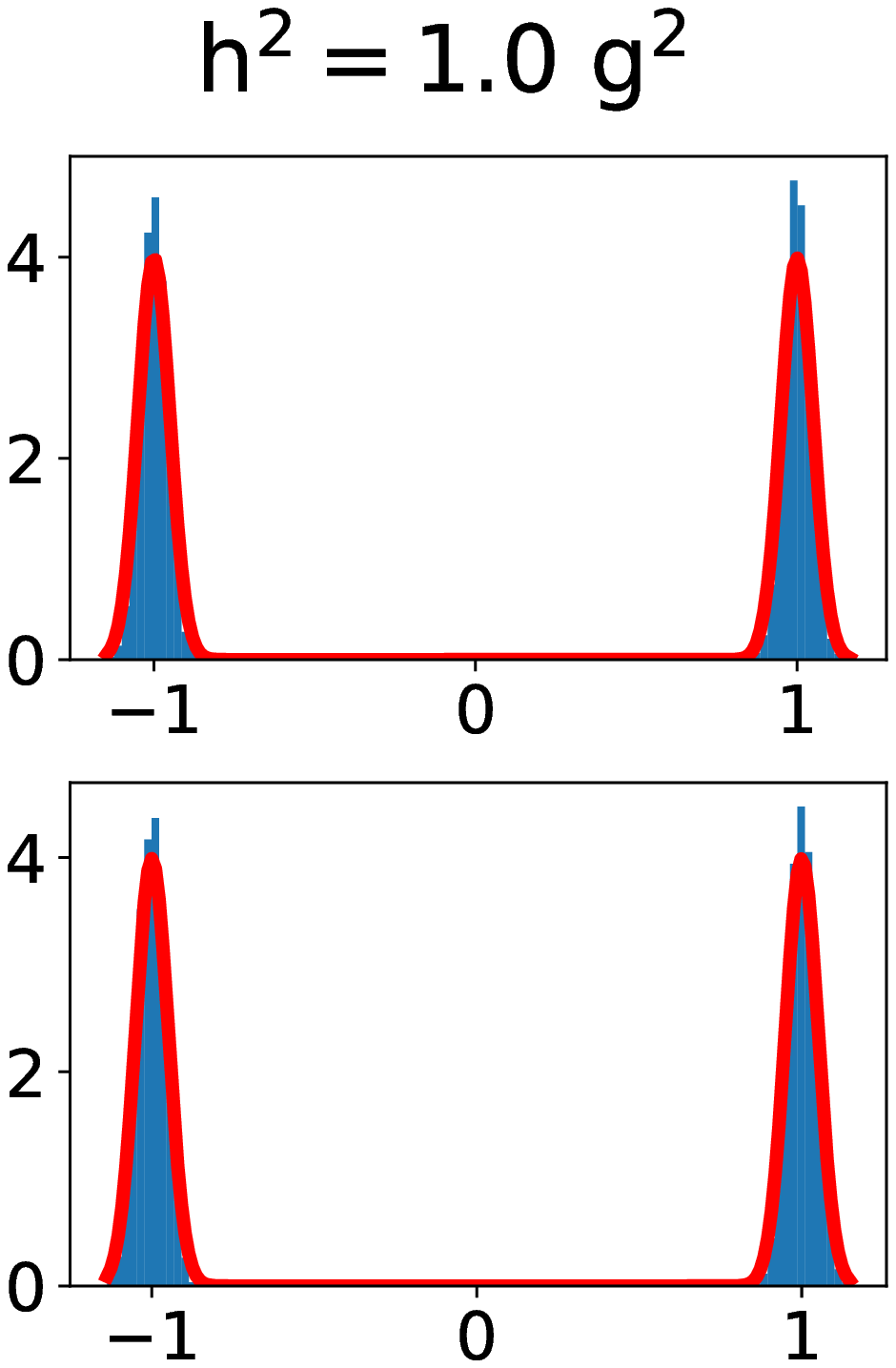}
    \includegraphics[width=.18\linewidth]{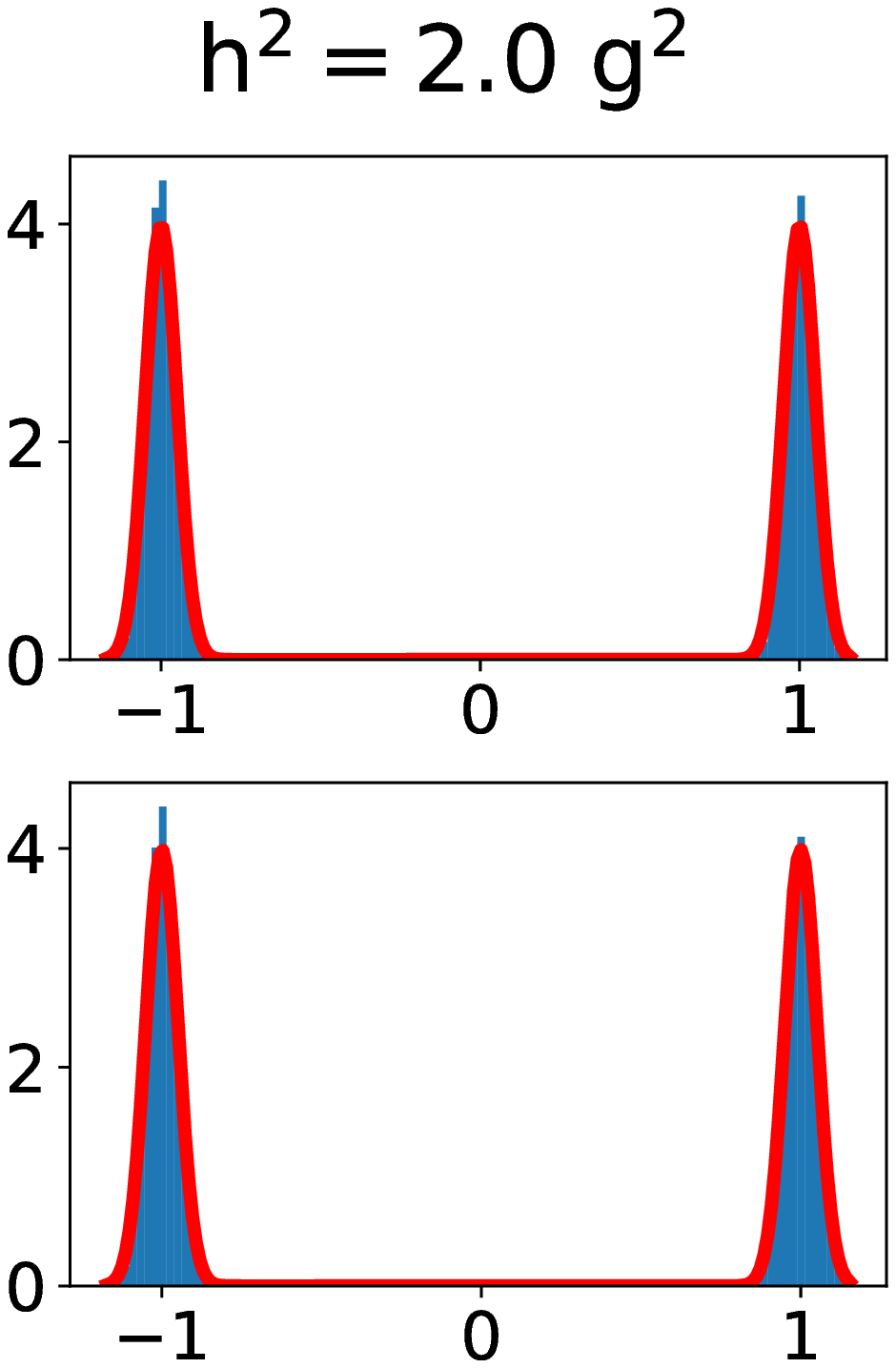}
    \includegraphics[width=.18\linewidth]{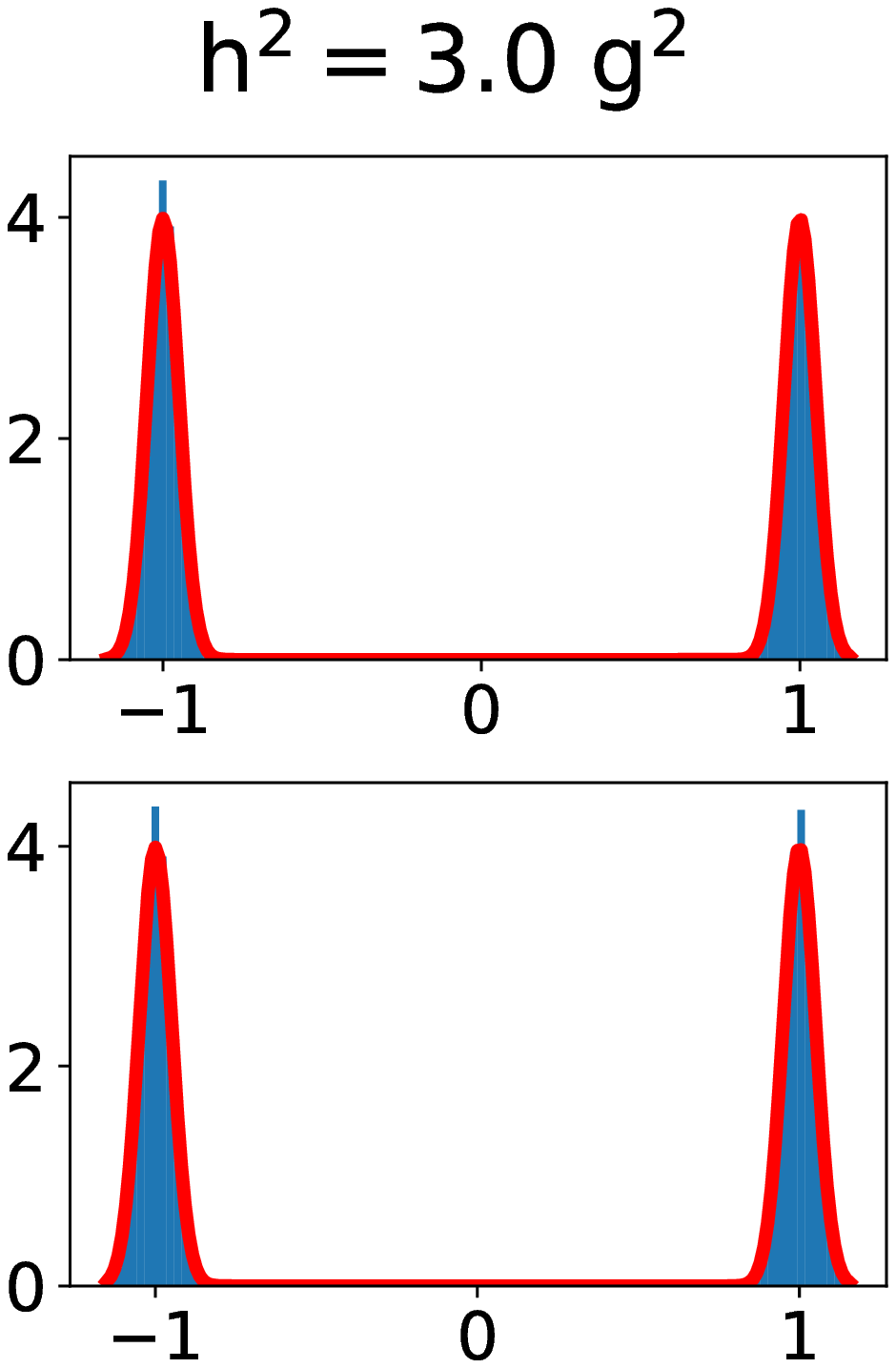}
    \includegraphics[width=.18\linewidth]{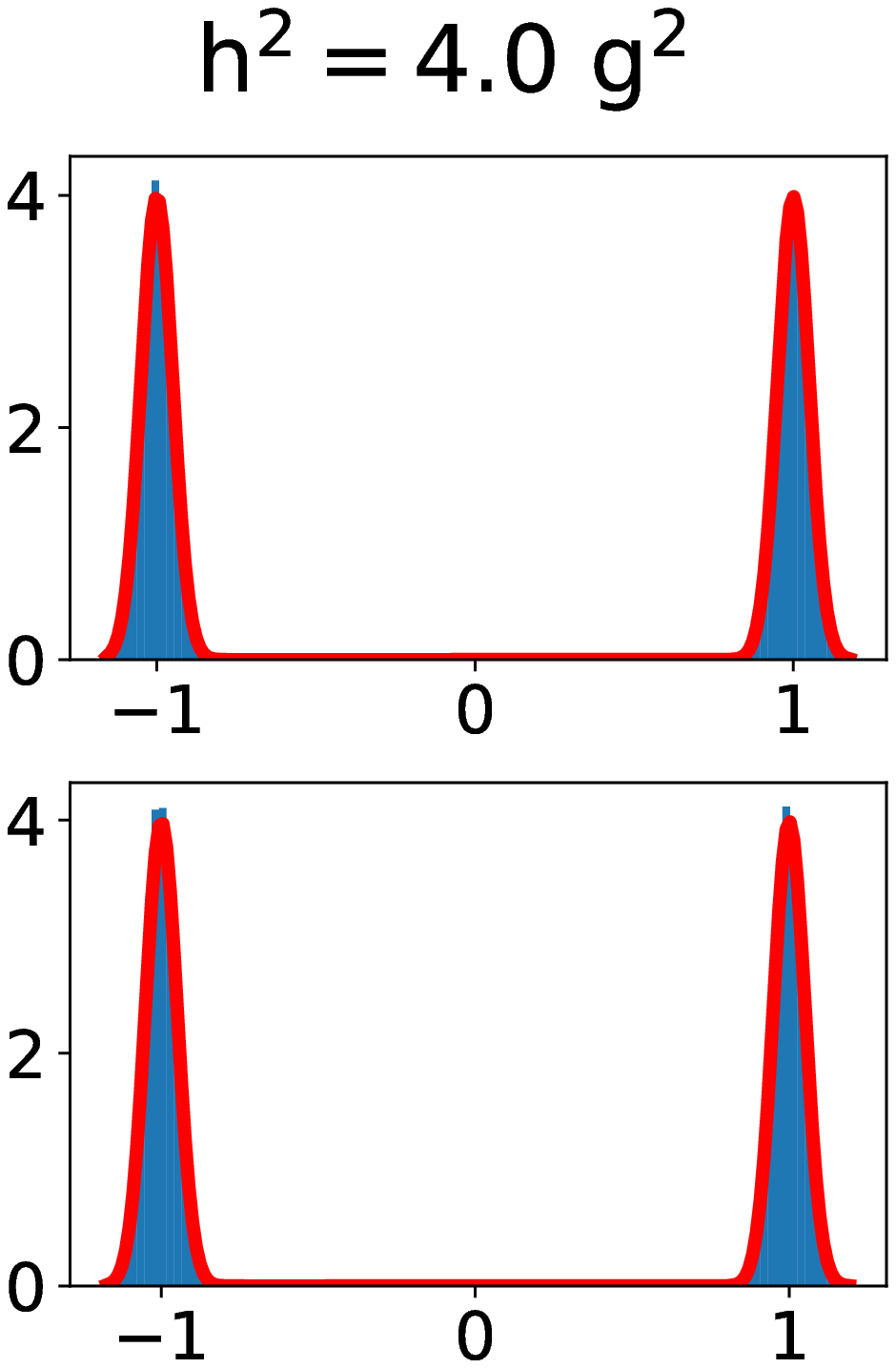}
    \caption{Visualization of marginal densities of 2D 4-mode Gaussian mixture, where $\bk{\err}{t} = \nabla \log \bk{p}{t}$ and $\epsilon = 0.2$. The top row shows the marginal distribution of {the first coordinate} and the bottom row for the second coordinate. The red lines are the exact marginal distributions of $p_0$ and the histograms (blue) visualize the empirical densities of generated samples.}
    \label{fig::gmm2d:vis}
\end{figure}

\begin{figure}[h!]
    \centering
    \begin{subfigure}{.32\linewidth}
        \includegraphics[width=\textwidth]{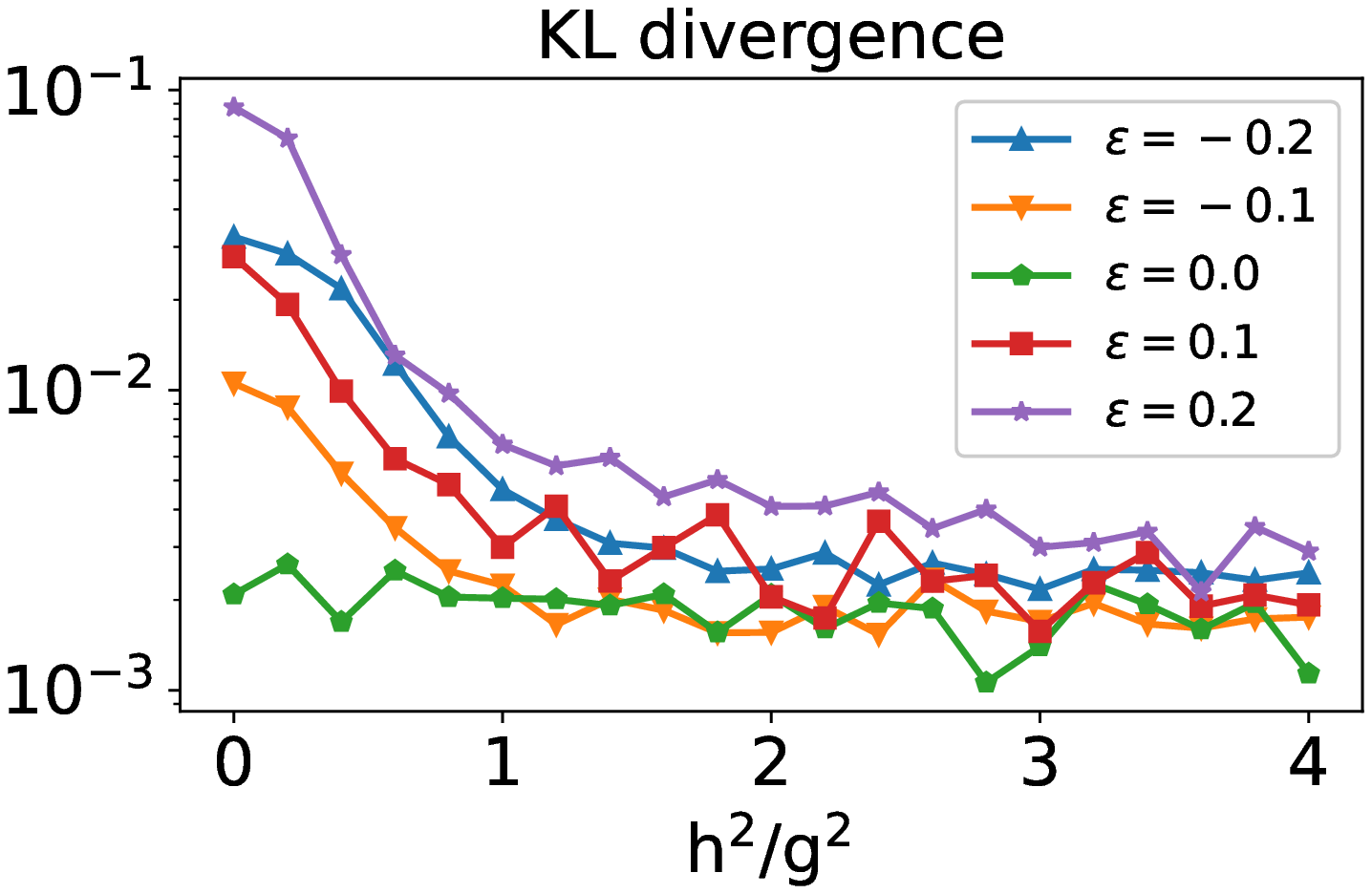}
        \caption{$\bk{\err}{t} = \nabla \log \bk{p}{t}$}
        \label{fig:gmm2d:a}
    \end{subfigure}
    \begin{subfigure}{.32\linewidth}
        \includegraphics[width=\textwidth]{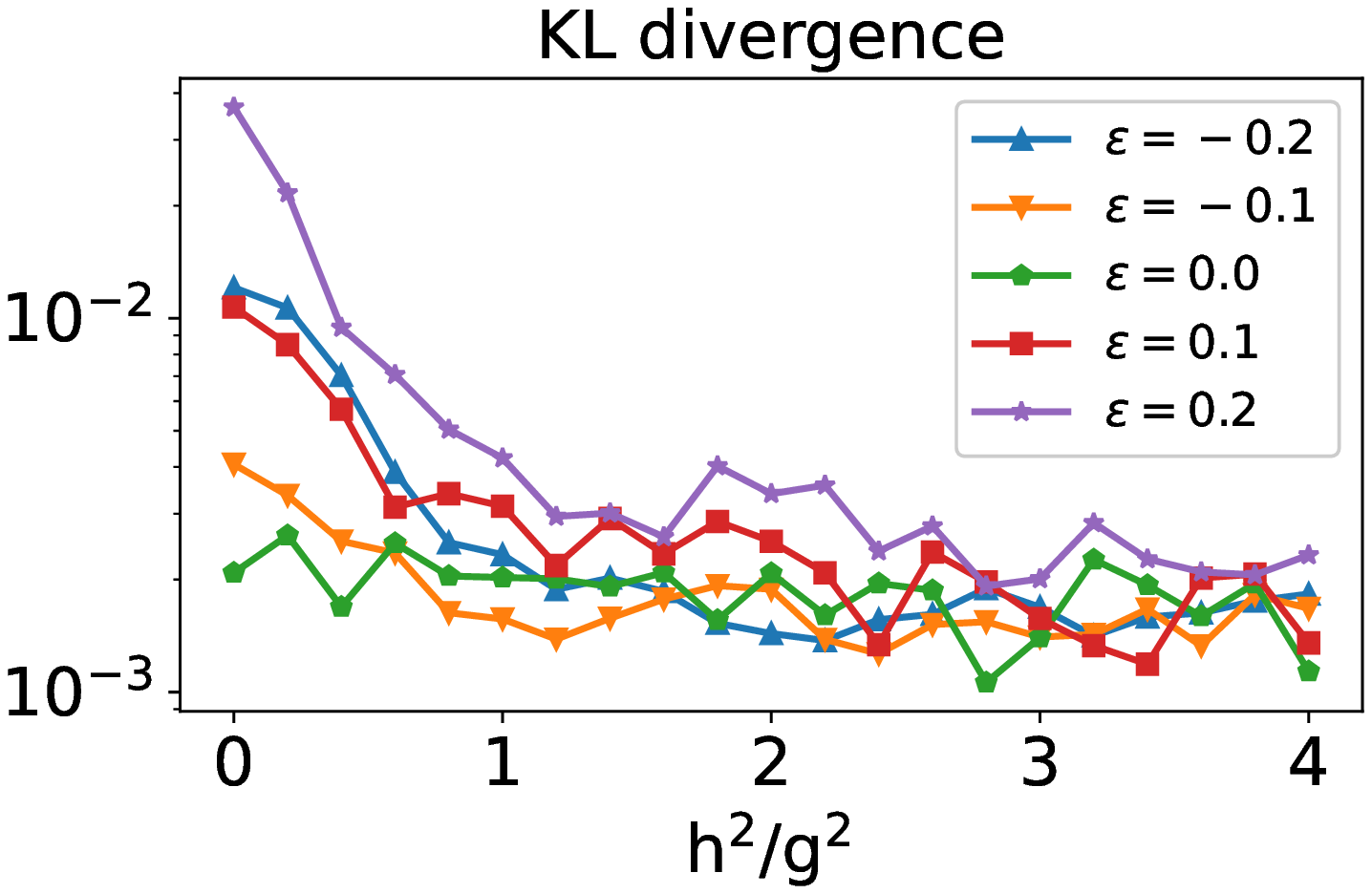}
        \caption{$\bk{\err}{t} = \frac{1 + \sin(2 \pi t / T)}{2} \nabla \log \bk{p}{t}$}
        \label{fig:gmm2d:b}
    \end{subfigure}
    \begin{subfigure}{.32\linewidth}
        \includegraphics[width=\textwidth]{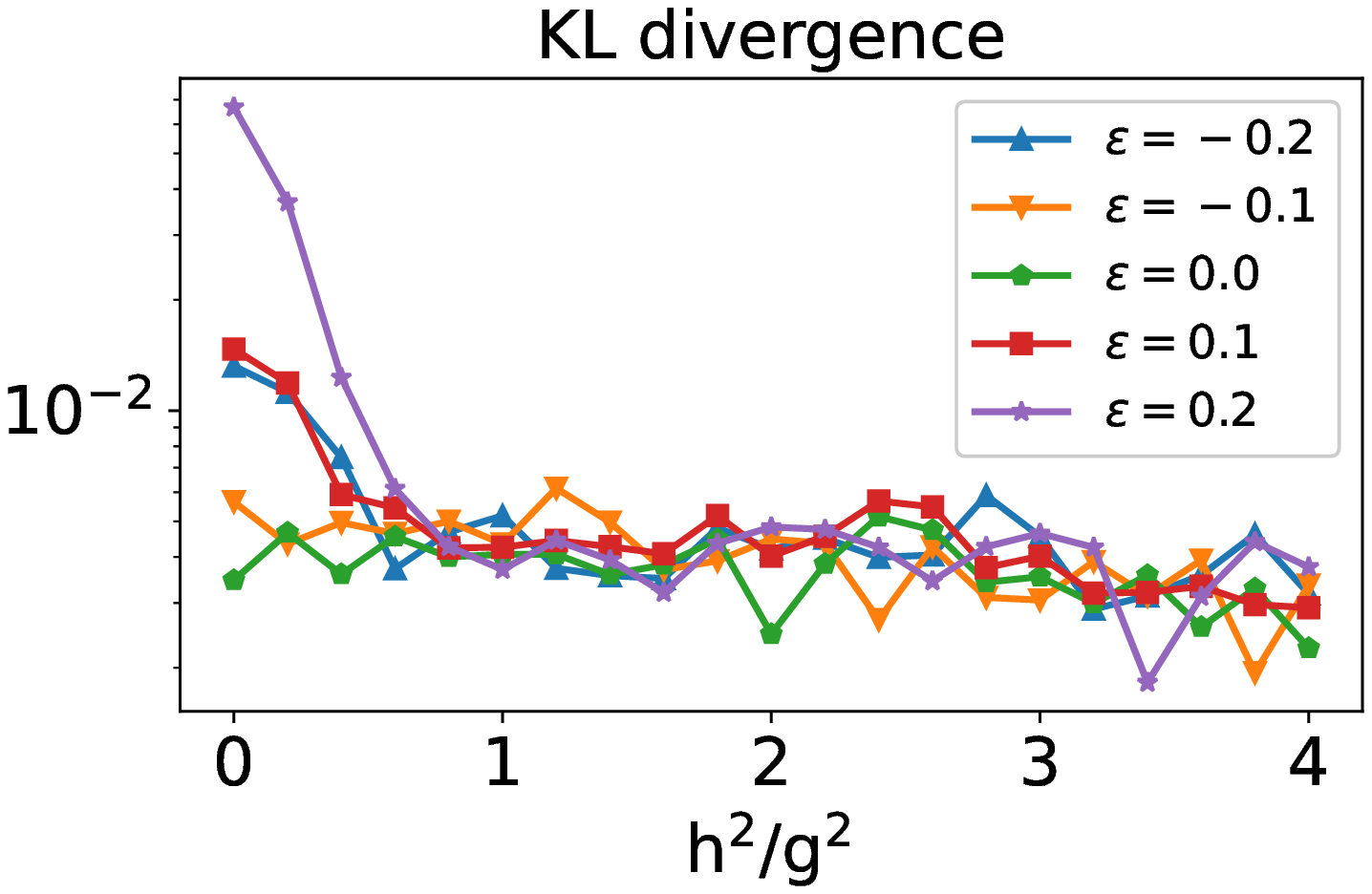}
        \caption{$\bk{\err}{t} = \mathbb{I}_{t < 0.95 T} \nabla \log \bk{p}{t}$}
        \label{fig:gmm2d:c}
    \end{subfigure}
    \caption{Numerical results of 2D 4-mode Gaussian mixture. The above panels show that the KL divergences between the true distribution and the generated samples overall decay as $\mathsf{h}$ increases for various types of error perturbation of score functions.}
    \label{fig:gmm2d}
\end{figure}

\paragraph{Example 2: Swiss roll.}
We consider Swiss roll, a more complex distribution without  exact score function available. We train the score function with the denoising score-matching objective~\cite{Vincent_2011_a_connection} (\appref{app::expdetails}).  The first plot in ~\figref{fig:swissroll-vis:a} shows the difference between true data and generated data measured by Wasserstein distance, which decays to zero exponentially fast, verifying ~\propref{pro::asympt_error}. In the second plot of~\figref{fig:swissroll-vis:a}, 
we tested various $h$ and time steps for the generative process. The ODE model ($h = 0$) does not improve, as the number of time discretization steps increases, but near the continuous-time limit, all SDE cases ($h > 0$) are better than the ODE model. It suggests that our conclusions here is limited to the continuous-time setting. The exploration of time discretization errors will be future works.
The generated data results in~\figref{fig:swissroll-vis:b} demonstrate that with increasing $h$, the sample quality is improved; see \appref{app:expres} for more results. 

\begin{figure}[!tbh]
    \centering
    \begin{subfigure}{0.61\textwidth}
        \includegraphics[height=6.52em]{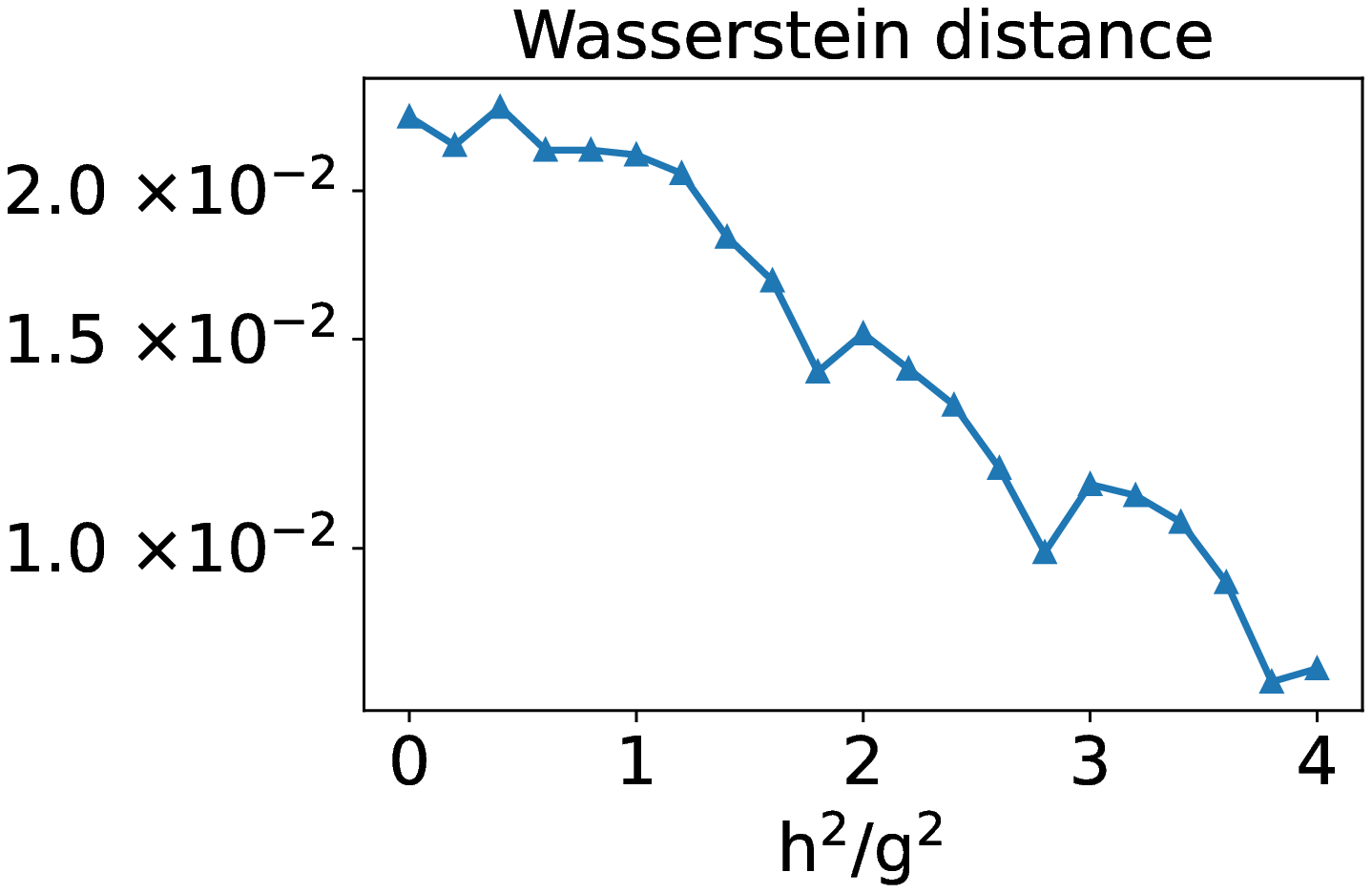}
        \includegraphics[height=6.55em]{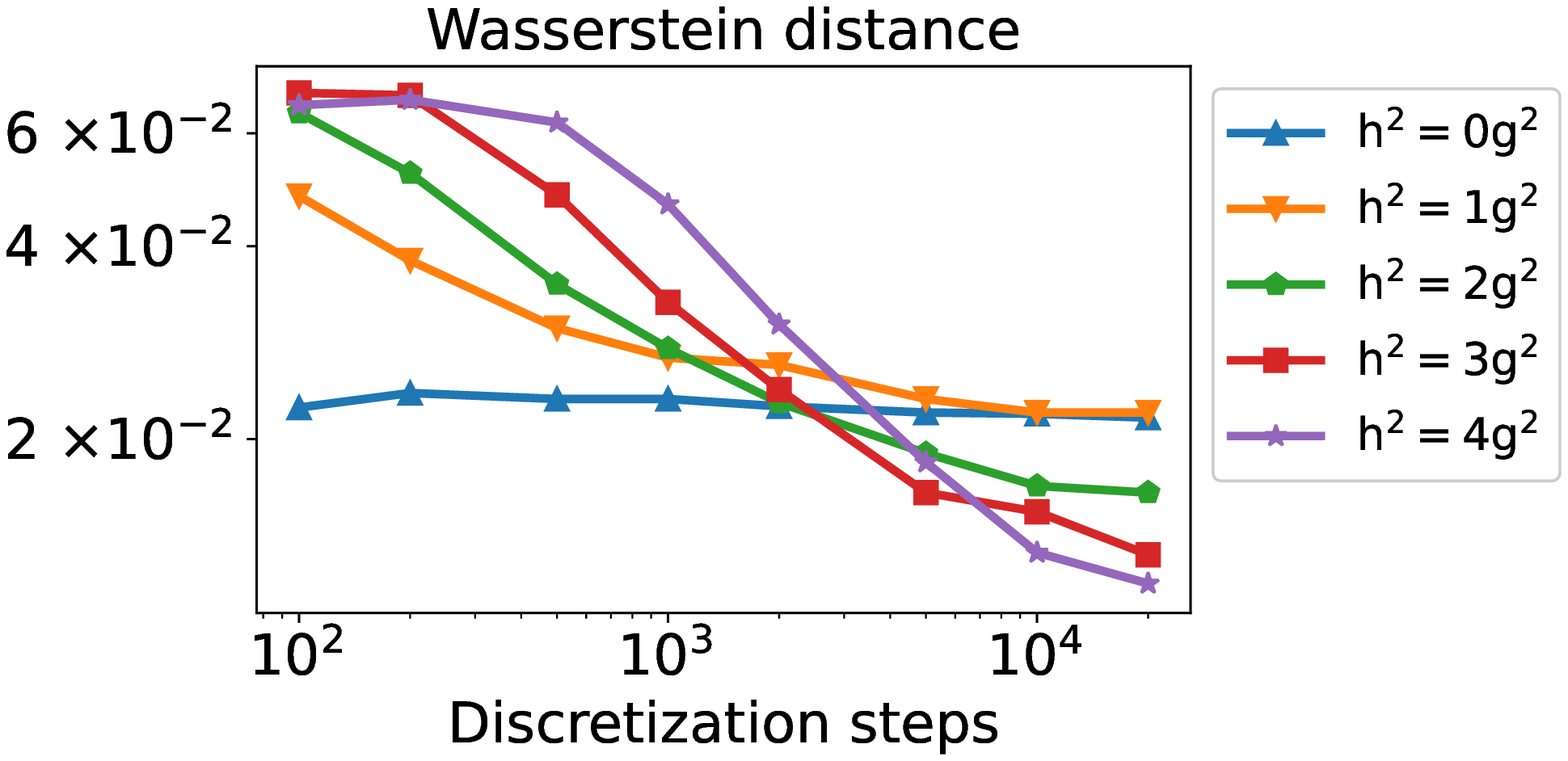}
        \caption{Improved model performance with larger $\mathsf{h}$}
        \label{fig:swissroll-vis:a}
    \end{subfigure}
    \begin{subfigure}{0.38\textwidth}
    \begin{minipage}[b]{.32\textwidth}
          \vspace{0.5\baselineskip}
        \includegraphics[width=\textwidth]{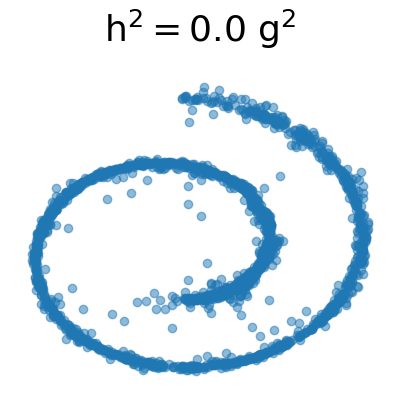}
    \end{minipage}
    \begin{minipage}[b]{.32\textwidth}
        \includegraphics[width=\textwidth]{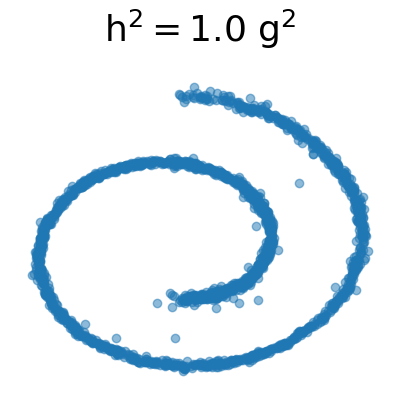}
    \end{minipage}
    \begin{minipage}[b]{.32\textwidth}
        \includegraphics[width=\textwidth]{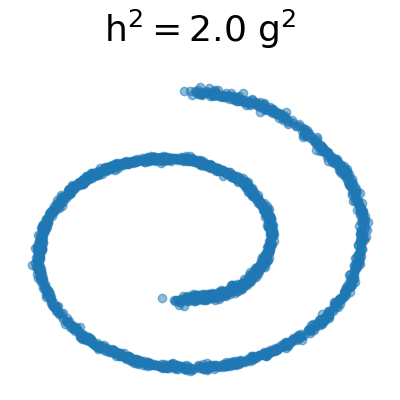}
    \end{minipage}
    \vspace{.2\baselineskip}
    \caption{Visualization results for selective $\mathsf{h}$}
        \label{fig:swissroll-vis:b}
    \end{subfigure}
    \caption{Numerical results of Swiss roll. Panel (a) shows the decay of Wasserstein distance between the true distribution and the generated samples {with increasing $h$ and 20,000 time-discretization steps, and the decay of Wasserstein distance with the increasing number of time-discretization steps and different $h$}. Panel (b) shows generated samples with different $h$.}
    \label{fig:swissroll-vis}
\end{figure}

\paragraph{Example 3: CIFAR-10.} When using a large amount of parameters for score matching in practice, we observe that SDEs appear to perform better than ODEs as discretization error descreases, a result similarly observed on Swiss roll. This implicates the practical applications on generating samples of better quality under a given (pre-trained) score-matching model.

\begin{table}[h!]
	\centering
	\caption{{\bf CIFAR-10:} We evaluate FIDs with different discretization steps and $\nicefrac{\mathsf{h}^{2} }{ \mathsf{g}^{2}}$ on a pre-trained checkpoint entitled ``vp/cifar10\_ddpmpp\_continuous'' in \cite{song2021scorebased}. We do not use any correctors and evaluate FIDs on $10^4$ samples, thus the results for $\nicefrac{\mathsf{h}^{2}}{\mathsf{g}^{2}} = 0, 1$ are different from the reported values. 
}
	\begin{tabular}{lccccccc}
		Discretization steps & 100 & 200 & 500 & 1000 & 2000 & 3000 & 4000 \\
		\hline
		$\mathsf{h}^{2} / \mathsf{g}^{2} = 0$ & {\bf 22.43} & {\bf 8.12} & {\bf 7.20} & 6.89 & 6.98 & 7.27 & 7.33 \\
		$\mathsf{h}^{2} / \mathsf{g}^{2} = 1$ & 31.72 & 15.72 & 7.23 & {\bf 6.70} & 6.71 & 6.95 & 7.08 \\
		$\mathsf{h}^{2} / \mathsf{g}^{2} = 2$ & 52.77 & 26.68 & 10.78 & 6.99 & {\bf 6.70} & {\bf 6.69} & {\bf 6.98} \\
		$\mathsf{h}^{2} / \mathsf{g}^{2} = 4$ & 92.83 & 46.11 & 20.47 & 10.17 & 7.20 & 7.09 & 7.01 \\
	\end{tabular}
	\label{tab:cifar10-fid}
\end{table}

\section{Summary and outlook}
In this work, we study the effect of the diffusion coefficient on the quality of overall sample generation in the generative process. 
Theoretically, we provide understandings of scenarios in which the ODE-based model and the SDE-based model is superior than the other; see \propref{prop::decay} and \propref{prop::end_error}. Numerically, these results are validated via toy examples as well as benchmark datasets.

There are many interesting directions for continuing works. 
(1) As we focus on the asymptotic case, a time-dependent $\bk{h}{t}$ with large magnitude (i.e., $\bk{h}{t}\gg 1$ for all $t$) is essentially no different from a constant $\bk{h}{t} \equiv \h$ with $\h\gg 1$.
Whether it is possible to use time-dependent $t\mapsto \bk{h}{t}$ to combine the advantages of ODE and large diffusion cases in dealing with different types of error of score functions in the non-asymptotic region is still an open question.
{(2)} Can we design a practical criterion that directly learn the optimal magnitude of the noise-level function $\bk{h}{t}$ by looking at the score-training error distribution? {Can we develop certain theoretical understanding of the empirical results in \cite{karras2022elucidating}? (3) How can we find a stable and accurate numerical scheme to deal with the fast diffusion case? 
(4) How can we generalize the above theoretical results by removing the convexity assumption, and including the low-dimensional feature of datasets into the theory \cite{chen_score_2023,debortoli2022convergence,doi:10.1098/rsta.2015.0202}? 

\ifarxiv
\else
Concerning the broad impact, though we don't foresee any negative social impact of this work, the potential improvement of generative model might relate to creation of \enquote{deep fakes}.
\fi

\section*{Acknowledgment}

YC is sponsored by Shanghai Pujiang Program, 
and acknowledges the support from City University of Hong Kong during his visit.
JC and YL are supported by the NSFC Major Research Plan - Interpretable and General-purpose Next-generation Artificial Intelligence (92370205).
XZ is supported by  General Research Funds from the Research Grants Council of the Hong Kong Special Administrative Region, China (Project No. 11308121, 11318522) and  the NSFC/RGC Joint Research Scheme [RGC Project No. N-CityU102/20 and NSFC Project No. 12061160462].

\ifarxiv
\newpage
\fi
\bibliographystyle{plainnat}
\bibliography{ref.bib}
 
\newpage
\appendix
\ifarxiv
\else
\newpage 

\appendix


{\bf \large Supplementary Material for 
``Exploring the Optimal Choice for Generative Processes in Diffusion Models: Ordinary vs Stochastic Differential Equations"}
\fi

\section{Notation Convention.}
\label{sec::notation}

\begin{center}
\begin{longtable}{p{0.4\textwidth}p{0.15\textwidth}p{0.3\textwidth}}
\caption{Summary of important quantities in this paper}\\
Quantity & Notation & Notes \\
\hline
Forward process & $X_t$ &  $t=0$:  data distribution  \\  
Backward/generative process & $Y_t$ & $t=0$: noise distribution  \\ 
Backward process with inexact score & $\wt{Y}_t$ & \\
Distribution of forward process & $p_t$ & $p_t := \law(X_t)$\\
Distribution of backward process & $q_t$ & $ \law(Y_t) := q_t \equiv p_{T-t} \equiv \bk{p}{t}$\\
Distribution of approximated backward process & $\wt{q}_t$ & $\wt{q}_t := \law(\wt{Y}_t)$, $\wt{q}_t = q_t$ when error $\eps = 0$\\
Error function of the score & $\eps \bk{\err}{t}$ & $0\le \eps \ll 1$ and $\bk{\err}{t} = \order{1}$\\
The exact potential & $U_t$ & $p_t := e^{-U_t}$\\
Modified potential & $\bk{V}{t}$ & see \eqref{eqn::opL::2} \\
Normalizing constants & $Z_V := \int e^{-V}$ &  $V$ is arbitrary\\
\label{table::notation}
\end{longtable}
\end{center}

\section{Discussion and proof for \secref{sec::problem}}

\subsection{Proof of \eqref{eqn::back}}
\label{subsec::proof::A}

We re-state the conclusion in \eqref{eqn::back} in the following lemma:
\begin{lemma}
\label{lem::backward}
For any function $\bk{h}{t}$,
if one chooses $\bk{A}{}$ as in \eqref{eqn::back}, 
then we have $q_t(x) = p_{T-t}(x)$ for any $t\in [0,T]$ and $x\in \dom$. 
\end{lemma}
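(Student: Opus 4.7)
My plan is to verify the claim at the level of Fokker--Planck (FP) equations and then invoke uniqueness. The strategy is to show that the time-reversed density $p_{T-t}$ satisfies exactly the FP equation associated with the SDE \eqref{eqn::generative} with drift $\bk{A}{t}$ given by \eqref{eqn::back}; since the initial conditions also coincide ($q_0 = p_T = p_{T-0}$), the two densities agree for all $t \in [0,T]$.

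First, I would write the FP equation of the forward process \eqref{eqn::forward},
\begin{equation*}
\partial_s p_s = -\div(f_s\, p_s) + \tfrac{1}{2} g_s^{2}\, \Delta p_s.
\end{equation*}
The key algebraic trick is to split the diffusion term using the identity $\Delta p_s = \div(p_s \nabla \log p_s)$ and the free parameter encoded in $h$. Concretely, for any real $h_s$ I would write
\begin{equation*}
\tfrac{1}{2} g_s^{2}\, \Delta p_s = \tfrac{1}{2}(g_s^{2} + h_s^{2})\, \div(p_s \nabla \log p_s) - \tfrac{1}{2} h_s^{2}\, \Delta p_s,
\end{equation*}
which recasts the forward FP equation into a ``drift $+$ diffusion'' form whose drift contains the score. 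Substituting back gives
\begin{equation*}
\partial_s p_s = -\div\!\Big(\big(f_s - \tfrac{g_s^{2} + h_s^{2}}{2} \nabla \log p_s\big) p_s\Big) - \tfrac{h_s^{2}}{2} \Delta p_s.
\end{equation*}

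Next, I would perform the change of variable $s = T-t$ (so that $\partial_t = -\partial_s$) and translate everything to the arrowed notation ($\bk{f}{t}, \bk{g}{t}, \bk{h}{t}, \bk{p}{t}$). The sign flip in $\partial_t$ turns the drift into $-\bk{f}{t} + \tfrac{\bk{g}{t}^{2} + \bk{h}{t}^{2}}{2}\nabla \log \bk{p}{t}$, and it turns the $-\tfrac{h^{2}}{2}\Delta$ term into $+\tfrac{\bk{h}{t}^{2}}{2}\Delta$. Thus $t \mapsto p_{T-t}$ solves
\begin{equation*}
\partial_t \bk{p}{t} = -\div\!\big(\bk{A}{t}\, \bk{p}{t}\big) + \tfrac{\bk{h}{t}^{2}}{2}\, \Delta \bk{p}{t}, \qquad \bk{A}{t} = -\bk{f}{t} + \tfrac{\bk{g}{t}^{2} + \bk{h}{t}^{2}}{2} \nabla \log \bk{p}{t},
\end{equation*}
which is precisely the FP equation governing $q_t = \law(Y_t)$ for the generative SDE \eqref{eqn::generative} with this choice of $\bk{A}{t}$.

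Finally, I would conclude by uniqueness of solutions to the FP equation in the class of probability densities: since $\bk{p}{t}|_{t=0} = p_T = q_0$ and both satisfy the same linear parabolic equation, we obtain $q_t \equiv \bk{p}{t}$ on $[0,T]$. The main technical caveat (and the only non-trivial point) is guaranteeing the regularity and growth conditions on $\bk{A}{t}$ and $\bk{p}{t}$ needed to apply such a uniqueness result; under the mild convexity/Gaussian-tail hypotheses of Assumption~\ref{assume} this is standard, and I would either cite a classical FP well-posedness theorem or remark that the identity should be read in the distributional sense with the usual smoothing provided by the non-degenerate OU-type forward process.
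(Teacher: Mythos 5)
Your proof is correct and follows essentially the same route as the paper's: both split the forward diffusion term $\tfrac{g^2}{2}\Delta p$ into a score-drift part weighted by $g^2+h^2$ plus a residual $\tfrac{h^2}{2}\Delta$ term, reverse time, and identify the result as the Fokker--Planck equation of \eqref{eqn::generative} with drift \eqref{eqn::back} (the paper merely performs the decomposition after, rather than before, the time reversal). Your explicit appeal to uniqueness of the Fokker--Planck solution to pass from ``$p_{T-t}$ solves the same PDE as $q_t$'' to ``$q_t=p_{T-t}$'' is a point the paper leaves implicit, and is a reasonable addition.
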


\begin{proof}
We can easily write down the Fokker-Planck equation of \eqref{eqn::forward}
\begin{align*}
\partial_t p_t(x) = -\div \big(\fw{f}{t}(x) p_t(x)\big) + \frac{\fw{g}{t}^2}{2} \Laplace p_t(x).
\end{align*}
Since we need to ensure $q_t = p_{T-t}$, we require
\begin{align*}
\partial_t q_t(x) &= \div \big(\bk{f}{t}(x) q_t(x)\big) - \frac{\bk{g}{t}^2}{2} \Laplace q_t(x)\\
&=  \div \Big(\bk{f}{t}(x) q_t(x) - \frac{\bk{g}{t}^2+\bk{h}{t}^2}{2} \nabla q_t(x) \Big) + \frac{\bk{h}{t}^2}{2} \Laplace q_t(x)\\
&=  \div \Big(\big(\bk{f}{t}(x) - \frac{\bk{g}{t}^2+\bk{h}{t}^2}{2} \nabla \log q_t(x)\big) q_t(x)\Big) + \frac{\bk{h}{t}^2}{2} \Laplace q_t(x)\\
&=  \div \Big(\big(\bk{f}{t}(x) - \frac{\bk{g}{t}^2+\bk{h}{t}^2}{2} \nabla \log \bk{p}{t}(x)\big) q_t(x)\Big) + \frac{\bk{h}{t}^2}{2} \Laplace q_t(x)\\
&= -\div\Big(\bk{A}{t}(x) q_t(x)\Big) + \frac{\bk{h}{t}^2}{2}\Laplace q_t(x),
\end{align*} 
by noting that $\bk{A}{t}(x)$ is specified  in \eqref{eqn::back}. This equation 
 is exactly the Fokker-Planck equation of \eqref{eqn::generative}.
\end{proof}

\subsection{The role of $\fw{g}{t}$}
\label{app::time_rescaling}

The function $t\mapsto g_t$ as the diffusion coefficient 
in the forward Fokker-Planck equation \eqref{eqn::forward}, in fact, plays a role as time re-scaling  as long as $g_t > 0$ 
for any $t$. Indeed, if we have an SDE in the following form (the VP-SDE in \cite{song2021scorebased})
\begin{align*}
	\dd X_t &= -\frac{g_t^2}{2} X_t\ \dd t + g_t\ \dd W_t, \qquad t\in [0, T],
\end{align*}
then by introducing $\tau:\Real^{+}\to\Real^{+}$ via the ODE $\tau'(t) = \big(g_{\tau(t)}\big)^{-2}$, $\tau(0) = 0$, we know that $\underbar{X}_t := X_{\tau(t)}$ satisfies the following SDE \cite{doi:10.1142/p110}:
\begin{align*}
	\dd\underbar{X}_t = -\frac{1}{2}\underbar{X}_t\ \dd t + \dd W_t,\qquad t\in [0, \theta],
\end{align*}
where $\theta := \tau^{-1}(T)$ means the inverse function of $\tau$ at time $T$. The Brownian motion $W$ in $\underbar{X}_t$ is not the same Brownian motion in $X_t$,  i.e., the driven-noise in the last two equations are not the same technically; we slightly abuse the notation for the simplicity of notations.
By \lemref{lem::backward}, this SDE has the backward process as follows:
\begin{align}
\label{eqn::bk_Y_underbar}
\begin{aligned}
	\dd \underbar{Y}_t &\myeq{\eqref{eqn::back}} \Big(\frac{1}{2}\underbar{Y}_t + \frac{1+\bk{{\underbar{h}}}{t}^2}{2} \nabla\log \bk{\underbar{p}}{t}(\underbar{Y}_t)\Big)\ \dd t + \bk{\underbar{h}}{t}\ \dd W_t \qquad t\in [0, \theta]\\
	&= \Big(\frac{1}{2}\underbar{Y}_t + \frac{1+ {\underbar{h}}_{\theta - t}^2}{2} \nabla\log {\underbar{p}}_{\theta - t}(\underbar{Y}_t)\Big)\ \dd t + \underbar{h}_{\theta -t}\ \dd W_t,
\end{aligned}
\end{align}
where $\underbar{p}_t$ is the density function of $\underbar{X}_t$ by notation conventions in \secref{sec::problem}. 
By \lemref{lem::backward}, so far we know that 
\begin{align*}
	\law(\underbar{Y}_{\theta - t})\ \ \ \myeq{\lemref{lem::backward}}\ \ \  \law(\underbar{X}_t) = \law(X_{\tau(t)}).
\end{align*}

Let $f_s := \theta - (\tau^{-1})(T-s)$ and $Y_s := \underbar{Y}_{f_s}$, where $s\in [0,T]$. By straightforward computation, we know
\begin{align*}
	\dd Y_s &= f'_s \Big(\frac{1}{2}\underbar{Y}_{f_s} + \frac{1+ {\underbar{h}}_{\theta - f_s}^2}{2} \nabla\log {\underbar{p}}_{\theta - f_s}(\underbar{Y}_{f_s})\Big)\ \dd t + \sqrt{f'_s\ \underbar{h}^2_{\theta - f_s}}\ \dd W_s\\
	&= f'_s \Big(\frac{1}{2} Y_s + \frac{1+ {\underbar{h}}_{\theta - f_s}^2}{2} \nabla\log{p}_{T-s}(Y_s)\Big)\ \dd t + \sqrt{f'_s\ \underbar{h}^2_{\theta - f_s}}\ \dd W_s.
\end{align*} 
To get the second line, we used the fact that 
\begin{align*}
	\underbar{p}_{\theta - f_s} = \law\big(\underbar{X}_{\theta - f_s}\big)  = \law\big(\underbar{X}_{\tau^{-1}(T-s)}\big) = \law(X_{T-s}) \equiv p_{T-s}.
\end{align*}

By chain rules, it is easy to compute that 
\begin{align*}
	f'_s = \frac{1}{\tau'\big(\tau^{-1}(T-s)\big)} = \frac{1}{\big(g_{T-s}\big)^{-2}} = g^2_{T-s} > 0.
\end{align*}
Therefore, the above SDE for $Y_s$ has the form
\begin{align*}
	\dd Y_s = \Big(\frac{g^2_{T-s}}{2} Y_s + g^2_{T-s}\frac{1 + \underbar{h}^2_{\theta - f_s}}{2} \nabla\log \bk{p}{s}(Y_s)\Big)\ \dd t + g_{T-s} \underbar{h}_{\theta - f_s}\ \dd W_s.
\end{align*}
This matches the form in \lemref{lem::backward} by choosing 
\begin{align*}
	\bk{h}{s} = g_{T-s} \underbar{h}_{\theta - f_s} \equiv  \bk{g}{s} \underbar{h}_{\tau^{-1}(T-s)}.
\end{align*}
Hence, if we simply pick $\underbar{h} = c$ as a constant function in \eqref{eqn::bk_Y_underbar}, it has the same effect as choosing $\bk{h}{s} = c \bk{g}{s}$ where $c\in\Real^{+}$. The former ($\underbar{h}=c$) is used in \secref{sec::asymptotic} for simplicity in theoretical analysis, and the latter ($\bk{h}{s} = c \bk{g}{s}$) is used in numerical experiments in  \secref{sec::experiments} to match previous literature in practice (namely, a general $g$). 
{\bf In conclusion,} the above discussion justifies the consistency of notation and set-up between our theoretical analysis and numerical experiments.

\subsection{Existing analysis of sample generation quality}
\label{subsec::existing_analysis}

In practice, one simulates the following SDE:
\begin{align}
\label{eqn::generative_practice}
\dd Z_t = \Big(-\bk{f}{t}(Z_t) + \frac{(\bk{g}{t})^2 + (\bk{h}{t})^2}{2} \bk{\score}{t}(Z_t)\Big)\ \dd t + \bk{h}{t}\ \dd W_t,\qquad \law(Z_0) = \gauss{d}.
\end{align}

From the analysis by \citet[Theorem 2.1]{chen_improved_2022} for the case $\bk{h}{} = \bk{g}{}$, one has 
\begin{align*}
\kldivg{p_0}{\wh{q}} \le \kldivg{p_T}{\gauss{d}} + \order{T \eps^2} + \order{\nicefrac{T^2d}{N}},
\end{align*}
where $\wh{q}$ is the distribution of $Z_T$ after applying the exponential integrator to \eqref{eqn::generative_practice} and $N$ is the number of time-discretization steps.
The first term $\kldivg{p_T}{\gauss{d}}\le (\mathcal{M}_2 + d) e^{-T}$, where $\mathcal{M}_2 = \ee_{p_0} \abs{x}^2$ is the second moment of data distribution.

Therefore, to ensure that $\kldivg{p_0}{\wh{q}} \le \delta$, it is sufficient to choose 
\begin{align*}
T = \order{\log\big(\nicefrac{(\mathcal{M}_2 + d)}{\delta})}.
\end{align*}
The time-discretization error can be eliminated by choosing $N \to \infty$. What is so far less clear in literature is the term $\order{T \eps^2}$.

\subsection{Existing error bounds appear to fail to characterize the optimal $\bk{h}{}$}
\label{subsec::existing_bounds}

Note that the target dynamics $q_t$ in \eqref{eqn::generative} and the approximated dynamics $\wt{q}_t$ in \eqref{eqn::generative_err_only} only differ in the drift term. Recall that $q_T = p_0$, and to estimate $\kldivg{p_0}{\wt{q}_T}\equiv\kldivg{q_T}{\wt{q}_T}$, we can simply estimate how the quantity $\kldivg{q_t}{\wt{q}_t}$ changes for $t\in (0,T)$. By \cite[Lemma C.1]{chen_improved_2022}, for any $t\in \Real$, 
\begin{align}
	\label{eqn::derivative_kl}
	\partial_t \kldivg{q_t}{\wt{q}_t} = - \frac{\bk{h}{t}^2}{2} \fisher{q_t}{\wt{q}_t} + \frac{\bk{g}{t}^2+\bk{h}{t}^2}{2} \ee\Big[\inner{-\eps\bk{\err}{t}(Y_t)}{\nabla\log \frac{q_t(Y_t)}{\wt{q}_t(Y_t)}}\Big],
\end{align}
where the Fisher information
$\fisher{p}{q} := \int \dd  p\ \abs{\nabla \log \frac{p}{q}}^2.$
By \cs{} inequality,
\begin{align}
	\label{eqn::upper_bound}
	\begin{aligned}
		& \partial_t \kldivg{q_t}{\wt{q}_t} \\
		\le & - \frac{\bk{h}{t}^2}{2} \fisher{q_t}{\wt{q}_t} + \frac{\bk{g}{t}^2+\bk{h}{t}^2}{2} \big(\frac{c^2}{2}\ee\Big[\norm{\eps \bk{\err}{t}(Y_t)}^2\Big] + \frac{1}{2c^2} \ee\left[\norm{\nabla\log \frac{q_t(Y_t)}{\wt{q}_t(Y_t)}}^2\right] \big)\\
		=& \frac{(\bk{h}{t}^2 + \bk{g}{t}^2)^2}{8 \bk{h}{t}^2} \eps^2 \ee\big[\norm{\bk{\err}{t}(Y_t)}^2\big],
	\end{aligned}
\end{align}
where we chose $c^2 = \frac{\bk{h}{t}^2 + \bk{g}{t}^2}{2\bk{h}{t}^2}$ in the last line.
This bound captures the scaling extremely well 
when $\bk{h}{}=\bk{g}{}$,
which helps to establish the effectiveness of score-based diffusion method in \cite{chen_improved_2022}.
However, this bound is not directly applicable for a general $\bk{h}{}$,
as it is clear that this bound fails to provide useful information
when $\bk{h}{} \approx 0$ (namely, the probability flow), as well as the large diffusion case ($\bkh\to\infty$).
From directly optimizing the upper bound, i.e., minimizing 
$\frac{(\bk{h}{t}^2 + \bk{g}{t}^2)^2}{8 \bk{h}{t}^2}$,
one ends up with the choice $\bk{h}{}=\bk{g}{}$, which has been used in many literature.
We acknowledge that $\bk{h}{}=\bk{g}{}$ is an effective choice; however, as we show in \secref{sec::experiments}, this is not really the optimal case in general, 
and the above argument cannot justify choosing the diffusion model over the probability flow.

\section{Discussion for \secref{sec::asymptotic} and proof of \propref{prop::asymptotic}}

\subsection{The operator $\tevol{s}{t}{\bkh}$}
\label{app::Phi_op}

\begin{lemma} \label{lemC1}
~
	\begin{itemize}
		\item $\tevol{(\cdot)}{(\cdot)}{\bkh}$ satisfies the following property, i.e., for any $s,t,r \in \Real$,
	\begin{align}
		\label{eqn::Phi_semigroup}
		 \tevol{t}{r}{\bkh} \circ \tevol{s}{t}{\bkh} = \tevol{s}{r}{\bkh}.
	\end{align}
        Moreover, $\tevol{t}{t}{\bkh} = \id$ is an identity operator for any $t$.
        
	\item For any $s,t \in \Real$,
	\begin{align}
		\label{eqn::Phi_deri}
		\partial_{s}\big(\tevol{s}{t}{\bkh}(\mu)\big) = - \tevol{s}{t}{\bkh}\big(\opL{s}\mu\big)\qquad
		\partial_{t}\big(\tevol{s}{t}{\bkh}(\mu)\big) = \opL{t}\big(\tevol{s}{t}{\bkh}(\mu)\big).
	\end{align}
\end{itemize}
\end{lemma}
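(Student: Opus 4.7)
The plan is to deduce both statements directly from the defining Cauchy problem: by construction, $\tevol{s}{t}{\bkh}(\mu) = \theta_t$, where $r \mapsto \theta_r$ solves the time-inhomogeneous Fokker–Planck equation $\partial_r \theta_r = \opL{r}(\theta_r)$ on $r \in [s, t]$ with initial data $\theta_s = \mu$. The first step is to invoke well-posedness (existence and uniqueness) of this linear Cauchy problem in a suitable function space; under the regularity of $\bk{p}{t}$ and the assumptions of Section~\ref{subsec::setup}, the generator $\opL{r}$ is a second-order linear operator with smooth coefficients, and standard parabolic theory provides a unique classical solution that depends linearly and continuously on $\mu$.

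For the first bullet, $\tevol{t}{t}{\bkh} = \id$ is immediate from the initial condition $\theta_t = \mu$. For the semigroup identity $\tevol{t}{r}{\bkh} \circ \tevol{s}{t}{\bkh} = \tevol{s}{r}{\bkh}$, I would apply both sides to an arbitrary $\mu$ and view the result as a function of $r$: the left-hand side, by construction, solves $\partial_r \xi_r = \opL{r}(\xi_r)$ for $r \ge t$ with $\xi_t = \tevol{s}{t}{\bkh}(\mu)$; but the right-hand side, restricted to $r \ge t$, is the unique solution of the same equation on $[s,r]$ starting from $\mu$ at $s$, and by uniqueness it must coincide with $\tevol{s}{t}{\bkh}(\mu)$ at time $t$. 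Since both sides then solve the same Cauchy problem on $[t, r]$ with the same data at $t$, uniqueness forces equality.

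For the second bullet, the $t$-derivative is a direct restatement of the Fokker–Planck equation:
\begin{align*}
\partial_t \tevol{s}{t}{\bkh}(\mu) = \partial_t \theta_t = \opL{t}(\theta_t) = \opL{t}\bigl(\tevol{s}{t}{\bkh}(\mu)\bigr).
\end{align*}
The $s$-derivative I would obtain by differentiating the semigroup relation $\tevol{s}{t}{\bkh} = \tevol{s+\Delta}{t}{\bkh} \circ \tevol{s}{s+\Delta}{\bkh}$ at $\Delta = 0$. Applied to $\mu$, the short-time expansion $\tevol{s}{s+\Delta}{\bkh}(\mu) = \mu + \Delta\, \opL{s}(\mu) + o(\Delta)$, which follows from the defining ODE at $r = s$, yields
\begin{align*}
\tevol{s}{t}{\bkh}(\mu) = \tevol{s+\Delta}{t}{\bkh}(\mu) + \Delta\, \tevol{s+\Delta}{t}{\bkh}\bigl(\opL{s}\mu\bigr) + o(\Delta).
\end{align*}
Dividing by $\Delta$, sending $\Delta \to 0$, and using joint continuity of $(s,t) \mapsto \tevol{s}{t}{\bkh}$ gives $\partial_s \tevol{s}{t}{\bkh}(\mu) = -\tevol{s}{t}{\bkh}(\opL{s}\mu)$.

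The main obstacle is supplying enough regularity and decay on $\mu$ so that the formal manipulations — in particular the Taylor expansion in $\Delta$ and the interchange of limits — are rigorous. For initial data $\mu$ lying in, say, a weighted Sobolev space compatible with the Gaussian-like decay ensured by Assumption~\ref{assume}\ref{assume::U0}, the parabolic smoothing of $\opL{r}$ and uniform-in-$r$ bounds on the coefficients of $\opL{r}$ make each of these steps routine, so the proof reduces to invoking the standard theory of linear parabolic evolution equations with smooth time-dependent generators.
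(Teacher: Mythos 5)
Your proof is correct and follows essentially the same route as the paper's: the paper likewise treats the flow property and the $t$-derivative as immediate consequences of the defining Cauchy problem (omitting their proofs), and obtains the $s$-derivative from the composition law combined with the first-order short-time expansion of the flow. The only difference is where the increment is factored: the paper writes $\tevol{s}{t}{\bkh}\circ\tevol{s+\delta s}{s}{\bkh}$ and expands the backward short-time map as $\id-\delta s\,\opL{s}$, whereas you write $\tevol{s+\Delta}{t}{\bkh}\circ\tevol{s}{s+\Delta}{\bkh}$, which has the minor advantage of invoking only forward (well-posed) parabolic evolutions.
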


\begin{proof}
	The structure in \eqref{eqn::Phi_semigroup} is easy to imagine and is thus omitted herein. Next, we shall only prove the first result in \eqref{eqn::Phi_deri} for illustration:
	\begin{align*}
		\partial_{s}\big(\tevol{s}{t}{\bkh}(\mu)\big) &= \lim_{\delta s\to 0^{+}} \frac{\tevol{s+\delta s}{t}{\bkh}(\mu) - \tevol{s}{t}{\bkh}(\mu)}{\delta s} \\
		&\myeq{\eqref{eqn::Phi_semigroup}} \lim_{\delta s\to 0^{+}} \frac{\tevol{s}{t}{\bkh} \tevol{s+\delta s}{ s}{\bkh}(\mu) - \tevol{s}{t}{\bkh}(\mu)}{\delta s} \\
		&= \lim_{\delta s\to 0^{+}} \frac{\tevol{s}{t}{\bkh} \big(\tevol{s+\delta s}{s}{\bkh} - \id\big)(\mu)}{\delta s}\\
		&= \lim_{\delta s\to 0^{+}} \frac{\tevol{s}{t}{\bkh}\big(\id - \delta s \opL{s} - \id\big) (\mu)}{\delta s}\\
		&= -\tevol{s}{t}{\bkh} \big(\opL{s}(\mu)\big).
	\end{align*}
\end{proof}

\begin{lemma}
\label{lem::solution_Phi}
Suppose $(t,x)\mapsto\mu_t(x)$, $(t,x)\mapsto\theta_t(x)$ are time-dependent functions. Suppose $\partial_t \mu_t = \opL{t}(\mu_t) + \theta_t$ with $\mu_0 = 0$, then
\begin{align*}
    \mu_T = \int_{0}^{T} \tevol{t}{T}{\bkh}(\theta_t)\ \dd t.
\end{align*}
\end{lemma}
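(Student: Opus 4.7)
\textbf{Proof plan for Lemma~\ref{lem::solution_Phi}.} The statement is Duhamel's principle (variation of parameters) for the inhomogeneous evolution equation $\partial_t \mu_t = \opL{t}(\mu_t) + \theta_t$, and the plan is to reduce it to the identities for $\tevol{s}{t}{\bkh}$ already established in Lemma~\ref{lemC1}. The key idea is to consider the auxiliary curve $s \mapsto \tevol{s}{T}{\bkh}(\mu_s)$ on $[0,T]$, whose endpoints are exactly the two sides of the desired identity: at $s=T$ it equals $\mu_T$ since $\tevol{T}{T}{\bkh} = \id$, and at $s=0$ it vanishes because $\mu_0 = 0$ and $\tevol{0}{T}{\bkh}$ is linear.

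Assuming this reduction, the next step is to differentiate $s \mapsto \tevol{s}{T}{\bkh}(\mu_s)$ by a product-type rule. Using linearity of $\tevol{s}{T}{\bkh}$ in its argument plus the first identity of \eqref{eqn::Phi_deri}, one obtains
\begin{equation*}
\partial_s\,\tevol{s}{T}{\bkh}(\mu_s) \;=\; -\tevol{s}{T}{\bkh}\big(\opL{s}\mu_s\big) + \tevol{s}{T}{\bkh}\big(\partial_s \mu_s\big) \;=\; \tevol{s}{T}{\bkh}\big(\partial_s\mu_s - \opL{s}\mu_s\big) \;=\; \tevol{s}{T}{\bkh}(\theta_s),
\end{equation*}
where the last equality uses the hypothesis $\partial_s\mu_s = \opL{s}(\mu_s) + \theta_s$. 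Integrating this derivative identity in $s$ from $0$ to $T$ and using the boundary evaluations above yields the claim.

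The only delicate points are the justification of the product-type differentiation (which requires either that $\tevol{s}{T}{\bkh}$ be interpreted as a bounded linear operator on a suitable space in which $s \mapsto \mu_s$ is $C^1$, or equivalently a componentwise argument using that both $\mu$ and $\theta$ are sufficiently regular) and the use of linearity of $\tevol{s}{T}{\bkh}$ — which is immediate from \eqref{eqn::opL} since $\opL{t}$ is a linear second-order differential operator, so the associated flow $\tevol{s}{t}{\bkh}$ inherits linearity. I do not expect a substantive obstacle; this is a one-line application of Lemma~\ref{lemC1} packaged as Duhamel's formula, and no additional assumptions on $p_0$, $\bk{h}{}$, or the data are needed beyond the smoothness implicit in the well-posedness of the Fokker--Planck flow already invoked for $\tevol{s}{t}{\bkh}$.
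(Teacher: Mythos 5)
Your proof is correct and is essentially the paper's argument: both differentiate an auxiliary curve of the form $s \mapsto \tevol{s}{\tau}{\bkh}(\mu_s)$ using the first identity of \eqref{eqn::Phi_deri} together with linearity, arrive at $\partial_s\,\tevol{s}{\tau}{\bkh}(\mu_s) = \tevol{s}{\tau}{\bkh}(\theta_s)$, and integrate. The only cosmetic difference is that you anchor at $\tau = T$ and read off $\mu_T$ directly from $\tevol{T}{T}{\bkh}=\id$ and $\mu_0=0$, whereas the paper anchors at $\tau = 0$ and then applies $\tevol{0}{T}{\bkh}$ with the composition rule \eqref{eqn::Phi_semigroup} at the end; your variant is marginally cleaner since it only invokes the forward (well-posed) direction of the flow.
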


\begin{proof}
	Note that 
	\begin{align*}
		\partial_t \big(\tevol{t}{0}{\bkh} \mu_t\big) \myeq{\eqref{eqn::Phi_deri}} \tevol{t}{0}{\bkh}(-\opL{t} \mu_t) + \tevol{t}{0}{\bkh} (\opL{t}\mu_t + \theta_t) = \tevol{t}{0}{\bkh}(\theta_t).
	\end{align*}
	Therefore, 
	\begin{align*}
		\tevol{T}{0}{\bkh} (\mu_T) = \int_{0}^{T} \tevol{t}{0}{\bkh} (\theta_t)\ \dd t.
	\end{align*}
	By applying the operator $\tevol{0}{T}{\bkh}$ to both sides, we have
	\begin{align*}
		\mu_T = \int_{0}^{T} \tevol{0}{T}{\bkh} \circ \tevol{t}{0}{\bkh} (\theta_t)\ \dd t \myeq{\eqref{eqn::Phi_semigroup}} \int_{0}^{T} \tevol{t}{T}{\bkh} (\theta_t)\ \dd t.
	\end{align*}
\end{proof}

\subsection{Time re-scaling of $\opL{}$ and connections to almost quasi-static Langevin process}
\label{subsec::time_rescaling}

By time re-scaling, it is immediate to obtain the following result:
\begin{lemma}
\label{lem::large_time_PDE}
Suppose $\bk{h}{t} = \mf{h}$ for any $t\in [0,T]$ for simplicity. 
For any probability distribution $\mu$, the probability distribution $\tevol{s}{t}{\bkh}(\mu)$ (with $s < t$) is the final state of the following PDE on the time interval $\wt{r}\in \big[0, \nicefrac{(t-s)\mf{h}^2}{2}\big]$:
\begin{align}
\label{eqn::PDE::rescale}
\partial_{\wt{r}} \wt{\mu}_{\wt{r}} = \Laplace \wt{\mu}_{\wt{r}} + \div(\nabla \bk{V}{s+\nicefrac{2\wt{r}}{\mf{h}^2}} \wt{\mu}_{\wt{r}}),\qquad \wt{\mu}_{0} = \mu.
\end{align}
It corresponds to the following Langevin with time-dependent potential: 
\begin{align*}
\dd X_{\wt{r}} &= -\nabla \bk{V}{s + \nicefrac{2\wt{r}}{\mf{h}^2}}(X_{\wt{r}})\ \dd \wt{r} + \sqrt{2}\ \dd W_{\wt{r}}, \qquad \law(X_0) = \mu.
\end{align*}
\end{lemma}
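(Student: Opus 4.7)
The plan is to perform a direct time rescaling of the defining Cauchy problem for $\tevol{s}{\cdot}{\bkh}$. Recall that by construction $\tevol{s}{t}{\bkh}(\mu) = \theta_t$, where $\theta_r$ solves $\partial_r \theta_r = \opL{r}(\theta_r)$ on $[s,t]$ with $\theta_s = \mu$. Since $\bk{h}{r} \equiv \h$ is constant, the rewriting \eqref{eqn::opL::2} of the generator gives
\begin{equation*}
\partial_r \theta_r \;=\; \frac{\h^2}{2}\Bigl(\Laplace \theta_r + \div(\nabla \bkV{r}\, \theta_r)\Bigr), \qquad r \in [s,t].
\end{equation*}
Everything on the right-hand side except for the overall factor $\h^2/2$ depends on $\h$ only through $\bkV{r}$ (which itself does not vary with $r$ on this scale), so $\h$ enters the dynamics essentially as a time-rescaling factor.

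Next, I would introduce the rescaled time $\wt{r} := \tfrac{\h^2}{2}(r-s)$, which maps $[s,t]$ bijectively onto $[0, (t-s)\h^2/2]$, and set $\wt{\mu}_{\wt{r}} := \theta_{s + 2\wt{r}/\h^2}$. The chain rule gives $\partial_{\wt{r}} \wt{\mu}_{\wt{r}} = \tfrac{2}{\h^2}\,\partial_r \theta_r\big|_{r = s + 2\wt{r}/\h^2}$, and substituting the displayed equation above cancels the prefactor $\h^2/2$, yielding precisely
\begin{equation*}
\partial_{\wt{r}} \wt{\mu}_{\wt{r}} \;=\; \Laplace \wt{\mu}_{\wt{r}} + \div\!\Bigl(\nabla \bkV{s + 2\wt{r}/\h^2}\, \wt{\mu}_{\wt{r}}\Bigr).
\end{equation*}
The initial condition $\wt{\mu}_0 = \theta_s = \mu$ matches, and at the right endpoint $\wt{r} = (t-s)\h^2/2$ one has $\wt{\mu}_{(t-s)\h^2/2} = \theta_t = \tevol{s}{t}{\bkh}(\mu)$, proving the first claim.

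For the Langevin representation, I would simply match Fokker--Planck generators: the SDE $\dd X_{\wt{r}} = -\nabla \bkV{s + 2\wt{r}/\h^2}(X_{\wt{r}})\, \dd \wt{r} + \sqrt{2}\, \dd W_{\wt{r}}$ has infinitesimal generator $\Laplace - \nabla \bkV{\cdot} \cdot \nabla$ acting on test functions, whose $L^2$ adjoint is $\Laplace(\cdot) + \div(\nabla \bkV{\cdot}\, \cdot)$ acting on densities, which is exactly the right-hand side of the PDE above. Hence, if $\law(X_0) = \mu$, then $\law(X_{\wt{r}}) = \wt{\mu}_{\wt{r}}$ for all $\wt{r} \in [0,(t-s)\h^2/2]$.

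There is no substantive obstacle: the lemma is essentially a bookkeeping statement that extracts the $\h^2/2$ prefactor from $\opL{r}$ by reparametrizing time. The purpose of isolating it is conceptual, to justify the almost quasi-static Langevin picture used in \secref{s3} and in the tail estimates leading to \propref{prop::decay} --- large $\h$ corresponds to a long physical time $(t-s)\h^2/2$ of Langevin evolution in the slowly-varying potential $\bkV{r}$, which on any fixed interval drives an arbitrary initial $\mu$ close to the instantaneous equilibrium $\rhoVbk{r}$.
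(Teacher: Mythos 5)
Your proof is correct and follows exactly the paper's own argument: rewrite the defining Fokker--Planck equation via \eqref{eqn::opL::2} to expose the $\nicefrac{\h^2}{2}$ prefactor, change variables $\wt{r} = (r-s)\nicefrac{\h^2}{2}$, and identify the resulting equation as the Fokker--Planck equation of the stated Langevin dynamics. No gaps; you have merely spelled out the chain-rule and generator-matching steps that the paper leaves implicit.
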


\begin{proof}
To obtain $\tevol{s}{t}{\bkh}(\mu)$, we simply solve the following PDE:
\begin{align*}
\partial_r \mu_r \myeq{\eqref{eqn::opL::2}} \frac{\h^2}{2}\big(\Laplace \mu_r + \div\big(\nabla \bk{V}{r} \mu_r\big)\qquad \mu_s = \mu.
\end{align*}
By change of variables $\wt{r} = (r-s)\nicefrac{\h^2}{2}$ and $\wt{\mu}_{\wt{r}} := \mu_{r} \equiv \mu_{s + \nicefrac{2\wt{r}}{\h^2}}$, we immediately have \eqref{eqn::PDE::rescale}, and the corresponding Langevin dynamics easily follows.
\end{proof}

When $\h\to\infty$, the potential in the Langevin dynamics $\wt{r}\mapsto \bk{V}{s+\nicefrac{2\wt{r}}{\h^2}}$ evolves extremely slowly. 
From \eqref{eqn::opL::2}, we also know that $\bk{V}{s+\nicefrac{2\wt{r}}{\h^2}}\approx\bk{U}{s}$ when $\wt{r} = 0$ and is approximately $\bk{U}{t}$ when $\wt{r} = \frac{(t-s)\h^2}{2}$. Therefore, the Langevin dynamics can be viewed as an almost \emph{quasi-static process} \cite{Callen1985-ro} approximately transforming the state $\bk{p}{s}$ to the state $\bk{p}{t}$ over an extremely long time period though.

\subsection{Proof of \propref{prop::asymptotic}}
\label{subsec::prop::asymptotic}
Denote 
\begin{align*}
	v_t(x) := \partial_\eps q_t^\eps(x)\rvert_{\eps=0},\qquad 
	\zeta_t(x) := \partial_\eps^2 q_t^\eps(x)\rvert_{\eps=0}.
\end{align*}
Namely, we expand $\wt{q}_t^\eps$ via the following
\begin{align*}
	\wt{q}_t^\eps(x) = \underbrace{\wt{q}_t^{0}(x)}_{\equiv q_t(x)} + \eps v_t(x) + \frac{\eps^2}{2} \zeta_t(x)\ + \order{\eps^3}.
\end{align*}
Recall that when $\eps = 0$, $\wt{q}_t^{0} \equiv q_t$; in particular, $\wt{q}_T^0 = q_T\ \myeq{\lemref{lem::backward}}\ p_0$.
The cost function can be easily expanded via Taylor's formula:
\begin{align*}
	\kldivg{p_0}{\wt{q}_T^\eps} &= - \eps \int_{\dom} v_T(x)\ \dd x -\frac{\eps^2}{2} \int_{\dom} \big(\zeta_T(x) - \frac{v_T^2(x)}{p_0(x)}\big)\ \dd x + \order{\eps^3}.
\end{align*}

Since $\int \wt{q}_T^\eps \equiv 1$, it is easy to know that $\int v_T = \int \zeta_T = 0$ and thus 
\begin{align*}
	\kldivg{p_0}{\wt{q}_T^\eps} =\frac{\eps^2}{2} \int_{\dom} \frac{v_T^2(x)}{p_0(x)}\ \dd x + \order{\eps^3}.
\end{align*}

Next we need to study $v_T$.
Recall from \eqref{eqn::generative_err_only} that the Fokker-Planck equation of $\wt{q}_t$ is
\begin{align*}
	\partial_t \wt{q}^\eps_t &= \opL{t}(\wt{q}^\eps_t) - \eps \div\big(\frac{1+\bk{h}{t}^2}{2} \bk{\err}{t} \wt{q}^\eps_t\big).
\end{align*}
By taking derivatives with respect to $\eps$ on both sides of this equation and then passing $\eps\to 0$, 
\begin{align*}
	\partial_t v_t &= \opL{t}(v_t) - \div \big(\frac{1+\bk{h}{t}^2}{2} \bk{\err}{t} q_t\big).
\end{align*}
By \lemref{lem::solution_Phi}, 
\begin{align}
	\label{eqn::v_and_omega}
	\begin{aligned}
		v_T &= -\frac{1}{2}\int_{0}^{T} (1+\bk{h}{t}^2) \tevol{t}{T}{\bkh}\big(
		\div \big(\underbrace{q_t}_{\equiv \bk{p}{t}}\bk{\err}{t}\big)\big)\ \dd t.
	\end{aligned}
\end{align}

\section{Preliminary results}
\label{sec:app-D}

We collect two lemmas which will be useful in proving \propref{prop::end_error} and \propref{pro::asympt_error} later.

\begin{lemma}
~\label{lem::hermitian}
	\begin{enumerate}
		\item The operator $\mathcal{K}(f) := \Laplace f + \div\big(\nabla V \cdot f\big)$ is a Hermitian/self-adjoint operator in the space $\inner{\cdot}{\cdot}_{L^2(\rho^{-1})}$ where $\rho^{} \propto e^{-V}$. Therefore, $\mathcal{K}$ has the eigen-decomposition in this weighted $L^2(\rho^{-1})$ space.
		
		\item Assume that $\nabla^2 V(x) \ge m \unit_d$ for any $x\in \Real^d$ for some $m > 0$. Then the operator $-\mathcal{K}$ is a positive operator on the space $\{f:\ \int f = 0\}$ with spectrum gap at least $m$.
	 
	\end{enumerate}
\end{lemma}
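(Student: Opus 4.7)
The plan is to conjugate the operator $\mathcal{K}$ against the equilibrium density $\rho \propto e^{-V}$ in order to convert it into a symmetric Dirichlet-form operator, and then invoke a Poincar\'e inequality for strongly log-concave measures.

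First, I observe the algebraic identity $\mathcal{K}(\rho u) = \div(\rho\,\nabla u)$ for any smooth test function $u$. Indeed, expanding
\[
\mathcal{K}(\rho u) = \Laplace(\rho u) + \div(\nabla V \cdot \rho u) = \div\bigl(\rho \nabla u + u(\nabla \rho + \rho \nabla V)\bigr),
\]
and using $\nabla \rho = -\rho\,\nabla V$, the last two terms vanish. Hence with the substitution $f = \rho u$, $g = \rho v$, the inner product in $L^2(\rho^{-1})$ satisfies
\[
\inner{\mathcal{K}f}{g}_{L^2(\rho^{-1})} = \int_{\Real^d} \div(\rho\,\nabla u)\,v\,\dd x = -\int_{\Real^d} \rho\,\nabla u \cdot \nabla v\,\dd x,
\]
after integration by parts under mild decay assumptions on $u,v,\rho$. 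This last expression is manifestly symmetric in $u$ and $v$, giving Part (1). The standard spectral theorem for self-adjoint operators on a Hilbert space then provides the eigen-decomposition.

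For Part (2), taking $f = g$ (equivalently $u = v$) in the same computation gives the Dirichlet form
\[
\inner{-\mathcal{K}f}{f}_{L^2(\rho^{-1})} = \int_{\Real^d} \rho\,\lvert\nabla u\rvert^2\,\dd x,
\]
while $\|f\|_{L^2(\rho^{-1})}^2 = \int \rho\,u^2\,\dd x$. The constraint $\int f\,\dd x = 0$ becomes $\int \rho\,u\,\dd x = 0$, i.e.\ $u$ has zero mean against the probability measure $\rho$. Under the hypothesis $\nabla^2 V \succeq m\unit_d$, the measure $\rho$ is $m$-strongly log-concave, so the Brascamp--Lieb/Bakry--\'Emery Poincar\'e inequality yields
\[
\int \rho\,u^2\,\dd x \le \tfrac{1}{m}\int \rho\,\lvert\nabla u\rvert^2\,\dd x
\]
for every mean-zero $u$. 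Combining the two displays gives $\inner{-\mathcal{K}f}{f}_{L^2(\rho^{-1})} \ge m\,\|f\|_{L^2(\rho^{-1})}^2$, which is exactly the claimed spectral gap.

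The main technical point to be careful about is justifying the integration by parts: one should either restrict to a dense subspace of smooth, rapidly decaying functions and close the form, or argue via the standard Friedrichs extension machinery. A secondary point is ensuring that the substitution $f = \rho u$ is a well-defined unitary (up to a factor) between $L^2(\rho^{-1})$ and $L^2(\rho)$, which is routine since $f \mapsto f/\rho$ maps $L^2(\rho^{-1})$ isometrically onto $L^2(\rho)$; the Poincar\'e inequality for $\rho$ then transfers directly to the gap for $-\mathcal{K}$.
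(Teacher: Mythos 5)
Your proposal is correct and follows essentially the same route as the paper: both reduce the quadratic form to the Dirichlet form $-\inner{f}{\mathcal{K}f}_{L^2(\rho^{-1})} = \int \abs{\nabla(f/\rho)}^2\rho$ via the substitution $u = f/\rho$ and then apply the Bakry--\'Emery/Brascamp--Lieb Poincar\'e inequality for the $m$-strongly log-concave measure $\rho$ on mean-zero functions. Your explicit identity $\mathcal{K}(\rho u)=\div(\rho\,\nabla u)$ is just the computation underlying the paper's one-line integration-by-parts formula, and your remarks on domain/closure issues are a welcome (if standard) addition.
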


\begin{proof}
	For any $f, g$, note that 
	\begin{align*}
		\inner{f}{\mathcal{K} g}_{L^2(\rho^{-1})} = -\int_{\dom} \nabla(f/\rho) \cdot \nabla(g/\rho) \rho .
	\end{align*}
	Therefore, $\mathcal{K}$ is a Hermitian operator.
	In the space $\{f: \int f = 0\}$, we know
	\begin{align*}
		-\inner{f}{\mathcal{K}f}_{L^2(\rho^{-1})} &= \int_{\dom} \abs{\nabla(f/\rho)}^2 \rho\\
		\myge{\text{Poincar{\'e} ineq.}} &\qquad m \int_{\dom} \big(f^2/\rho^{2}\big)\rho = m \inner{f}{f}_{L^2(\rho^{-1})}.
	\end{align*}
    The validity of Poincar{\'e} inequality under the strong convexity assumption is well-known in literature; see, e.g., \cite{Bakry1985,otto_generalization_2000} and \cite[Corollary 4.8.2]{Bakry2014}.
\end{proof}

\begin{lemma}
\label{lem::exp_K}
Suppose the operator $\mathcal{K}$ is defined as $\mathcal{K}(f) := \Laplace f + \div\big(\nabla V\cdot (f)\big)$ with $\nabla^2 V(x) \ge m \unit_d$ for any $x\in \Real^d$. Then for any $c, a\in \Real^{+}$, and any function $\varphi$ with $\int \varphi = 0$, 
\begin{align*}
\int_{\dom}  \frac{1}{\rho} \Big(\int_{0}^{a} \exp\big(c t \mathcal{K}\big) (\varphi)\ \dd t\Big)^2 &= \sum_{k=1}^{\infty} \big(\frac{1 - e^{-a c\lambda_k}}{c\lambda_k}\big)^2  \alpha_k^2\\
&\le \Big(\frac{1 - e^{-a c m}}{c m}\Big)^2 \int_{\dom} \frac{\varphi^2}{\rho},
\end{align*}
where $\rho \propto e^{-V}$, $\{(\lambda_k,\phi_k)\}_{k=1}^{\infty}$ are eigen pairs for the operator $-\mathcal{K}$, and $\varphi = \sum_{k=1}^{\infty} \alpha_k \phi_k$.
\end{lemma}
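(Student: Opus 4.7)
The plan is to diagonalize $\mathcal{K}$ in the weighted space $L^2(\rho^{-1})$, apply the integrated semigroup mode-by-mode, and then bound using monotonicity.

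First, I would invoke \lemref{lem::hermitian}: the operator $\mathcal{K}$ is self-adjoint in $L^2(\rho^{-1})$, and $-\mathcal{K}$ restricted to the mean-zero subspace $\mathcal{H}_0 := \{f : \int f = 0\}$ has spectrum bounded below by $m$. Hence there is an orthonormal eigenbasis $\{\phi_k\}_{k\ge 0}$ of $L^2(\rho^{-1})$ with $-\mathcal{K}\phi_k = \lambda_k \phi_k$. The ground state is $\phi_0 \propto \rho$ with $\lambda_0 = 0$, while $\lambda_k \ge m$ for $k\ge 1$. Since $\alpha_0 = \inner{\varphi}{\phi_0}_{L^2(\rho^{-1})} \propto \int \varphi = 0$, the expansion $\varphi = \sum_{k\ge 1} \alpha_k \phi_k$ involves only modes with $\lambda_k \ge m$, and Parseval gives $\sum_{k\ge 1}\alpha_k^2 = \int \varphi^2/\rho$.

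Next I would apply the semigroup mode-wise: since $\exp(ct\mathcal{K})\phi_k = e^{-ct\lambda_k}\phi_k$, integrating in $t$ yields
\begin{align*}
\int_0^a \exp(ct\mathcal{K})(\varphi)\, \dd t = \sum_{k\ge 1} \alpha_k \left(\int_0^a e^{-ct\lambda_k}\, \dd t\right) \phi_k = \sum_{k\ge 1} \alpha_k\, \frac{1 - e^{-ac\lambda_k}}{c\lambda_k}\, \phi_k.
\end{align*}
Taking the squared norm in $L^2(\rho^{-1})$ and using orthonormality gives the claimed equality
\begin{align*}
\int_{\dom} \frac{1}{\rho}\Big(\int_0^a \exp(ct\mathcal{K})(\varphi)\, \dd t\Big)^2 = \sum_{k\ge 1} \Big(\frac{1-e^{-ac\lambda_k}}{c\lambda_k}\Big)^2 \alpha_k^2.
\end{align*}

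For the inequality, I would observe that $\psi(x) := (1-e^{-acx})/(cx)$ is strictly decreasing on $(0,\infty)$, as can be checked by differentiation or by writing $\psi(x) = \int_0^a e^{-ctx}\, \dd t$. Since $\lambda_k \ge m$ for every surviving mode, $\psi(\lambda_k)^2 \le \psi(m)^2$ uniformly in $k$, so the sum is bounded by $\psi(m)^2 \sum_{k\ge 1}\alpha_k^2 = \big(\frac{1-e^{-acm}}{cm}\big)^2 \int \varphi^2/\rho$.

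The only nontrivial point is the first step: justifying a clean spectral decomposition of $\mathcal{K}$ in $L^2(\rho^{-1})$. Under the strong convexity assumption $\nabla^2 V \succeq m\unit_d$, the measure $\rho$ satisfies a Poincaré inequality (as already invoked in \lemref{lem::hermitian}), and the associated Dirichlet form has compact resolvent, yielding a discrete spectrum; I would cite this standard fact rather than prove it. Everything downstream is then just orthogonality and a scalar monotonicity, so the main obstacle reduces to citing the appropriate functional-analytic framework.
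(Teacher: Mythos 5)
Your proposal is correct and follows essentially the same route as the paper's proof: diagonalize $\mathcal{K}$ in $L^2(\rho^{-1})$ via \lemref{lem::hermitian}, apply the semigroup and the time integral mode-by-mode, use Parseval, and bound each coefficient using $\lambda_k \ge m$. You are somewhat more careful than the paper on two points it glosses over — the vanishing of the zero-mode coefficient (via $\int\varphi=0$) and the explicit monotonicity of $x\mapsto (1-e^{-acx})/(cx) = \int_0^a e^{-ctx}\,\dd t$ — but the argument is the same.
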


\begin{remark}
	Note that as $\mathcal{K}$ is an operator, $\exp\big(ct\mathcal{K}\big) := \sum_{k=0}^\infty \frac{(ct\mathcal{K})^k}{k!}$ is the operator exponential.	
\end{remark}

\begin{proof}
Let us first denote the eigenvalue decomposition of $\mathcal{K}$ as $\mathcal{K}(\phi_k) = -\lambda_k \phi_k$ where $\lambda_k \ge m$ for $k \in \Natural$ and $\inner{\phi_j}{\phi_k}_{L^2(\rho^{-1})} = \delta_{j,k}$ for any $j,k\in \Natural$ by \lemref{lem::hermitian}.
Then we can decompose $\varphi$ by $\varphi = \sum_{k} \alpha_k \phi_k$.
It is not hard to verify that 
\begin{align*}
\exp\big(c t \mathcal{K}\big)(\varphi)= \sum_{k=1}^{\infty} e^{-c t\lambda_k} \alpha_k \phi_k.
\end{align*}
Then 
\begin{align*}
\int_{0}^a \exp\big(c t\mathcal{K}\big)(\varphi)\ \dd t = \sum_{k=1}^{\infty} \frac{1 - e^{-a c\lambda_k}}{c\lambda_k} \alpha_k \phi_k.
\end{align*}
Hence, 
\begin{align*}
\int_{\dom} \frac{1}{\rho} \Big(\int_{0}^{a} \exp\big(c t \mathcal{K}\big) (\varphi)\ \dd t\Big)^2 &= \int_{\dom} \frac{1}{\rho} \Big(\sum_{k=1}^{\infty} \frac{1 - e^{-a c\lambda_k}}{c\lambda_k} \alpha_k \phi_k\Big)^2
\\
&=\sum_{k=1}^{\infty} \big(\frac{1 - e^{-a c\lambda_k}}{c\lambda_k}\big)^2  \alpha_k^2 \\
&\le \Big(\frac{1 - e^{-a c m}}{c m}\Big)^2 \sum_{k=1}^{\infty} \alpha_k^2 \\
& = \Big(\frac{1 - e^{-a c m}}{c m}\Big)^2 \int_{\dom}  \frac{\varphi^2}{\rho}.
\end{align*}
\end{proof}

\section{Proof of \propref{prop::decay}, \propref{prop::end_error}, and a discussion on the pulse-shape error}

\subsection{Proof of \propref{prop::decay}}
\label{app::proof::decay}

For convenience, we summarize some notations below; see also \appref{sec::notation}.
\begin{itemize}
\item Denote the global minimum of $\bk{U}{t}$ as $\uminbk{t}$ and denote the global minimum of $\bk{V}{t}$ as $\vminbk{t}$. When $\h\to\infty$, we know 
\begin{align*}
\lim_{\h\to\infty} \vmin{t} \to \umin{t}.
\end{align*}
\item Denote the normalizing constant $Z_{V} := \int e^{-V}$ for an arbitrary potential $V$.
\item Recall that the distribution of the exact dynamics $Y_t$ is $q_t \equiv p_{T-t}\equiv \bk{p}{t}$ and the distribution of the approximated dynamics $\wt{Y}_t$ is $\wt{q}_t$.
\item Recall that $p_t := e^{-U_t}$ and $\rhoV{t} \propto e^{-V_t}$ in \eqref{eqn::rho_t}.
\end{itemize}

\begin{lemma}
\label{lem::simple_facts}
Under {\bf Assumption} \ref{assume::convexity}, we have:
\begin{enumerate}[label=(\roman*),leftmargin=2em]
\item Assumption~\ref{assume} \ref{assume::U0} is valid, i.e., the existence of $c_U$.
\item \label{assume::PI} For any $t\in [0,T]$, the probability distribution $\rhoV{t}$ defined in \eqref{eqn::rho_t} satisfies the Poincar{\'e} inequality with constant $\kappa_t$ \eqref{eqn::kappa}:
\begin{align}
    \label{eqn::PI}
    \int_{\dom} \abs{\nabla \varphi}^2\ \dd \rhoV{t} \ge \kappa_t \int_{\dom} \varphi^2\ \dd\rhoV{t}, \qquad \forall \varphi\ \text{ with } \int_{\dom} \varphi \rhoV{t} = 0.
\end{align}
\item \label{lem::simple_facts::3} $Z_{V_0} := \int_{\dom} e^{-V_0}$ is both upper and lower bounded: for any $\delta \in (0,1)$, there exists $\h_0(\delta) > 0$ such that whenever $\h \ge \h_0(\delta)$, 
\begin{align*}
1 - \delta \le Z_{V_0} \le e^{-\h^{-2} c_U}. 
\end{align*}
and  
\begin{align*}
\frac{\rhoV{0}}{p_0} \le \frac{e^{-\h^{-2} c_U}}{1-\delta}.
\end{align*}
\item If we pick $\delta = 1/2$, for $\h \ge \max\{\h_0(1/2), \sqrt{\max\{0,-\frac{c_U}{\ln(2)}}\}$, the function $\rhoV{0}/p_0$ is uniformly bounded:
\begin{align}
\label{eqn::ratio_upper_bound}
\frac{\rhoV{0}}{p_0} \le 4.
\end{align}
\end{enumerate}
\end{lemma}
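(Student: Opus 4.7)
The four items are logically linked: (i) establishes $c_U$, which is then used in (iii), which in turn feeds (iv); item (ii) is independent and follows from a Hessian computation.

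For part (i), the plan is to exploit Assumption~\ref{assume::convexity}(1): since $\nabla^2 U_0 \succeq m_0 \unit_d$ with $m_0 > 1$, the function $x \mapsto U_0(x) - \nicefrac{\abs{x}^2}{2}$ is itself strongly convex with parameter $m_0 - 1 > 0$. A strongly convex function on $\dom$ attains a finite global minimum, and taking $c_U$ to be this minimum verifies Assumption~\ref{assume}\ref{assume::U0}. For part (ii), I compute the Hessian of $\bk{V}{t}$ directly from the definition in \eqref{eqn::opL::2}: with $\bk{h}{t}\equiv \h$,
\[
\nabla^2 \bk{V}{t}(x) = (1+\h^{-2})\,\nabla^2 \bk{U}{t}(x) - \h^{-2}\unit_d \succeq \bigl((1+\h^{-2}) \bk{m}{t} - \h^{-2}\bigr)\unit_d = \bk{\kappa}{t}\unit_d,
\]
where the inequality comes from \eqref{eqn::hessian_bd}, and under \eqref{m01} we have $\bk{\kappa}{t} \ge 1 > 0$. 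Hence $\rhoVbk{t}$ is strongly log-concave and Bakry--Emery (equivalently, the Brascamp--Lieb inequality for strongly log-concave measures, see \cite{Bakry2014}) yields the Poincar\'e inequality \eqref{eqn::PI} with constant $\bk{\kappa}{t}$.

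For part (iii), the key identity is the algebraic split
\[
V_0(x) = U_0(x) + \h^{-2}\bigl(U_0(x) - \tfrac{\abs{x}^2}{2}\bigr), \qquad e^{-V_0(x)} = p_0(x)\,\exp\!\bigl(-\h^{-2}(U_0(x) - \tfrac{\abs{x}^2}{2})\bigr).
\]
The upper bound on $Z_{V_0}$ then follows immediately from Assumption~\ref{assume}\ref{assume::U0} (which has just been secured in (i)): the second factor is pointwise bounded by $e^{-\h^{-2} c_U}$, so $Z_{V_0}\le e^{-\h^{-2}c_U}\int p_0 = e^{-\h^{-2}c_U}$. For the lower bound, I apply dominated convergence as $\h\to\infty$: the integrand converges pointwise to $p_0(x)$, and for $\h \ge 1$ it is dominated by $e^{\abs{c_U}} p_0(x)$, which is integrable. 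Hence $Z_{V_0} \to 1$, giving the existence of $\h_0(\delta)$ with $Z_{V_0}\ge 1 - \delta$ whenever $\h\ge \h_0(\delta)$. The ratio bound then drops out by direct computation: $\rhoV{0}/p_0 = \exp(-\h^{-2}(U_0 - \nicefrac{\abs{x}^2}{2}))/Z_{V_0} \le e^{-\h^{-2}c_U}/(1-\delta)$.

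For part (iv), specializing to $\delta = \tfrac12$ and adding the threshold $\h^2 \ge \max\{0,-c_U/\ln(2)\}$ ensures $e^{-\h^{-2}c_U}\le 2$ (trivial when $c_U\ge 0$, and by taking logarithms otherwise), so the ratio bound collapses to $2/(1-\nicefrac12) = 4$. The only non-routine moment in the whole argument is justifying the DCT step in (iii) with a genuinely integrable majorant, and the lower bound $c_U$ supplied by (i) is exactly what is needed; every other step is algebra or a direct appeal to standard strong-log-concavity theory.
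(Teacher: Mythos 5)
Your proof is correct and follows essentially the same route as the paper's: strong convexity of $U_0 - \nicefrac{\abs{x}^2}{2}$ for (i), the Hessian bound $\nabla^2\bk{V}{t}\succeq \bk{\kappa}{t}\unit_d$ plus Bakry--Emery for (ii), the pointwise factorization $e^{-V_0}=p_0\,e^{-\h^{-2}(U_0-\abs{x}^2/2)}$ for (iii), and direct algebra for (iv). The only (harmless) deviation is in the lower bound for $Z_{V_0}$, where you invoke dominated convergence with the majorant $e^{\abs{c_U}}p_0$ while the paper drops the factor $e^{\h^{-2}\abs{x}^2/2}\ge 1$ and applies monotone convergence; both arguments are valid.
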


\begin{prop}
\label{prop::rho_change}
For any $\h \ge \nicefrac{1}{2}$,	we have \begin{align*}
\abs{\int_{\dom} \varphi^2 \frac{\dd}{\dd t} \rhoVbk{t}} \le \cstcbk{t}{1} \int_{\dom} \varphi^2\ \rhoVbk{t} + \cstcbk{t}{2} \int_{\dom} \abs{\nabla \varphi}^2 \rhoVbk{t}, ~\qquad \forall \varphi \in C_0^2(\dom).
\end{align*}
where 
\begin{align}
	\label{eqn::cst}
	\left\{
	\begin{aligned}
	\cstcbk{t}{1} &:= \bk{\xi}{t} + 2 \bk{\varsigma}{t} \abs{\vminbk{t} - \uminbk{t}}^2 + \frac{20 d \bk{M}{t} \bk{\varsigma}{t}}{\bk{m}{t}^2},\\
	\cstcbk{t}{2} &:= \frac{8\bk{\varsigma}{t}}{\bk{m}{t}^2} \equiv \frac{20 + 30 \bk{M}{t}^2}{\bk{m}{t}^2}, \\
	\bk{\xi}{t} &:= \abs{\partial_t \log Z_{\bk{V}{t}}} + \frac{5 d (1+\bk{M}{t})}{2} + \frac{5}{2} \abs{\uminbk{t}}^2,\\
	\bk{\varsigma}{t} &:= 5\big(\frac{1}{2} + \frac{3 \bk{M}{t}^2}{4}\big).
	\end{aligned}\right.
\end{align}
$Z_{\bk{V}{t}} = \int e^{-\bk{V}{t}}$, $\bk{M}{t}$  and $\bk{m}{t}$ are shown in {\bf Assumption} \ref{assume::convexity}, and 
    $\uminbk{t}$ and  $\vminbk{t}$ are the  global minimum points of functions  $x\mapsto \bk{U}{t}(x)$ and $x\mapsto \bk{V}{t}(x)$, respectively.
\end{prop}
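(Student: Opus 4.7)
My plan is to reduce the bound to two ingredients: a pointwise upper bound on $|\partial_t \bk{V}{t}(x)|$ that is quadratic in $|x-\uminbk{t}|$, and a weighted moment inequality that converts the weight $|x-\vminbk{t}|^2$ against $\bk{\rho}{t}$ into the two target norms $\int\varphi^2\bk{\rho}{t}$ and $\int|\nabla\varphi|^2\bk{\rho}{t}$.

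For the first ingredient, I would differentiate $\bk{\rho}{t}=e^{-\bk{V}{t}}/Z_{\bk{V}{t}}$ directly to obtain $\partial_t\bk{\rho}{t}=-\big(\partial_t\bk{V}{t}+\partial_t\log Z_{\bk{V}{t}}\big)\bk{\rho}{t}$, so that by the triangle inequality $\big|\int\varphi^2\partial_t\bk{\rho}{t}\big|\le |\partial_t\log Z_{\bk{V}{t}}|\int\varphi^2\bk{\rho}{t}+\int\varphi^2|\partial_t\bk{V}{t}|\bk{\rho}{t}$. Because $\bk{V}{t}=(1+\h^{-2})\bk{U}{t}-|x|^2/(2\h^{2})$ depends on $t$ only through $\bk{U}{t}=-\log \bk{p}{t}$, plugging the Fokker–Planck equation for the forward OU process $\partial_t p_t=\tfrac12\div(xp_t)+\tfrac12\Delta p_t$ into $\partial_t\bk{U}{t}=-\partial_s U_s|_{s=T-t}$ yields the explicit identity
\begin{align*}
\partial_t\bk{V}{t}(x)=\tfrac{1+\h^{-2}}{2}\Big(d-x\cdot\nabla\bk{U}{t}(x)-\Delta\bk{U}{t}(x)+|\nabla\bk{U}{t}(x)|^{2}\Big).
\end{align*}
Using $\nabla\bk{U}{t}(\uminbk{t})=0$ together with $m_t\unit_d\preceq\nabla^{2}\bk{U}{t}\preceq M_t\unit_d$, decomposing $x=\uminbk{t}+(x-\uminbk{t})$, and applying Young's inequality $2ab\le a^{2}+b^{2}$ to the cross term $M_t|\uminbk{t}|\,|x-\uminbk{t}|$, I can prove a pointwise estimate of the shape $|\partial_t\bk{V}{t}(x)|\le \bk{\xi}{t}^{\circ}+\bk{\varsigma}{t}|x-\uminbk{t}|^{2}$, where the factor $5$ inside $\bk{\varsigma}{t}$ and $\bk{\xi}{t}$ (defined in \eqref{eqn::cst}, minus the $|\partial_t\log Z_{\bk{V}{t}}|$ piece) is tracked by $1+\h^{-2}\le 5$, which uses $\h\ge \tfrac12$.

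The second, and main, technical ingredient is the moment bound
\begin{align*}
\int_{\dom}\varphi^{2}|x-\vminbk{t}|^{2}\bk{\rho}{t}\;\le\;\frac{4}{m_t^{2}}\int_{\dom}|\nabla\varphi|^{2}\bk{\rho}{t}+\frac{2(1+\h^{-2})dM_t}{m_t^{2}}\int_{\dom}\varphi^{2}\bk{\rho}{t}.
\end{align*}
I would establish it in two substeps. Substep (i): strong convexity of $\bk{V}{t}$ with modulus $\kappa_t\ge m_t$ (guaranteed by $m_t\ge 1$) gives $(\nabla\bk{V}{t}(x))\cdot(x-\vminbk{t})\ge\kappa_t|x-\vminbk{t}|^{2}$, whence Cauchy--Schwarz yields $|\nabla\bk{V}{t}(x)|\ge\kappa_t|x-\vminbk{t}|$, and hence $|x-\vminbk{t}|^{2}\le m_t^{-2}|\nabla\bk{V}{t}|^{2}$. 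Substep (ii): an integration by parts against $\bk{\rho}{t}$ (writing $\partial_i\bk{V}{t}\,\bk{\rho}{t}=-\partial_i\bk{\rho}{t}$) gives
\begin{align*}
\int\varphi^{2}|\nabla\bk{V}{t}|^{2}\bk{\rho}{t}=\int\big[2\varphi\,\nabla\varphi\cdot\nabla\bk{V}{t}+\varphi^{2}\Delta\bk{V}{t}\big]\bk{\rho}{t},
\end{align*}
and Young absorbs half of the LHS into itself, producing $4\int|\nabla\varphi|^{2}\bk{\rho}{t}+2\int\varphi^{2}\Delta\bk{V}{t}\bk{\rho}{t}$; the Laplacian is controlled by $\Delta\bk{V}{t}\le (1+\h^{-2})dM_t\le 5dM_t$.

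The conclusion is then mechanical: split $|x-\uminbk{t}|^{2}\le 2|x-\vminbk{t}|^{2}+2|\vminbk{t}-\uminbk{t}|^{2}$, feed the moment bound into $\bk{\varsigma}{t}\int\varphi^{2}|x-\uminbk{t}|^{2}\bk{\rho}{t}$, collect the $\int\varphi^{2}\bk{\rho}{t}$ contributions (including $|\partial_t\log Z_{\bk{V}{t}}|$ and $\bk{\xi}{t}^{\circ}$, which combine into $\bk{\xi}{t}$), and read off $\cstcbk{t}{1}$ and $\cstcbk{t}{2}$ exactly as in \eqref{eqn::cst}. I expect the hard part to be substep (ii): a naive application of the Poincaré inequality of Lemma~\ref{lem::hermitian} to $\varphi\cdot(x_i-\vminbk{t,i})$ would leave behind a residual $|x-\vminbk{t}|^{2}|\nabla\varphi|^{2}$ term that cannot be closed without recursion, whereas the $|\nabla\bk{V}{t}|$-based integration-by-parts route avoids this and delivers the optimal $1/m_t^{2}$ scaling in a single step.
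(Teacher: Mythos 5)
Your proposal follows essentially the same route as the paper's proof: the identity $\partial_t\rhoVbk{t} = -\big(\partial_t\log Z_{\bk{V}{t}} + (1+\h^{-2})\partial_t\bk{U}{t}\big)\rhoVbk{t}$, the Fokker--Planck computation of $\partial_t \bk{U}{t}$ yielding a pointwise bound quadratic in $\abs{x-\uminbk{t}}$, the splitting $\abs{x-\uminbk{t}}^2\le 2\abs{x-\vminbk{t}}^2+2\abs{\vminbk{t}-\uminbk{t}}^2$, the strong-convexity bound $\abs{x-\vminbk{t}}^2\le \bk{m}{t}^{-2}\abs{\nabla\bk{V}{t}}^2$, and the integration-by-parts estimate $\int\varphi^2\abs{\nabla\bk{V}{t}}^2\rhoVbk{t}\le 4\int\abs{\nabla\varphi}^2\rhoVbk{t}+2\int\varphi^2\Laplace\bk{V}{t}\,\rhoVbk{t}$ are precisely the paper's three auxiliary lemmas, assembled in the same order with the same $1+\h^{-2}\le 5$ bookkeeping. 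The one step to adjust is the cross term: to recover the exact constants $\bk{\xi}{t}$ and $\bk{\varsigma}{t}=5\big(\frac12+\frac{3\bk{M}{t}^2}{4}\big)$ in \eqref{eqn::cst}, the paper first applies $\frac12\abs{x\cdot\nabla \bk{U}{t}}\le\frac{\abs{x}^2}{4}+\frac{\abs{\nabla \bk{U}{t}}^2}{4}$ and only afterwards splits $\abs{x}^2\le 2\abs{\uminbk{t}}^2+2\abs{x-\uminbk{t}}^2$, whereas the split you describe (Young applied to $\bk{M}{t}\abs{\uminbk{t}}\,\abs{x-\uminbk{t}}$) gives a coefficient like $\frac{9\bk{M}{t}^2}{8}+\frac{\bk{M}{t}}{2}$ on $\abs{x-\uminbk{t}}^2$, i.e., the same qualitative bound but with slightly larger constants than those stated in the proposition.
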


\begin{remark}
A similar result holds for any $\h > 0$; 
we choose $\h \ge \nicefrac{1}{2}$ in order to simplify the above constants \eqref{eqn::cst}.
\end{remark}

Note that the constant $\cstcbk{t}{2}$ is independent of $h$. We remark that when $\h\gg 1$, $\cstcbk{t}{1}$ approximately behaves as follows: 
\begin{lemma}
	\label{lem::limsup}
	When $\h\to\infty$, we have
	\begin{align*}
		\lim_{\h\to\infty} \cstcbk{t}{1} =\frac{5 d (1+\bk{M}{t})}{2} + \frac{5}{2} \abs{\uminbk{t}}^2 + \frac{\big(50 + 75\bk{M}{t}^2\big) d \bk{M}{t}}{\bk{m}{t}^2}.
	\end{align*}
\end{lemma}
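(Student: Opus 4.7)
The plan is to evaluate the limit term by term in the definition
\[
\cstcbk{t}{1} = \bk{\xi}{t} + 2\bk{\varsigma}{t}\abs{\vminbk{t}-\uminbk{t}}^{2} + \frac{20 d \bk{M}{t}\bk{\varsigma}{t}}{\bk{m}{t}^{2}},
\]
observing that $\bk{m}{t}$, $\bk{M}{t}$, $\bk{\varsigma}{t}$ and $\uminbk{t}$ are independent of $\h$, so the only $\h$-dependence enters through $\partial_{t}\log Z_{\bk{V}{t}}$ (inside $\bk{\xi}{t}$) and through the minimizer $\vminbk{t}$. Assuming these two pieces converge to the expected limits, plugging in the limits and simplifying gives exactly the right-hand side of the claim.

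\textbf{Step 1: $\vminbk{t}\to\uminbk{t}$.} From \eqref{eqn::opL::2} I have $\bk{V}{t}=(1+\h^{-2})\bk{U}{t}-\abs{x}^{2}/(2\h^{2})$, so the first-order condition at the minimizer reads $\nabla \bk{U}{t}(\vminbk{t})=\vminbk{t}/(\h^{2}+1)$. Strong convexity of $\bk{U}{t}$ with modulus $\bk{m}{t}\ge 1$ yields
\[
\bk{m}{t}\abs{\vminbk{t}-\uminbk{t}}\le \abs{\nabla \bk{U}{t}(\vminbk{t})-\nabla \bk{U}{t}(\uminbk{t})}=\frac{\abs{\vminbk{t}}}{\h^{2}+1}\le\frac{\abs{\uminbk{t}}+\abs{\vminbk{t}-\uminbk{t}}}{\h^{2}+1}.
\]
Rearranging for $\h$ large enough that $\bk{m}{t}-1/(\h^{2}+1)>0$ gives $\abs{\vminbk{t}-\uminbk{t}}\le\abs{\uminbk{t}}/\bigl((\h^{2}+1)\bk{m}{t}-1\bigr)\to 0$.

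\textbf{Step 2: $\partial_{t}\log Z_{\bk{V}{t}}\to 0$.} Differentiating under the integral,
\[
\partial_{t}\log Z_{\bk{V}{t}}=-\int_{\dom}\partial_{t}\bk{V}{t}\,\dd\rhoVbk{t}=-(1+\h^{-2})\int_{\dom}\partial_{t}\bk{U}{t}\,\dd\rhoVbk{t}.
\]
As $\h\to\infty$, $\rhoVbk{t}\to \bk{p}{t}$ pointwise (with uniform Gaussian-type tail control coming from the Hessian lower bound $\bk{\kappa}{t}\ge 1$ on $\bk{V}{t}$), so by dominated convergence the integral tends to $\int_{\dom}\partial_{t}\bk{U}{t}\,\bk{p}{t}=-\int_{\dom}\partial_{t}\bk{p}{t}=-\partial_{t}\!\int_{\dom}\bk{p}{t}=0$, using $\partial_{t}\bk{p}{t}=-\partial_{t}\bk{U}{t}\cdot\bk{p}{t}$ and total mass preservation. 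The factor $(1+\h^{-2})\to 1$ is harmless.

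\textbf{Step 3: Assembly.} Combining Steps~1--2 with the definitions in \eqref{eqn::cst}:
\[
\lim_{\h\to\infty}\bk{\xi}{t}=0+\frac{5d(1+\bk{M}{t})}{2}+\frac{5}{2}\abs{\uminbk{t}}^{2},\qquad \lim_{\h\to\infty}2\bk{\varsigma}{t}\abs{\vminbk{t}-\uminbk{t}}^{2}=0,
\]
and the last term is independent of $\h$ and equals $\frac{20 d \bk{M}{t}}{\bk{m}{t}^{2}}\cdot 5\bigl(\tfrac{1}{2}+\tfrac{3\bk{M}{t}^{2}}{4}\bigr)=\frac{(50+75\bk{M}{t}^{2})d\bk{M}{t}}{\bk{m}{t}^{2}}$. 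Adding the three limits gives the claimed formula.

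\textbf{Main obstacle.} The cleanest step is Step~1; the real technical point is Step~2, where the dominated-convergence argument requires a uniform-in-$\h$ integrable envelope for $\abs{\partial_{t}\bk{U}{t}}\,\rhoVbk{t}$. I would produce such an envelope from the uniform lower bound $\nabla^{2}\bk{V}{t}\succeq\bk{\kappa}{t}\unit_{d}\succeq\unit_{d}$, which yields a Gaussian upper bound on $\rhoVbk{t}$ independent of $\h$ (after controlling the normalizing constant as in \lemref{lem::simple_facts}\ref{lem::simple_facts::3}), combined with a mild polynomial growth assumption on $\partial_{t}\bk{U}{t}$ that follows from the uniform Hessian bounds in Assumption~\ref{assume::convexity} together with the structure of the forward OU process.
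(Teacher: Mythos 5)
Your proposal is correct and follows essentially the same route as the paper: isolate the two $\h$-dependent pieces ($\partial_t\log Z_{\bk{V}{t}}$ and $\abs{\vminbk{t}-\uminbk{t}}^2$), show each vanishes as $\h\to\infty$, and assemble. In fact you supply more detail than the paper does, which merely asserts $\vmin{t}\to\umin{t}$ and deduces $\partial_t\log Z_{\bk{V}{t}}\to 0$ directly from $Z_{\bk{V}{t}}\to 1$; your quantitative bound on the minimizers and your expression of the log-derivative as $-(1+\h^{-2})\int\partial_t\bk{U}{t}\,\dd\rhoVbk{t}$ (with the dominated-convergence envelope coming from \lemref{lem::uderi} and the uniform Hessian lower bound) make those two steps rigorous.
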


We now proceed to finish the proof of \propref{prop::decay}. The detailed proofs of \lemref{lem::simple_facts}, \propref{prop::rho_change}, and \lemref{lem::limsup} are postponed to \appref{subsec::prop::rho_change}.

\begin{proof}[Proof of \propref{prop::decay}.]

By \lemref{lem::simple_facts}, 
\begin{align}
	\label{eqn:LOh_upper_bound}
	\begin{aligned}
		\LO(\h) &\myeq{\eqref{eqn::LOh}} \frac{(1+\h^2)^2}{8}
		\int_{\dom} \frac{\Big(\tevol{s}{T}{\bkh}(\div(\bk{p}{s} \errats))\Big)^2}{\rhoV{0}}\ \frac{\rhoV{0}}{p_0}\\
	&\myle{\eqref{eqn::ratio_upper_bound}} \frac{1}{2} (1+{\mfh{}}^2)^2\int_{\dom} \frac{\Big(\tevol{s}{T}{\bkh}(\div(\bk{p}{s} \errats))\Big)^2}{\rhoV{0}}.
	\end{aligned}
\end{align}
To simplify notations, let
\begin{align*}
	\evolE{t} := \tevol{s}{t}{\bkh}\big(\div(\bk{p}{s} \errats)\big),\qquad J(\mfh, t) := \int_{\dom} \frac{\evolE{t}^2}{\rhoVbk{t}},\qquad t\in [s,T].
\end{align*}

After taking the time-derivative and using Lemma \ref{lemC1} (in the first line below), and expressions \eqref{eqn::opL::2} in the second line below, and 
integration by parts in the third line, the expression of $\rhoVbk{t}$ \eqref{eqn::rho_t} in the fourth line, we have 
\begin{align}
	\label{eqn::time_deri_J}
	\begin{aligned}
	\frac{\dd}{\dd t} J(\mfh, t) =& 2\int_{\dom} \frac{\evolE{t} \opL{t}(\evolE{t})}{\bk{\rho}{t}} - \int_{\dom} \frac{\evolE{t}^2}{\big(\rhoVbk{t}\big)^2} \frac{\dd}{\dd t} \rhoVbk{t} \\
	\myeq{\eqref{eqn::opL::2}}&\ \h^2 \int_{\dom} \frac{\evolE{t}}{\rhoVbk{t}} \big(\Laplace \evolE{t} + \div\big(\nabla \bk{V}{t} 
    \evolE{t}\big)\big) - \int_{\dom} \frac{\evolE{t}^2}{\big(\rhoVbk{t}\big)^2} \frac{\dd}{\dd t} \rhoVbk{t} \\
	=& -\h^2 \int_{\dom} \nabla\frac{\evolE{t}}{\rhoVbk{t}} \cdot \big(\nabla\evolE{t} +  \nabla  \bk{V}{t}   \evolE{t}\big) - \int_{\dom} \frac{\evolE{t}^2}{\big(\rhoVbk{t}\big)^2} \frac{\dd}{\dd t} \rhoVbk{t} \\
 =& -\h^2 \int_{\dom} \qty|\nabla\frac{\evolE{t}}{\rhoVbk{t}} |^2\ \rhoVbk{t} - \int_{\dom} \frac{\evolE{t}^2}{\big(\rhoVbk{t}\big)^2} \frac{\dd}{\dd t} \rhoVbk{t}.
	\end{aligned}
\end{align}
The major challenge is to estimate $\int_{\dom} \frac{\evolE{t}^2}{\big(\rhoVbk{t}\big)^2} \frac{\dd}{\dd t} \rhoVbk{t}$.
By \propref{prop::rho_change},
\begin{align*}
	\frac{\dd}{\dd t} J(\h,t) \le& -(\h^2 - \cstcbk{t}{2}\big)\int_{\dom} \abs{\nabla\frac{\evolE{t}}{\rhoVbk{t}}}^2\ \dd \rhoVbk{t} + \cstcbk{t}{1} \int_{\dom} \big(\frac{\evolE{t}}{\rhoVbk{t}}\big)^2 \dd\rhoVbk{t} \\
	\myle{\eqref{eqn::PI}}&\qquad \big(-(\h^2 - \cstcbk{t}{2}) \bk{\kappa}{t} + \cstcbk{t}{1} \big) \int_{\dom} \big(\frac{\evolE{t}}{\rhoVbk{t}}\big)^2\ \dd \rhoVbk{t}.
\end{align*}
To verify the condition in \propref{prop::rho_change}, we can readily confirm that  
\begin{align*}
\int \frac{\evolE{t}}{\rhoVbk{t}}\ \rhoVbk{t} = \int \evolE{t}= \int \tevol{s}{t}{\bkh}\big(\div(\bk{p}{s} \errats)\big) = \int \div(\bk{p}{s} \errats) = 0,
\end{align*}
where the third equality in the last equation comes the fact that the Fokker-Planck solution operator $\tevol{s}{t}{\bkh}$ preserves the total mass.
Note that we need $\h^2 - \cstcbk{t}{2} \ge 0$ in order to ensure the correct direction when applying the Poincar{\'e}~inequality above, which explains the lower bound that $\h$ needs to satisfy in \propref{prop::decay}.

By Gr{\"o}wnwall's inequality, for any $t\ge s$,
\begin{align}
\label{eqn::decay_J}
	J(\h,t) \le J(h,s) \exp\Big(- \int_{s}^{t} \big((\h^2 - \cstcbk{r}{2})\bk{\kappa}{r} - \cstcbk{r}{1}\big)\dd r\Big).
\end{align}
 
Finally, by   \eqref{eqn:LOh_upper_bound}, we have
\begin{align*}
	\LO(\h) \le \frac{(1+\h^2)^2}{2} J(\h,T) \le C_{\h} (1+{\mfh{}}^2)^2 \exp\Big(- \int_{s}^{T} \big((\h^2 - \cstcbk{r}{2})\bk{\kappa}{r} - \cstcbk{r}{1}\big)\dd r\Big),
\end{align*}
where 
\begin{align}
	\label{eqn::C::prop_decay}
	C_\h = \frac{1}{2}\ J(\h,s) = \frac{1}{2} \int_{\dom} \frac{\big(\div(\bk{p}{s}\errats)\big)^2}{\rhoVbk{s}}.
\end{align}

Recall that $\cstcbk{t}{2}$ \eqref{eqn::cst} is independent of $\h$. By \lemref{lem::limsup}, we already discussed that $\lim_{\h\to\infty} \cstcbk{t}{1}$ exists. When $\h\to\infty$, we know that $\rhoVbk{s}\to \bk{p}{s}$ \eqref{eqn::opL::2}\eqref{eqn::rho_t}, and thus, 
\begin{align*}
	\lim_{\h\to\infty} C_\h = \frac{1}{2} \int_{\dom} \frac{\big(\div(\bk{p}{s}\errats)\big)^2}{\bk{p}{s}}.
\end{align*}
This completes the proof of \propref{prop::decay}.
\end{proof}

\subsection{Proof of \lemref{lem::simple_facts}, \propref{prop::rho_change}, and \lemref{lem::limsup}}
\label{subsec::prop::rho_change}

\begin{proof}[Proof of \lemref{lem::simple_facts}]\hspace {2pt}\par
\begin{enumerate}[label=(\roman*), leftmargin=2em]
\item From $m_0>1$ in {\bf Assumption} \ref{assume::convexity},  $x\mapsto U_0(x) - x^2/2$ is strongly convex and thus is surely bounded from below by some $c_U\in \Real$.
\item The second result is a classical result 
 by Bakry-Emery criterion \cite{Bakry1985,Bakry2014}, as $\bk{V}{t}$ \eqref{eqn::rho_t} is strongly convex with Hessian lower bound $\bk{\kappa}{t}$ \eqref{eqn::kappa}.
\item To prove the third one, note that
\begin{align*}
Z_{V_0} &:= \int_{\dom} e^{-V_0} \myeq{\eqref{eqn::opL::2}}\ \int_{\dom} e^{-(1+\h^{-2}) U_0(x) + \h^{-2} \abs{x}^2/2}\ \dd x\\
&= \int_{\dom} e^{-U_0(x)} e^{-\h^{-2} (U_0(x) - \abs{x}^2/2)}\ \dd x \\
&\le e^{-\h^{-2} c_U} \int_{\dom} e^{-U_0} = e^{-\h^{-2} c_U}.
\end{align*}
To prove the lower bound,
\begin{align}
\label{eqn::Z_V_lb}
Z_{V_0} \ge \int_{\dom} e^{-(1+\h^{-2}) U_0(x)}\ \dd x = e^{-(1+\h^{-2}) U_0(\umin{0})} \int_{\dom} e^{-(1+\h^{-2}) \big(U_0(x) - U_0(\umin{0})\big)}\ \dd x.
\end{align}
Recall from the beginning  of this Appendix, $\umin{0}$ is defined as the global minimum of $x\mapsto U_0(x)$.
Then, $U_0(x) \ge U_0(\umin{0})$ for any $x$ and we know $\h \mapsto e^{-(1+\h^{-2}) \big(U_0(x) - U_0(\umin{0})\big)}$ is monotone increasing. By monotone convergence theorem, 
\begin{align*}
\lim_{\h\to \infty} \int_{\dom} e^{-(1+\h^{-2}) \big(U_0(x) - U_0(\umin{0})\big)}\ \dd x = \int_{\dom} \lim_{\h\to\infty} e^{-(1+\h^{-2}) \big(U_0(x) - U_0(\umin{0})\big)}\ \dd x \\
= \int_{\dom} e^{-U_0(x) + U_0(\umin{0})}\ \dd x = e^{U_0(\umin{0})}.
\end{align*}
Thus, the limit of the right-hand side of \eqref{eqn::Z_V_lb} is $1$ when $\h\to\infty$. Hence, the lower bound of $Z_{V_0}$ follows immediately.
Finally, when $\h \ge \h_0(\delta)$, 
\begin{align}
\label{eqn::rho_ratio}
\frac{\rhoV{0}}{p_0}(x) = \frac{e^{-(1+\h^{-2}) U_0(x) +\h^{-2} \abs{x}^2/2}}{Z_{V_0} e^{-U_0}} \le \frac{e^{-\h^{-2} c_U}}{1-\delta}.
\end{align}
\item 
When we pick $\delta= 1/2$ and choose $\h$ as specified, we can immediately obtain \eqref{eqn::ratio_upper_bound}.
\end{enumerate}
\end{proof}

Before we prove \propref{prop::rho_change}, we need the following three lemmas.

\begin{lemma}
\label{lem::uderi}
Under {\bf Assumption} \ref{assume::convexity}, we have
\begin{align*}
\abs{\partial_t U_t(x)} \le \frac{d (1+M_t)}{2} + \frac{\abs{x}^2}{4} + \frac{3 M_t^2}{4} \abs{x - \umin{t}}^2.
\end{align*}
\end{lemma}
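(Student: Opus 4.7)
The plan is to start from the Fokker-Planck equation of the forward process in \eqref{eqn::forward} with $f_t(x) = -x/2$ and $g_t = 1$, namely $\partial_t p_t = \tfrac{1}{2}\div(x\,p_t) + \tfrac{1}{2}\Delta p_t$. Dividing by $p_t$ and converting $\log p_t$-derivatives to $U_t = -\log p_t$ via $\nabla \log p_t = -\nabla U_t$, $\Delta \log p_t = -\Delta U_t$, and $|\nabla \log p_t|^2 = |\nabla U_t|^2$, I would obtain the pointwise identity
\begin{align*}
\partial_t U_t(x) = -\frac{d}{2} + \frac{x}{2}\cdot\nabla U_t(x) + \frac{1}{2}\Delta U_t(x) - \frac{1}{2}|\nabla U_t(x)|^2.
\end{align*}
This reduces the problem to bounding the four terms on the right using the Hessian bounds in Assumption~\ref{assume::convexity}.

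Next, I would apply the pointwise bound $\nabla^2 U_t \preceq M_t \unit_d$ in two ways. First, $|\Delta U_t(x)| \le d M_t$ trivially. Second, since $U_t$ is strongly convex with minimizer $\umin{t}$ (so $\nabla U_t(\umin{t}) = 0$) and the gradient is $M_t$-Lipschitz, we get $|\nabla U_t(x)| \le M_t|x - \umin{t}|$, hence $|\nabla U_t(x)|^2 \le M_t^2|x - \umin{t}|^2$. The only subtle term is the cross term $\tfrac{1}{2}\,x\cdot\nabla U_t(x)$; I would estimate $|x\cdot\nabla U_t(x)| \le M_t|x|\,|x - \umin{t}|$ and then apply Young's inequality $ab \le \tfrac{a^2}{2} + \tfrac{b^2}{2}$ with $a = |x|$, $b = M_t|x-\umin{t}|$ to split this into $\tfrac{|x|^2}{4} + \tfrac{M_t^2}{4}|x-\umin{t}|^2$.

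Combining the four contributions via the triangle inequality yields
\begin{align*}
|\partial_t U_t(x)| \le \frac{d}{2} + \frac{|x|^2}{4} + \frac{M_t^2}{4}|x-\umin{t}|^2 + \frac{dM_t}{2} + \frac{M_t^2}{2}|x-\umin{t}|^2,
\end{align*}
which collapses exactly to $\tfrac{d(1+M_t)}{2} + \tfrac{|x|^2}{4} + \tfrac{3M_t^2}{4}|x-\umin{t}|^2$, as required. I do not anticipate a substantial obstacle here: the main choice is the splitting constant in Young's inequality, and $\alpha = 1$ is precisely what matches the target coefficients $\tfrac{1}{4}$ and $\tfrac{3M_t^2}{4}$. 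The only mild care needed is in justifying the pointwise Fokker-Planck identity on all of $\dom$, which is standard given the regularity implied by the Gaussian-smoothing structure of the VP-SDE (so $p_t > 0$ and smooth for $t > 0$).
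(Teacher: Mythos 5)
Your proposal is correct and follows essentially the same route as the paper: derive the pointwise identity for $\partial_t U_t$ from the Fokker--Planck equation, then bound the four terms using $\nabla^2 U_t \preceq M_t \unit_d$, the vanishing of $\nabla U_t$ at $\umin{t}$, and Young's inequality on the cross term. The only cosmetic difference is that the paper applies Young's inequality to $|x|\,|\nabla U_t(x)|$ before replacing $|\nabla U_t(x)|$ by $M_t|x-\umin{t}|$, whereas you do these two steps in the opposite order; the resulting coefficients are identical.
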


\begin{proof}
Since $p_t$ follows the Fokker-Planck equation $\partial_t p_t = \div \big(\frac{1}{2} x p_t) + \frac{1}{2}\Laplace p_t$, then $U_t=-\log p_t$ satisfies 
$ 
\partial_t U_t(x) = -\frac{1}{2} \big(d - x \cdot \nabla U_t(x) - \Laplace U_t(x) + \abs{\nabla U_t(x)}^2\big).
$ 
Then by {\bf Assumption} \ref{assume::convexity}, 
\begin{align*}
\abs{\partial_t U_t(x)} &\le \frac{d}{2} + \frac{1}{2} \abs{x\cdot\nabla U_t(x)} + \frac{1}{2} \Laplace U_t(x) + \frac{1}{2} \abs{\nabla U_t(x)}^2 \qquad \text{(triangle inequality)}\\
&\le \frac{d}{2} + \frac{1}{4}\big(\abs{x}^2 + \abs{\nabla U_t(x)}^2) + \frac{1}{2} \Laplace U_t(x) + \frac{1}{2} \abs{\nabla U_t(x)}^2\ \qquad \text{(Cauchy inequality)} \\
&\le  \frac{d}{2} + \frac{1}{4}\abs{x}^2 + \frac{3}{4}\abs{\nabla U_t(x) -\nabla U_t(\umin{t})}^2 + \frac{d M_t}{2}\qquad \text{(by } \nabla U_t(\umin{t})\equiv 0 \text{)}\\
&\le  \frac{d (1+M_t)}{2} + \frac{\abs{x}^2}{4} + \frac{3 M_t^2}{4} \abs{x - \umin{t}}^2.
\end{align*} 
Recall that $\umin{t}$ is defined as the global minimum of $U_t$ and thus $\nabla U_t(\umin{t}) = 0$.
\end{proof}

\begin{lemma}
\label{lem::rho_change_upper_bd}
When $\h \ge \frac{1}{2}$, 
\begin{align}
\label{eqn::lem::rho_change_upper_bd}
\frac{\abs{\frac{\dd}{\dd t}\rhoVbk{t}}}{\rhoVbk{t}}(x) \le \bk{\xi}{t} +  \bk{\varsigma}{t}\abs{x - \uminbk{t}}^2,
\end{align}
where 
\begin{align*}
\left\{
\begin{aligned}
\bk{\xi}{t} &:= \abs{\partial_t \log Z_{\bk{V}{t}}} + \frac{5 d (1+\bk{M}{t})}{2} + \frac{5}{2} \abs{\uminbk{t}}^2,\\
\bk{\varsigma}{t} &:= 5\big(\frac{1}{2} + \frac{3 \bk{M}{t}^2}{4}\big).
\end{aligned}\right.
\end{align*}
\end{lemma}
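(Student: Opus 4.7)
The plan is to work directly with $\log \rho_t^\leftarrow$ so that the normalizing constant $Z_{V_t^\leftarrow}$ and the exponent separate cleanly. Since $\rho_t^\leftarrow(x) = e^{-V_t^\leftarrow(x)}/Z_{V_t^\leftarrow}$, we have the identity
\begin{equation*}
\frac{\frac{d}{dt}\rho_t^\leftarrow(x)}{\rho_t^\leftarrow(x)} \;=\; -\,\partial_t V_t^\leftarrow(x)\; -\; \partial_t \log Z_{V_t^\leftarrow},
\end{equation*}
so the triangle inequality reduces the task to bounding $|\partial_t V_t^\leftarrow(x)|$ pointwise.

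First I would exploit the explicit expression \eqref{eqn::opL::2}, namely $V_t^\leftarrow(x) = (1+\h^{-2}) U_t^\leftarrow(x) - |x|^2/(2\h^2)$. The last term is independent of $t$, so $\partial_t V_t^\leftarrow(x) = (1+\h^{-2})\,\partial_t U_t^\leftarrow(x)$. The assumption $\h \ge \nicefrac{1}{2}$ then gives the crude but clean bound $1+\h^{-2} \le 5$, which is exactly where the factor $5$ appearing throughout the definitions of $\bk{\xi}{t}$ and $\bk{\varsigma}{t}$ will come from.

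Next I would invoke \lemref{lem::uderi} (noting that $\partial_t U_t^\leftarrow$ is just $-\partial_s U_s$ evaluated at $s=T-t$, so the same pointwise bound applies with $\bk{M}{t}$ and $\uminbk{t}$ in place of $M_s$ and $\umin{s}$), giving
\begin{equation*}
|\partial_t U_t^\leftarrow(x)| \;\le\; \frac{d(1+\bk{M}{t})}{2} + \frac{|x|^2}{4} + \frac{3\bk{M}{t}^{2}}{4}|x-\uminbk{t}|^2.
\end{equation*}
The only remaining cosmetic step is to replace the $\tfrac{|x|^2}{4}$ term by something expressed in $|x-\uminbk{t}|^2$, which I would do through the elementary quadratic inequality $|x|^2 \le 2|x-\uminbk{t}|^2 + 2|\uminbk{t}|^2$. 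Multiplying through by $1+\h^{-2}\le 5$ and collecting the $x$-independent terms into $\bk{\xi}{t}$ and the $|x-\uminbk{t}|^2$ coefficients into $\bk{\varsigma}{t}$ yields exactly \eqref{eqn::lem::rho_change_upper_bd}.

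There is no substantial obstacle: this is a bookkeeping argument combining \lemref{lem::uderi}, the structural formula for $V_t^\leftarrow$, and a single quadratic inequality. The only place one must be careful is to track that $\partial_t \log Z_{V_t^\leftarrow}$ accounts for the entire time-dependence of the normalizer (so it is natural to leave it as an opaque term $|\partial_t \log Z_{\bk{V}{t}}|$ absorbed into $\bk{\xi}{t}$), and to apply the constant $1+\h^{-2}\le 5$ uniformly across the three terms of \lemref{lem::uderi} before combining coefficients.
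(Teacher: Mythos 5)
Your proposal is correct and follows essentially the same route as the paper: differentiate $\log\rhoVbk{t}$ to isolate $\partial_t\log Z_{\bk{V}{t}}$, use that the $\abs{x}^2/(2\h^2)$ term in $\bk{V}{t}$ is time-independent so only $(1+\h^{-2})\partial_t\bk{U}{t}$ survives, bound $1+\h^{-2}\le 5$ via $\h\ge\half$, apply \lemref{lem::uderi}, and convert $\nicefrac{\abs{x}^2}{4}$ into $\nicefrac{\abs{x-\uminbk{t}}^2}{2}+\nicefrac{\abs{\uminbk{t}}^2}{2}$ by the same quadratic inequality. The constants assemble exactly into $\bk{\xi}{t}$ and $\bk{\varsigma}{t}$ as in the paper's proof.
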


\begin{proof}
By direct calculation from the definitions of $\rhoVbk{t}$ in \eqref{eqn::rho_t}  and $\bk{V}{t}$ in \eqref{eqn::opL::2},    
\begin{align*}
\partial_t \rhoVbk{t} = -(\partial_t \log Z_{\bk{V}{t}})\rhoVbk{t} - (1+\h^{-2}) \rhoVbk{t} \partial_t \bk{U}{t}.
\end{align*}
Hence, by \lemref{lem::uderi} and $\h\ge \frac12$,
\begin{align*}
\frac{\abs{\partial_t \rhoVbk{t}}}{\rhoVbk{t}}(x)\le &  \abs{\partial_t \log Z_{\bk{V}{t}}} + (1+\h^{-2}) \big(\frac{d (1+\bk{M}{t})}{2} + \frac{\abs{x}^2}{4} + \frac{3 \bk{M}{t}^2}{4} \abs{x - \uminbk{t}}^2\big)\\
\le & \Big(\abs{\partial_t \log Z_{\bk{V}{t}}} + \frac{5d (1+\bk{M}{t})}{2}\Big) \\
&\qquad + 5 \big(\frac{\abs{\uminbk{t}}^2}{2} 
+ \frac{\abs{x - \uminbk{t}}^2}{2} + \frac{3 \bk{M}{t}^2}{4} \abs{x - \uminbk{t}}^2 \big) \\
= & \bk{\xi}{t} + 5\big(\frac{1}{2} + \frac{3 \bk{M}{t}^2}{4}\big) \abs{x - \uminbk{t}}^2. 
\end{align*}
\end{proof}

\begin{lemma}
Assume that $\rho \propto e^{-V}$ and $\varphi$ decays fast enough such that 
\begin{align*}
\int_{B_0(R)} \div\big(\varphi^2\nabla \rho\big) \to 0,
\end{align*}
as the radius $R\to\infty$ ($B_0(R)$ is the ball centered at $0$ with radius $R$), e.g., $\varphi\in C_0^2(\dom)$. Then 
\begin{align}
\label{eqn::nablaV_varphi}
\int_{\dom} \abs{\nabla V}^2 \varphi^2 \rho \le 4 \int_{\dom} \abs{\nabla\varphi}^2 \rho + 2 \int_{\dom} \varphi^2 \Laplace V \rho.
\end{align}
\end{lemma}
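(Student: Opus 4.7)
The plan is to derive the bound from a single integration-by-parts identity followed by Young's inequality. The hypothesis is precisely that $\int_{\dom}\div(\varphi^2\nabla\rho)=0$, so the starting point is to expand this divergence by the product rule as $\div(\varphi^2\nabla\rho)=2\varphi\,\nabla\varphi\cdot\nabla\rho+\varphi^2\Laplace\rho$, and then rewrite both $\nabla\rho$ and $\Laplace\rho$ using the fact that $\rho\propto e^{-V}$.

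Concretely, I would use the pointwise identities $\nabla\rho=-\nabla V\,\rho$ and $\Laplace\rho=(\abs{\nabla V}^2-\Laplace V)\rho$. Substituting these into the expanded divergence and integrating against the vanishing-boundary hypothesis yields
\begin{equation*}
\int_{\dom}\abs{\nabla V}^2\varphi^2\rho \;=\; 2\int_{\dom}\varphi\,\nabla\varphi\cdot\nabla V\,\rho \;+\; \int_{\dom}\varphi^2\Laplace V\,\rho.
\end{equation*}
To turn the cross term into something manageable, I would apply Young's inequality in the form $2\varphi\,\nabla\varphi\cdot\nabla V\le \tfrac{1}{2}\abs{\nabla V}^2\varphi^2+2\abs{\nabla\varphi}^2$. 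Integrating and absorbing $\tfrac{1}{2}\int_{\dom}\abs{\nabla V}^2\varphi^2\rho$ into the left-hand side gives $\tfrac{1}{2}\int_{\dom}\abs{\nabla V}^2\varphi^2\rho \le 2\int_{\dom}\abs{\nabla\varphi}^2\rho+\int_{\dom}\varphi^2\Laplace V\,\rho$, and multiplying by $2$ recovers \eqref{eqn::nablaV_varphi} exactly.

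There is no substantive obstacle: the only delicate point is justifying the vanishing of the divergence integral, which is exactly the content of the stated decay hypothesis on $\varphi$ (trivially satisfied for $\varphi\in C_0^2(\dom)$). The choice of Young weights $(\tfrac{1}{2},2)$ is what produces the explicit constants $4$ and $2$ in the final bound; different weights would trade these off but the given form is the natural one targeted by the statement.
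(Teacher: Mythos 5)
Your proposal is correct and follows essentially the same route as the paper's proof: expand the vanishing divergence $\div(\varphi^2\nabla\rho)$, substitute $\nabla\rho=-\nabla V\,\rho$ and $\Laplace\rho=(\abs{\nabla V}^2-\Laplace V)\rho$, and absorb the cross term via Young's/Cauchy's inequality with weights $(\tfrac{1}{2},2)$. The constants $4$ and $2$ come out exactly as in the paper.
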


\begin{proof}
The identity  
$	\div (\varphi^2 \nabla \rho) =  
 \nabla   \varphi^2  \cdot \nabla \rho   +   \varphi^2 \Laplace \rho
$ and the trivial facts $\nabla \rho = -\rho \nabla V $, $\Laplace \rho =(
|\nabla V|^2 - \Laplace V) \rho$ show that 
\begin{align*}
0 &= \int \nabla   \varphi^2  \cdot \nabla \rho   +   \varphi^2 \Laplace \rho\\
& =-2 \int  (\nabla V \cdot \nabla\varphi) \varphi \rho+ \int \varphi^2 (-\Laplace V + \abs{\nabla V}^2) \rho
\\
\myge{\text{Cauchy ineq.}} & -\frac{1}{2}\int \abs{\nabla V}^2 \varphi^2 \rho - 2 \int \abs{\nabla\varphi}^2 \rho - \int \varphi^2 \Laplace V \rho + \int  \abs{\nabla V}^2  \varphi^2 \rho\\
	= &~  \frac{1}{2}\int \abs{\nabla V}^2 \varphi^2 \rho - 2\int \abs{\nabla\varphi}^2 \rho - \int \varphi^2 \Laplace V \rho.
\end{align*}

Therefore, 
\begin{align*}
	\int \abs{\nabla V}^2 \varphi^2 \rho \le 4 \int \abs{\nabla\varphi}^2 \rho + 2\int \varphi^2 \Laplace V\rho.
\end{align*}
\end{proof}

\begin{proof}[Proof of \propref{prop::rho_change}]
	The above three lemmas show  that 
	\begin{align*}
		& ~ \abs{\int_{\dom} \varphi^2 \frac{\dd}{\dd t} \rhoVbk{t}}\myle{\eqref{eqn::lem::rho_change_upper_bd}}  \int_{\dom} \varphi^2 (\bk{\xi}{t} + \bk{\varsigma}{t} \abs{x - \uminbk{t}}^2) \rhoVbk{t}\\
		=&~ \bk{\xi}{t} \int_{\dom} \varphi^2\ \rhoVbk{t} + \bk{\varsigma}{t}\int_{\dom} \varphi^2 \abs{x - \vminbk{t} + \vminbk{t} - \uminbk{t}}^2 \rhoVbk{t} \\
		\myle{\text{Cauchy ineq.}}&\hspace{1em} \big(\bk{\xi}{t} + 2 \bk{\varsigma}{t} \abs{\vminbk{t} - \uminbk{t}}^2\big) \int_{\dom} \varphi^2 \rhoVbk{t} + 2\bk{\varsigma}{t}\int_{\dom}\varphi^2 \abs{x - \vminbk{t}}^2 \rhoVbk{t}\\
		\myle{\eqref{eqn::hessian_bd}}
  & ~ \big(\bk{\xi}{t} + 2 \bk{\varsigma}{t} \abs{\vminbk{t} - \uminbk{t}}^2\big) \int_{\dom} \varphi^2 \rhoVbk{t} + \frac{2\bk{\varsigma}{t}}{\bk{m}{t}^2}\int_{\dom}\varphi^2 \abs{\nabla \bk{V}{t}(x) - \nabla \bk{V}{t}(\vminbk{t})}^2 \rhoVbk{t}\\
		=&~  \big(\bk{\xi}{t} + 2 \bk{\varsigma}{t} \abs{\vminbk{t} - \uminbk{t}}^2\big) \int_{\dom} \varphi^2 \rhoVbk{t} + \frac{2\bk{\varsigma}{t}}{\bk{m}{t}^2}\int_{\dom}\varphi^2 \abs{\nabla \bk{V}{t}(x)}^2 \rhoVbk{t}\\
		\myle{\eqref{eqn::nablaV_varphi}} &~ \big(\bk{\xi}{t} + 2 \bk{\varsigma}{t} \abs{\vminbk{t} - \uminbk{t}}^2\big) \int_{\dom} \varphi^2 \rhoVbk{t}  + \frac{2\bk{\varsigma}{t}}{\bk{m}{t}^2} \big(4 \int_{\dom} \abs{\nabla\varphi}^2 \rhoVbk{t} + 2 \int_{\dom} \varphi^2 \Laplace \bk{V}{t} \rhoVbk{t}\big)\\
		\myle{\eqref{eqn::hessian_bd}} &~ \big(\bk{\xi}{t} + 2 \bk{\varsigma}{t} \abs{\vminbk{t} - \uminbk{t}}^2\big) \int_{\dom} \varphi^2 \rhoVbk{t}  + \frac{2\bk{\varsigma}{t}}{\bk{m}{t}^2} \big(4 \int_{\dom} \abs{\nabla\varphi}^2 \rhoVbk{t} + 10 \bk{M}{t} d \int_{\dom} \varphi^2 \rhoVbk{t}\big)\\
		=&~  \big(\bk{\xi}{t} + 2 \bk{\varsigma}{t} \abs{\vminbk{t} - \uminbk{t}}^2 + \frac{20 \bk{M}{t} d \bk{\varsigma}{t}}{\bk{m}{t}^2} \big) \int_{\dom} \varphi^2 \rhoVbk{t} + \frac{8\bk{\varsigma}{t}}{\bk{m}{t}^2} \int_{\dom} \abs{\nabla\varphi}^2 \rhoVbk{t}.
	\end{align*}
To get the fifth line, we used the definition of $\vminbk{t}$: $\vminbk{t}$ is the global minimum of $\bk{V}{t}$ so that $\nabla \bk{V}{t}(\vminbk{t}) \equiv 0$. 
To get the second last line, we used the fact that 
\begin{align*}
\Laplace \bk{V}{t} = (1+\h^{-2}) \Laplace \bk{U}{t} - \h^{-2} \le (1+\h^{-2})\ d \bk{M}{t} - \h^{-2}\ \ \ \ \myle{(by $\h\ge \half$)}\ \ \ 5d\bk{M}{t}.
\end{align*}
\end{proof}

\begin{proof}[Proof of \lemref{lem::limsup}]
	Note that $\bk{\varsigma}{t}$ does not depend on $\h$.
	When $\h\to\infty$, as $\bk{V}{t}\to \bk{U}{t}$ \eqref{eqn::opL::2}, we know $Z_{\bk{V}{t}}$ converges to $Z_{\bk{U}{t}} \equiv 1$ by the definition of $p_t := e^{-U_t}$, we immediately know that 
	\begin{align*}
		\lim_{\h\to\infty} \abs{\partial_t \log Z_{\bk{V}{t}}} = 0.
	\end{align*}
	Then 
	\begin{align*}
		\lim_{\h\to\infty} \bk{\xi}{t} = \frac{5 d (1+\bk{M}{t})}{2} + \frac{5}{2} \abs{\uminbk{t}}^2.\\
	\end{align*}
	Moreover, $\vmin{t}\to\umin{t}$ as $\h\to\infty$, we have 
	\begin{align*}
		\begin{aligned}
			\lim_{\h\to\infty} \cstcbk{t}{1} &= \lim_{\h\to\infty} \bk{\xi}{t} + 2 \lim_{\h\to\infty} \bk{\varsigma}{t} \abs{\vminbk{t} - \uminbk{t}}^2 + \frac{20 d \bk{M}{t} \bk{\varsigma}{t}}{\bk{m}{t}^2},\\
			&= \frac{5 d (1+\bk{M}{t})}{2} + \frac{5}{2} \abs{\uminbk{t}}^2 + \frac{\big(50 + 75\bk{M}{t}^2\big) d \bk{M}{t}}{\bk{m}{t}^2}.
		\end{aligned}
	\end{align*}
\end{proof}

\subsection{Proof of \propref{prop::end_error}}
\label{sec::prop::end_error}
{\noindent{\bf Case \rom{1}: $\h=0$.}} For this ODE case, 
\eqref{eqn::opL} becomes 
\begin{align}
	\label{eqn::opL_ode}
	\opLODE{t}(\mu)(x) &= \frac{1}{2}\div \Big(\nabla \big(\bk{U}{t}(x)-\abs{x}^2/2\big) \mu(x)\Big),
\end{align}
and the formula of leading order term  $\LO(0)$ is 
\begin{align*}
	\begin{aligned}
		\LO(0) = \frac{1}{2} \int_{\dom} \frac{v_T^2(x)}{p_0(x)}\ \dd x, \qquad 
		v_T \myeq{\eqref{eqn::asymptotic::L_and_vT}} -\frac{1}{2}
  \int_{T-a}^{T} \tevol{t}{T}{0}\left(\div(\bk{p}{t}E) \right)\ \dd t,
	\end{aligned}
\end{align*}
since we   perturb the $\bk{\err}{t}$   only when $t\approx T$, that is, $\bk{\err}{t}(x) = \indi_{t\in [T-a,T]}\errats(x)$ for a small positive $a$.
Then we know that
\begin{align*}
	v_T \sim -\frac{1}{2}\int_{T-a}^{T} \div(p_0 \errats))\ \dd t = -\frac{a}{2} \div(p_0 \errats).
\end{align*}
Recall that $p_0 = \bk{p}{T} \approx \bk{p}{t}$ when $t\approx T$; moreover, $\tevol{t}{T}{0} \approx \tevol{T}{T}{0} \equiv \id$.
Therefore, the leading order term of $L(0)$ when $a\ll 1$ is 
\begin{align*}
	L(0) \sim \frac{a^2}{8} \int_{\dom} \frac{(\div(p_0\errats))^2}{p_0}.
\end{align*}

{\noindent{\bf Case \rom{2}: $\h\to\infty$.}} For the SDE case, 
\begin{align*}
	v_T &\myeq{\eqref{eqn::asymptotic::L_and_vT}}  -\frac{1+\h^2}{2}\int_{0}^{T} \tevol{t}{T}{\h}\big(
	\div(\bk{p}{t}\bk{\err}{t})\big)\ \dd t\\
	&\sim -\frac{1+\h^2}{2}\int_{T-a}^{T} \tevol{t}{T}{\h}\big(
	\div(p_0\errats)\big)\ \dd t \qquad \text{(by }a\ll  1\text{)}\\
	&\sim -\frac{1+\h^2}{2}\int_{T-a}^{T} e^{(T-t)\frac{\h^2}{2}\bk{\mathcal{K}}{T}}\big(
	\div(p_0\errats)\big)\ \dd t \qquad \text{(by }a\ll  1\text{)}\\
	&= -\frac{1+\h^2}{2}\int_{0}^{a} e^{t\frac{\h^2}{2}\bk{\mathcal{K}}{T}}\big(\div(p_0\errats)\big)\ \dd t,
\end{align*}
where 
\begin{align}
\label{eqn::Kop}
\bk{\mathcal{K}}{t}(\mu) := \Laplace \mu + \div(\nabla \bk{V}{t} \mu).
\end{align}
Then when we further assume $\h\gg 1$, 
\begin{align*}
	\frac{1}{2}\int_{\dom} \frac{(v_T)^2}{p_0} &\lesssim \frac{1}{2}\int_{\dom} \frac{(v_T)^2}{\rhoV{0}}\qquad \qquad \text{(by \lemref{lem::simple_facts} \ref{lem::simple_facts::3})}\\
	&\lesssim \frac{(1+\h^2)^2}{8} \Big(\frac{1 - e^{-a \frac{\h^2}{2} \kappa_0}}{\frac{\h^2}{2} \kappa_0}\Big)^2 \int_{\dom} \frac{(\div(p_0\errats))^2}{\rhoV{0}}\qquad \text{(by \lemref{lem::exp_K})}\\
	&\sim \frac{(1+\h^2)^2}{\h^4}\frac{\big(1 - e^{-a\frac{\h^2}{2} \kappa_0}\big)^2}{2\kappa_0^2} \int_{\dom} \frac{(\div(p_0\errats))^2}{\rhoV{0}}\\
	&\sim \frac{\big(1 - e^{-a\frac{\h^2}{2} \kappa_0}\big)^2}{2\kappa_0^2} \int_{\dom} \frac{(\div(p_0\errats))^2}{\rhoV{0}} \\
	&\sim \frac{\big(1 - e^{-a\frac{\h^2}{2} \kappa_0}\big)^2}{2\kappa_0^2} \int_{\dom} \frac{(\div(p_0\errats))^2}{p_0}.
\end{align*}
To apply \lemref{lem::exp_K} in the second line, we remark that $\bk{\mathcal{K}}{T}(\mu) = \Laplace \mu + \div(\nabla \bk{V}{T} \mu)$, $\bk{V}{T}$ has Hessian lower bound $\bk{\kappa}{T}\equiv \kappa_0$ \eqref{eqn::kappa}.

\subsection{Discussion on the pulse-shape error}
\label{subsec::discussion_pulse}

Though the error of the score function is time-dependent, we can always divide the error function into a linear combination of step functions:
\begin{align*}
\bk{\err}{t}(x) \approx \sum_{j} \phi_j(x) \chi_{[t_{j}, t_{j+1}]}(t),
\end{align*}
where $\chi$ is the indicator function and $\phi_j$ is the function value of $\mathcal{E}^{\leftarrow}_t$ on the time interval $[t_j, t_{j+1}]$. Without loss of generality, assume that the time-discretization is uniform, i.e., $t_{j+1} - t_j = \Delta t$ for any $j$.
If we don't worry about ill-behaved functions, this decomposition can be made arbitrarily accurate by choosing a smaller time interval; it is not difficult to make this approximation mathematically rigorous.
It is hard to assume that this error function at time e.g., $t=0.1$ has some subtle connection with its error at e.g., $t = 0.8$ in general; how the error function exactly looks like depend on a vast amount of hyper-parameters in training.
Therefore, we might as well treat each $\phi_j$ as \enquote{mostly independent} in order to handle the \emph{worst case} situation.
If we consider how each $\phi_j$ contributes to the final sample generation error (quantified via KL divergence), then we might as well study each one independently. We remark that this is simply a reasonable assumption in order to treat generic error types.

More specifically, recall that in \propref{prop::asymptotic}, $v_T$ linearly depends on each $\phi_j$
\begin{align*}
	v_T &= -\frac{1}{2}\int_{0}^{T} (1+({h}^{\leftarrow}_{t})^2) 
	\tevol{t}{T}{h^{\leftarrow}} \big(\div ({p}^{\leftarrow}_{t} \bk{\err}{t})\big)\ \dd t\\
	&\approx -\frac{1}{2}\sum_{j} (1+({h}^{\leftarrow}_{t_{j}})^2) 
	\tevol{t_{j}}{T}{h^{\leftarrow}} \Big( \div (\bk{p}{t_j} \phi_j)\Big)\ \Delta t,
\end{align*}
and recall that the leading order term \eqref{eqn::asymptotic::L_and_vT} is simply 
$$
\LO(\bkh) = \frac{1}{2} \int_{\mathbb{R}^d} \frac{v_T^2(x)}{p_0(x)}\ \dd x.
$$
Therefore, after plugging the decomposition of $\bk{\err}{t}$ into $\LO(\bkh)$, we have 
\begin{align*}
	L(h^{\leftarrow}) = \frac{(1+\mathsf{h}^2)^2}{8}\ (\Delta t)^2\ \sum_{j,k}  \int_{\mathbb{R}^d} \frac{\tevol{t_{j}}{T}{\bkh} \Big(\div (\bk{p}{t_j} \phi_j)\Big) \tevol{t_{k}}{T}{\bkh} \Big(\div (\bk{p}{t_k} \phi_k)\Big)}{p_0}.
\end{align*}
We had chosen $h^{\leftarrow}_t = \mathsf{h}$ for all $t\in [0,T]$ for simplicity.
Either by Cauchy-Schwartz inequality, or by assumptions on the independence of each $\phi_j$ as discussed above, the main feature is the following term
\begin{align*}
	\frac{(1+\mathsf{h}^2)^2}{8}\ \Delta t\ \int_{\mathbb{R}^d} \frac{1}{p_0} \Big[\tevol{t_{j}}{T}{\bkh} \Big(\div( \bk{p}{t_j} \phi_j)\Big)\Big]^2.
\end{align*}
Therefore, it makes sense to study how this quantity scales for each $j$, and for two asymptotic regions $\mathsf{h} = 0$ and $\mathsf{h}\to \infty$. For each fixed $j$, we can easily observe that the above quantity arises from choosing the error function as a pulse-shape error, i.e., $\phi_k = 0$ if $k \neq j$ (with certain normalization rescaling).

\section{Proof of \propref{pro::asympt_error} and the discussion of the large diffusion limit}

\subsection{Proof of \propref{pro::asympt_error}}
\label{subsec::proof_asympt_error}
Let us pick an $a = \h^{-\beta} \ll 1$.
We split $v_T$ \eqref{eqn::asymptotic::L_and_vT} into two parts:
\begin{align*}
v_T =&  -\frac{1+\h^2}{2}\int_{0}^{T} \tevol{t}{T}{\bkh}\big(
\underbrace{\div(\bk{p}{t}\bk{\err}{t})}_{=:\bk{\Gamma}{t}}\big)\ \dd t\\
=& -\frac{1+\h^2}{2} \int_{T-a}^{T} \tevol{t}{T}{\bkh}\big(\bk{\Gamma}{t}\big)\ \dd t -\frac{1+\h^2}{2} \int_{0}^{T-a} \tevol{t}{T}{\bkh} \big(\bk{\Gamma}{t}\big)\ \dd t.
\end{align*}

\subsection*{Upper bound}
	By Cauchy-Schwartz inequality ($(x+y)^2 = x^2 + y^2 + 2 xy \le (1+\alpha^2) x^2 + (1+\alpha^{-2})y^2$ for any $x,y$ and $\alpha>0$),
	\begin{align*}
		\LO(\h) &= \frac{1}{2}\int \frac{v_T^2}{p_0} \\
		&\le (1+\alpha^2)\underbrace{\frac{\big(1+\h^2\big)^2}{8}\int_{\dom} \frac{ \big(\int_{T-a}^{T} \tevol{t}{T}{\bkh}(\bk{\Gamma}{t}) \dd t\big)^2}{p_0}}_{=: \mathcal{T}_1} \\
		&\qquad + (1+\alpha^{-2}) \underbrace{\frac{\big(1+\h^2\big)^2}{8}\int_{\dom} \frac{\big(\int_{0}^{T-a} \tevol{t}{T}{\bkh}(\bk{\Gamma}{t}) \dd t\big)^2}{p_0}}_{=: \mathcal{T}_2}.
	\end{align*}
	For the second term $\mathcal{T}_2$,
	\begin{align*}
		\mathcal{T}_2 & = \frac{(1+\h^2)^2}{8} \int_{\dom} \frac{\big( \int_{0}^{T-a} \tevol{t}{T}{\bkh}(\bk{\Gamma}{t}) \dd t\big)^2}{p_0} \\
		&\myle{\text{Cauchy ineq.}}\ \ \ \ \frac{(1+\h^2)^2 (T-a)}{8} \int_{0}^{T-a} \big(\int_{\dom} \frac{\tevol{t}{T}{\bkh}(\bk{\Gamma}{t})^2}{p_0}\big)\ \dd t\\
        &\myle{\eqref{eqn::ratio_upper_bound}} \frac{(1+\h^2)^2 T}{2} \int_{0}^{T-a} \big(\int_{\dom} \frac{\tevol{t}{T}{\bkh}(\bk{\Gamma}{t})^2}{\rho_0}\big)\ \dd t\\
		&\myle{\eqref{eqn::decay_J}}\ \ \ \  \frac{(1+\h^2)^2 T}{2}\int_{0}^{T-a} \big(\int_{\dom} \frac{\bk{\Gamma}{t}^2}{\rhoVbk{t}}\big)\exp\Big(- \int_{t}^{T} \big((\h^2 - \cstcbk{r}{2})\bk{\kappa}{r} - \cstcbk{r}{1}\big)\dd r\Big)\ \dd t\\
		&\le \frac{(1+\h^2)^2 T}{2} \sup_{t\in [0,T]} \big(\int_{\dom} \frac{\bk{\Gamma}{t}^2}{\rhoVbk{t}}\big) \exp\big(\int_{0}^{T} \cstcbk{r}{2}\bk{\kappa}{r} + \cstcbk{r}{1}\dd r\big) \int_{0}^{T-a} e^{-\h^2 \gamma (T-t)}\ \dd t \\
		& \le \frac{(1+\h^2)T}{2\h^2} \sup_{t\in [0,T]} \big(\int_{\dom} \frac{\bk{\Gamma}{t}^2}{\rhoVbk{t}}\big) \exp\big(\int_{0}^{T} \cstcbk{r}{2}\bk{\kappa}{r} + \cstcbk{r}{1}\dd r\big) \frac{(1+\h^2)e^{-a \h^2 \gamma}}{\gamma},
	\end{align*}
	which decays exponentially fast as $\h\to \infty$ as long as $\h^2 \gg a^{-1} \gg 1$. To get the second line above, we used \cs{} inequality and Fubini's theorem. Recall the definition of $\gamma\in\Real^{+}$ in the statement of \propref{pro::asympt_error}.

Overall, when $a = \h^{-\beta} \ll 1$, 
\begin{align*}
	\LO(\h) &\le (1+\alpha^{-2})\ C \frac{(1+\h^2)\exp\big(- \h^{2-\beta} \gamma\big)}{\gamma} + (1+\alpha^2) \mathcal{T}_1,
\end{align*}
where 
\begin{align*}
	C &= \frac{(1+\h^2)}{2\h^2} T \sup_{t\in [0,T]} \big(\int_{\dom} \frac{\bk{\Gamma}{t}^2}{\rhoVbk{t}}\big) \exp\big(\int_{0}^{T} \cstcbk{r}{2}\bk{\kappa}{r} + \cstcbk{r}{1}\dd r\big)\\
	& \lesssim \frac{T}{2} \sup_{t\in [0,T]} \big(\int_{\dom} \frac{\bk{\Gamma}{t}^2}{\bk{p}{t}}\big)  \exp\big(\int_{0}^{T} \cstcbk{r}{2}\bk{m}{r} + (\lim_{\h\to\infty}\cstcbk{r}{1})\dd r\big).
\end{align*}
Please refer to \lemref{lem::limsup} for $\lim_{\h\to\infty} \cstcbk{r}{1}$.

\subsection*{Lower bound}
The lower bound can be proved in the same way: by \cs{} inequality again,
\begin{align*}
	\LO(\h) &\ge (1-\alpha^2) \mathcal{T}_1 + (1-\alpha^{-2}) \mathcal{T}_2\\
	&\ge (1-\alpha^2) \mathcal{T}_1 -(\alpha^{-2}-1) C \frac{(1+\h^2)\exp\big(- \h^{2-\beta} \gamma\big)}{\gamma}.
\end{align*}
The remaining task is to estimate $\mathcal{T}_1$.

\subsection*{Asymptotic limit of $\mathcal{T}_1$}
We only provide an asymptotic result below. As $a\ll 1$, 
\begin{align*}
	\mathcal{T}_1 &:= \frac{(1+\h^2)^2}{8} \int_{\dom} \frac{ \big(\int_{T-a}^{T} \tevol{t}{T}{\bkh}(\bk{\Gamma}{t}) \dd t\big)^2}{p_0} \\
	&\sim \frac{(1+\h^2)^2}{8} \int_{\dom} \frac{1}{p_0} \Big(\big(\int_{T-a}^{T} \tevol{t}{T}{\bkh}\ \dd t \big)(\bk{\Gamma}{T})\Big)^2\\
	&\sim \frac{(1+\h^2)^2}{8} \int_{\dom} \frac{1}{p_0}\Big(\int_{T-a}^{T} \exp\big(\frac{\h^2}{2}(T-t)\bk{\mathcal{K}}{T}\big) \bk{\Gamma}{T}\ \dd t\Big)^2\\
	&\sim \frac{(1+\h^2)^2}{8} \int_{\dom} \frac{1}{\rhoV{0}}\Big(\int_{T-a}^{T} \exp\big(\frac{\h^2}{2}(T-t)\bk{\mathcal{K}}{T}\big) \bk{\Gamma}{T}\ \dd t\Big)^2,
\end{align*}
where we used $\bk{\Gamma}{t} \approx \bk{\Gamma}{T}$, when $t \approx T$ in the second line; 
we used $\frac{\opL{t}}{\h^2/2}\approx \frac{\opL{T}}{\h^2/2}$ when $t\approx T$ in the third line; $\opL{T} = \nicefrac{\h^2}{2}\ \bk{\mathcal{K}}{T}$ \eqref{eqn::Kop} herein. In the last line, we used the fact that $\rho_0 \sim p_0$ when $\h\to\infty$.

By \lemref{lem::exp_K},
\begin{align*}
	\mathcal{T}_1 \sim & \frac{(1+\h^2)^2}{8} \sum_{k=1}^{\infty} \big(\frac{1 - e^{-a \frac{\h^2}{2}\lambda_k^{(\h)}}}{\frac{\h^2}{2}\lambda_k^{(\h)}}\big)^2  \big(\alpha_k^{(\h)}\big)^2 \\
	\sim & \frac{1}{2} \sum_{k=1}^{\infty} \big(\frac{\alpha_k^{(\h)}}{\lambda_k^{{(\h)}}}\big)^2\qquad \text{(by } {a = \h^{-\beta},\ \h \gg 1}\text{)},
\end{align*}
where $(\lambda_k^{(\h)}, \phi_k^{(\h)})$ are eigen pairs of $\bk{\mathcal{K}}{T}$ and $\bk{\Gamma}{T} = \sum_{k=1}^{\infty} \alpha_k^{(\h)} \phi_k^{(\h)}$. 
When $\h\to\infty$, we know that the eigenvalues of $\bk{\mathcal{K}}{T}$, which depends on $\h$ and has the form 
\begin{align*}
\bk{\mathcal{K}}{T}(\mu)(x) = \Laplace \mu(x) + \div(\nabla U_0(x) \mu(x)) + \h^{-2} \div\Big(\big(\nabla U_0(x) - x\big)\mu\Big),
\end{align*}
should converge to $\mathcal{K}^{\infty}$, defined as 
\begin{align*}
	\mathcal{K}^{\infty}(\mu) := \Laplace \mu + \div(\nabla U_0 \mu).
\end{align*}
Therefore, in the limit,
\begin{align*}
\lim_{\h\to\infty}\mathcal{T}_1 = \frac{1}{2} \sum_{k=1}^{\infty} \big(\nicefrac{\alpha_k^{(\infty)}}{\lambda_k^{{(\infty)}}}\big)^2,
\end{align*}
where $(\lambda_k^{(\infty)}, \phi_k^{(\infty)})$ are eigen pairs of $\mathcal{K}^{\infty}$ and $\bk{\Gamma}{T} = \sum_{k=1}^{\infty} \alpha_k^{(\infty)} \phi_k^{(\infty)}$.

\subsection*{Upper bound of $\mathcal{T}_1$}
By the upper bound in \lemref{lem::exp_K} (together with $a = \h^{-\beta}$ and $\h\gg 1$) or by applying $\lambda_k^{(\h)}\ge \kappa_0$ directly to the asymptotic of $\mathcal{T}_1$, when $\h\gg 1$, 
\begin{align*}
\mathcal{T}_1 &\lesssim \frac{1}{2\kappa_0^2} \int_{\dom}\frac{\bk{\Gamma}{T}^2}{\rhoV{0}} 
=  \frac{1}{2\kappa_0^2} \int_{\dom}\frac{(\div(\bk{p}{T}\bk{\err}{T}))^2}{\rhoV{0}}
\sim \frac{1}{2 m_0^2} \int_{\dom}\frac{(\div(p_0\bk{\err}{T}))^2}{p_0}.
\end{align*}
The term $\mathcal{T}$ in \propref{pro::asympt_error} is simply the limit of $\mathcal{T}_1$.

\subsection{Remark on the large diffusion limit}
\label{app::upper_bound_example}

\subsection*{Score function parameterization in constrained score models}

We would like to remark on the parameterization of score function. It is common in literature to directly parameterize $\fw{\score}{t}$ \eqref{eqn::loss} via some neural network. However, in general, this practice can rarely guarantee that $\fw{\score}{t}$ (which is supposed to approximate $\nabla\log p_t$) has the gradient form as well \cite{lai_fp-diffusion_2023,salimans_should_2021}. If we instead parameterize $\log p_t(x) \approx \text{N}_t(x;\theta)$, where $\text{N}_{\cdot}(\cdot;\theta)$ is some (scalar-valued) neural network with parameter $\theta$, we can simply use the following ansatz in training
\begin{align*}
	\fw{\score}{t}(x) = \nabla \text{N}_t(x; \theta).
\end{align*}
Consequently, the error of score function estimate
also has a gradient form:
\begin{align*}
	\eps \bk{\err}{t}(x) &\myeq{\eqref{eqn::err}} \bk{\score}{t}(x) - \nabla \log \bk{p}{t}(x) \\
	&= \nabla \bk{\text{N}}{t}(x;\theta) - \nabla \log \bk{p}{t}(x) \\
	&= \nabla \underbrace{\big(\bk{\text{N}}{t}(x;\theta) - \log \bk{p}{t}(x)\big)}_{=:\varphi}.
\end{align*}
This validates the discussion in \secref{subsec::general_error}.
Apart from the apparent benefit in preserving the gradient form, this ansatz can help us identify some interesting structure for the the upper bound of $\lim_{\h\to\infty}\LO(\h)$ in \eqref{eqn::limit_of_leading_order}, discussed in  \secref{subsec::general_error}.

\subsection*{More discussion on the upper bound}

Because the score error $\bk{\err}{T}$ comes from approximating $\nabla\log p_0$, it is reasonable to consider the case $\varphi = \log p_0$, the upper bound \eqref{eqn::limit_of_leading_order}, or equivalently \eqref{eqn::upper_bound_v1}, becomes 
\begin{align}
	\label{eqn::upper_bound_v2}
	\mathcal{T} \lesssim \frac{1}{2 m_0^2} \int_{\Real^d} \big(\abs{\nabla U_0}^2 - \Laplace U_0\big)^2 e^{-U_0}.
\end{align}
This quantity heuristically characterizes how difficult the probability distribution $p_0$ can be learn by score-based diffusion models with large diffusion coefficient. 

It is of interest to further explore how to utilize the above upper bound to  improve the training of score function, or provide theoretical understanding about what types of probability distributions are easy/difficult to learn via score-based SDEs. We shall leave an extensive study of these important and interesting questions for future work. Below, we shall provide a simple example.

\subsubsection*{Example: $d$-dimensional Gaussian} 
When $p_0 = e^{-U_0} = \mathcal{N}(\mu, \sigma_0^2)$, the quantity $m_0$ \eqref{eqn::hessian_bd} is $m_0 = \frac{1}{\sigma_0^2}$. Then this upper bound \eqref{eqn::limit_of_leading_order} is
\begin{align*}
	\frac{\sigma_0^4}{2} \int_{\Real^d} \big(\abs{\frac{x-\mu}{\sigma_0^2}}^2 - \frac{d}{\sigma_0^2}\big)^2 e^{-U_0} = d.
\end{align*}
This is compatible with the insight that it is more difficult to learn a high dimensional probability distribution.

\section{Numerical experiment: 1D Gaussian case}
\label{sec::gauss}

We demonstrate and validate main findings via a simple 1D Gaussian.
Suppose $f_t(x) = -\frac{1}{2}x$, $g_t = 1$,
and the exact data distribution is a Gaussian $p_0 = \gaussarg{0}{\sigma_0^2}$.
The error is quantified by $\kldivg{p_0}{\wt{q}_T}$.
We investigate this error as a function of the magnitude of $\bk{h}{}$ (which is chosen as a constant function $\bk{h}{t} = \h$ for any $t\in [0,T]$).

\subsection{Explicit formulas for 1D Gaussian case}

From solving the SDE for the forward process, we know that 
\begin{align}
	\label{eqn::1d_gauss}
	p_t &= \gaussarg{0}{\sigma_t^2},\qquad \sigma_t^2 = \sigma_0^2 e^{-t} + 1 - e^{-t},\qquad \nabla\log p_t(x) = -\frac{x}{\sigma_t^2}.
\end{align} 
The backward dynamics \eqref{eqn::generative_err_only} on $t\in [0,T]$ is 
\begin{align*}
	\left\{
	\begin{aligned}
		& \dd \wt{Y}_t = \Big(\frac{1}{2} \wt{Y}_t + \frac{1+\bk{h}{t}^2}{2} \big(-\frac{\wt{Y}_t}{\sigma^2_{T-t}} + \eps\bk{\err}{t}(\wt{Y}_t)\big)\Big)\ \dd t + \bk{h}{t}\ \dd W_t,\\
		& \law(\wt{Y}_0)= p_T.
	\end{aligned}\right.
\end{align*}
In this example, there is only one source of error, which is $\bk{\err}{}$, as we know explicitly the distribution $p_T$ and we can choose the time step small enough such that numerical discretization error is negligible.
Due to the structure of the score function, it is reasonable to consider the following ansatz 
\begin{align}
	\label{eqn::error_gauss_1d}
	\bk{\err}{t}(x) = \alpha_t x.
\end{align}
This example has explicit formulas: $\wt{Y}_T$ is a Gaussian distribution with $\ee[\wt{Y}_T] = 0$ and 
\begin{align*}
	\var(\wt{Y}_T) &= G_T^{-2} \var(\wt{Y}_0) + \int_{0}^{T} G_T^{-2} G_t^2 \bk{h}{t}^2\ \dd t,
\end{align*}
where
\begin{align*}
G_t &:= \exp\big(-\int_{0}^{t} \frac{1}{2} + \frac{1+\bk{h}{s}^2}{2}(-\frac{1}{\sigma_{T-s}^2} + \eps\alpha_s)\dd s\big).
\end{align*}
The sample generation error quantified by KL divergence also has an explicit formula:
\begin{align*}
	\kldivg{p_0}{\wt{q}_T} = \frac{1}{2}\log\big(\frac{\var(\wt{Y}_T)}{\sigma_0^2}\big) + \frac{\sigma_0^2}{2\var(\wt{Y}_T)} - \frac{1}{2}.
\end{align*}

\subsection{Experiment 1: Fix error magnitude $\eps$ and consider various error types}
We choose $T = 2$, $\bk{h}{t} = \sf{h}$ for all $t\in [0,T]$, and the error function is chosen as 
\begin{align}
	\label{eqn::alpha_choice}
	\eps = 0.02,\qquad 
	\bk{\err}{t}(x) = \nabla\log \bk{p}{t}(x) \times 
	\left\{\begin{aligned}
		1 & \qquad \text{case 1};\\
		-1 & \qquad \text{case 2};\\
		\frac{1+\sin(2\pi t/T)}{2} & \qquad \text{case 3};\\
		\indi_{t<0.95T} & \qquad \text{case 4};\\
		\indi_{t>0.99T} & \qquad \text{case 5}.
	\end{aligned}\right.
\end{align}
For these choices,
we only perturb the true score function by a bounded prefactor.
We can observe that the error $\kldivg{p_0}{\wt{q}_T}$ is a
complicated function of $\sf{h}$ in general in \figref{fig::1d_gaussian}.
When we only perturb the score function during the initial period of the generative process (case 4), we can clearly observe that the sampling error decays exponentially fast with respect to $\h^2$, which numerically validates \propref{prop::decay}.
When we only perturb the score function near the end of the generative process (case 5), increasing the diffusion coefficient $\h$ will actually increase the error, which is predicted by \propref{prop::end_error}. For a general error (case 1, 2, 3), overall we can still expect that increasing diffusion coefficient $\h$ will generally suppress the error when $\sigma_0$ is small.

\begin{longtable}{cccc}
\caption{Approximated value of $L(\h)$ when $\h^2\approx 20$. We can observe that in each column (referring to a specific error type) the value is almost independent of the $\sigma_0$ in particular when $\sigma_0 < 1$.}
\label{table::limit_LO}\\
$\sigma_0$ & Case 1 & Case 2 & Case 3\\
\toprule
0.2 &	0.2567 &	0.3032 	& 0.0658 \\
0.4 &	0.2569 &	0.3028 	& 0.0636 \\
0.6 &	0.2570 &	0.3018 	& 0.0597 \\
0.8 &	0.2569 &	0.3023 	& 0.0544 \\
\hline
1.5 &	0.2570 &	0.3017 	& 0.0321 \\
2.0 &	0.2564 &	0.3022 	& 0.0198 \\
3.0 &	0.2566 &	0.3020 	& 0.0121 \\
\end{longtable}

\subsection{Experiment 2: consider $\LO(\h)$ for various error types}

We further numerically approximate $\LO(\h)\equiv \LO(\h, \bk{\err}{}, p_0)$ by linear regression and study how $\sigma_0$ (i.e., the data distribution $p_0$), error type $\bk{\err}{}$, and diffusion coefficient $\h$ affect the leading order term $\LO(\h, \bk{\err}{}, p_0)$. Recall that we use $\LO(\h)$ as a simplified notation when $\bk{\err}{}$ and $p_0$ are clear from context; see \secref{sec::asymptotic}.
As a remark, to approximate $\LO(\h)$, we used the leading-order approximation that $\kldivg{p_0}{\wt{q}_T} = \LO(\h)\eps^2 + \order{\eps^3}$: we choose a few $\eps$ values and use linear regression to estimate $\LO(\h)$.

In \figref{fig::1d_gaussian::Leading}, we can clearly observe that $\LO(\h)$ converges to a constant extremely fast when $\h$ increases for cases $\sigma_0 < 1$. The value of $\bk{\err}{T}$ for the case 3 is only $1/2$ of that for case 1 and case 2. By \propref{pro::asympt_error}, we know that $\lim_{\h\to\infty} \LO(\h)$ for the case 3 should be approximated $1/4$ of that for cases 1 and 2. This is also (approximately) numerically observed in Table~\ref{table::limit_LO}.

\bigskip
\begin{figure}[ht!]
	\includegraphics[width=\textwidth]{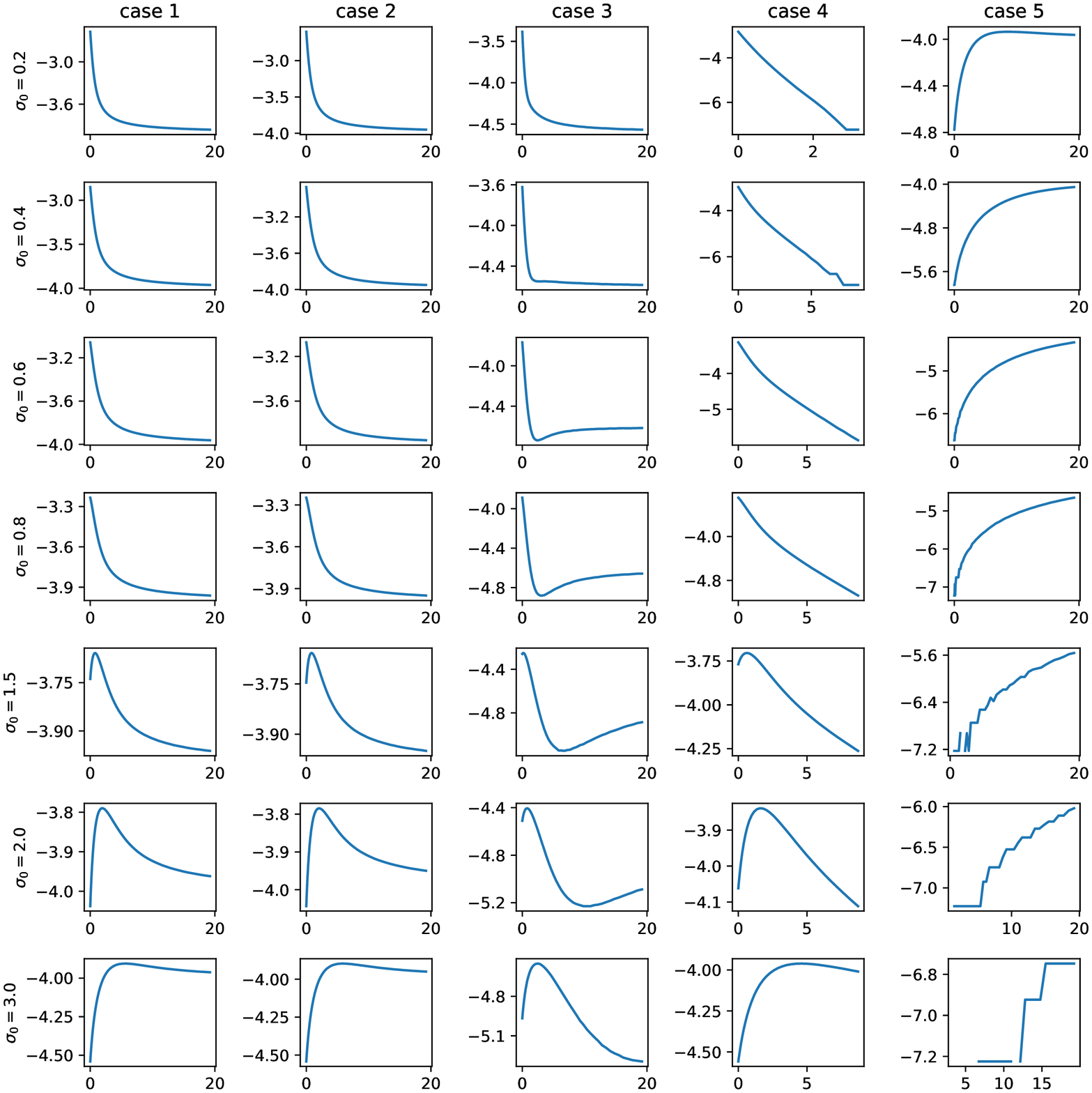}
	\caption{{We show $\log_{10}\big(\kldivg{p_0}{q_T}\big)$ as a function of $\sf{h}^2$ for the 1D Gaussian model. $T=2$, different $\sigma_0$ and error functions in \eqref{eqn::alpha_choice} are considered.}}
	\label{fig::1d_gaussian}
\end{figure}

\afterpage{\clearpage
\begin{figure}[h!]
	\includegraphics[width=\textwidth]{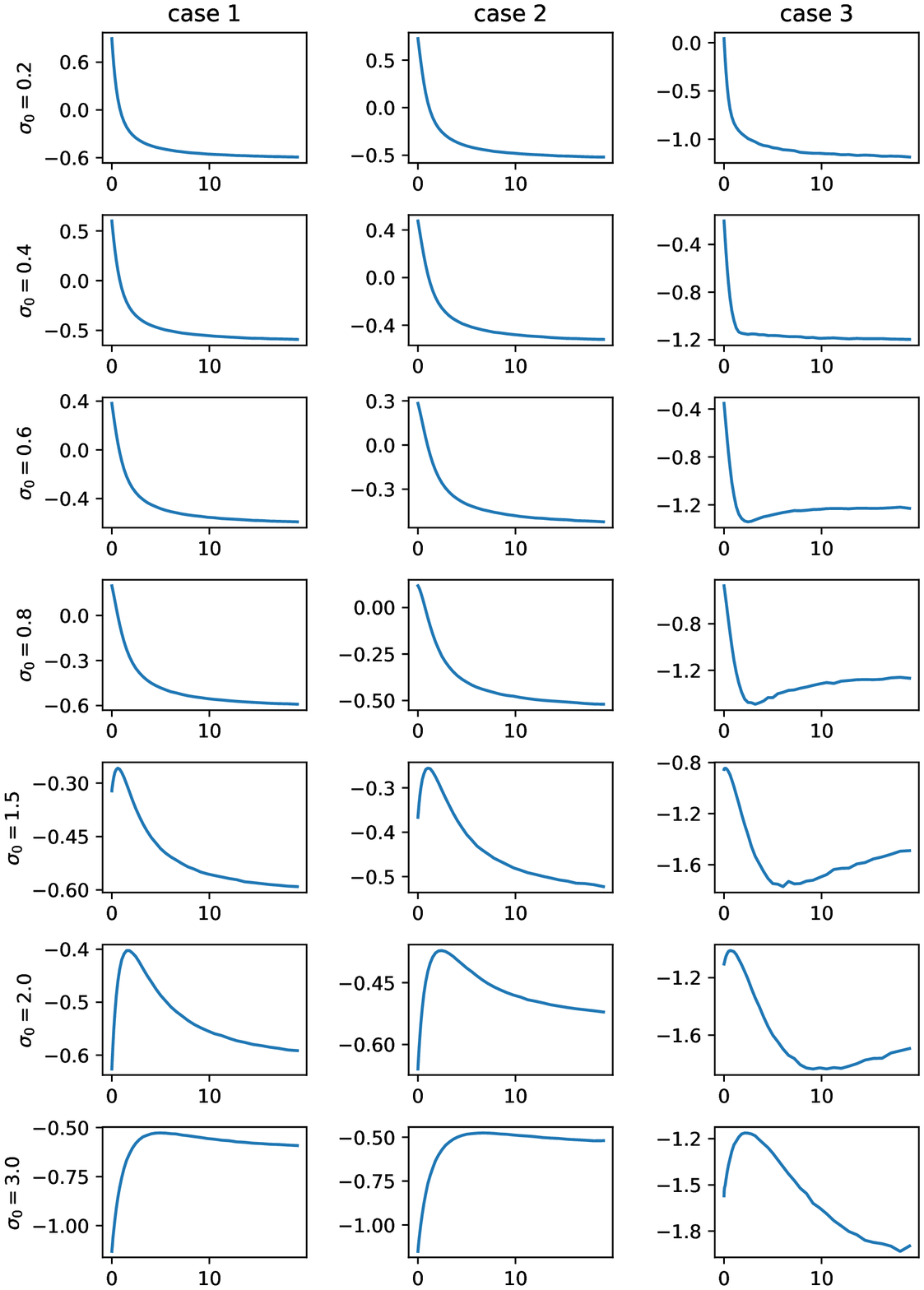}
	\caption{{We show $\log_{10}\big(\LO(\h)\big)$ as a function of $\sf{h}^2$ for the 1D Gaussian model. $T=2$, different $\sigma_0$ and error functions in \eqref{eqn::alpha_choice} are considered.}}
	\label{fig::1d_gaussian::Leading}
\end{figure}
\clearpage}

\section{More details about numerical experiments in \secref{sec::experiments}} \label{app::expdetails}

In this section, we discuss datasets, 
network architectures, evaluation metrics, numerical schemes (exponential integrator), and the default weight in denoising score matching.

\subsection{Datasets}

\begin{itemize}
    \item {\bf 1D 2-mode Gaussian mixture}: $p_{0}(x) = \sum_{i=1}^{2} 0.5 \mathcal{N}\big(x; (-1.0)^{i}, 0.01\big)$.

    \item {\bf 2D 4-mode Gaussian mixture}: $p_{0}(x) = \sum_{i,j=1}^{2} 0.25 \mathcal{N}\big(x; ((-1.0)^{i}, (-1.0)^{j}), 0.05^{2} \unit_{2}\big)$. 
    
    \item {\bf Swiss roll}: Swiss roll generates samples by $(x, y) = \big(t \sin(t), t \cos(t)\big)$ with $t$ drawn from the uniform distribution $\mathcal{U}(\frac{3 \pi}{2}, \frac{9 \pi}{2})$.
    \item {\bf MNIST}: MNIST~\cite{mnist} contains 60,000 28 $\times$ 28 gray-scale images with hand-written digits.
    \item {\bf CIFAR-10}: CIFAR-10~\cite{cifar} contains 60,000 32 $\times$ 32 RGB images with ten categories.
\end{itemize}

\subsection{Network architectures and other parameters}

For experiments on 1D/2D Gaussian mixtures, the exact scores can be obtained analytically if we use VP-SDE. We set $T = 4$ and $g_t = 1$ for $t \in [0, T]$. For the time discretization when solving the reverse SDE with Euler-Maruyama method, we apply 40,000 steps and 80,000 steps for 1D and 2D Gaussian mixtures, respectively.

For experiments on Swiss roll, we apply a three-layer neural network for score matching, where the width of each layer is set as $50$, $50$, and $2$ and we apply ReLU as the nonlinear activation for two hidden layers. We set $T = 1$ and $g_t = \sqrt{0.1 (1 - t) + 20 t}$ for $t \in [0, T]$. The learning rate is set as 0.01 and decays by $0.5$ every 8,000 steps. The batch size is set as 400. We train the neural network for 20,000 steps. For the time discretization when solving the reverse SDE with Euler-Maruyama method, we apply 20,000 steps.

For experiments on MNIST, we apply the net architecture in \cite{huang2021a} for score matching, where we use two resolution blocks in U-net and set the multipliers of channels to be one and two. 
We set $T = 1.4$ and $g_t = \sqrt{0.1 (1 - t) + 20 t}$ for $t \in [0, T]$. The number of iteration is $20,000$, the batch size is set as $64$. We solve the reverse SDE with exponential integrator; see also \appref{subsec::EI}.

For experiments on CIFAR-10, we apply the DDPM++ cont. in \cite{song2021scorebased} as the net architecture, and use their pretrained checkpoint in for score estimation. We use the same setting as \cite{song2021scorebased}, i.e., $T = 1$ and $g_t = \sqrt{0.1 (1 - t) + 20 t}$ for $t \in [0, T]$. For the time discretization when sampling with Euler-Maruyama method, we apply 100, 200, 500, 1000, 2000, 3000 and 4000 steps.

\subsection{Evaluation metrics}

For experiments of 1D/2D Gaussian mixtures and Swiss roll, we apply approximated divergences for evaluating the performances. Specifically, we discretize the space into 100 bins in each dimension, then obtain the empirical densities of 10,000 true samples and 10,000 generated samples, and use Jensen-Shannon divergence, Kullback-Leibler divergence and Wasserstein distance between both empirical densities as the metrics for evaluation.

\subsection{Exponential integrator}
\label{subsec::EI}
We shall explain the exponential integrator for \eqref{eqn::generative} with $t\in [0,T]$, i.e., the following equation
\begin{align*}
	\dd  Y_t &= -\bk{f}{t}(Y_t)\ \dd  t + \frac{(\bk{g}{t})^2 + (\bk{h}{t})^2}{2} \nabla \log \bk{p}{t}(Y_t)\ \dd t + \bk{h}{t}\ \dd {W}_t\\
	&=  \frac{1}{2}\bk{g}{t}^2 Y_t\ \dd  t + \frac{(\bk{g}{t})^2 + (\bk{h}{t})^2}{2} \nabla \log \bk{p}{t}(Y_t)\ \dd t + \bk{h}{t}\ \dd {W}_t.
\end{align*}
Then for any time $t\in [t_k, t_{k+1}]$, and given $\hat{Y}_{t_k}$, we approximate the above dynamics by 
\begin{align*}
	\dd \hat{Y}_t \approx \frac{1}{2}\bk{g}{t}^2 \hat{Y}_t\ \dd  t + \frac{(\bk{g}{t})^2 + (\bk{h}{t})^2}{2} \nabla \log \bk{p}{t_k}(\hat{Y}_{t_k})\ \dd t + \bk{h}{t}\ \dd {W}_t.
\end{align*}
This dynamics is a linear SDE and 
we can solve it exactly
\begin{align*}
	\hat{Y}_{t_{k+1}} &= \underbrace{e^{\frac{1}{2} \int_{t_k}^{t_{k+1}} \bk{g}{s}^2\ \dd s} \hat{Y}_{t_k} + \Big[\int_{t_k}^{t_{k+1}} e^{\frac{1}{2}\int_{t}^{t_{k+1}} \bk{g}{s}^2\ \dd s} \frac{\bk{g}{t}^2 + \bk{h}{t}^2}{2}\ \dd t\Big] \boldsymbol{S}_k}_{=: \mathscr{T}_1}\\
	&\qquad + \int_{t_k}^{t_{k+1}} \underbrace{e^{\frac{1}{2} \int_{t}^{t_{k+1}} \bk{g}{s}^2\ \dd s} \bk{h}{t}}_{:=\mathscr{T}_2}\ \dd W_t,\\
	\boldsymbol{S}_k &:= \nabla \log \bk{p}{t_k}(\hat{Y}_{t_k}).
\end{align*}

Therefore, 
\begin{align*}
	\hat{Y}_{t_{k+1}} = \mathscr{T}_1 + \sqrt{\int_{t_k}^{t_{k+1}} \big(\mathscr{T}_2\big)^2\ \dd t}\ Z_k,\qquad Z_k \sim \gauss{d}.
\end{align*}
If we pick 
\begin{align*}
	\left\{
	\begin{aligned}
	\bk{h}{t} &= \alpha \bk{g}{t},\qquad \alpha\in \Real^{+},\\
	g_t  &= \sqrt{\beta_{0} + (\beta_{1} - \beta_{0}) t}\qquad \text{which implies that }\qquad \bk{g}{t}  = \sqrt{\beta_{0} + (\beta_{1} - \beta_{0}) (T-t)},
	\end{aligned}\right.
\end{align*}
then after straightforward calculations,
\begin{align*}
	\hat{Y}_{t_{k+1}} &=  \gamma_k \hat{Y}_{t_k} + (1+\alpha^2) \Big(\gamma_k - 1\Big) \boldsymbol{S}_k
	+ \sqrt{\alpha^2 \Big(\gamma_k^2 - 1\Big)} Z_k,
\end{align*}
where $\delta_k := t_{k+1} - t_k$ and $\gamma_k := \exp\big(\frac{\delta_k\big(2\beta_0 + (2t_k - 2 T + \delta_k) (\beta_0 -\beta_1)\big)}{4}\big)$.

\subsection{Default training weight}
\label{subsec::weight}

Given the fixed $X_0$, we can explicitly solve \eqref{eqn::forward} (with the choice $f_t(x) = -\frac{g_t^2}{2} x$):
\begin{align*}
	X_t = (G_t)^{-1} X_0 + \int_{0}^{t} G_t^{-1} G_s g_s\ \dd W_s,
\end{align*}
where $G_t = \exp\big(\int_{0}^{t} \frac{g_s^2}{2}\ \dd s\big)$. The standard deviation of the Brownian motion term above is 
\begin{align*}
	\varpi_t := \sqrt{\int_{0}^t G_t^{-2} G_s^2\ g_s^2\ \dd s}.
\end{align*}
When $g_t = \sqrt{\beta_0 + (\beta_1 - \beta_0) t}$ as used in many literature \cite{song2021scorebased,huang2021a}, we can explicitly solve for $\varpi_t$:
\begin{align*}
	\varpi_t = \sqrt{1 - \exp(-\frac{1}{2} t^2 (\beta_1 - \beta_0) - t \beta_0)}.
\end{align*}

{In literatures, to balance the noise over time in training the score function, a common practice is to use the default weight  $\weight_t = \varpi_t^2$ \eqref{eqn::default_lambda} in the (denoising) score-matching loss function \eqref{eqn::loss}.

\subsection{Additional numerical results for experiments in \secref{sec::experiments}} \label{app:expres}

We present more numerical results on 1D Gaussian mixture, Swiss roll, and MNIST to further verify theoretical results.

\subsection*{1D Gaussian mixture}

We evaluate the performances of generative models under different values of $\mathsf{h}$ for 1D Gaussian mixture, and present the visualization and numerical results in~\figref{fig:gmm1d:vis} and~\figref{fig:gmm1d:num}, respectively. In~\figref{fig:gmm1d:vis}, a clear trend shows that with increasing $\mathsf{h}$, the empirical density of generated samples better matches the true density function. 
This trend is more quantitatively captured in \figref{fig:gmm1d:num}, from which we clearly observe that the distance between the empirical and the true density function decreases to the numerical threshold exponentially fast.
Numerical threshold means the error of various distances when $\eps=0$; due to the space discretization when computing various distances, the numerical values of various distances are not exactly zero even when we use the exact score function. However, increasing $\h$ can help us to almost reach this limit and this phenomenon is theoretically described in \propref{prop::decay}.

\afterpage{\clearpage
\begin{figure}[!tbh]
    \centering
    \begin{minipage}[t]{.18\linewidth}
        \includegraphics[width=\textwidth]{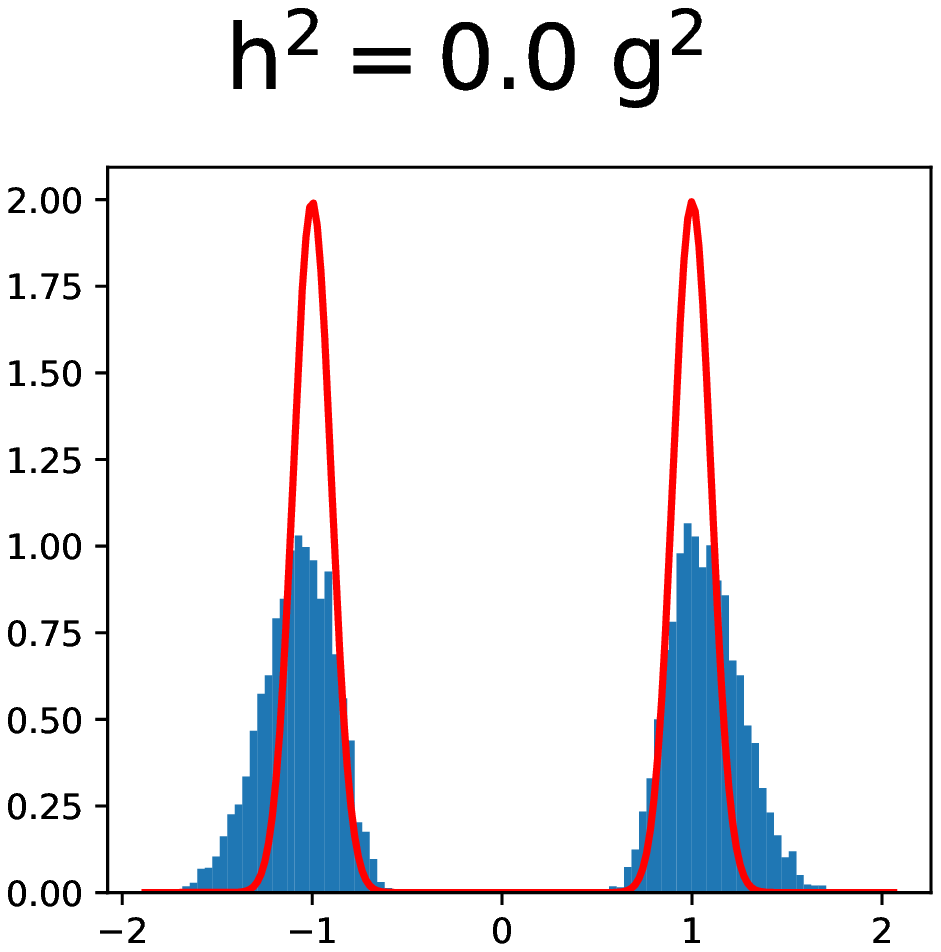}
    \end{minipage}
    \begin{minipage}[t]{.18\linewidth}
        \includegraphics[width=\textwidth]{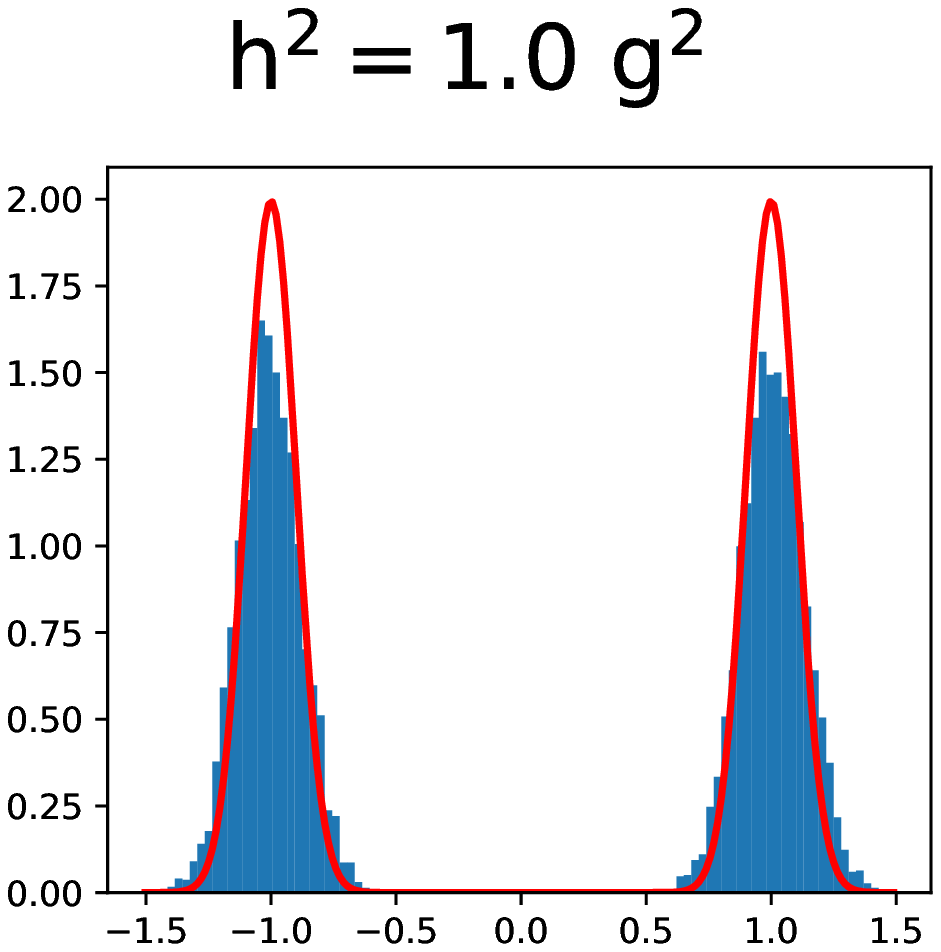}
    \end{minipage}
    \begin{minipage}[t]{.18\linewidth}
        \includegraphics[width=\textwidth]{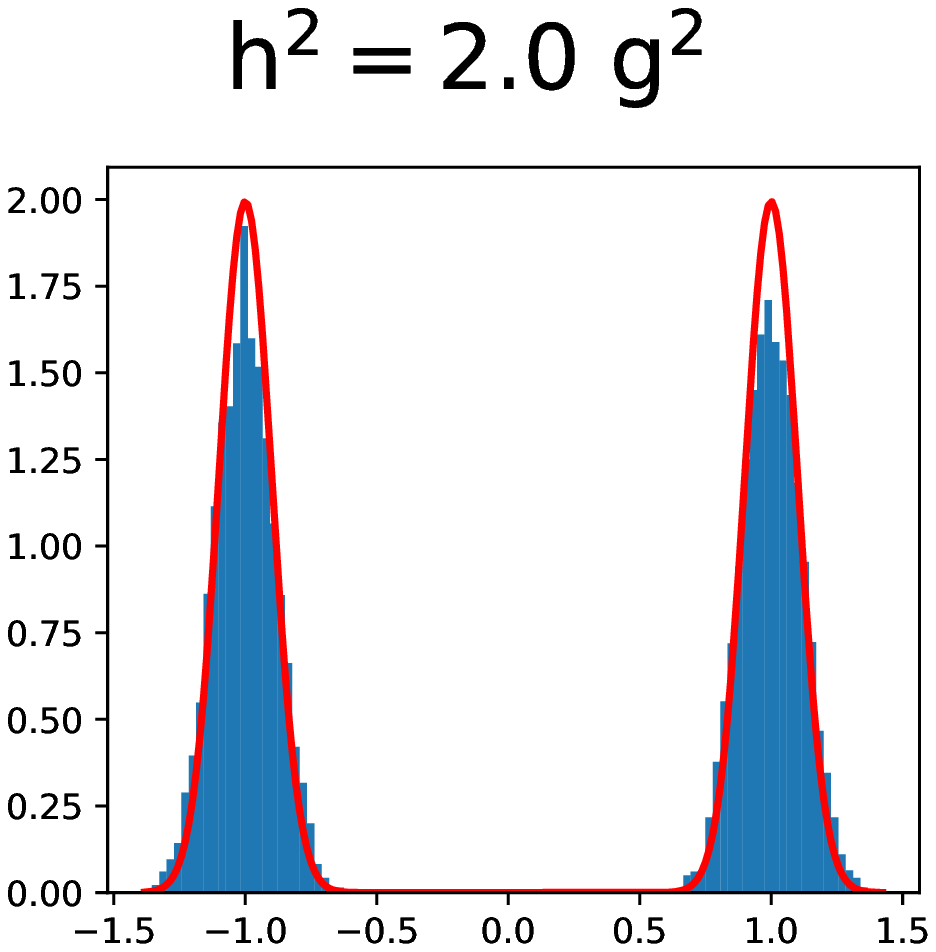}
    \end{minipage}
    \begin{minipage}[t]{.18\linewidth}
        \includegraphics[width=\textwidth]{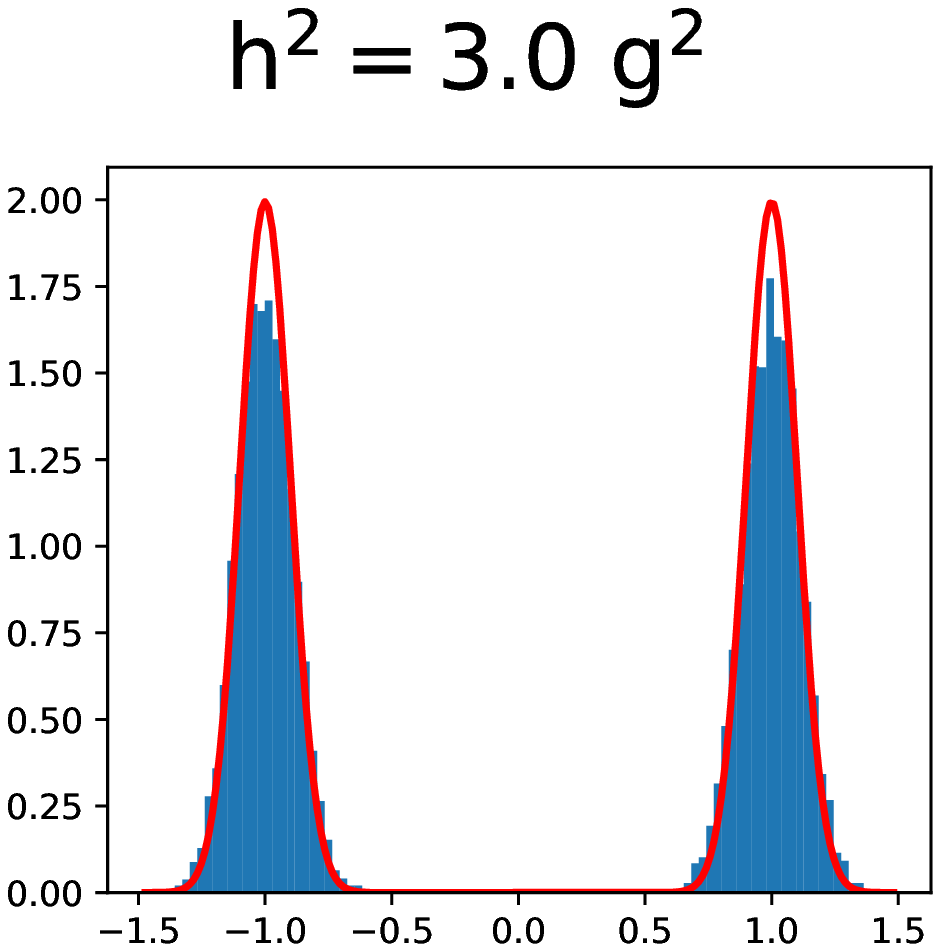}
    \end{minipage}
    \begin{minipage}[t]{.18\linewidth}
        \includegraphics[width=\textwidth]{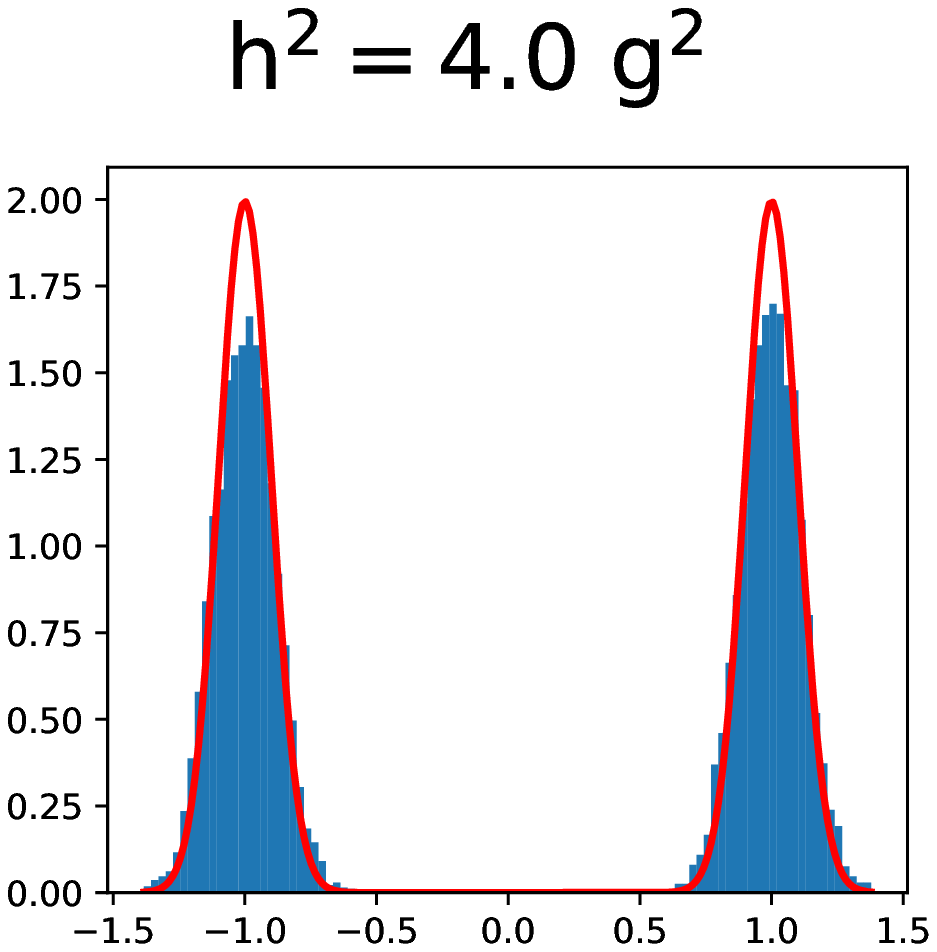}
    \end{minipage}
    \\
    (a) $\epsilon = -0.2$.
    \\
    \begin{minipage}[t]{.18\linewidth}
        \includegraphics[width=\textwidth]{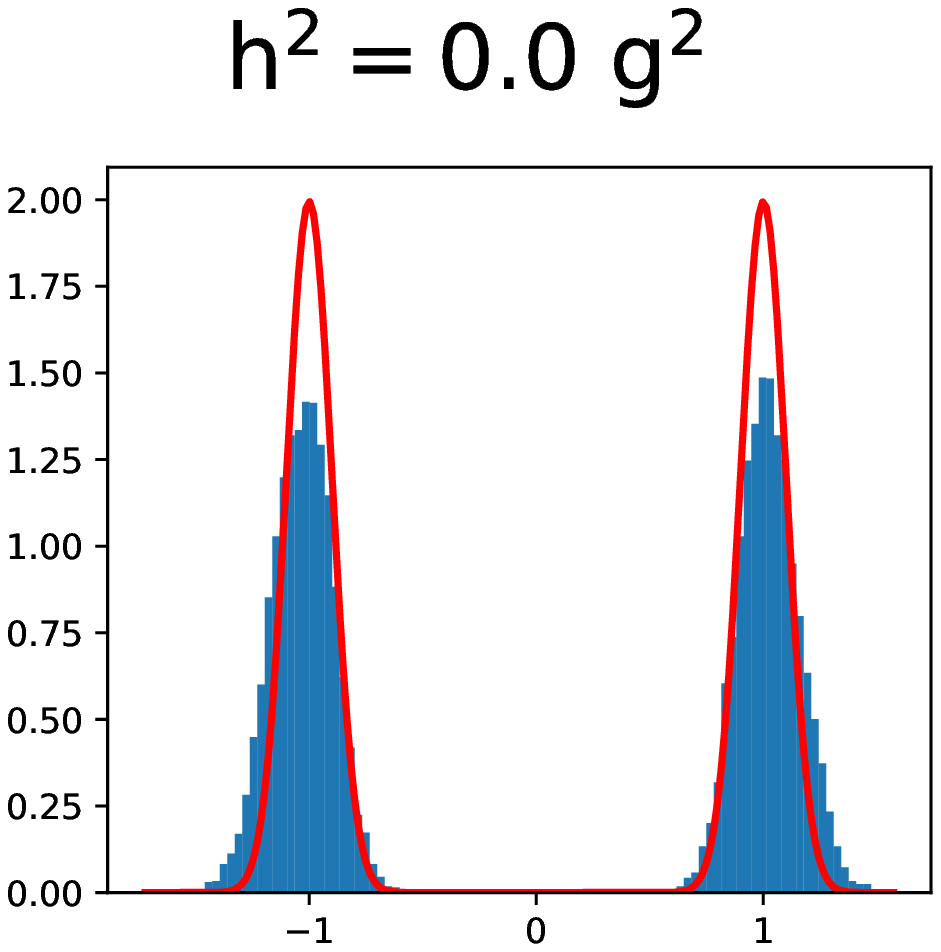}
    \end{minipage}
    \begin{minipage}[t]{.18\linewidth}
        \includegraphics[width=\textwidth]{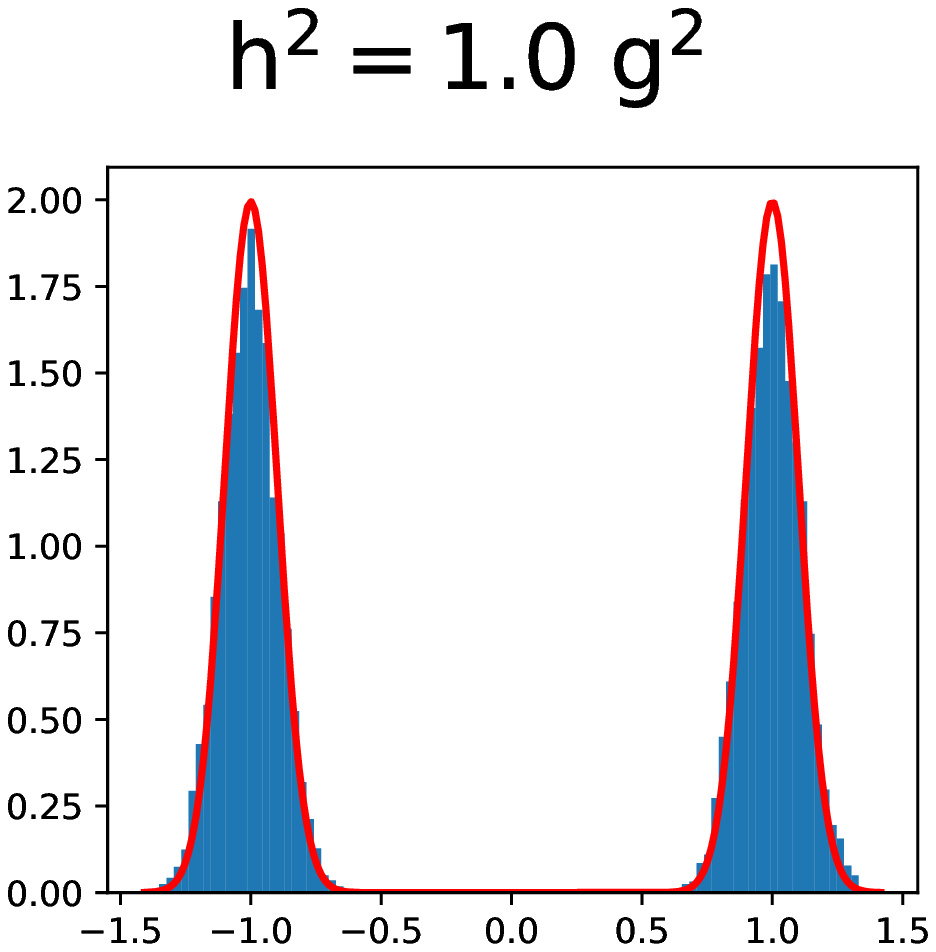}
    \end{minipage}
    \begin{minipage}[t]{.18\linewidth}
        \includegraphics[width=\textwidth]{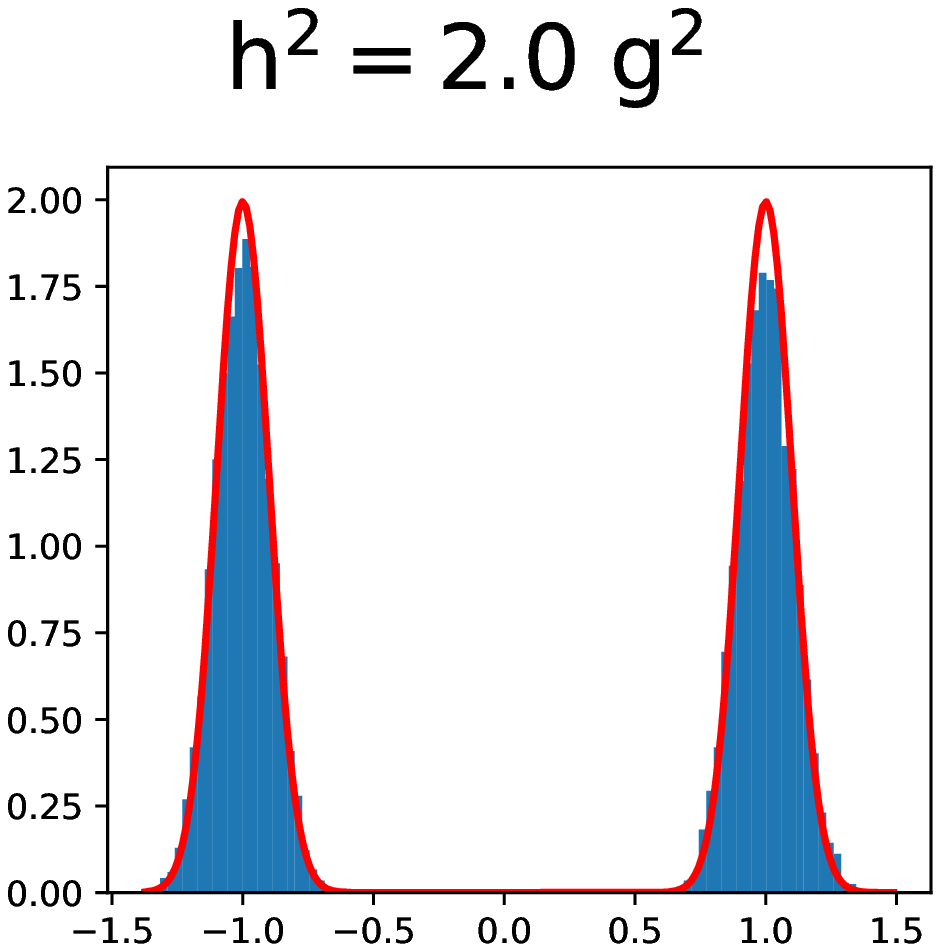}
    \end{minipage}
    \begin{minipage}[t]{.18\linewidth}
        \includegraphics[width=\textwidth]{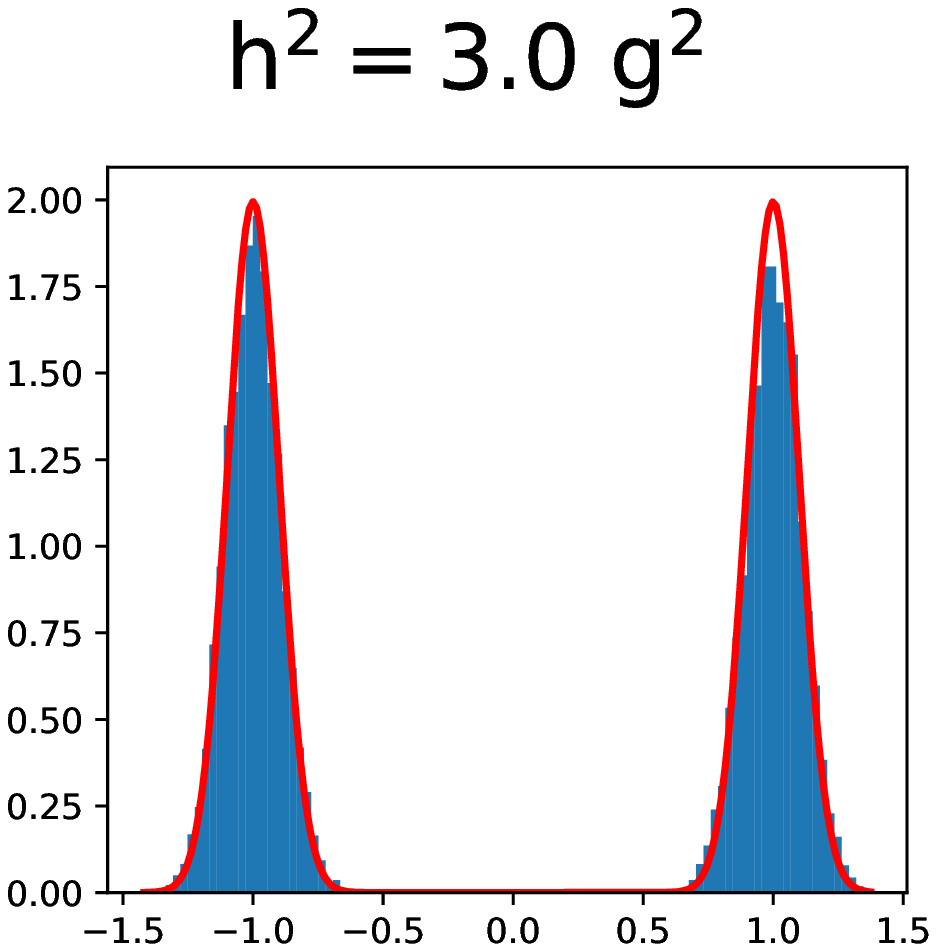}
    \end{minipage}
    \begin{minipage}[t]{.18\linewidth}
        \includegraphics[width=\textwidth]{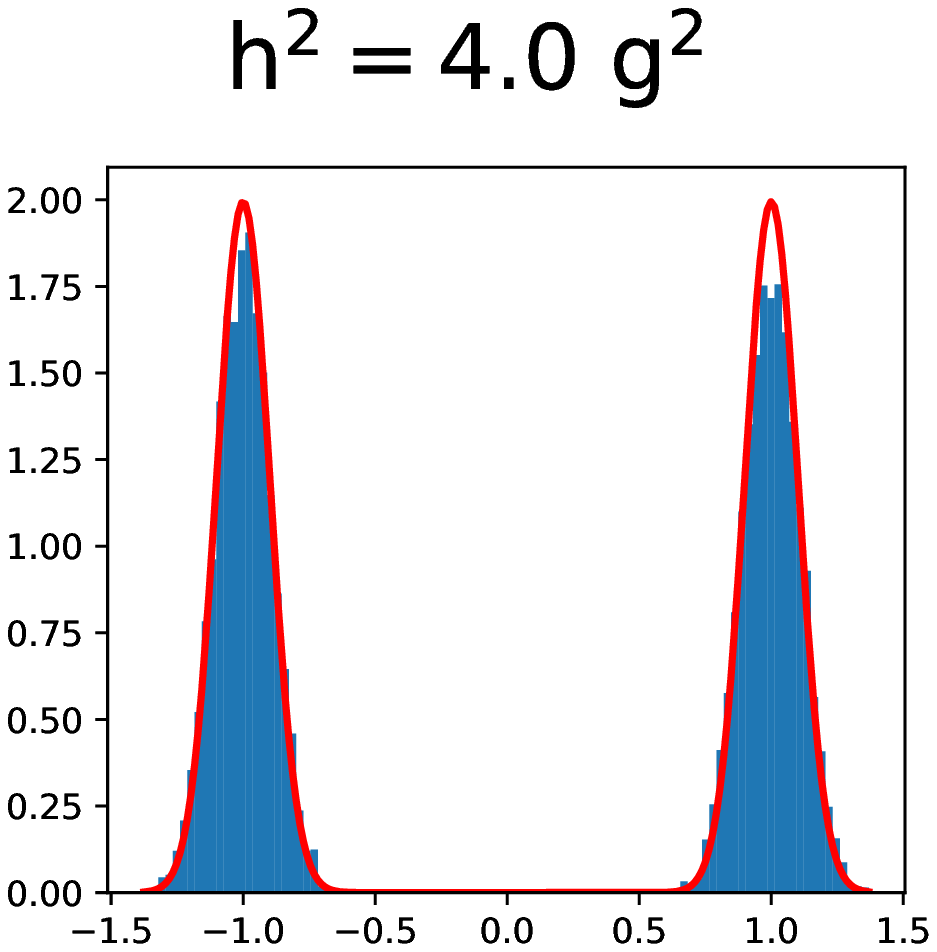}
    \end{minipage}
    \\
    (b) $\epsilon = -0.1$.
    \\
    \begin{minipage}[t]{.18\linewidth}
        \includegraphics[width=\textwidth]{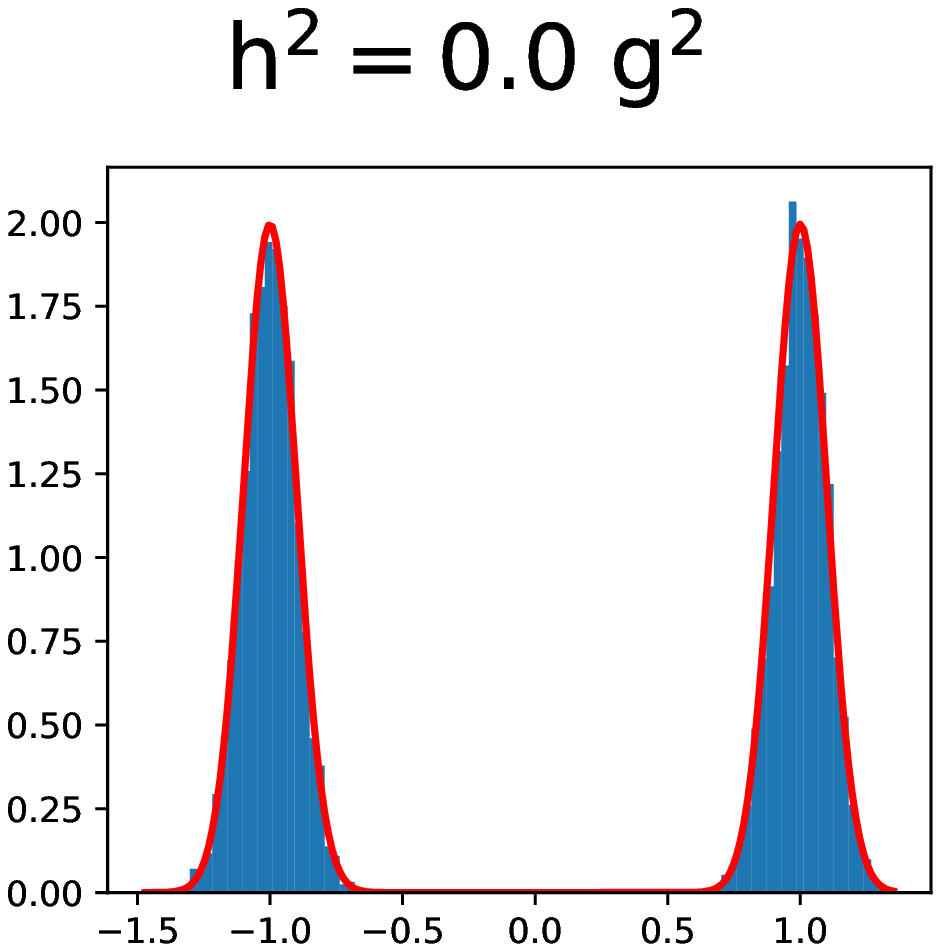}
    \end{minipage}
    \begin{minipage}[t]{.18\linewidth}
        \includegraphics[width=\textwidth]{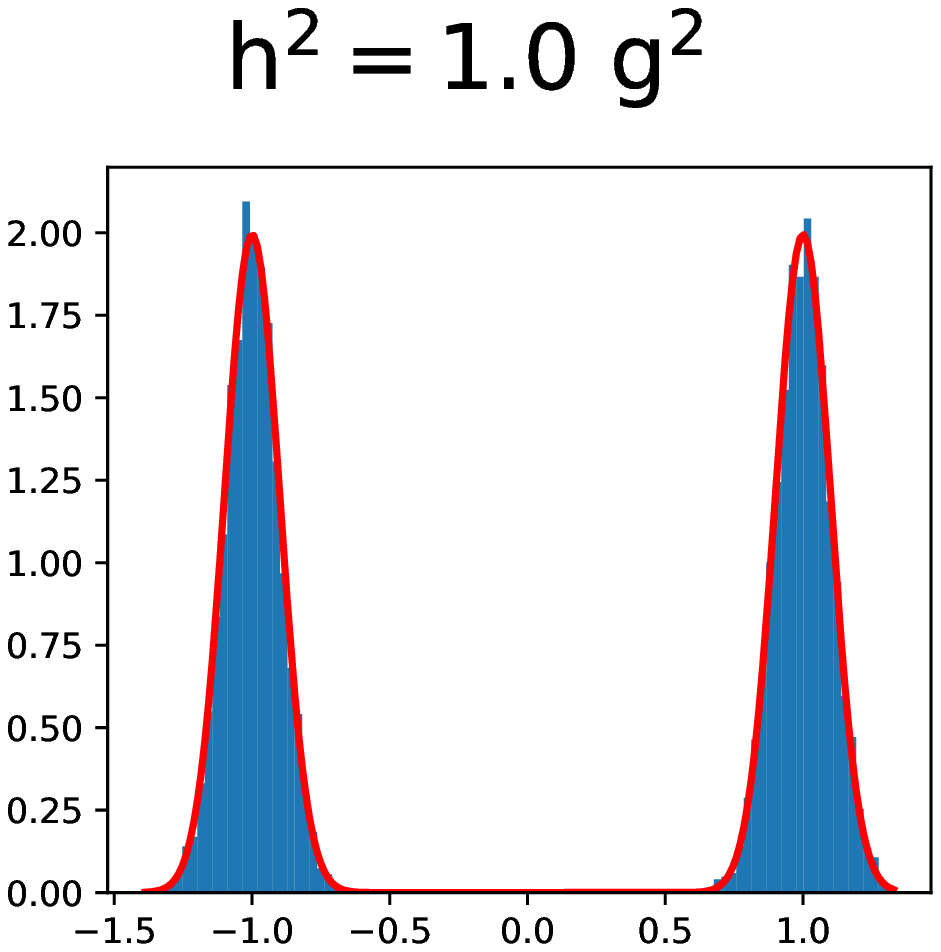}
    \end{minipage}
    \begin{minipage}[t]{.18\linewidth}
        \includegraphics[width=\textwidth]{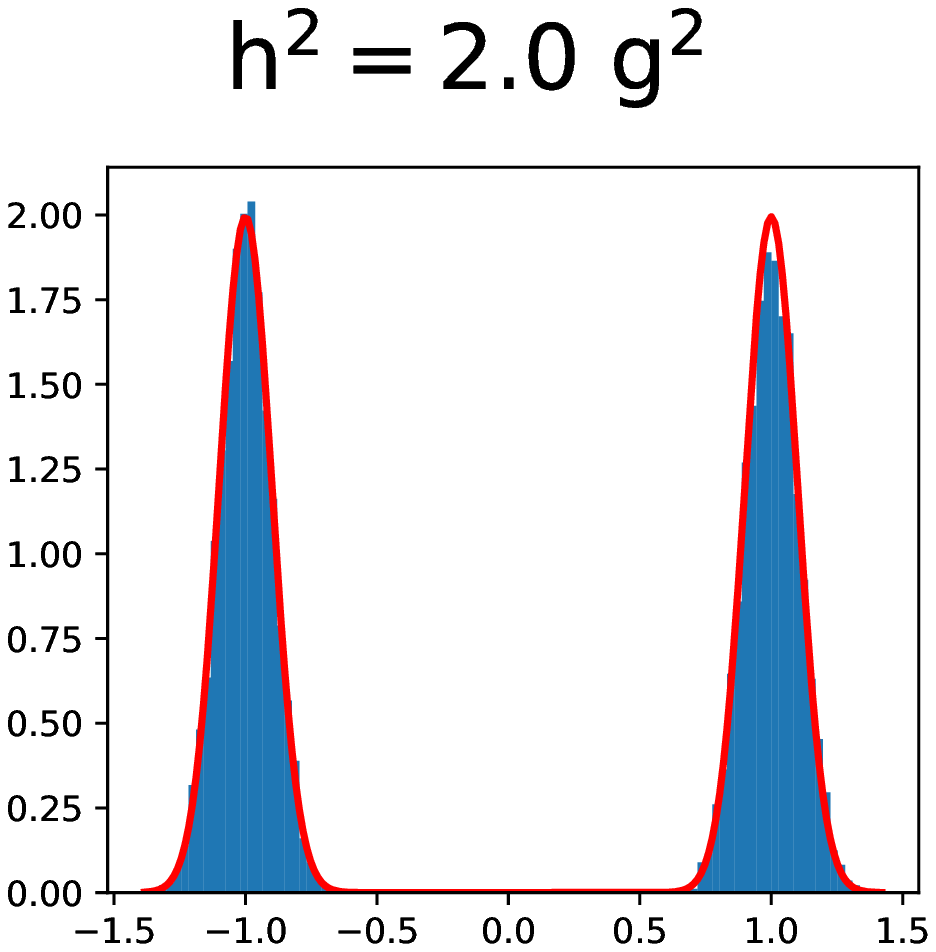}
    \end{minipage}
    \begin{minipage}[t]{.18\linewidth}
        \includegraphics[width=\textwidth]{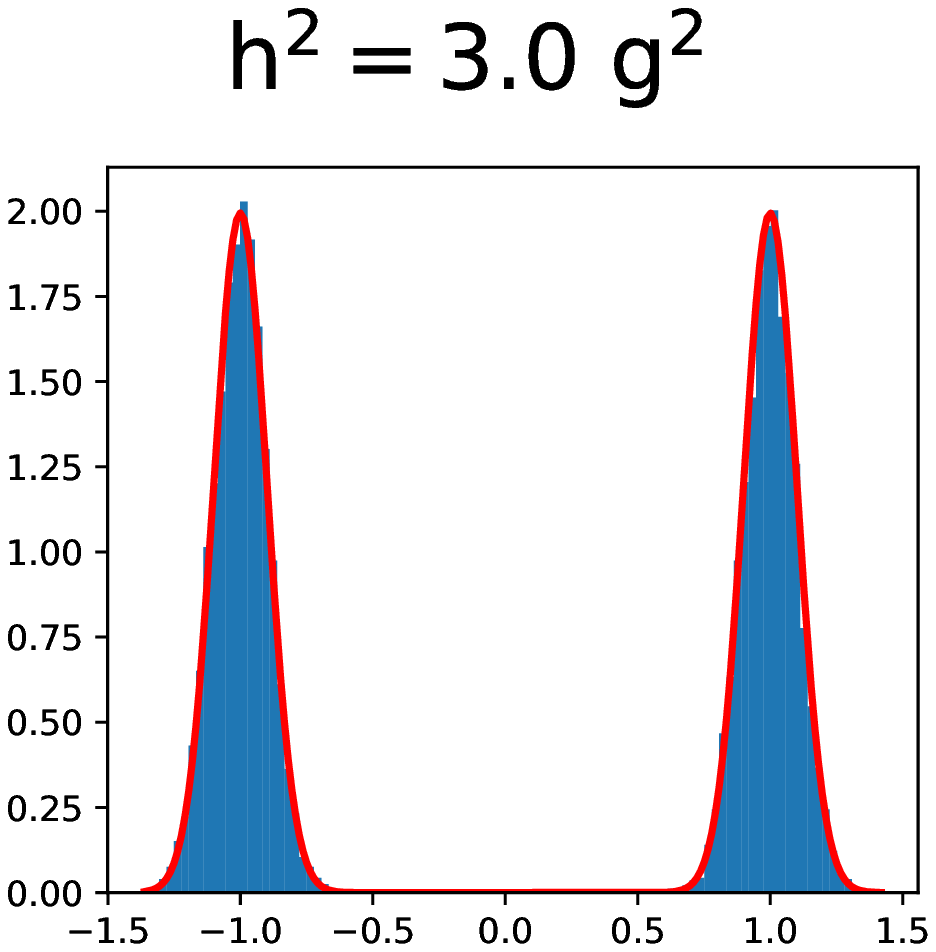}
    \end{minipage}
    \begin{minipage}[t]{.18\linewidth}
        \includegraphics[width=\textwidth]{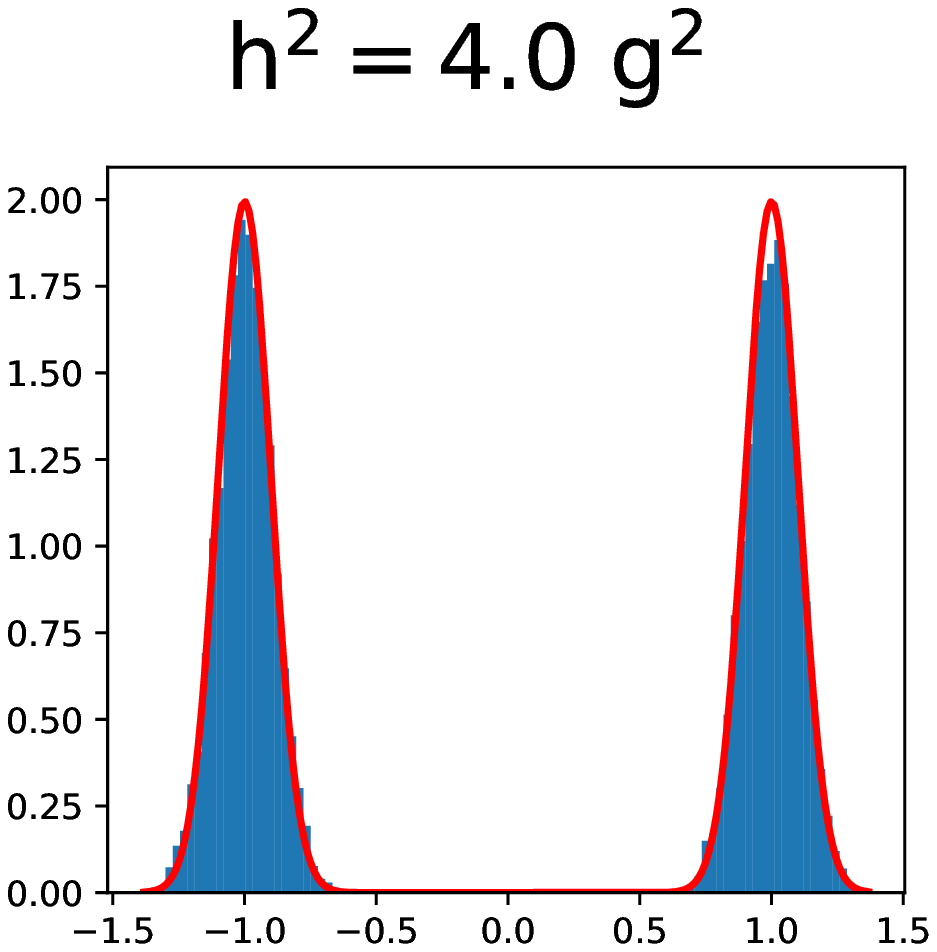}
    \end{minipage}
    \\
    (c) $\epsilon = 0.0$.
    \\
    \begin{minipage}[t]{.18\linewidth}
        \includegraphics[width=\textwidth]{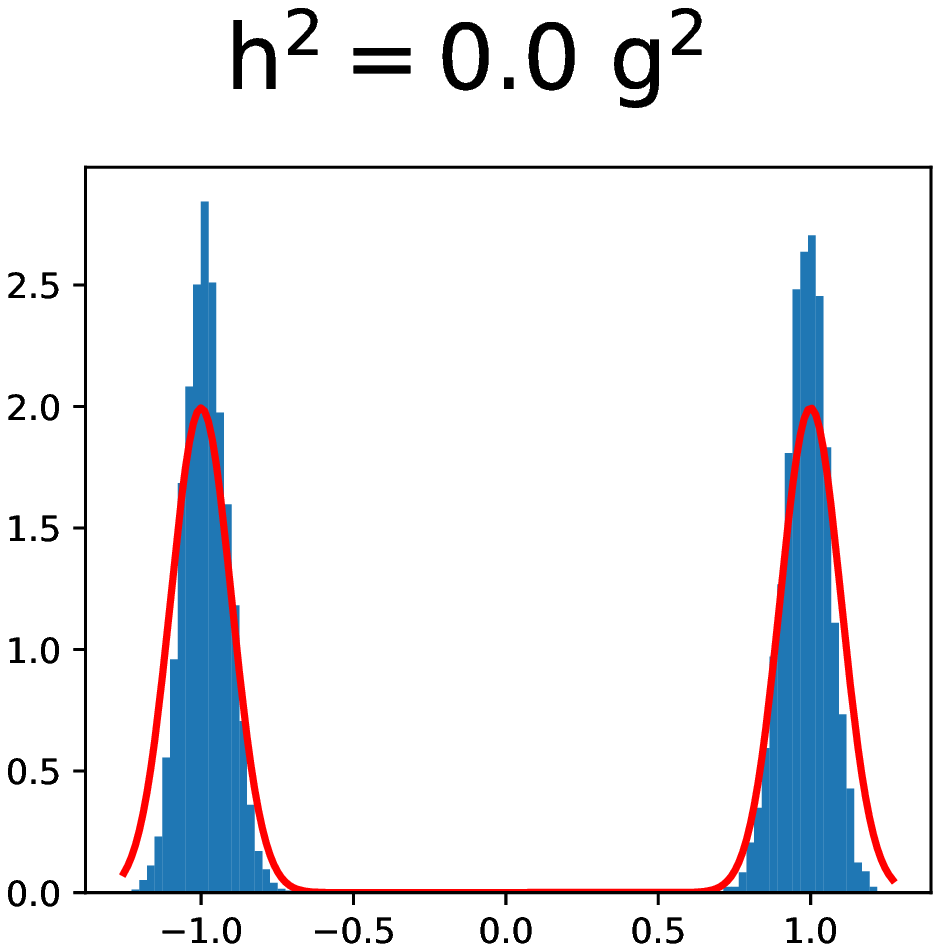}
    \end{minipage}
    \begin{minipage}[t]{.18\linewidth}
        \includegraphics[width=\textwidth]{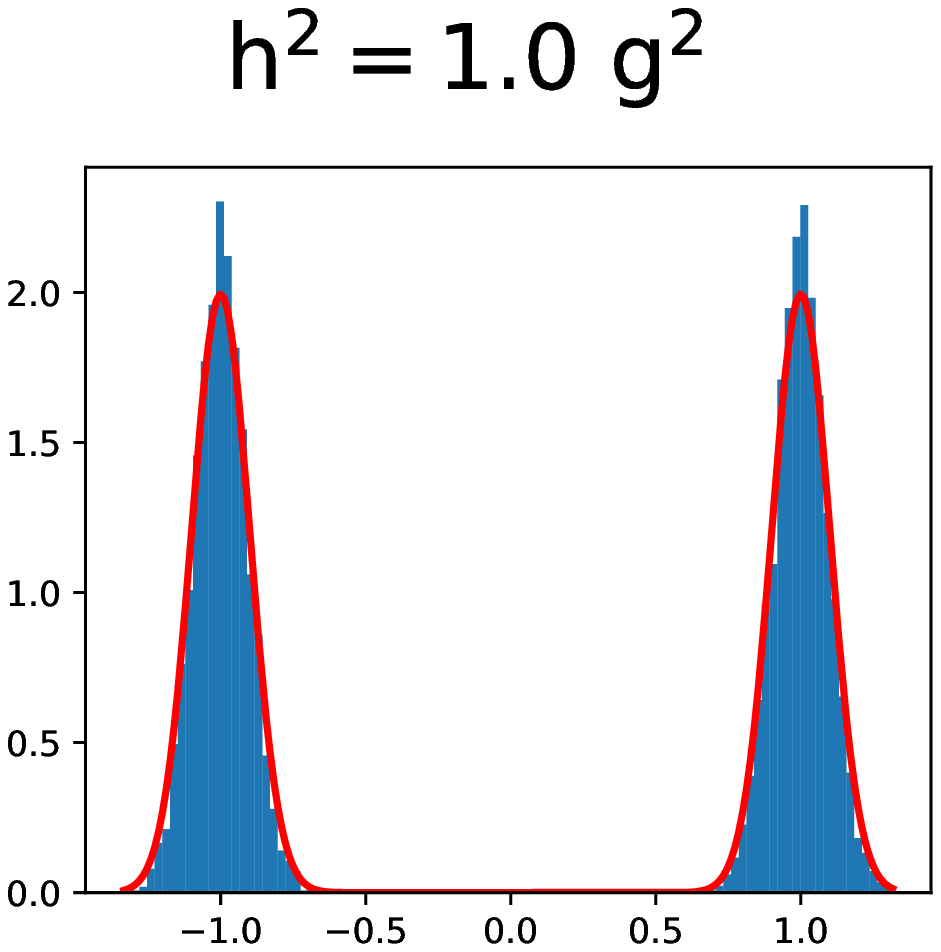}
    \end{minipage}
    \begin{minipage}[t]{.18\linewidth}
        \includegraphics[width=\textwidth]{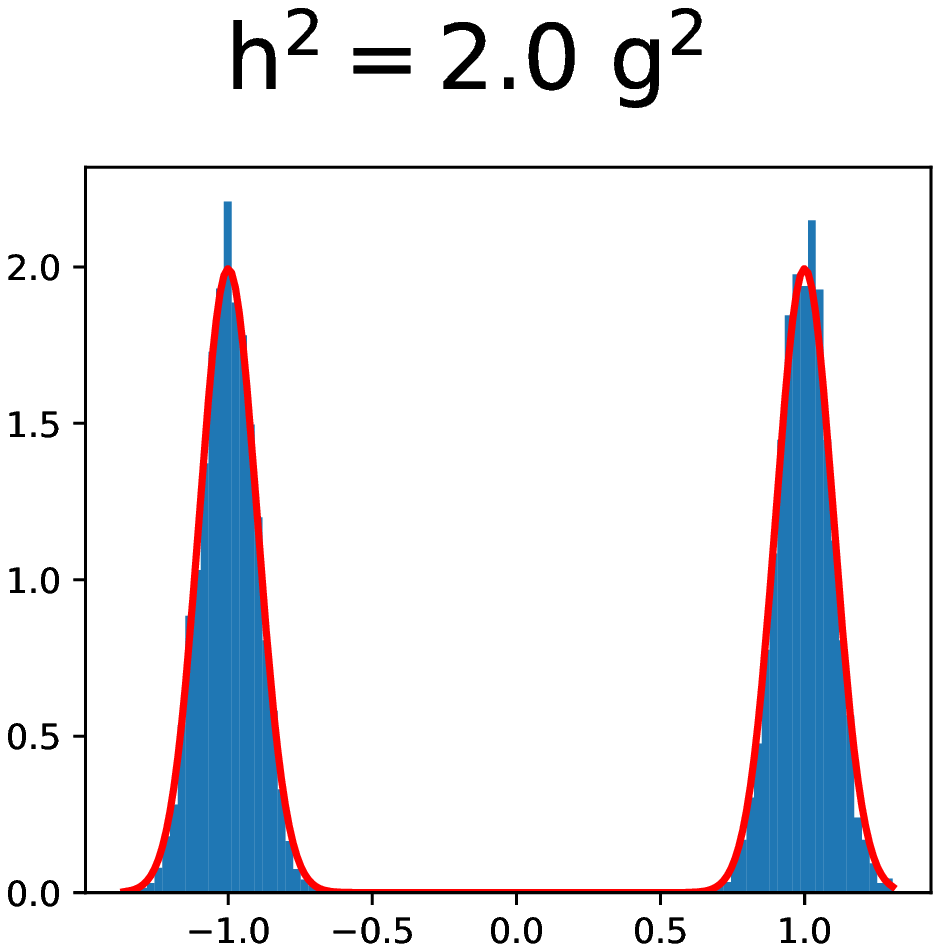}
    \end{minipage}
    \begin{minipage}[t]{.18\linewidth}
        \includegraphics[width=\textwidth]{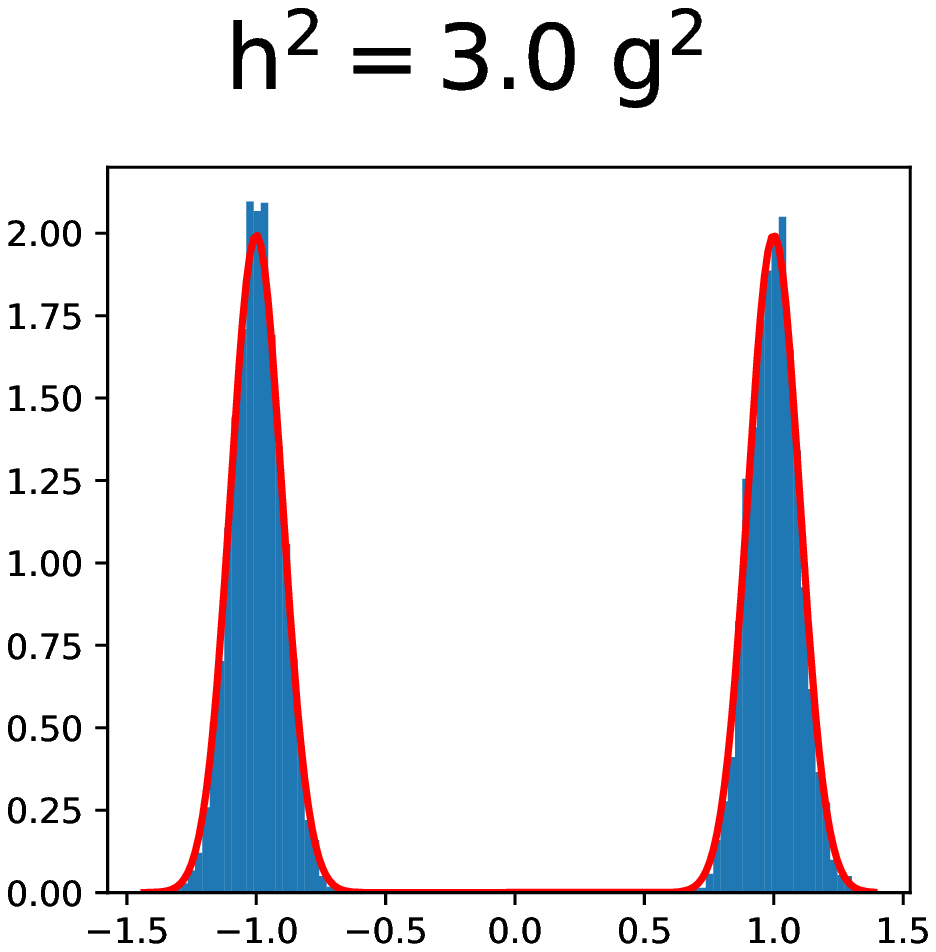}
    \end{minipage}
    \begin{minipage}[t]{.18\linewidth}
        \includegraphics[width=\textwidth]{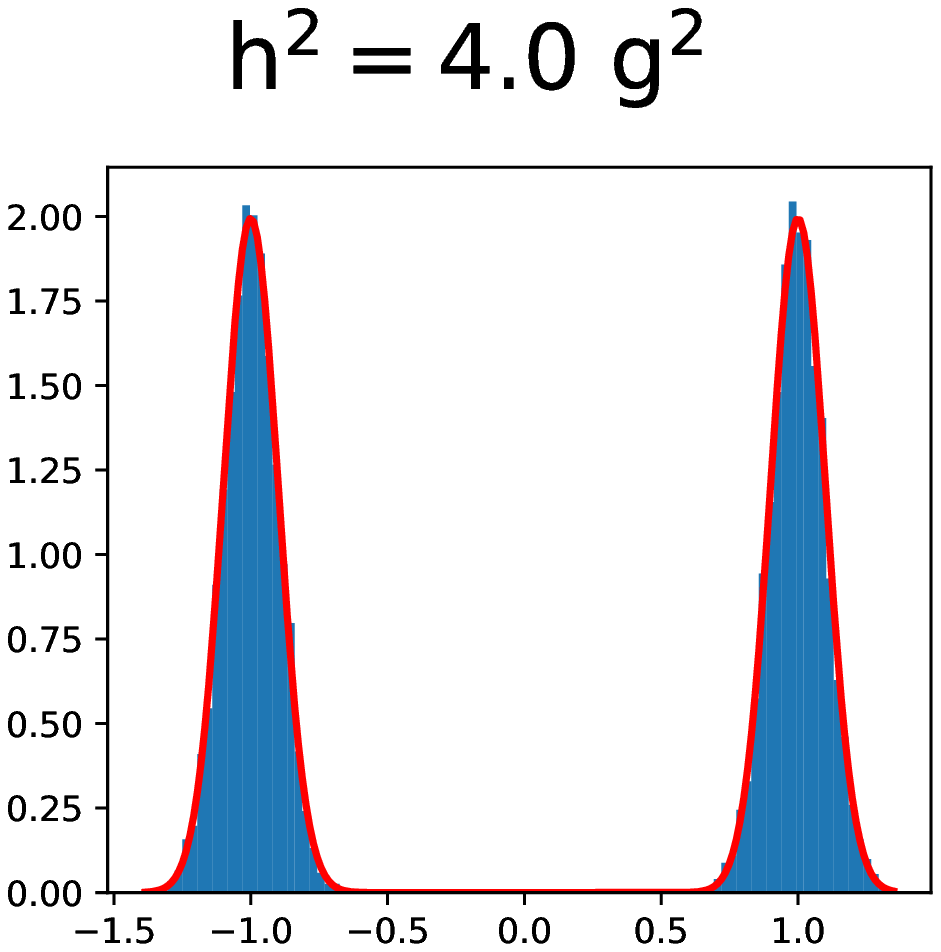}
    \end{minipage}
    \\
    (d) $\epsilon = 0.1$.
    \\
    \begin{minipage}[t]{.18\linewidth}
        \includegraphics[width=\textwidth]{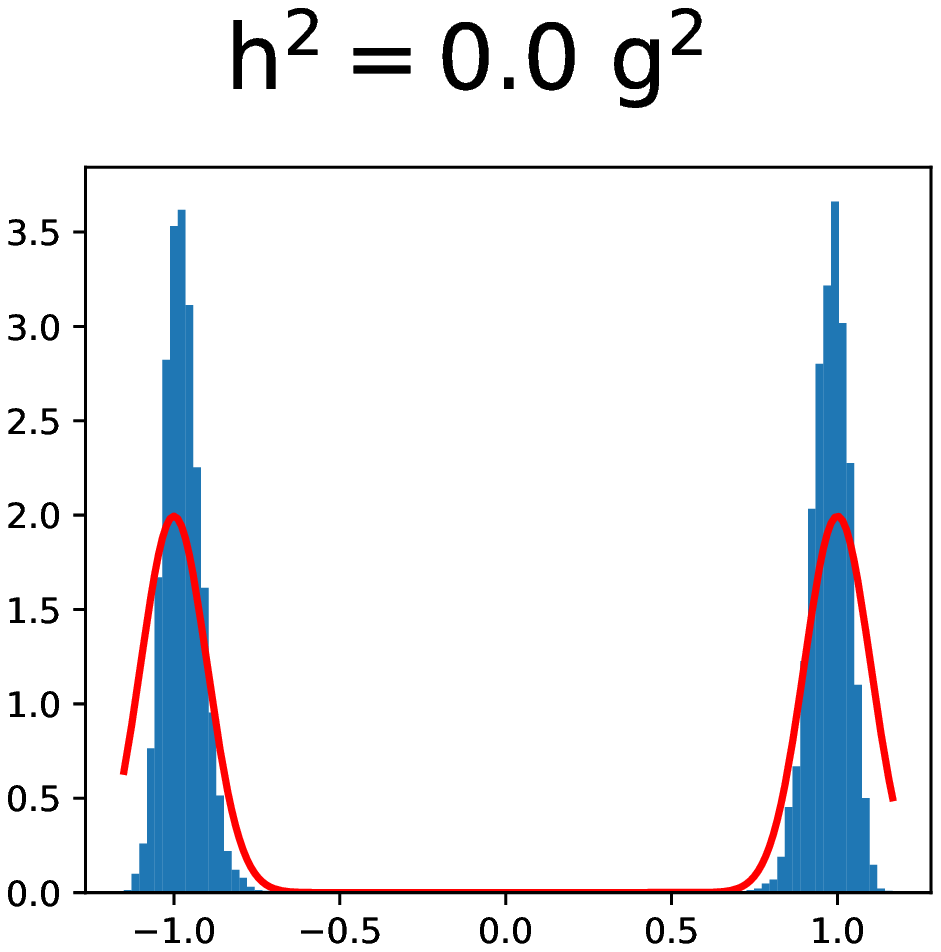}
    \end{minipage}
    \begin{minipage}[t]{.18\linewidth}
        \includegraphics[width=\textwidth]{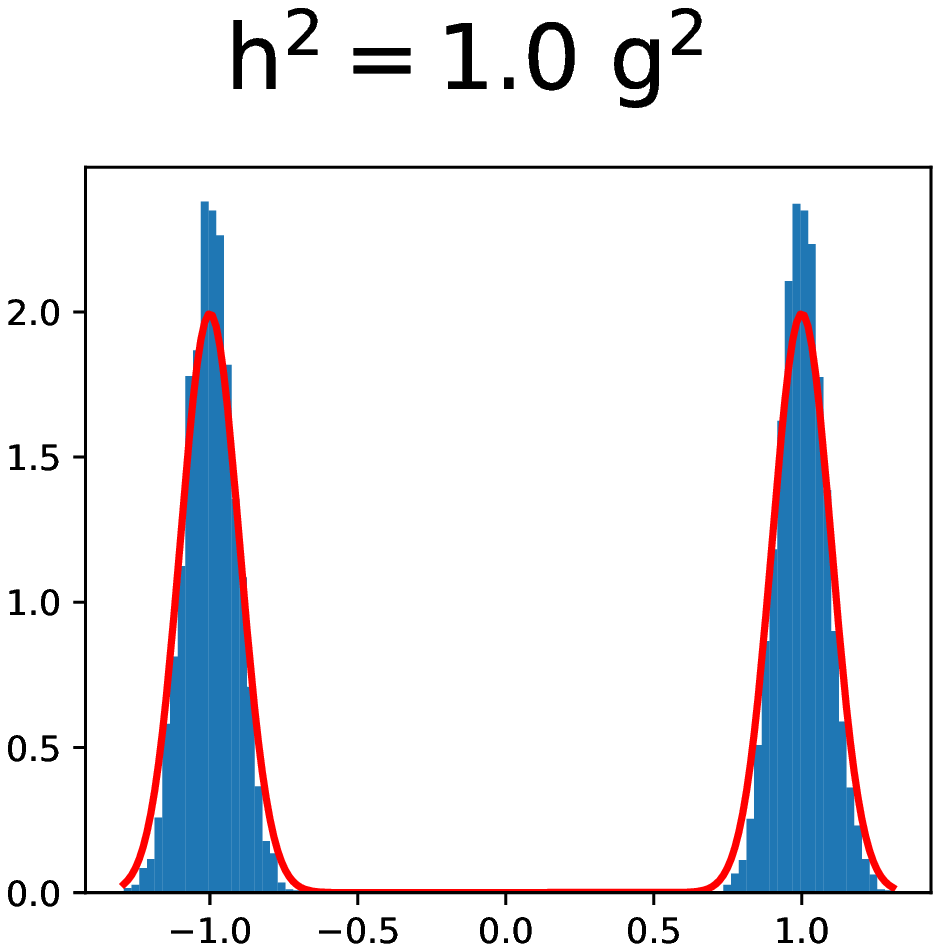}
    \end{minipage}
    \begin{minipage}[t]{.18\linewidth}
        \includegraphics[width=\textwidth]{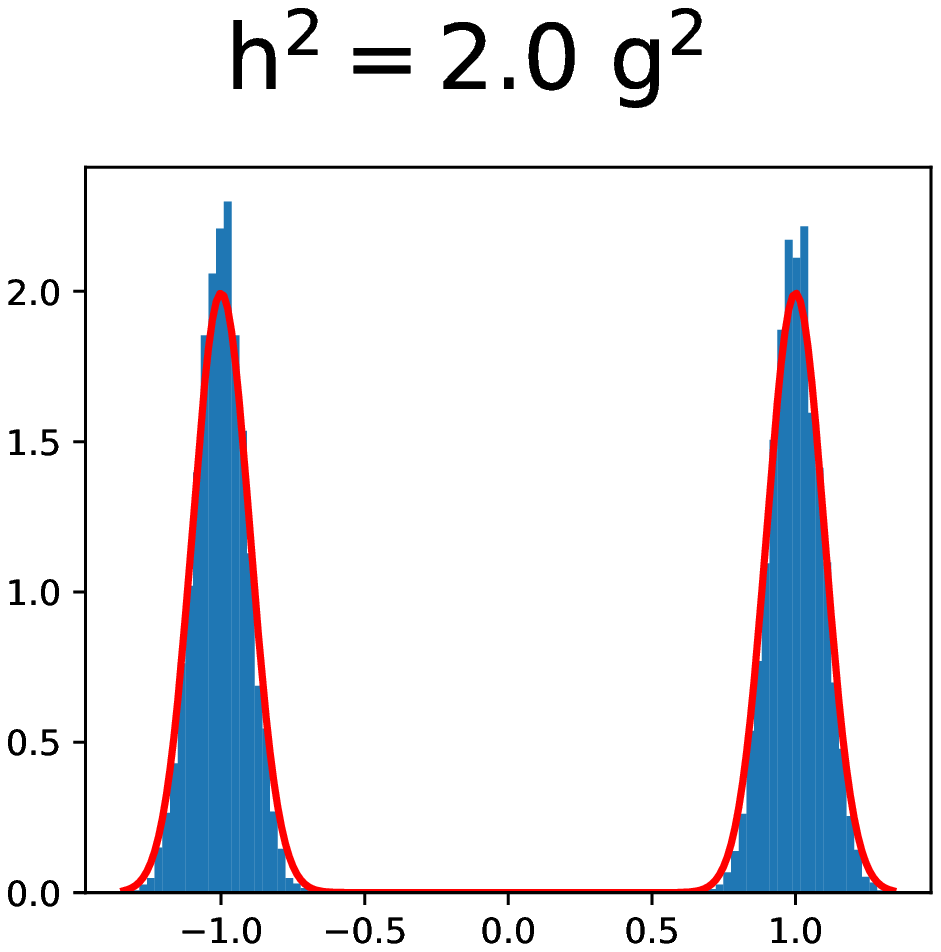}
    \end{minipage}
    \begin{minipage}[t]{.18\linewidth}
        \includegraphics[width=\textwidth]{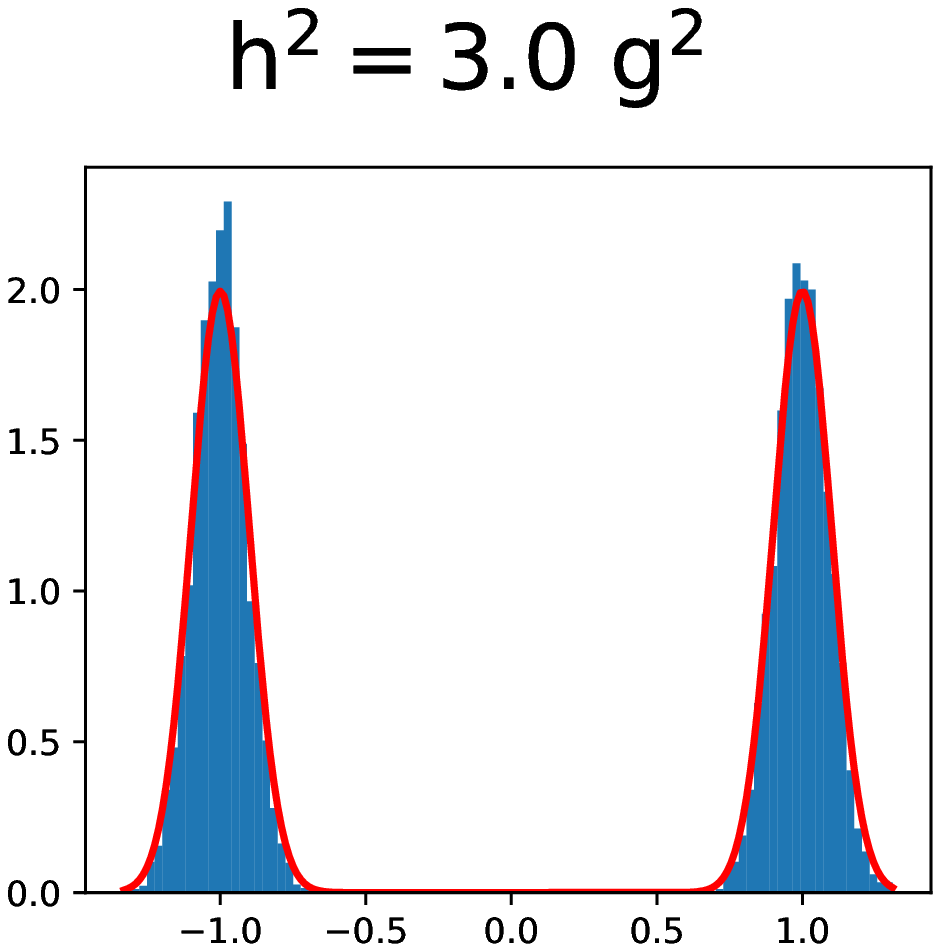}
    \end{minipage}
    \begin{minipage}[t]{.18\linewidth}
        \includegraphics[width=\textwidth]{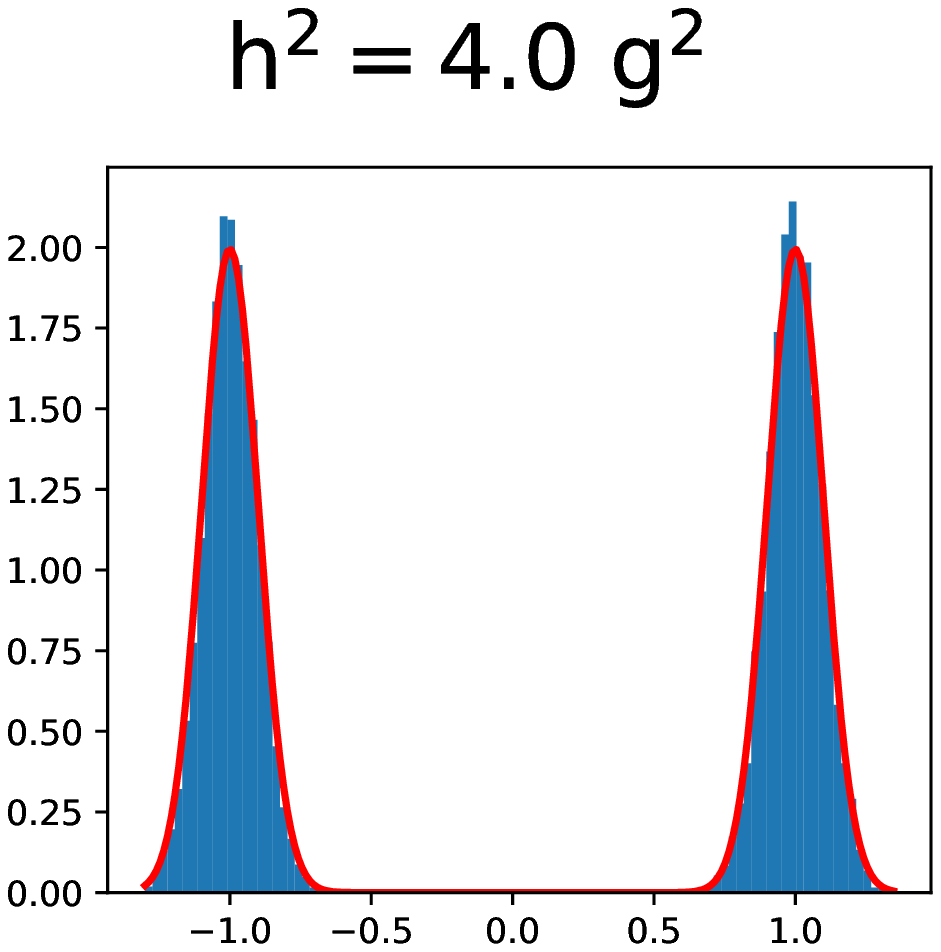}
    \end{minipage}
    \\
    (e) $\epsilon = 0.2$.
    \caption{Visualization of 1D GMM, where $\bk{\err}{t}(x) = \nabla \log \bk{p}{t}(x)$. We can observe that increasing $\h$ can help to generate samples with a distribution closer to the true $p_0$.}
    \label{fig:gmm1d:vis}
\end{figure}
\clearpage}

\newpage
\begin{figure}[!tbh]
    \centering
    \begin{subfigure}{.32\linewidth}
        \includegraphics[width=\textwidth]{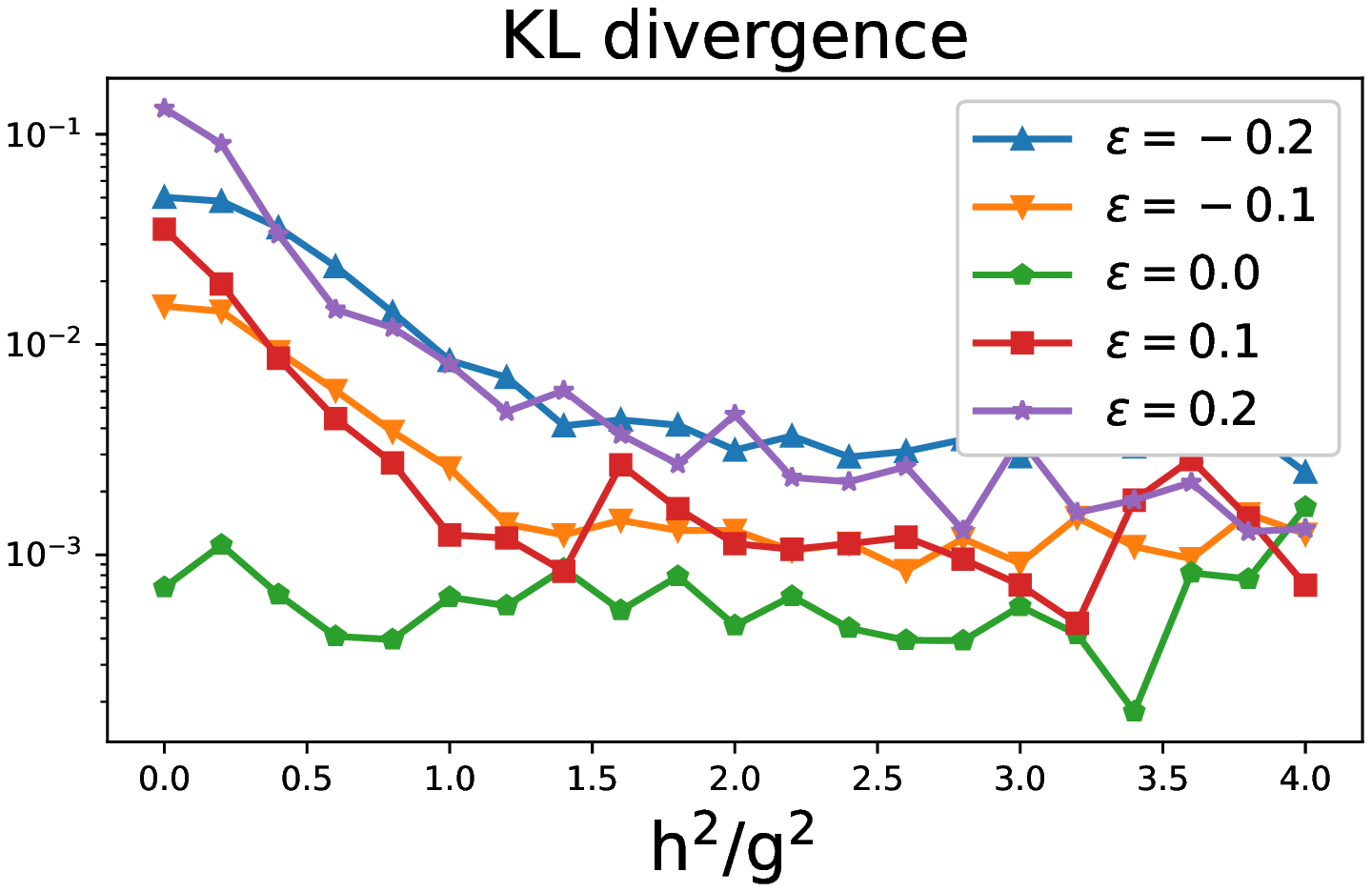}
        \caption{$\bk{\err}{t} = \nabla \log \bk{p}{t}$}
    \end{subfigure}
    \begin{subfigure}{.32\linewidth}
        \includegraphics[width=\textwidth]{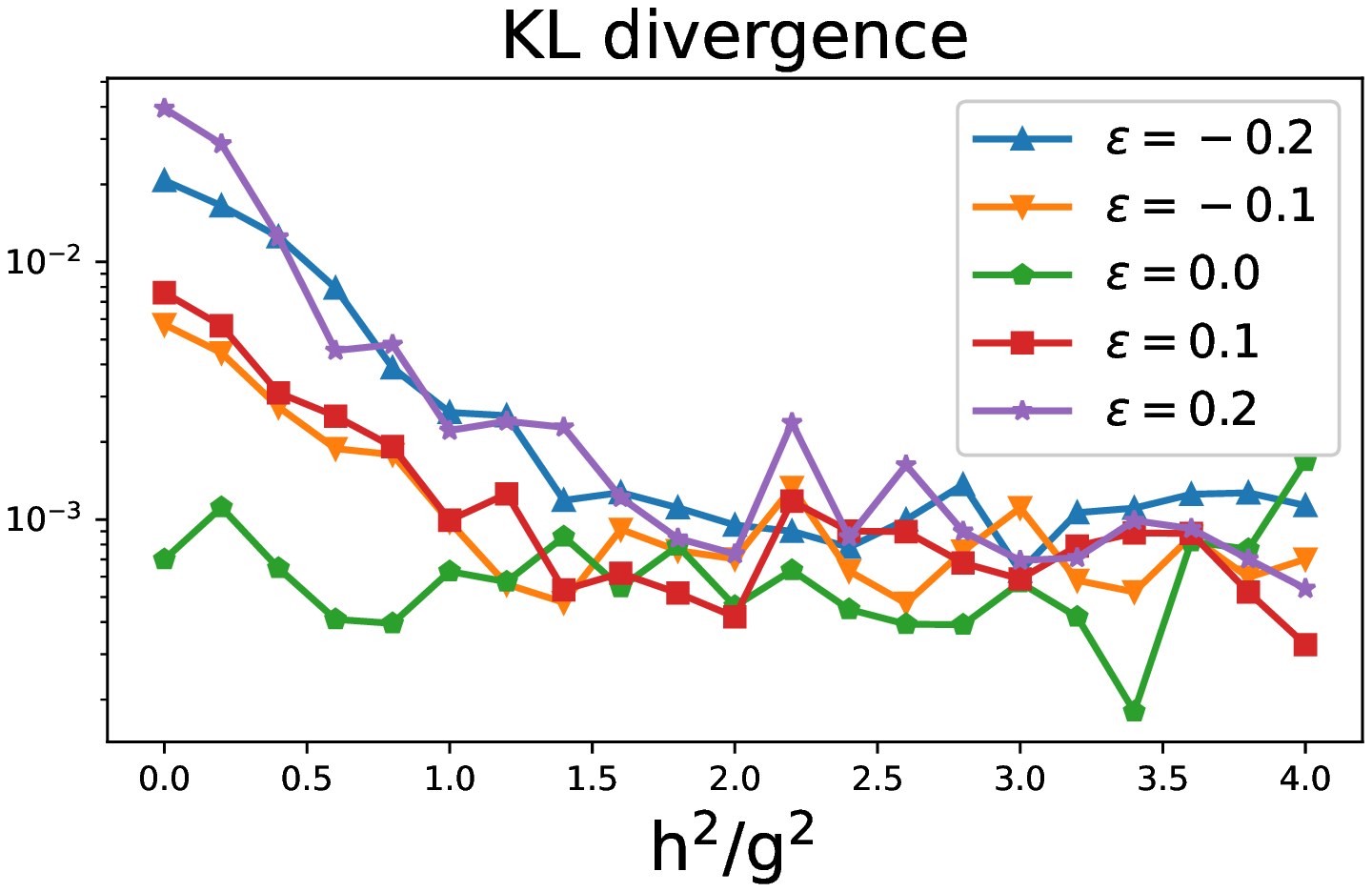}
        \caption{$\bk{\err}{t} = \frac{1 + \sin(2 \pi t / T)}{2} \nabla \log \bk{p}{t}$}
    \end{subfigure}
    \begin{subfigure}{.32\linewidth}
        \includegraphics[width=\textwidth]{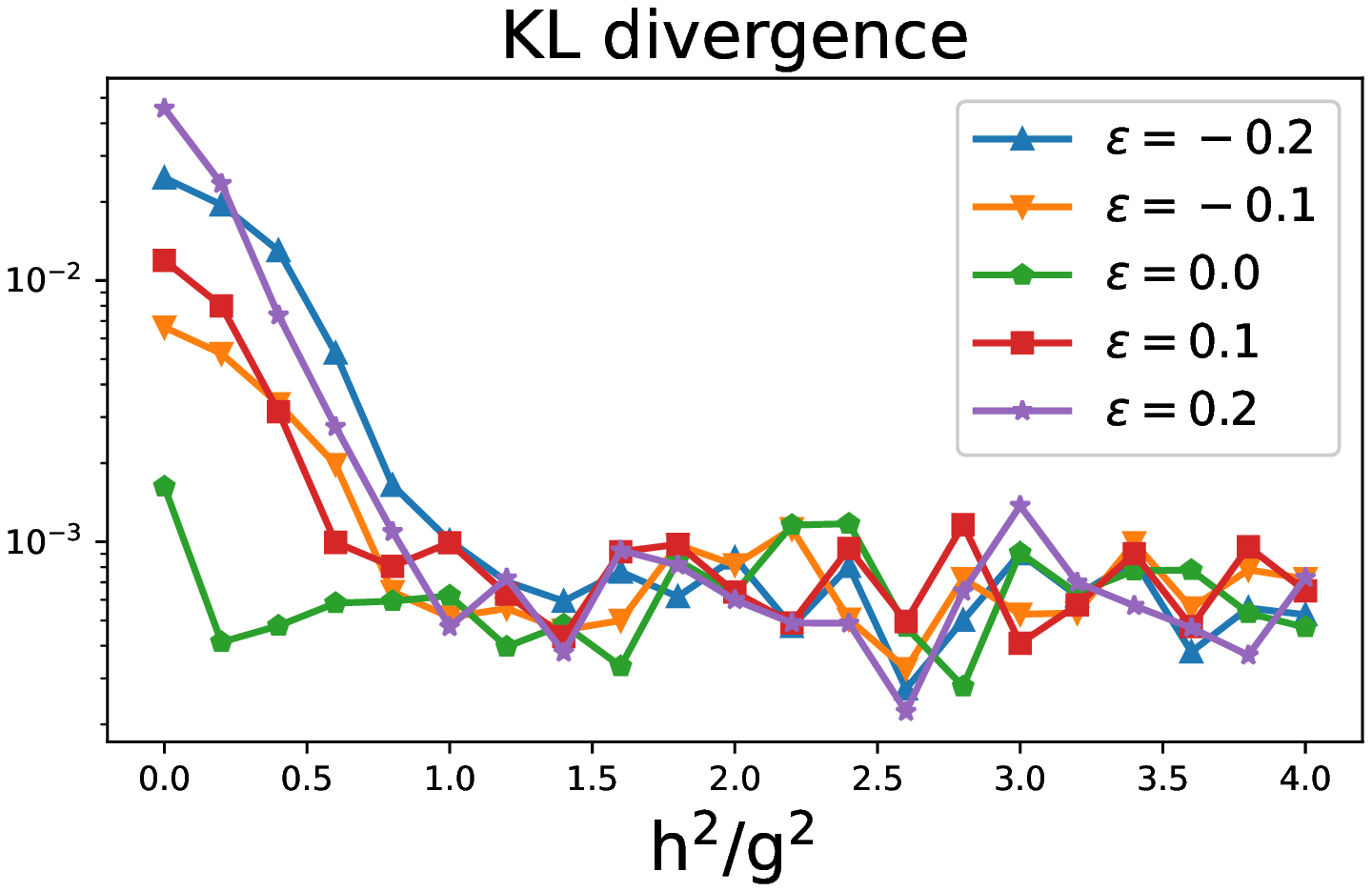}
        \caption{$\bk{\err}{t} = \mathbb{I}_{t < 0.95 T} \nabla \log \bk{p}{t}$}
    \end{subfigure}
    \caption{Numerical results of 1D 2-mode Gaussian mixture. We can observe that the distance between the empirical density and the true density function decreases as $\h$ increases, when $\bk{\err}{T}$ is not extremely large.}
    \label{fig:gmm1d:num}
\end{figure}

\subsection*{Swiss roll}
In \figref{fig::swissroll-timesteps}, we provide additional figures to discuss the effect of time-discretization.
When the numerical error is negligible, we can   observe that the generative process with a larger  $\h$ can provide a clearer picture of Swiss roll, as shown in \figref{fig::swissroll-timesteps:c}. 
However, when the discretization error cannot be ignored, the conclusion may be reversed. 
Therefore, it is necessary to design and employ more accurate numerical methods for models with large $\h$ in order to fully benefit from diffusion models with a large diffusion coefficient.

In \figref{fig:swissroll-num}, we display JS and KL divergences between the true density $p_0$ and the generated samples. Since the data distribution of Swiss roll is highly localized (data is concentrated on a curve embed in 2D), accurately computing the KL divergence poses a significant numerical challenge. 
That is why we use Wasserstein distance instead in \figref{fig:swissroll-vis:a}.
Based on more robust symmetric metrics (i.e., the JS and Wasserstein distance herein), we can observe that a larger $\h$ can indeed diminish the error in sample generation.

\begin{figure}[!tbh]
    \centering
    \begin{subfigure}{\textwidth}
        \begin{minipage}[t]{.18\textwidth}
            \includegraphics[width=\textwidth]{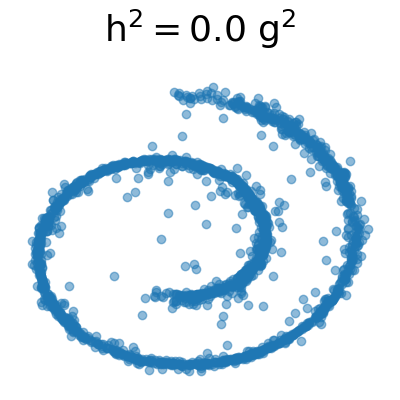}
        \end{minipage}
        \begin{minipage}[t]{.18\textwidth}
            \includegraphics[width=\textwidth]{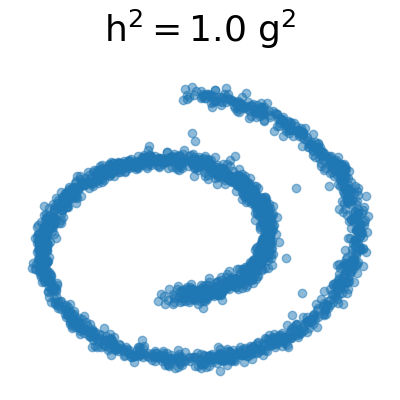}
        \end{minipage}
        \begin{minipage}[t]{.18\textwidth}
            \includegraphics[width=\textwidth]{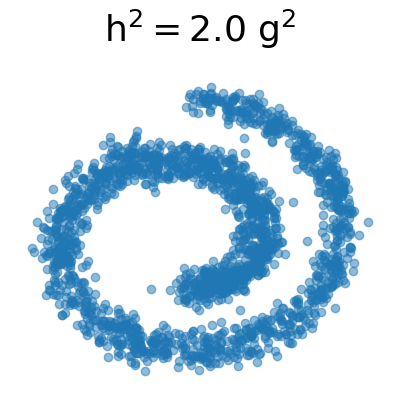}
        \end{minipage}
        \begin{minipage}[t]{.18\textwidth}
            \includegraphics[width=\textwidth]{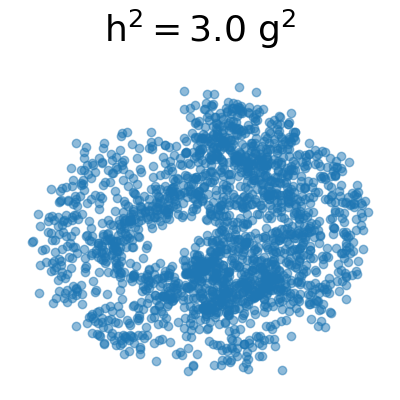}
        \end{minipage}
        \begin{minipage}[t]{.18\textwidth}
            \includegraphics[width=\textwidth]{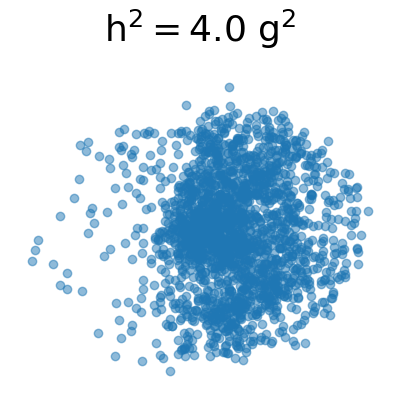}
        \end{minipage}
        \caption{discretization steps = 100}
        \label{fig::swissroll-timesteps:a}
    \end{subfigure}
    \begin{subfigure}{\textwidth}
        \begin{minipage}[t]{.18\textwidth}
            \includegraphics[width=\textwidth]{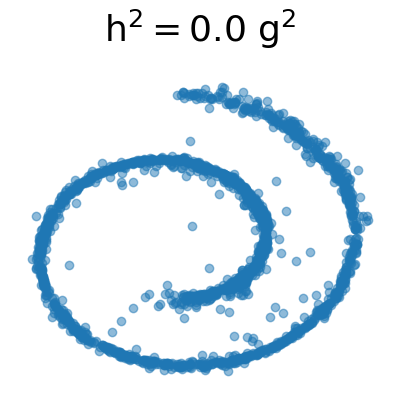}
        \end{minipage}
        \begin{minipage}[t]{.18\textwidth}
            \includegraphics[width=\textwidth]{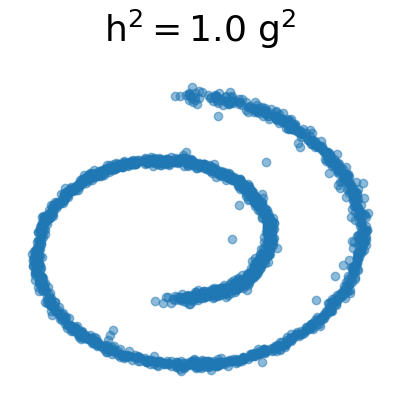}
        \end{minipage}
        \begin{minipage}[t]{.18\textwidth}
            \includegraphics[width=\textwidth]{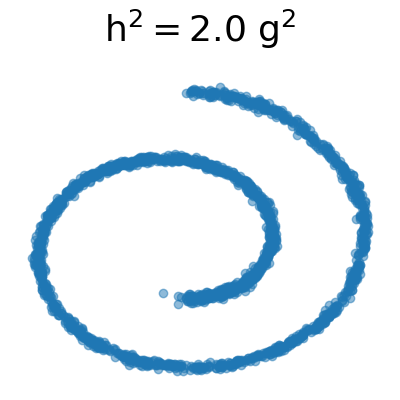}
        \end{minipage}
        \begin{minipage}[t]{.18\textwidth}
            \includegraphics[width=\textwidth]{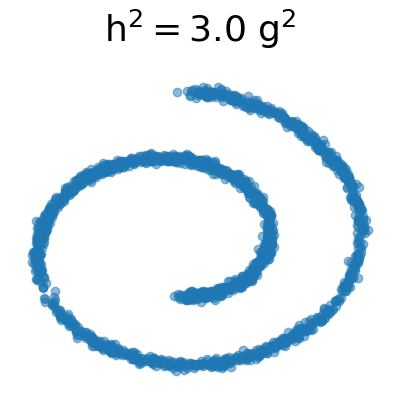}
        \end{minipage}
        \begin{minipage}[t]{.18\textwidth}
            \includegraphics[width=\textwidth]{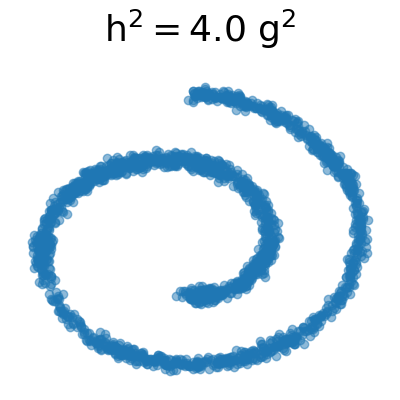}
        \end{minipage}
        \caption{discretization steps = 1,000}
        \label{fig::swissroll-timesteps:b}
    \end{subfigure}
    \begin{subfigure}{\textwidth}
        \begin{minipage}[t]{.18\textwidth}
            \includegraphics[width=\textwidth]{Fig/swissroll/vis/r=10000/diffusion_reverse_diffusion-alpha_0.00_reverse_process-swissroll.png}
        \end{minipage}
        \begin{minipage}[t]{.18\textwidth}
            \includegraphics[width=\textwidth]{Fig/swissroll/vis/r=10000/diffusion_reverse_diffusion-alpha_1.00_reverse_process-swissroll.png}
        \end{minipage}
        \begin{minipage}[t]{.18\textwidth}
            \includegraphics[width=\textwidth]{Fig/swissroll/vis/r=10000/diffusion_reverse_diffusion-alpha_2.00_reverse_process-swissroll.png}
        \end{minipage}
        \begin{minipage}[t]{.18\textwidth}
            \includegraphics[width=\textwidth]{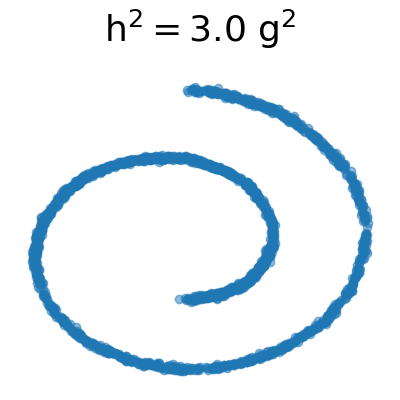}
        \end{minipage}
        \begin{minipage}[t]{.18\textwidth}
            \includegraphics[width=\textwidth]{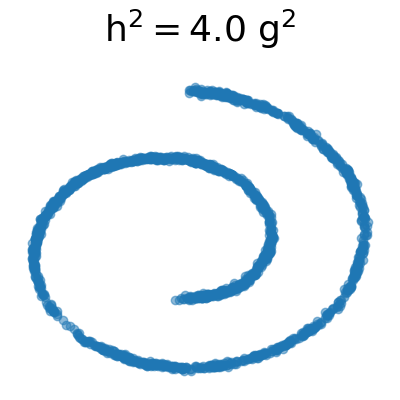}
        \end{minipage}
        \caption{discretization steps = 10,000}
        \label{fig::swissroll-timesteps:c}
    \end{subfigure}
    \caption{Visualization results of Swiss roll with different number of time steps.}
    \label{fig::swissroll-timesteps}
\end{figure}

\begin{figure}[!tbh]
	\centering
	\begin{minipage}[t]{.32\linewidth}
		\includegraphics[width=\textwidth]{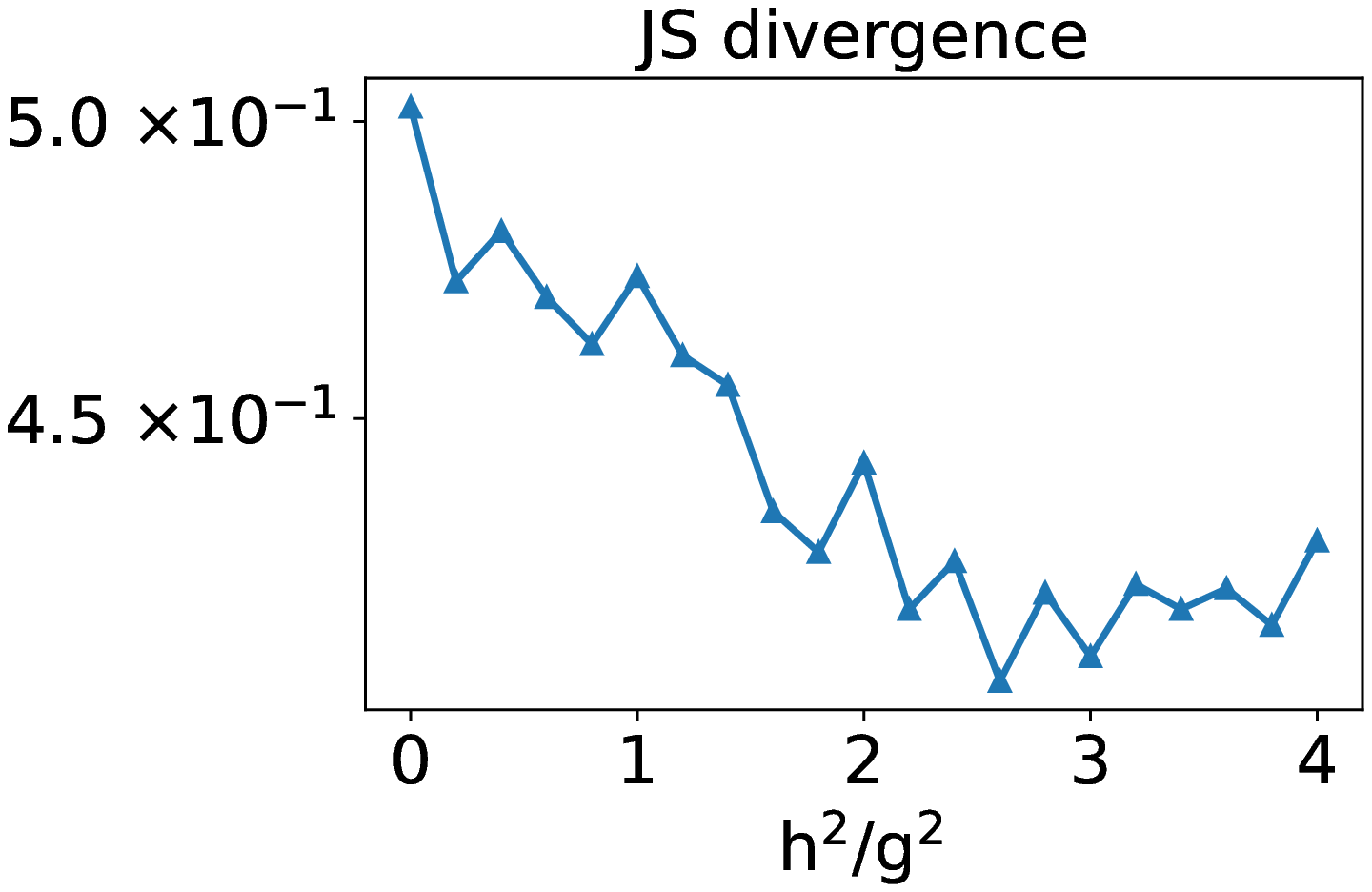}
	\end{minipage}
	\begin{minipage}[t]{.32\linewidth}
		\includegraphics[width=\textwidth]{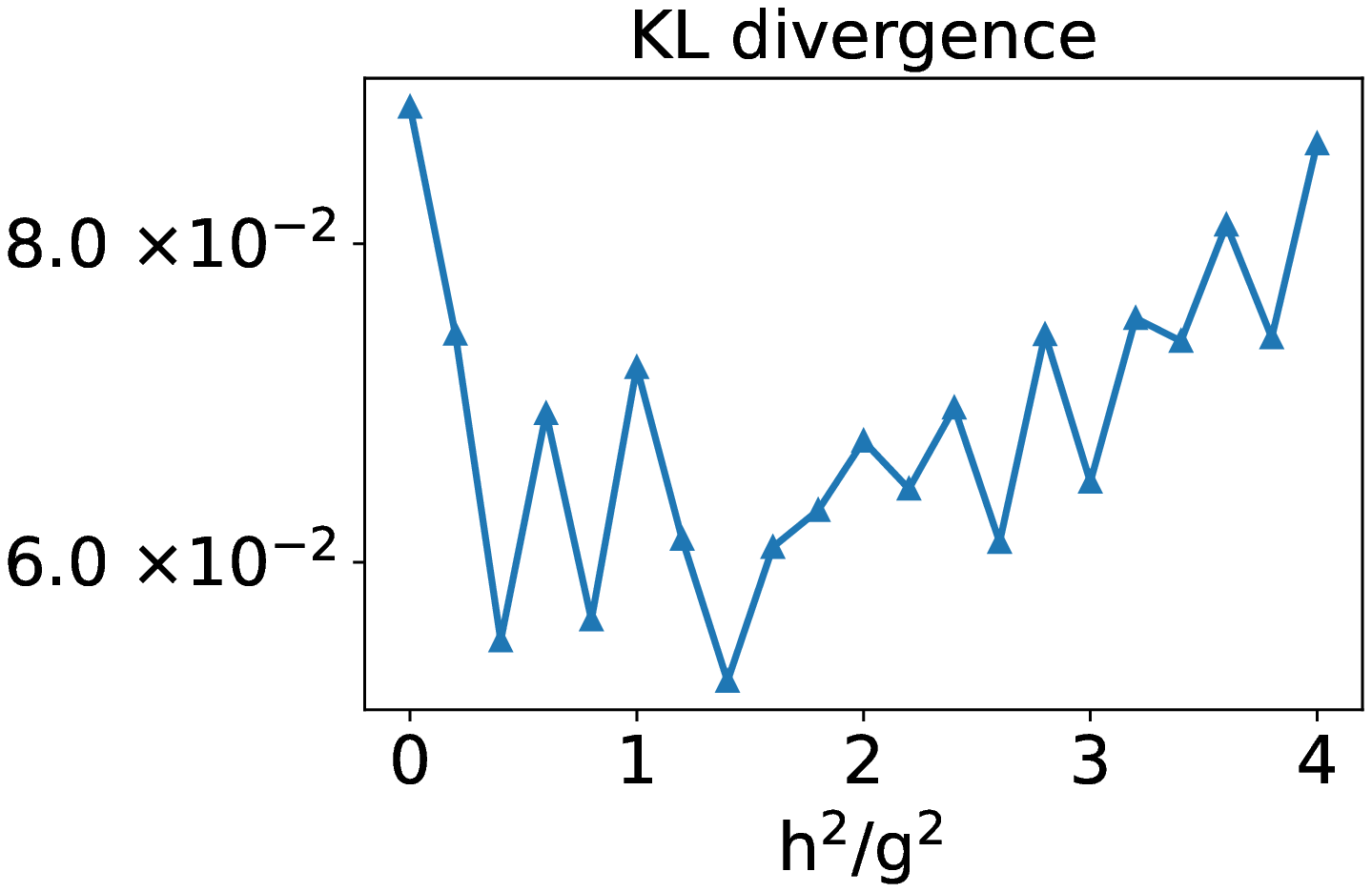}
	\end{minipage}
	\caption{Numerical results of Jensen-Shannon divergence and Kullback-Leibler divergence of Swiss roll. }
	\label{fig:swissroll-num}
\end{figure}

\subsection*{MNIST}

We conduct further experiments to explore the effect of $\bkh$ and the time discretization steps. 
It is important to note that if one has an extremely well trained score function, then the effect of $\bkh$ is indeed negligible, as shown in \eqref{eqn::back}. To somewhat magnify the score training error for MNIST (but with a reasonable score function), we increase the time $T$ and use a {smaller architecture with fewer parameters}; the reason of the occasional failure to generate clear MNIST images in later figures comes from this {\bf deliberate experimental design}.

For an existing pre-trained score function with non-negligible error, we notice that increasing $\bkh$ can almost {\bf ubiquitously} improve the quality of sample generation; see \figref{fig::default_1} (as well as \figref{fig::data_1} and \figref{fig::noise_1} under different training setup).
This improvement is supported by our theoretical results, particularly \propref{prop::decay}.
Notably, even when choosing $\bkh=0$ (ODE) and $\bkh=\bk{g}{}$ (the default diffusion model in many studies) occasionally fail to generate an image, simply by increasing the magnitude of $\bkh$ (possibly at the cost of more computational resources), we have a larger chance of generating an image with reasonable quality; for instance, see the last row in \figref{fig::default_1} (particularly see the last row in \figref{fig::data_1}). 
The ODE-based model sometimes fails to generate hand-writing digits even when the step number is $10^3$, whereas the diffusion model (with large $\bkh$) does {\bf not} encounter this issue. This conclusion might be reversed when the time step size is large, which is similarly observed in the Swiss roll.

The computational cost of SDE-based models with large diffusion ($\bkh > \bk{g}{}$) is relatively high due to the necessity of a larger number of time steps:
a larger $\bkh$ has the similar effect as running Langevin for a larger time horizon as discussed in \secref{s3} and a longer time simulation is expected to be more expensive and also its accuracy largely relies on a well-chosen time step.
However, this can be offset by the ability to use a lightweight architecture that possibly speeds up the generative process.
A detailed comparison of various diffusion-based models with the same computational budget constraint is challenging and is slightly beyond the scope of this work, and we will leave this task to future work.

\afterpage{\clearpage
\begin{figure}
\centering
\includegraphics[width=\textwidth]{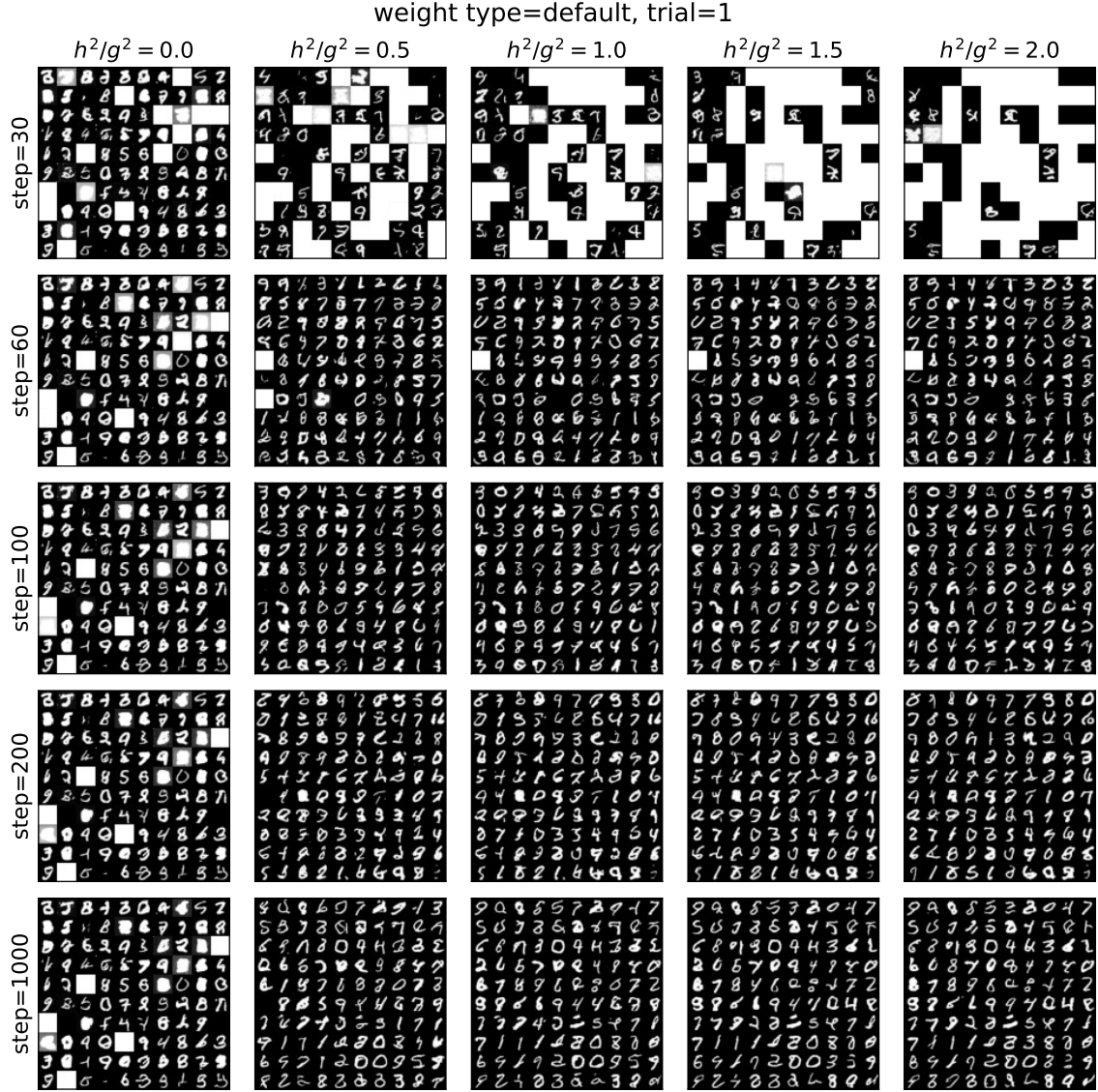}
\caption{Visualization of sample generation in MNIST where we used {\bf default weight $\weight$} \eqref{eqn::default_lambda}. We use different random seed to ensure robustness and this is the result for trial $1$. {Results for trial 2 is not included in this work as the overall tendency remains the same.}}
\label{fig::default_1}
\end{figure}
\clearpage}

\newpage
\section{Numerical experiments for adopting different weight in training}
\label{sec::weight}

We have two reasons to explore the effects of various weight functions in training.

The first reason is that theoretically, any positive scalar-valued functions $\weight_t$ on $(0,T)$ is a valid candidate. This prompts the question of whether such a default weight function \eqref{eqn::default_lambda} is optimal in designing the loss function. We fully acknowledge that the default choice adopted in most literatures is a very effective one. However, the mathematical reason behind it is still not satisfactory in our opinion. This motivates us to ask whether the default choice is really the optimal one, at least in certain circumstances.

The second (and actually the primary reason) comes from our theoretical predictions discussed in \secref{subsec::discuss_weight}. As there is a tight connection between the optimal reverse-time generative process and the time-distribution of the score error (see \propref{prop::decay}, \propref{prop::end_error} and \propref{pro::asympt_error}), if we are willing to train or re-train the score function and are interested in using the ODE-based model (for fast sample generation), it appears that we should focus more on the noise's end comparatively. 
To achieve this goal, namely, to control the distribution of score error, we adopt different weight schemes in the loss function {\bf only} for training: default weight in \eqref{eqn::default_lambda}, a data-driven case (more weight in the data side) and a noise-driven case (more weight in the noise side):
	\begin{align}
		\label{eqn::other_lambda}
		\weight_t = \left\{\begin{aligned}
			& \varpi_t^3\ \qquad \text{noise-driven, or simply referred as {\bf\enquote{noise}}};\\
			& \frac{\varpi_t^2}{0.25 + \varpi_t}\ \qquad \text{data-driven, or simply referred as {\bf\enquote{data}}}.
		\end{aligned}\right.
	\end{align}
There is no theoretical reasons behind the noise-driven and data-driven choices in the last equation. We merely experiment with two reasonable choices and at the same time they are expected to help us control the time distribution of the score error.

\subsection{Our guess} 
Based on our theoretical results, we guess that if a different weight function can really achieve our expected goal (namely, control the time distribution of the score function), the noise-driven case should be more suitable for ODE models, and the data-driven weight should give us the worst performance for ODE models.

We remark that this conjecture is surely {\bf not universal}, and its validity remains to be fully validated by more benchmark experiments. Nevertheless, our numerical experiments below suggest its potential usefulness and it prompts an interesting question to explore and design the optimal score-matching loss function, which is {\bf rarely} studied in literature.
In what follows, we report numerical experiments for MNIST and CIFAR-10.

\subsection{MNIST}
We used different weights to train the score function and then visualize their generated samples in \figref{fig::mnist_ode_1}. We can clearly observe that the score function trained by the noise-driven weight produces comparatively better samples in ODE models. We plot the denoising score-matching loss for score functions obtained from training with various weights in \figref{fig::relative_sml}:

\begin{itemize}[leftmargin=1em]
\item to make the comparison more straightforward, we visualize the time-distribution of {\bf relative score-matching loss} rather than the absolute value, namely, we demonstrate:
\begin{align}
\label{eqn::relative_loss}
t\mapsto \frac{\ee_{X_0\sim p_0} \ee_{X_t\sim p_{t|0}(\cdot|X_0)} \Big[\norm{\fw{(\score^{(i)})}{t}(X_t) -  \nabla\log p_{t|0}(X_t|X_0)}^2\Big]}{\ee_{X_0\sim p_0} \ee_{X_t\sim p_{t|0}(\cdot|X_0)} \Big[\norm{\fw{(\score^{(\text{default})})}{t}(X_t) -  \nabla\log p_{t|0}(X_t|X_0)}^2\Big]},\ \  i \in \big\{\text{noise},\ \text{data}\big\},
\end{align}
and $\score^{(i)}$ refers to the score function obtained from training using weight $i$ and we use test datasets to approximately represent $p_0$;

\item to ensure robustness, we independently conduct two trials with different neural network initializations: For each trial, the initial neural network is the same and the only difference in training is to adopt different weights in the loss function. 
\end{itemize}
In \figref{fig::relative_sml}, we can clearly observe that adopting different weights indeed help us to control how the score error is distributed over time as we expect (e.g., noise-driven weight helps us reduce the error near $t\approx T$ and has an opposite effect near $t\approx 0$); as shown in \figref{fig::mnist_ode_1}, their numerical performances in terms of sample generation also match our guess above.

We further visualize how the diffusion coefficient $h$ and time step size affect the sample generation quality for score functions obtained by data-driven weight (see \figref{fig::data_1}) and noise-driven weight (see \figref{fig::noise_1}). The conclusion is the same as in the default weight case. This further validates our theoretical results in \secref{sec::asymptotic}.
	
\bigskip
\bigskip
\begin{figure}[h!]
	\centering
	\begin{subfigure}{0.9\textwidth}
	\includegraphics[width=\textwidth]{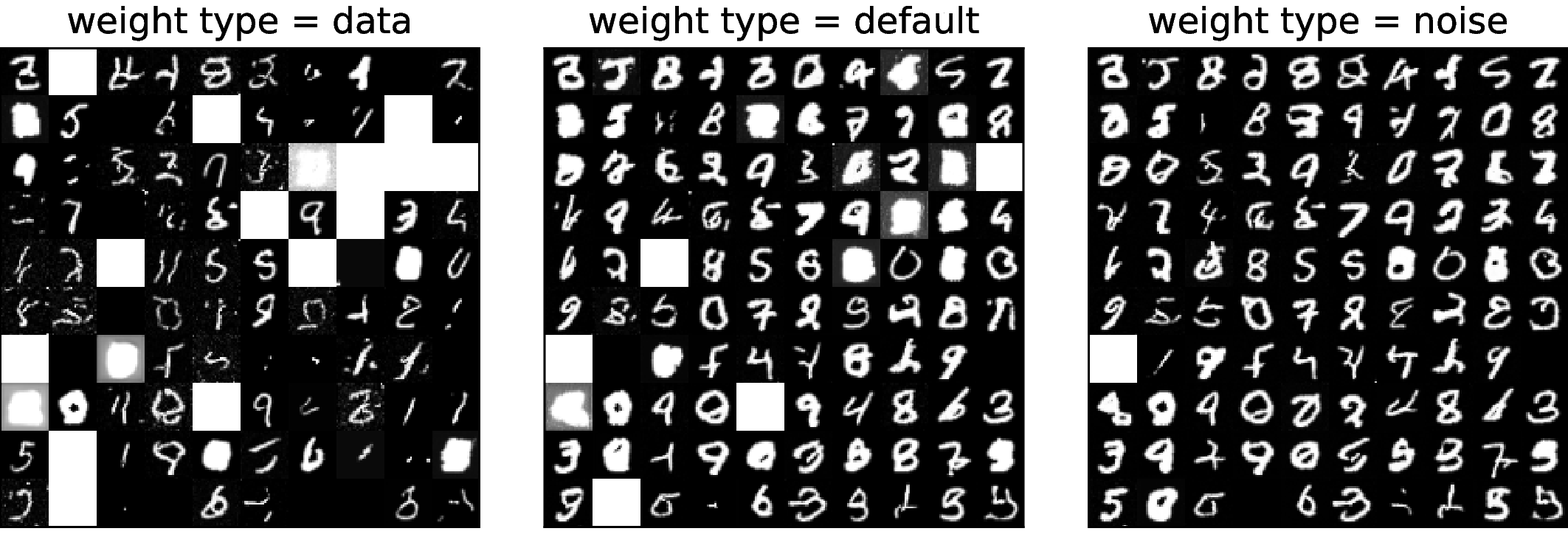}
	\caption{Trial number $1$}
	\end{subfigure}
	\bigskip\\
	\begin{subfigure}{0.9\textwidth}
	\includegraphics[width=\textwidth]{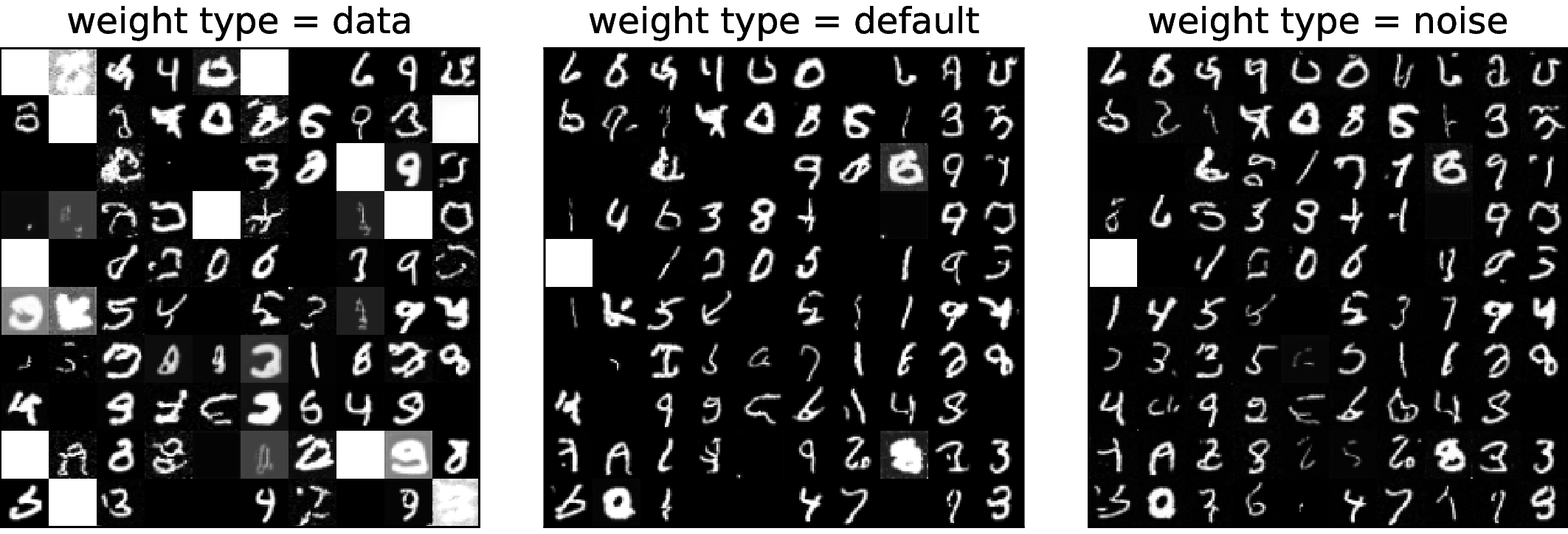}
	\caption{Trial number $2$}
	\end{subfigure}
	\caption{We show generated samples trained for MNIST using three different weight functions in the loss function: we use $\nicefrac{h^2}{g^2} = 0$ (ODE), time step is $1000$, the trial number is the index for independent random initialization.}
	\label{fig::mnist_ode_1}
\end{figure}

\begin{figure}
	\centering
		\includegraphics[width=0.6\textwidth]{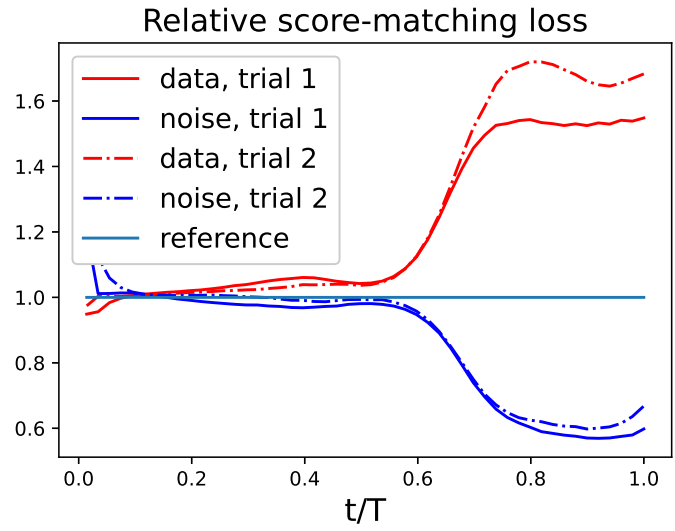}
		\caption{Score-matching loss for data-driven and noise-driven case on MNIST, compared with the default case}
		\label{fig::relative_sml}
\end{figure}

\afterpage{\clearpage
\begin{figure}
\includegraphics[width=\textwidth]{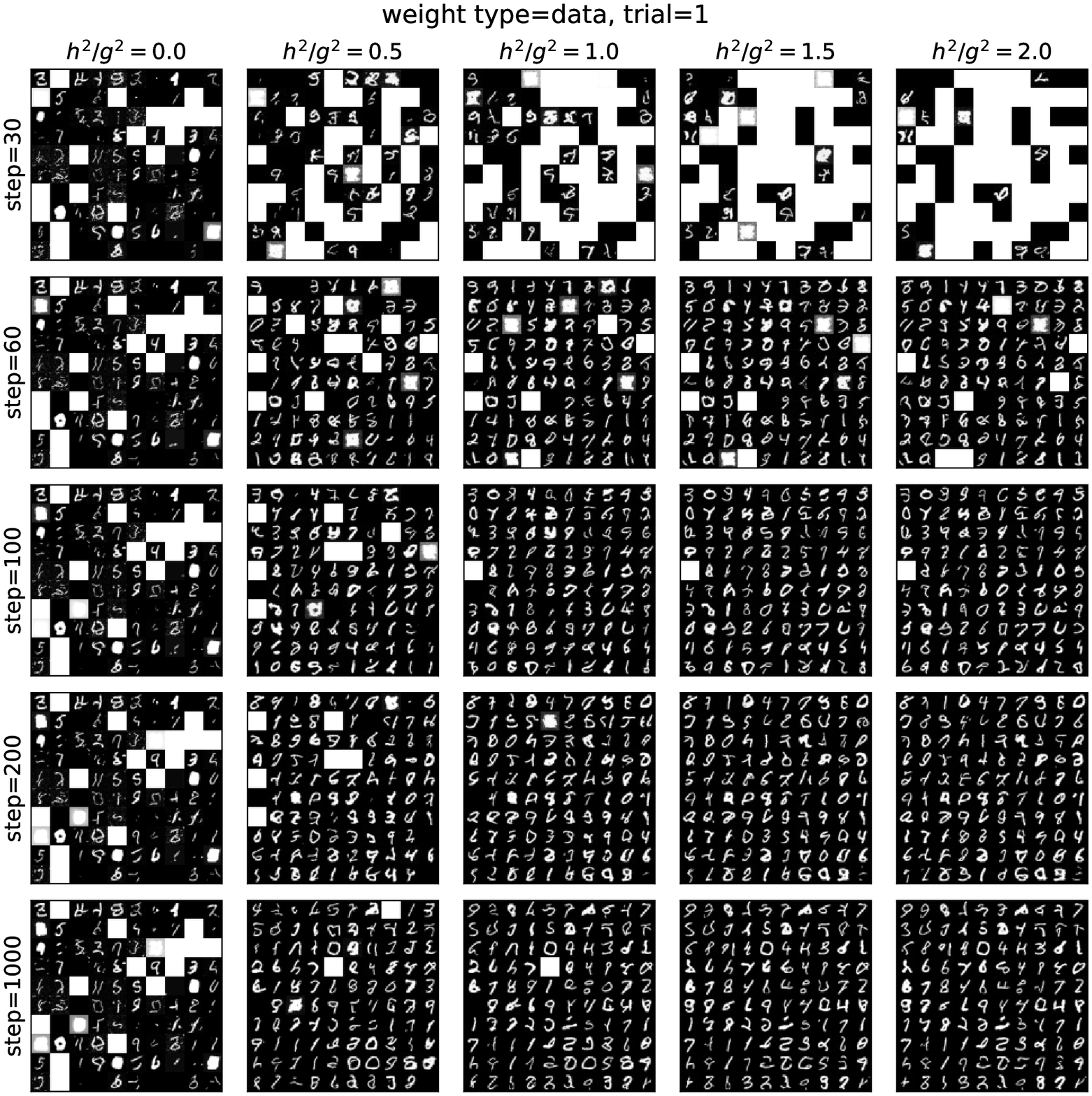}
\caption{Visualization of sample generation in MNIST where we used {\bf data-driven weight $\weight$} \eqref{eqn::other_lambda}. We use different random seed to ensure robustness and this is the result for trial $1$. {Results for trial 2 is not included in this work as the overall tendency remains the same.}}
\label{fig::data_1}
\end{figure}

\begin{figure}
\includegraphics[width=\textwidth]{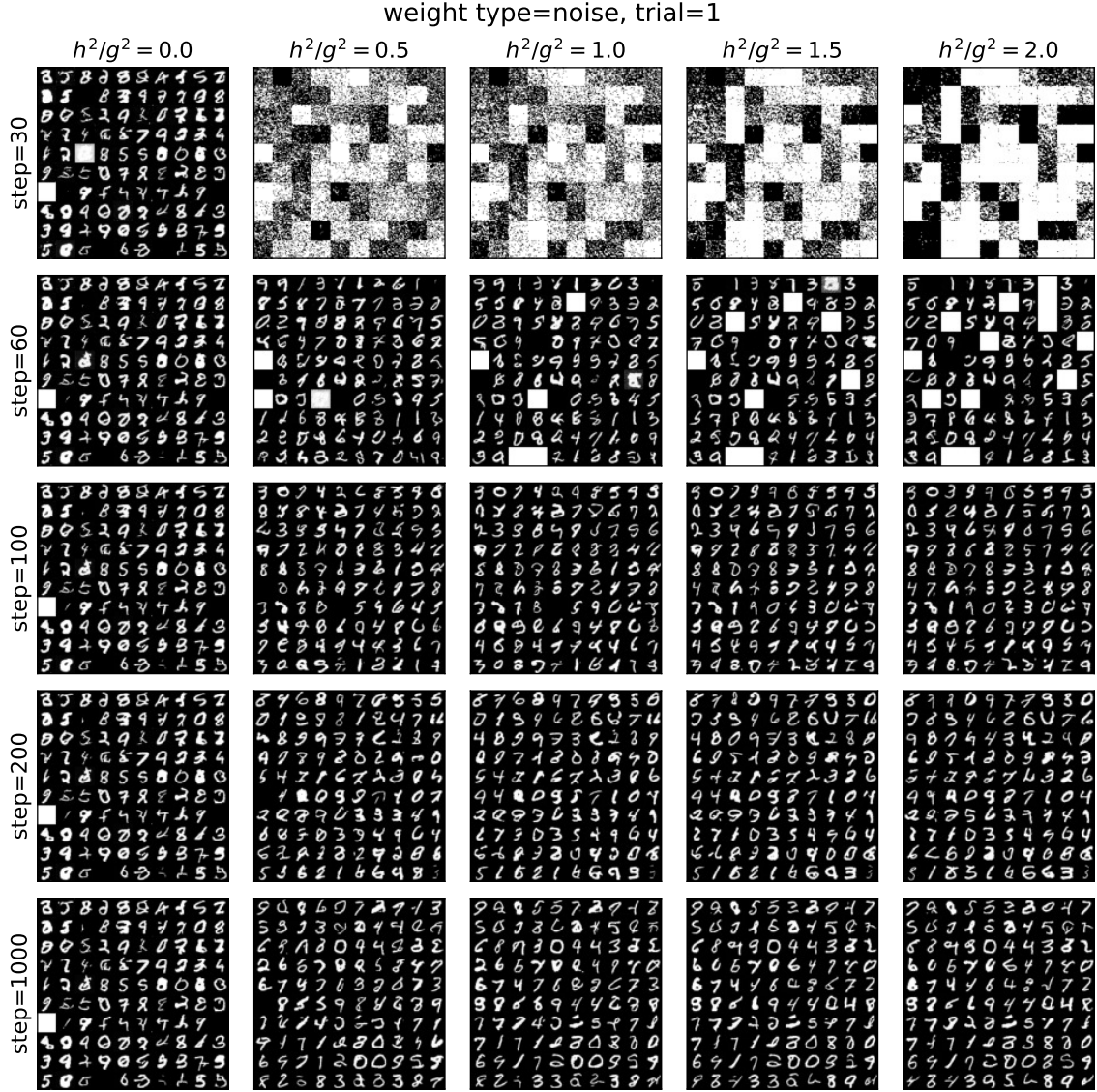}
\caption{Visualization of sample generation in MNIST where we used {\bf noise-driven weight $\weight$} \eqref{eqn::other_lambda}. We use different random seed to ensure robustness and this is the result for trial $1$. {Results for trial 2 is not included in this work as the overall tendency remains the same.}}
\label{fig::noise_1}
\end{figure}

\clearpage
}

\newpage
\subsection{CIFAR-10}

We carry out a similar experiment for CIFAR-10 to test the effect of training weights $\weight_t$: we used the same initialization (referred to as the trial number below) and all other hyper-parameters, except that we employ different weights in score-matching loss \eqref{eqn::loss}. The same architecture is used as in \cite{huang2021a} for CIFAR-10 and the detailed hyper-parameters can be found in  source codes.

In \figref{fig::cifar-fid}, we observe that overall over the whole training period, the noise-driven weight leads into a score function estimate no worsen than that by the default weight: due to stochastic fluctuations and other uncertainties (in particular if we adopt the mixed precision training), there is no guarantee that the noise-driven one is always better, but the overall tendency is still observable and clear. In \figref{fig::cifar_sml}, the noise-driven one actually has a slightly larger score-matching loss (SML) than the default one (which could be possibly explained by how we measure the SML in \figref{fig::cifar_sml}). 
What is interesting is that score functions trained by the noise-driven weight and the data-driven weight have a similar SML, which both decay at the similar pace; however, the FID values for the score function estimate by noise-driven weight are much smaller than that by the data-driven weight. This apparent gap clearly explains that apart from the total score-matching loss (which does matter), {\bf the time distribution of the score error plays an important role in determining the final sampling error,} echoing our theoretical results in \secref{sec::asymptotic}.

For instance, in \figref{fig::cifar-fid}, if we consider the first experiment (i.e., trial=0) with float16 mixed-precision training (i.e., mixed=True), we notice that relatively near 80k and 120k training iterations, the performance of noise-driven one is much better than the default one, which is consistent with \figref{fig::cifar-0-True} that the relative loss near $t\approx T$ is more minimised for iterations 80k and 120k, compared with other iteration stages. 
Moreover, for the same experiment in \figref{fig::cifar-fid}, the data-driven one has a much worsen FID value at iteration 200k, which is compatible with the increasing relative error near the noise's end (i.e., $t\approx T$) in the last row of \figref{fig::cifar-0-True}. For the remaining three experimental setup (either different initialization or training precision), we notice a similar consistency between how the {\bf time-distribution of the SML behaves} and how {\bf FID values change}. This relation, so far,  still cannot be used as a rigorous quantitative indicator to predict one based on the other quantity, but qualitatively, the above explained relationship does appear to be numerically valid and theoretically sound.

{\bf In summary}, if we have two score functions $\score_1$ and $\score_2$ from training: 
\begin{itemize}[leftmargin=1em]
\item If the SML for $\score_1$ is much larger than the SML for $\score_2$, then we can probably confidently expect that $\score_2$ is a more accurate estimate.

\item However, when the total SML \eqref{eqn::loss} for both are close, then the time-distribution of the score-matching loss together with which generative dynamics is chosen will play a significant role in determining the final sample generation quality, which is probably largely overlooked in current literature as far as we know. A full investigation and in particular whether it is possible to adapt this observation to achieve the state-of-art models will be left to the next stage of research.

\end{itemize}

\afterpage{\clearpage
\begin{figure}[h!]
\centering
\includegraphics[width=\textwidth]{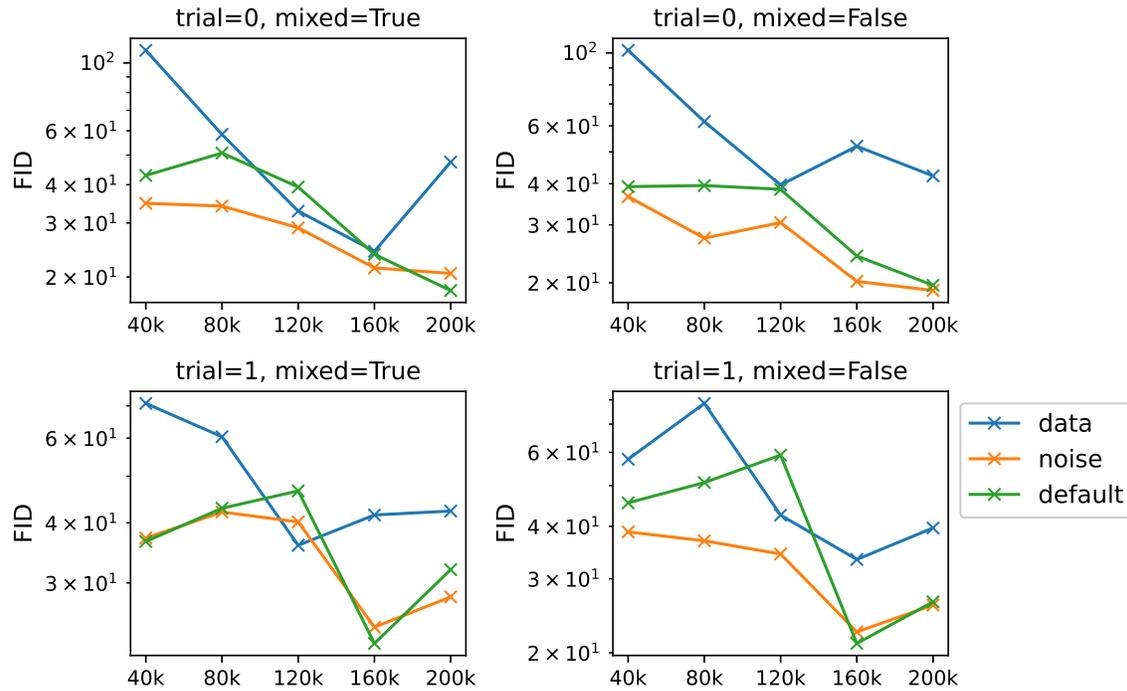}
\caption{We visualize FIDs of score-training estimates by various weights with respect to the number of training iteration. We consider ODE models (i.e., $h=0$) when computing FIDs. Trial = 0, 1 refers to different neural network initialization; mixed=True means we use float16 mixed-precision for training; mixed=False means we use 32-bit precision.}
\label{fig::cifar-fid}
\end{figure}
\clearpage}

\afterpage{\clearpage
\begin{figure}[h!]
\centering
\includegraphics[width=\textwidth]{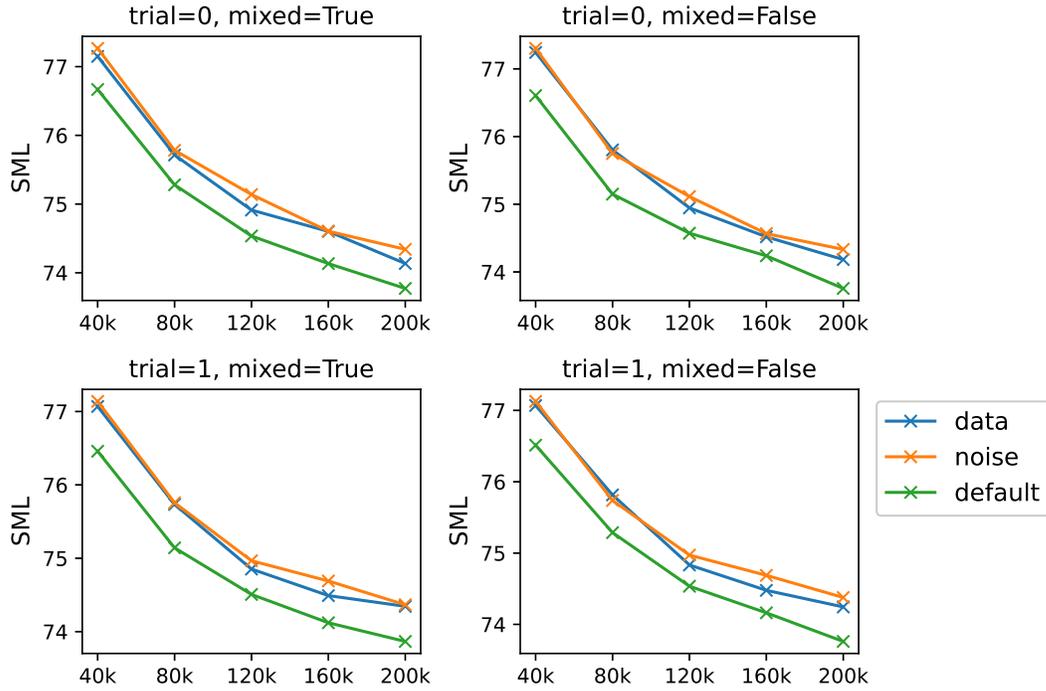}
\caption{We visualize score-matching loss \eqref{eqn::loss} for score function trained using various weights with respect to the number of training iteration's. Trial = 0, 1 refers to different training initialization; mixed=True means we use float16 mixed-precision for training; mixed=False means we use 32-bit precision. When we compare SML for different score functions above, we consistently use the \enquote{default} weight for fairer comparison, even though some score functions are trained using different weight functions; we also use the test dataset to approximately represent $p_0$.}
\label{fig::cifar_sml}
\end{figure}
}

\afterpage{\clearpage
\begin{figure}[h!]
\centering
\includegraphics[width=0.8\textwidth]{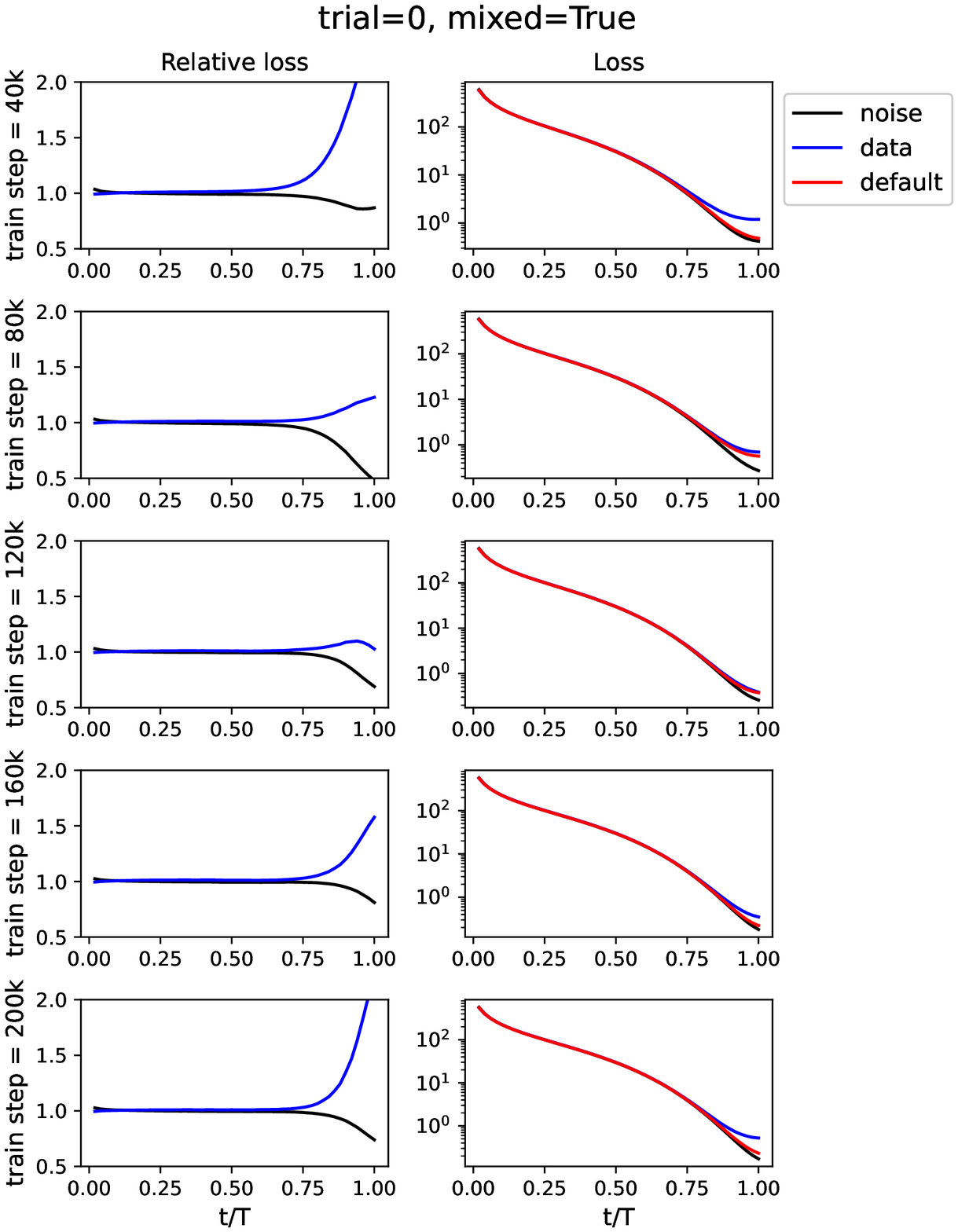}
\caption{{\bf The first independent experiment with the float16 mixed precision training}: We visualize the relative time distribution of the SML \eqref{eqn::relative_loss} (on the left) and the time distribution of SML (on the right) for various training weights and for various training stages (each row).}
\label{fig::cifar-0-True}
\end{figure}
\clearpage}

\afterpage{\clearpage
\begin{figure}[h!]
\centering
\includegraphics[width=0.8\textwidth]{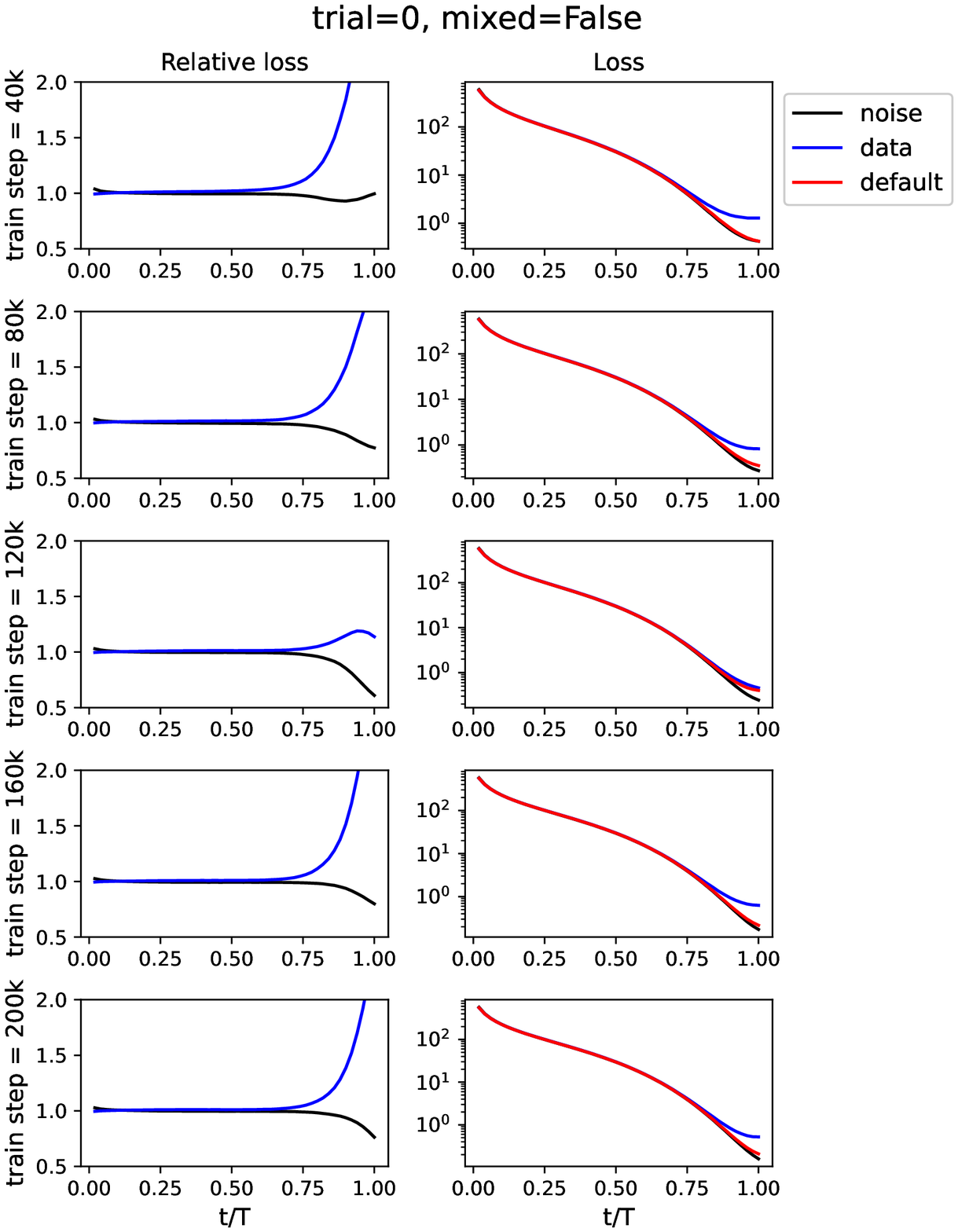}
\caption{{\bf The first independent experiment with the full 32-bit precision training}: We visualize the relative time distribution of the SML \eqref{eqn::relative_loss} (on the left) and the time distribution of SML (on the right) for various training weights and for various training stages (each row).}
\label{fig::cifar-0-False}
\end{figure}

\clearpage}

\afterpage{\clearpage
\begin{figure}[h!]
\centering
\includegraphics[width=0.8\textwidth]{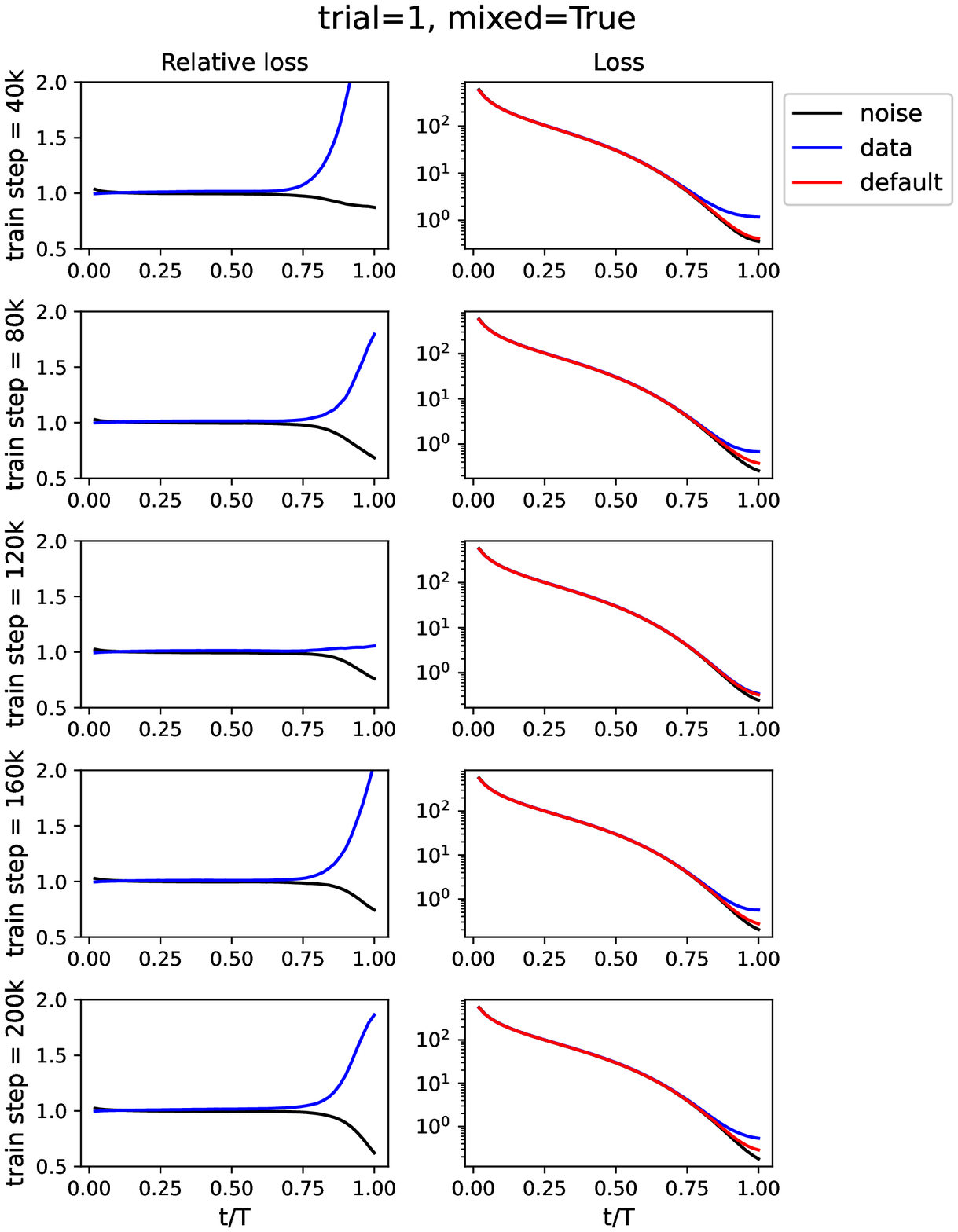}
\caption{{\bf The second independent experiment with the float16 mixed precision training}: We visualize the relative time distribution of the SML \eqref{eqn::relative_loss} (on the left) and the time distribution of SML (on the right) for various training weights and for various training stages (each row).}
\label{fig::cifar-1-True}
\end{figure}
\clearpage}

\afterpage{\clearpage
\begin{figure}[h!]
\centering
\includegraphics[width=0.8\textwidth]{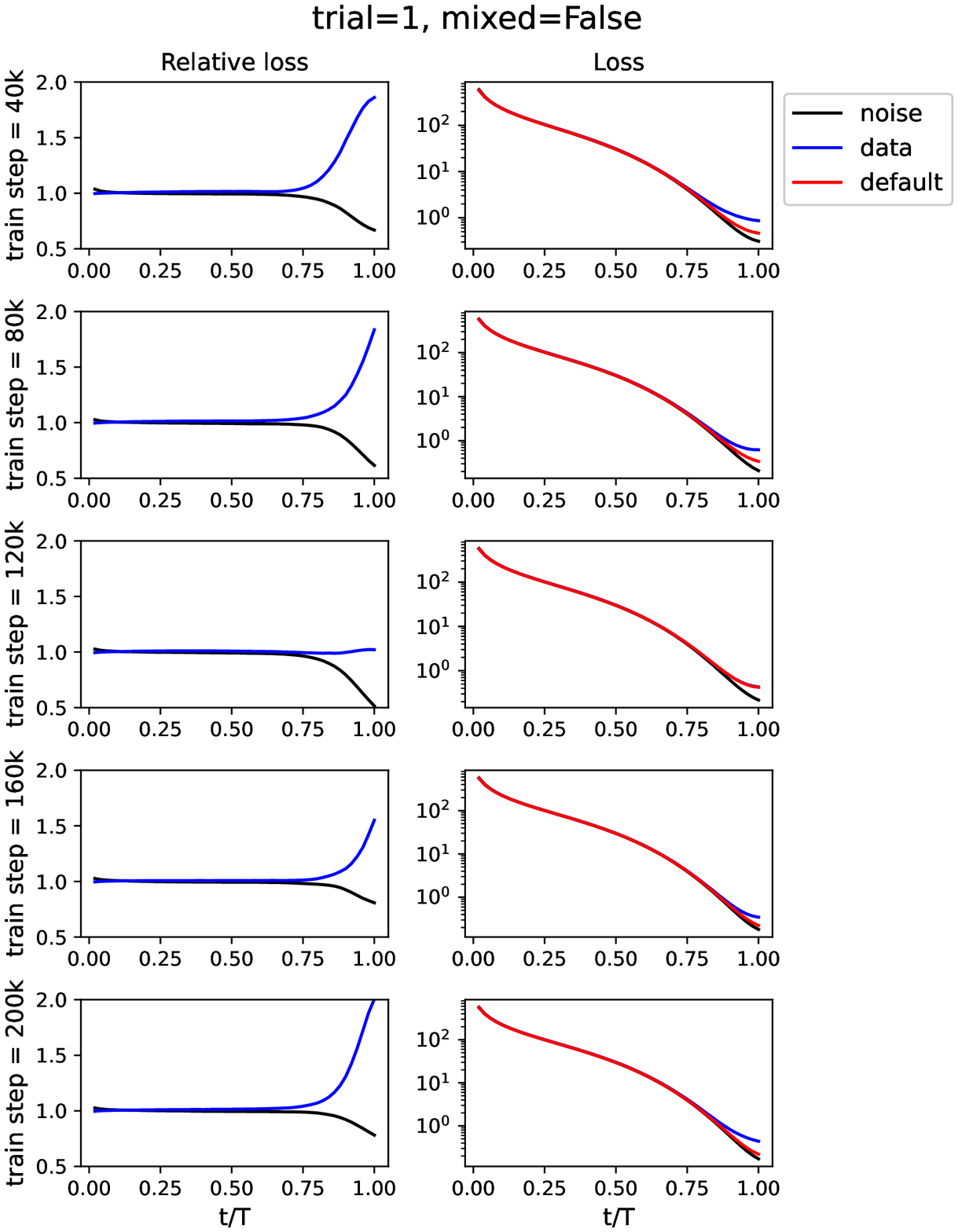}
\caption{{\bf The second independent experiment with the full 32-bit precision training}: We visualize the relative time distribution of the SML \eqref{eqn::relative_loss} (on the left) and the time distribution of SML (on the right) for various training weights and for various training stages (each row).}
\label{fig::cifar-1-False}
\end{figure}
\clearpage}

\end{document}